\def\A{{\bf A}}
\def\a{{\bf a}}
\def\B{{\bf B}}
\def\b{{\bf b}}
\def\C{{\bf C}}
\def\D{{\bf D}}
\def\d{{\bf d}}
\def\E{{\bf E}}
\def\e{{\bf e}}
\def\K{{\bf K}}
\def\H{{\bf H}}
\def\G{{\bf G}}
\def\I{{\bf I}}
\def\R{{\bf R}}
\def\X{{\bf X}}
\def\Y{{\bf Y}}
\def\Q{{\bf Q}}
\def\bq{{\bf q}}
\def\PP{{\bf P}}
\def\S{{\bf S}}
\def\T{{\bf T}}
\def\x{{\bf x}}
\def\y{{\bf y}}
\def\z{{\bf z}}
\def\Z{{\bf Z}}
\def\M{{\bf M}}
\def\m{{\bf m}}
\def\N{{\bf N}}
\def\U{{\bf U}}
\def\u{{\bf u}}
\def\V{{\bf V}}
\def\v{{\bf v}}
\def\W{{\bf W}}
\def\0{{\bf 0}}
\def\1{{\bf 1}}
\def\FM{{\mathcal F}}
\def\XM{{\mathcal X}}
\def\OM{{\mathcal O}}
\def\GM{{\mathcal G}}
\def\RB{{\mathbb R}}
\def\Pii{\mbox{\boldmath$\Pi$\unboldmath}}
\def\Omeb{\mbox{\boldmath$\Omega$\unboldmath}}
\def\Si{\mbox{\boldmath$\Sigma$\unboldmath}}
\def\sib{\mbox{\boldmath$\sigma$\unboldmath}}
\def\Gam{\mbox{\boldmath$\Gamma$\unboldmath}}
\def\lamb{\mbox{\boldmath$\lambda$\unboldmath}}
\def\Lam{\mbox{\boldmath$\Lambda$\unboldmath}}
\def\vp{\mbox{\boldmath$\varphi$\unboldmath}}
\def\argmin{\mathop{\rm argmin}}
\def\liml{\mathop{\lim}\limits}
\def\nystrom{{Nystr\"{o}m} }
\def\CUR{{CUR} }
\def\sgn{\mathrm{sgn}}
\def\tr{\mathrm{tr}}
\def\rg{\mathrm{range}}
\def\rk{\mathrm{rank}}
\def\diag{\mathrm{diag}}
\def\dg{\mathsf{dg}}
\def\vect{\mathsf{vec}}
\def\nnz{\mathsf{nnz}}
\def\poly{\mathsf{poly}}
\def\norm3{{|\!|\!|}}
\newcommand{\nm}{{\vert\!\vert\!\vert}}
\title{The Singular Value Decomposition, Applications and  Beyond}
\author{
Zhihua Zhang \\
Shanghai Jiao Tong University \\
zhihua@sjtu.edu.cn
}
\begin{document}

\copyrightowner{Z.~Zhang}

\frontmatter  

\maketitle

\tableofcontents

\mainmatter

\begin{abstract}
The singular value decomposition (SVD) is not only a classical theory  in matrix computation and analysis, but also is a powerful tool in machine learning and modern data analysis. In
this tutorial  we first study the basic notion of SVD and then show the central role of SVD in matrices. Using majorization theory, we consider variational principles of singular values and eigenvalues.
Built on SVD and a theory of symmetric gauge functions, we discuss unitarily invariant norms, which are then used to formulate  general results for matrix low rank approximation. We study the subdifferentials of unitarily invariant norms.
These results would be potentially useful  in many machine learning problems such as matrix completion and matrix data classification. Finally,  we discuss matrix low rank approximation and its recent developments  such as randomized SVD, approximate matrix multiplication,  CUR decomposition, and Nystr\"{o}m approximation. Randomized algorithms are important approaches  to large scale SVD as well as fast matrix computations.  
\end{abstract}
%



\chapter{Introduction}

The singular value decomposition (SVD) is a classical  matrix theory and a key computational technique,  and it
has also received wide applications in science and engineering.  Compared with an eigenvalue decomposition (EVD) which only works on some of square matrices, SVD applies to all matrices. Moreover, many matrix concepts and properties such as matrix pseudoinverses, variational principles and unitarily invariant norms can be induced from SVD.
Thus,  SVD plays a fundamental role in matrix computation and analysis.

Furthermore,
due to recent great developments of machine learning,  data mining and theoretical computer science,
SVD has been found to be more and more important. It is not only a powerful tool and theory but also  an art.
SVD makes matrices become  a ``Language" of data science.

The terminology of \emph{singular values} has been proposed by Horn in 1950 and 1954~\citep{Horn:1950,Horn:1954}.
The first proof of the SVD for general $m\times n$ matrices might be given by  \cite{EckartYoung:1939}.
But the theory of singular values can date back to  the 19th century when it had been studied
by the Italian differential geometer E. Beltrami, the French algebraist C. Jordan, the English mathematician  J. J. Sylvester, the French mathematician L. Autonne, etc.
Please refer to Chapter 3 of \cite{horn1991matrix} in which the authors presented an excellent historical retrospection about  the SVD or theory of singular values.

There is a rich literature involving singular values or SVD.  Chapter 3 of \cite{horn1991matrix}  provides  exhaustive studies  about inequalities of singular values as well as unitarily invariant norms, and the primary focus is on the matrix theory. The books by \cite{Watkins:1991,DemmelBook:1997,golub2012matrix, trefethenbau} present a detailed introduction to SVD, the primary focus of which is on numerical linear algebra.

This tutorial  is motivated  by recent successful applications  of SVD in machine learning and theoretical computer science~\citep{Hastie:Book:SL,Burges:2010,halko2011finding,woodruff2014sketching,Mahoney:2011,hopcroft2012computer}. The primary focus is on a perspective of machine learning.
The main purpose of the tutorial  includes two aspects.
First, it provides a  systematic tutorial to the SVD theory and illustrates its functions in matrix and data analysis.
Second, it provides an advanced review about recent developments of the SVD theory in
applications of machine learning and theoretical computer science.

\begin{table}
\centering
\caption{Comparison of Matrix Factorization Methods}
\begin{tabular}{l|l|l|l}
\hline
Matrices & Geometry   & Data  & Computation \\ \hline
$m\times n$  & Polar   & CX  & QR  \\
$m\times n$ & SVD     & \CUR  & QR  \\
SPSD      &  Spectral   & \nystrom  & (Incomplete) Cholesky \\
\hline
\end{tabular}
\label{tab:mf}
\end{table}

\section{Roadmap}

The preliminaries about matrices please refer to the book of   \cite{Horn:1985}.
This tutorial  involves matrix differential calculus, majorization theory, and symmetric gauge functions.
For them, the detailed materials can be found in \cite{MacnusNeudecker,MarshallOlkinArnold,Schatten,BhatiaMatrix}.
In Chapter~\ref{ch:prelin}  we review some preliminaries  such as Kronecker produces and vectorization operators, majorization theory, and derivatives.

In Chapter~\ref{ch:svd}  we introduce the basic notion of SVD, including the existence, construction, and uniqueness.
We then rederive some important matrix  concepts and properties via SVD. We also study generalized SVD problems, which are concerned with joint decomposition  of two matrices.  In Chapter~\ref{ch:app1}  we further illustrate the application of SVD in definition of the matrix pseudoinverse  and solution of the Procrustes analysis problem. We discuss the role that SVD plays in subspace machine learning methods.

From the viewpoint of computation and modern data analysis, matrix factorization techniques should be the most important issue of matrices.
In Table~\ref{tab:mf} we summary matrix factorization methods, which are categorized into three types. In particular, the Polar decomposition, SVD, and spectral decomposition consider geometric representation of a data matrix, whereas the CX, CUR, and Nystr\"{o}m dcompositions consider a compact representation of the data themselves. That is, the latters use a portion of the data to
represent the whole data. The primary focus of the QR and Cholesky decomposition is on fast  computation. 
In Chapter~\ref{ch:variant} we give reviews about the QR and CUR decompositions.

In Chapter~\ref{ch:variation}  we consider variational principles for singular values and eigenvalues. Specifically, we apply matrix differential calculus to rederive the von Neumann theorem~\citep{Neumann:1937} and the Ky Fan theorem~\citep{Fan:1951}.
Accordingly, we give some inequalities for singular values and eigenvalues.

Built on the inequalities for singular values, Chapter~\ref{ch:uinorm}   discusses unitarily invariant norms. 
Unitarily invariant norms include the nuclear norm, Frobenius norm and spectral norm as their special cases.  
There is a one-to-one correspondence between a unitarily invariant norm of a matrix and a symmetric gauge function on the singular values of the matrix. This helps us to establish properties of unitarily invariant norms.

In Chapter~\ref{ch:subdiff}  we study subdifferentials of  unitarily invariant norms. We especially present the subdifferentials of
the spectral norm and the nuclear norm as well as the applications in matrix low rank approximation.
We illustrate several examples  in optimization, which are solved via the subdifferentials of
the spectral  and nuclear norms.
The subdifferentials of  unitarily invariant norms would have potentially useful in machine learning and optimization.

Matrix low rank approximation is a promising theme in machine learning and theoretical computer science.
Chapter~\ref{ch:lowrank}  gives two important theorems about matrix low rank approximation based on errors of unitarily invariant norms. The first one is an extension of the ordinal least squares estimation problem. The second one was proposed by \cite{mirsky:1960}, which is an extension of the novel Eckart Young theorem~\citep{EckartYoung:1936}.
We also discuss approximate matrix multiplication, which can be regarded as an inverse process of matrix low rank approximation. 

In Chapter~\ref{ch:lsma} we study 
randomized SVD,   CUR approximation, and Nystr\"{o}m methods to make the applications scalable. 
The randomized SVD  and CUR approximation can  be also  viewed as  matrix low rank approximation techniques.
The Nystr\"{o}m approximation is a special case of the CUR decomposition and  has been widely used to speed up kernel methods.

\section{Notation and Definitions}

Throughout this tutorial, vectors and matrices are denoted by
boldface lowercase letters  and  boldface uppercase letters, respectively.
$\RB_{+}^n =\{\u =(u_1, \ldots, u_n)^T \in \RB^n: u_j \geq 0 \mbox{ for } j=1, \ldots, n\}$
and $\RB_{++}^n =\{\u =(u_1, \ldots, u_n)^T \in \RB^n: u_j > 0 \mbox{ for } j=1, \ldots, n\}$.
Furthermore, if $\u \in \RB_{+}^n$ (or $\u \in \RB_{++}^n$), we also denote $\u\geq 0$ (or $\u > 0$).

Given a vector $\x =(x_1, \ldots, x_n)^T \in \RB^n$, let $|\x| = (|x_1|, \ldots, |x_n| )^T$,
let $\|\x\|_p= (\sum_{i=1}^n |x_i|^p)^{1/p}$ for $p\geq 1$ be the $\ell_p$-norm of $\x$, and let $\diag(\x)$ be an $n\times n$ diagonal matrix with the $i$th diagonal element as $x_i$.

Let $[m]=\{1, 2, \ldots, m\}$,
$\I_m$ be the $m\times m$ identity matrix,  $\1_m$ be the $m\times 1$ vector of ones, and $\0$ be the zero vector or matrix with appropriate size.  Let $\A\oplus \B = \begin{bmatrix} \A & \0 \\ \0 & \B \end{bmatrix}$.

For a matrix $\A=[\a_1, \a_2, \ldots, \a_n ]=[a_{ij}] \in \RB^{m{\times} n}$,  $\A^T$ denotes the transpose of $\A$,  $\rk(\A)$ denotes the rank,     $\rg(\A)$
represents the range which is the space spanned by the columns (i.e., $\rg(\A) =  \{\y\in \RB^m: \y = \A \x \mbox{ for some } \x \in \RB^n \}= \mathrm{span}\{\a_1, \a_2, \ldots, \a_n \}$),
$\mathrm{null} (\A)$ is the null space (i.e., $\mathrm{null}(\A) =\{\x: \A \x = 0\}$), and for $p=\min\{m, n\}$ $\dg(\A)$ denotes the $p$-vector with $a_{ii}$ as the $i$th  element.
Sometimes we also use Matlab Colon to represent a submatrix of $\A$. For example, let $I \subset [m]$ and $J \subset [n]$. $\A_{I, J}$
denotes the submatrix of $\A$ with rows indexed by $I$ and columns indexed by $J$, $\A_{I, :}$ consists of those rows of $\A$ in $I$, and $\A_{:, J}$ consists of those columns of $\A$ in $J$.

Let $\|\A\|_F= \sqrt{\sum_{ij} a_{ij}^2}$  denote the Frobenius norm, $\|\A\|_2$ denote the spectral norm, and $\|\A\|_*$ denote the nuclear norm.
When $\A$ is square, we let $\A^{-1}$ be the inverse  (if exists) of $\A$,  $\tr(\A) = \sum_{i=1}^n a_{ii}$ be the trace, and $\mathrm{det}(\A)$ be the determinant of $\A$.

An $m\times m$ real matrix $\U$ is symmetric if $\A^T=\A$, and skew-symmetric if $\A^T = - \A$, and normal if $\A \A^T = \A^T \A$. Clearly, symmetric and skew-symmetric matrices are normal.
An $m\times m$ real matrix $\U$ is said to be orthonormal (or orthogonal) if $\U^T \U =\U \U^T=\I_m$. An $m\times n$ real matrix $\Q$ for $m>n$ is  column orthonormal (or column orthogonal) if $\Q^T \Q = \I_n$, and a column orthonormal $\Q$ is always able to be extended to an orthonormal matrix. A matrix $\M \in \RB^{m\times m}$ is said to be  positive semidefinite (PSD) or positive definite if  for any nonzero vector $\x \in \RB^{m}$ $\x^T \M \x \geq 0$ or $\x^T \M \x>0$.

\chapter{ Preliminaries}
\label{ch:prelin}

In this chapter we present some preliminaries, including Kronecker products and vectorization operators,
majorization theory, and derivatives.
We list some basic results  that will be used in this monograph  but  omit their detailed derivations.

\section{Kronecker Products and Vectorization Operators}

Given two matrices$\A \in  \RB^{m\times n}$ and $\B \in \RB^{p\times q}$, the  the Kronecker product of $\A$ and $\B$
is defined by
\[
\A \otimes \B \triangleq  \left[\begin{array}{ccc}
a_{11}\B & \cdots & a_{1n}\B \\
\vdots & \ddots & \vdots \\
a_{m 1}\B & \cdots & a_{m n}\B
\end{array} \right],
\]
which is $mp \times nq$. The following properties can be found in \cite{Muirhead:1982}.
\begin{proposition} \label{prof:iron} The Kronecker product has the following properties.
\begin{enumerate}
\item[(a)] $(\alpha \A) \otimes (\beta \B) = \alpha \beta (\A \otimes  \B)$ for any scalars $\alpha, \beta \in \RB$.
\item[(b)] $(\A \otimes \B)^T = \A^T \otimes \B^T$.
\item[(c)] $(\A \otimes \B)\otimes \C = \A \otimes (\B \otimes \C)$.
\item[(d)]  If $\A$ and $\C$ are both $m\times n$ and $\B$ is $p\times q$, then $(\A {+} \C) \otimes \B = \A \otimes \B {+} \C \otimes \B$ and $\B \otimes (\A {+} \C) = \B \otimes \A {+} \B \otimes \C$.
\item[(e)] If $\A$ is $m\times n$, $\B$ is $p\times q$, $\C$ is $n\times r$, and $\D$ is $q\times s$, then
\[
(\A \otimes \B)(\C \otimes \D) = (\A \C) \otimes (\B \D).
\]
\item[(f)] If $\U$ and $\V$ are both orthogonal matrices, so is $\U \otimes \V$.
\item[(g)] If $\A$ and $\B$ are symmetric positive semidefinite (SPSD), so is $\A \otimes \B$.
\end{enumerate}
\end{proposition}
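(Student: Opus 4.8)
The plan is to establish (a)--(d) directly from the block definition of the Kronecker product, then to prove the mixed-product identity (e) by block matrix multiplication, and finally to deduce (f) and (g) from (b) and (e).

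For (a), the $(i,j)$ block of $(\alpha\A)\otimes(\beta\B)$ is $(\alpha a_{ij})(\beta\B)=\alpha\beta\,a_{ij}\B$, which is exactly the $(i,j)$ block of $\alpha\beta(\A\otimes\B)$; since the two arrays agree block by block, they are equal. Property (b) is similar: the $(i,j)$ block of $\A\otimes\B$ is $a_{ij}\B$, so after transposing the whole $mp\times nq$ array the $(j,i)$ block is $(a_{ij}\B)^T=a_{ij}\B^T$, and this coincides with the $(j,i)$ block $(\A^T)_{ji}\B^T$ of $\A^T\otimes\B^T$. Properties (c) and (d) are again a matter of comparing blocks: for (d), assuming $\A,\C$ are both $m\times n$, the $(i,j)$ block of $(\A+\C)\otimes\B$ is $(a_{ij}+c_{ij})\B=a_{ij}\B+c_{ij}\B$, and the two other identities are analogous; for (c) one checks that both $(\A\otimes\B)\otimes\C$ and $\A\otimes(\B\otimes\C)$ carry the entry-block $a_{ij}b_{k\ell}\C$ in the position indexed by the pair $\bigl((i,k),(j,\ell)\bigr)$ after the natural identification of the index sets. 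None of these requires more than careful bookkeeping.

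The main step is (e). View $\A\otimes\B$ as an $m\times n$ array of $p\times q$ blocks whose $(i,j)$ block is $a_{ij}\B$, and $\C\otimes\D$ as an $n\times r$ array of $q\times s$ blocks whose $(j,k)$ block is $c_{jk}\D$. Because the column blocks of $\A\otimes\B$ and the row blocks of $\C\otimes\D$ both carry the inner dimension $q$, the two partitions are conformable for block multiplication, and the $(i,k)$ block of the product is $\sum_{j=1}^{n}(a_{ij}\B)(c_{jk}\D)=\Bigl(\sum_{j=1}^{n}a_{ij}c_{jk}\Bigr)\B\D=(\A\C)_{ik}\,(\B\D)$, which is precisely the $(i,k)$ block of $(\A\C)\otimes(\B\D)$. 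Verifying that the block partitions are indeed conformable is the one point where I expect to spend the most care.

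Finally, (f) follows by combining (b) and (e): $(\U\otimes\V)^T(\U\otimes\V)=(\U^T\otimes\V^T)(\U\otimes\V)=(\U^T\U)\otimes(\V^T\V)=\I\otimes\I=\I$, and similarly $(\U\otimes\V)(\U\otimes\V)^T=\I$, so $\U\otimes\V$ is orthogonal. For (g), write symmetric factorizations $\A=\G\G^T$ and $\B=\H\H^T$, which exist because $\A,\B$ are SPSD (take, e.g., $\G$ to be a symmetric square root obtained from the eigendecomposition of $\A$). Then by (b) and (e), $\A\otimes\B=(\G\G^T)\otimes(\H\H^T)=(\G\otimes\H)(\G^T\otimes\H^T)=(\G\otimes\H)(\G\otimes\H)^T$, which is SPSD; alternatively one may observe that the eigenvalues of $\A\otimes\B$ are the products $\lambda_i(\A)\lambda_j(\B)\ge 0$, again via (e) applied to the eigendecompositions of $\A$ and $\B$. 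The only fact used from outside this proposition is the existence of such a square-root factorization of an SPSD matrix.
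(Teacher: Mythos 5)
The paper does not prove this proposition at all: it is stated in the preliminaries chapter, where the author explicitly omits derivations and simply cites Muirhead (1982), so there is no paper proof to compare yours against. Your argument is the standard one and is correct: block-by-block verification for (a)--(d), block multiplication with conformable partitions for the mixed-product rule (e), and then (f) and (g) as formal consequences of (b) and (e), with (g) resting only on the existence of a square-root (or any $\G\G^T$) factorization of an SPSD matrix, which is legitimate; note also that symmetry of $\A\otimes\B$ itself follows immediately from (b). No gaps.
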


Kronecker products often work with  vectorization operators together.
Let $\vect(\A)= (a_{11},  \ldots,  a_{m1}, a_{12}, \ldots,  a_{mn})^{T}\in  \RB^{mn}$ be vectorization of the matrix $\A \in \RB^{m\times n}$.
The following lemma gives the connection between Kronecker products and vectorization operators.
\begin{lemma}  \label{lem:kron2} $\quad$
\begin{enumerate}
\item[(1)] If $\B$ is $p\times m$, $\X$ is $m\times n$, and $\C$ is $n\times q$, then
\[
\vect(\B \X \C) = (\C^T \otimes \B) \vect(\X).
\]
\item[(2)]  If $\A \in \RB^{m\times n}$,  $\B \in \RB^{n \times p}$, and $\C\in \RB^{p\times m}$, then
\[
\tr(\A \B \C ) = (\vect(\A^T))^T (\I_m \otimes \B) \vect(\C).
\]
\item[(3)]  If $\A \in \RB^{m\times p}$, $\X\in \RB^{n \times p}$,  $\B \in \RB^{n \times n}$, and $\C\in \RB^{p\times m}$, then
\begin{align*}
\tr(\A \X^T \B \X \C) &= (\vect(\X))^T ((\C \A)^T \otimes \B ) \vect(\X) \\
   & =  (\vect(\X))^T ((\C \A) \otimes \B^T) \vect(\X).
      \end{align*}
 \end{enumerate}
\end{lemma}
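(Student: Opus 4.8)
The plan is to prove part (1) from scratch using two elementary facts, and then obtain parts (2) and (3) almost for free by combining (1) with the cyclic invariance of the trace.

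For part (1) I would first record the rank-one identity $\vect(\u\v^T) = \v \otimes \u$ for column vectors $\u$ and $\v$, which follows at once from the definitions of $\vect$ and $\otimes$. Writing $\X = \sum_{j=1}^n \x_j \e_j^T$, where $\x_j$ is the $j$th column of $\X$ and $\e_j$ the $j$th standard basis vector of $\RB^n$, one gets $\B\X\C = \sum_{j=1}^n (\B\x_j)(\C^T\e_j)^T$. Vectorizing term by term and then using property~(e) of Proposition~\ref{prof:iron} to factor out the Kronecker product,
\[
\vect(\B\X\C) = \sum_{j=1}^n (\C^T\e_j)\otimes(\B\x_j) = (\C^T \otimes \B)\sum_{j=1}^n (\e_j\otimes\x_j) = (\C^T\otimes\B)\vect(\X),
\]
because $\sum_j \e_j\otimes\x_j = \vect(\X)$.

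For part (2) the plan is to use the trace--inner-product identity $\tr(\M^T\N) = (\vect(\M))^T\vect(\N)$, itself immediate from the definitions. Since $\tr(\A\B\C) = \tr\big((\A^T)^T(\B\C)\big)$ with $\A^T$ and $\B\C$ both $n\times m$, this gives $\tr(\A\B\C) = (\vect(\A^T))^T\vect(\B\C)$; then part (1) applied to $\vect(\B\C) = \vect(\B\,\C\,\I_m)$ gives $\vect(\B\C) = (\I_m\otimes\B)\vect(\C)$, which is the assertion. For part (3), cyclicity gives $\tr(\A\X^T\B\X\C) = \tr\big(\X^T(\B\X\C\A)\big) = (\vect(\X))^T\vect(\B\X\C\A)$ by the same identity, and part (1) with inner matrix $\X$ and right factor $\C\A$ turns $\vect(\B\X\C\A)$ into $\big((\C\A)^T\otimes\B\big)\vect(\X)$, yielding the first expression. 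The second expression follows because the quantity is a scalar and hence equals its own transpose, combined with $\big((\C\A)^T\otimes\B\big)^T = (\C\A)\otimes\B^T$ from property~(b) of Proposition~\ref{prof:iron}.

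I do not anticipate a genuine obstacle: the whole argument is linear-algebra bookkeeping. The only points that require care are (i) stating the two auxiliary identities $\vect(\u\v^T)=\v\otimes\u$ and $\tr(\M^T\N)=(\vect(\M))^T\vect(\N)$ before invoking them, and (ii) checking in each application of part (1) that the three matrices have compatible shapes matching the hypothesis ``$\B$ is $p\times m$, $\X$ is $m\times n$, $\C$ is $n\times q$'', so that the inserted identity matrix ($\I_m$ in (2)) and the factor $\C\A$ playing the role of $\C$ in (3) sit in the correct slots.
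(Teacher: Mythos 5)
Your argument is correct, and each application checks out: the rank-one identity $\vect(\u\v^T)=\v\otimes\u$, the decomposition $\X=\sum_j\x_j\e_j^T$ combined with Proposition~\ref{prof:iron}(e), the inner-product identity $\tr(\M^T\N)=(\vect(\M))^T\vect(\N)$, and the shape bookkeeping in parts (2) and (3) (insertion of $\I_m$, and $\C\A$ playing the role of the right factor) are all used consistently. The paper deliberately omits the derivation of this preliminary lemma, deferring to standard references, so there is no in-text proof to compare with; your route is the standard one such references use, and it is complete as written.
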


\section{Majorization}

Given a vector $\x=(x_1, \ldots, x_n)^T \in \RB^{n}$, let  $\x^{\downarrow}=(x_{1}^{\downarrow}, \ldots, x_n^{\downarrow} )$
be  such a permutation of $\x$ that $x_{1}^{\downarrow} \geq x_{2}^{\downarrow} \geq \cdots \geq x_n^{\downarrow}$.
Given two vectors $\x$ and $\y \in \RB^n$,  $\x\geq \y$ means $x_i - y_i\geq 0$ for all $i\in [n]$.
We say that  $\x$ is majorized by $\y$ (denoted $\x \prec \y$) if $\sum_{i=1}^k x^{\downarrow}_i \leq \sum_{i=1}^k y^{\downarrow}_i $  for $k=1, \ldots, n-1$ and $\sum_{i=1}^n x^{\downarrow}_i = \sum_{i=1}^n y^{\downarrow}_i$.
Similarly,  $\x \succ \y$ if  $\sum_{i=1}^k x^{\downarrow}_i \geq \sum_{i=1}^k y^{\downarrow}_i $ for $k=1, \ldots, n{-}1$
and $\sum_{i=1}^n x^{\downarrow}_i = \sum_{i=1}^n y^{\downarrow}_i$.

We say  that  $\x$ is weakly submajorized by $\y$  (denoted $\x \prec_{w} \y$)   if $\sum_{i=1}^k x^{\downarrow}_i \leq \sum_{i=1}^k y^{\downarrow}_i $  for $k=1, \ldots, n$,
and $\x$ is weakly superrmajorized by $\y$  (denoted $\x \prec^{w} \y$)   if  $\sum_{i=1}^k x^{\downarrow}_i \geq \sum_{i=1}^k y^{\downarrow}_i $ for $k=1, \ldots, n$,

An  $n \times n$ matrix $\W=[w_{ij}]$ is said to be  doubly stochastic if the $w_{ij}\geq 0$,  $\sum_{j=1}^n w_{ij} =1$ for all $i \in [n]$, and  $\sum_{i=1}^n w_{ij} =1$ for all $j \in [n]$.  Note that if $\Q=[q_{ij}] \in \RB^{n\times n}$ is orthonormal, then $\W\triangleq [q_{ij}^2]$ is a doubly stochastic matrix. It is thus called \emph{orthostochastic}.

The following three lemmas are classical results in majorization theory. They will be used in investigating unitarily invariant norms.

\begin{lemma} \citep{HardyLittlewoodPolya} \label{lem:001}   Given two vectors $\x, \y \in \RB^{n}$, then $\x \prec \y$ if and only if there exists a doubly stochastic matrix $\W$ such that $\x = \W \y$.
\end{lemma}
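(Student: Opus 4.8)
The plan is to prove the two implications separately: the ``if'' direction by a direct averaging estimate, and the ``only if'' direction by reducing $\y$ to $\x$ through a finite chain of elementary two-coordinate averages, each of which is itself doubly stochastic.

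\emph{The ``if'' direction.} Suppose $\x=\W\y$ with $\W=[w_{ij}]$ doubly stochastic. Fix $k\in[n]$ and pick $S\subseteq[n]$ with $|S|=k$ and $\sum_{i\in S}x_i=\sum_{i=1}^k x_i^{\downarrow}$. Substituting $x_i=\sum_j w_{ij}y_j$ and interchanging the sums gives $\sum_{i\in S}x_i=\sum_j c_j y_j$ with $c_j=\sum_{i\in S}w_{ij}$; nonnegativity of the entries together with the column-sum condition gives $0\le c_j\le 1$, and the row-sum condition gives $\sum_j c_j=k$. A weighted sum with total weight $k$ and every weight at most $1$ cannot exceed the sum of the $k$ largest $y_j$'s --- a one-line comparison with the index set $T$ of the top $k$ entries of $\y$, using $y_j\ge y_\ell$ for $j\in T$, $\ell\notin T$, and $\sum_{j\in T}(1-c_j)=\sum_{\ell\notin T}c_\ell$. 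Hence $\sum_{i=1}^k x_i^{\downarrow}\le\sum_{i=1}^k y_i^{\downarrow}$; taking $k=n$ and using that the columns sum to $1$ forces $\sum_i x_i=\sum_i y_i$. Thus $\x\prec\y$.

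\emph{The ``only if'' direction.} Since permutation matrices are doubly stochastic and products of doubly stochastic matrices are doubly stochastic, it is harmless to pre- and post-multiply by permutations, so I may assume $\x=\x^{\downarrow}$ and $\y=\y^{\downarrow}$. If $\x=\y$, take $\W=\I_n$. Otherwise the first index where they differ carries $x<y$, so the set $\{i:x_i<y_i\}$ is nonempty; let $j$ be its largest element and let $k$ be the smallest index exceeding $j$ with $x_k>y_k$ (such $k$ exists, since for $i>j$ one has $x_i\ge y_i$ and, were these all equalities, the total sums could not match given the deficit at $j$). Then $x_i=y_i$ for $j<i<k$. Put $\delta=\min\{y_j-x_j,\ x_k-y_k\}>0$ and let $\T$ be the doubly stochastic matrix obtained as a convex combination of $\I_n$ and the transposition of $j$ and $k$ for which $(\T\y)_j=y_j-\delta$ and $(\T\y)_k=y_k+\delta$ (legitimate because $y_j>y_k$). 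Writing $\y'=\T\y$: sortedness of $\x$ yields $y_j-\delta\ge x_j\ge x_{j+1}\ge y_{j+1}$ and $y_k+\delta\le x_k\le x_{k-1}$, so $\y'$ is still nonincreasing (the boundary case $k-1=j$ handled similarly); a short check of prefix sums --- trivial for cutoffs below $j$ or at/above $k$, and for $j\le m<k$ using $\sum_{i\le j-1}(y_i-x_i)\ge 0$ together with $x_i=y_i$ on $(j,k)$ --- shows $\x\prec\y'$; and by the choice of $\delta$, $\y'$ agrees with $\x$ in at least one more coordinate than $\y$ does without creating new disagreements, so the number of coordinates on which the current vector differs from $\x$ strictly decreases. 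Iterating at most $n-1$ times gives a vector equal to $\x$, and the product of the $\T$'s (composed with the two sorting permutations) is the desired doubly stochastic $\W$ with $\x=\W\y$. (Alternatively, this direction follows from Birkhoff's theorem plus the fact that $\{\z:\z\prec\y\}$ is the convex hull of the permutations of $\y$, but the transfer argument is more self-contained.)

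The main obstacle is the bookkeeping in the second direction --- choosing the pair $(j,k)$ so that the elementary average keeps the vector nonincreasing \emph{and} simultaneously preserves majorization while strictly decreasing the ``disagreement count'', which is what makes the reduction terminate after finitely many steps. Once that is arranged, the rest is routine, and the ``if'' direction is essentially the observation that doubly stochastic matrices act as averaging operators.
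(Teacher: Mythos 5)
The paper offers no proof of this lemma at all: it is stated as a classical fact and cited to Hardy, Littlewood and P\'{o}lya, so there is no internal argument to compare yours against line by line. Your proof is correct and is essentially the standard self-contained one (the transfer argument of Hardy--Littlewood--P\'{o}lya/Muirhead, as in Marshall--Olkin). In the ``if'' direction the weights $c_j=\sum_{i\in S}w_{ij}$ satisfy $0\le c_j\le 1$ and $\sum_j c_j=k$, and your comparison with the top-$k$ index set of $\y$ is the right one-liner; taking $k=n$ gives equality of the totals. In the ``only if'' direction your bookkeeping checks out: with $j$ the largest index where $x_j<y_j$ and $k$ the smallest later index where $x_k>y_k$, one indeed has $x_i=y_i$ strictly between them; the shift by $\delta=\min\{y_j-x_j,\;x_k-y_k\}$ is legitimate because $y_j-y_k\ge 2\delta$ (which also settles the boundary case $k=j+1$); the only nontrivial prefix cutoffs $j\le m<k$ lose exactly $\delta$, absorbed by the deficit $y_j-x_j\ge\delta$ at position $j$; and at least one disagreement is removed with none created, so the process terminates and the product of the T-transforms with the two sorting permutations is doubly stochastic. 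The alternative you mention in passing---Birkhoff's theorem together with the fact that $\{\z:\z\prec\y\}$ is the convex hull of the permutations of $\y$---would be the route most in the spirit of the surrounding text, since the paper states Birkhoff's theorem as Lemma~\ref{lem:dsm} immediately after this one; but that description of the majorization polytope is itself usually obtained by exactly your transfer argument, so your version is the more economical and self-contained of the two. If you write it up, make explicit the two facts you use silently: that a product of doubly stochastic matrices is doubly stochastic, and that $y_j>x_j\ge x_k>y_k$ is what guarantees $y_j>y_k$ before forming the convex combination.
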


\begin{lemma}[Birkhoff] \label{lem:dsm} Let $\W \in \RB^{n\times n}$. Then it is a doubly stochastic matrix if and only if it can be expressed as a convex combination of a set of permutation matrices.
\end{lemma}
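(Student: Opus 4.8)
The reverse implication is immediate: permutation matrices are doubly stochastic, the set of doubly stochastic matrices is convex, and hence any convex combination of permutation matrices is doubly stochastic. The substance is the forward direction, which I would prove by induction on the number $N$ of strictly positive entries of $\W$.

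For the base case, observe that a doubly stochastic $\W$ always has $N \geq n$, and if $N = n$ then each row and each column has exactly one positive entry, which by the unit row/column sums must equal $1$; hence $\W$ is a permutation matrix. For the inductive step, suppose $N > n$. Form the bipartite graph on vertex classes $[n]$ (rows) and $[n]$ (columns), placing an edge $\{i,j\}$ whenever $w_{ij} > 0$. I claim this graph has a perfect matching. By Hall's marriage theorem it suffices to verify Hall's condition: for any $S \subseteq [n]$ of rows, letting $T \subseteq [n]$ be the set of columns adjacent to some $i \in S$, we have
\[
|S| = \sum_{i \in S}\sum_{j=1}^n w_{ij} = \sum_{j \in T}\sum_{i \in S} w_{ij} \leq \sum_{j \in T}\sum_{i=1}^n w_{ij} = |T|,
\]
using that $w_{ij} = 0$ for $i \in S$, $j \notin T$, together with the unit row and column sums. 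So a perfect matching exists, and it corresponds to a permutation $\sigma$ of $[n]$ with $w_{i\sigma(i)} > 0$ for every $i$.

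Let $\PP$ denote the permutation matrix with ones in positions $(i,\sigma(i))$, and set $t = \min_i w_{i\sigma(i)}$; then $t > 0$, and since $N > n$ forces $\W$ to not be a permutation matrix, $t < 1$. Put $\W' = (1-t)^{-1}(\W - t\PP)$. Each entry of $\W - t\PP$ is nonnegative (the only decreased entries are the $w_{i\sigma(i)}$, each of which is at least $t$), and all row and column sums of $\W - t\PP$ equal $1 - t$, so $\W'$ is doubly stochastic; moreover at least one matched entry (one attaining the minimum) becomes zero, so $\W'$ has strictly fewer than $N$ positive entries. By the induction hypothesis $\W' = \sum_k \lambda_k \PP_k$ with $\lambda_k \geq 0$, $\sum_k \lambda_k = 1$, and each $\PP_k$ a permutation matrix. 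Then
\[
\W = t\PP + (1-t)\W' = t\PP + \sum_k (1-t)\lambda_k \PP_k,
\]
and the coefficients $t, (1-t)\lambda_1, (1-t)\lambda_2, \ldots$ are nonnegative with sum $t + (1-t) = 1$, exhibiting $\W$ as the required convex combination.

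The only step beyond routine bookkeeping is producing the permutation $\sigma$ supported on the positive entries of $\W$; this is precisely where Hall's theorem (equivalently, König's theorem) enters, and I regard it as the crux. An alternative route is via convex geometry: the doubly stochastic matrices form a compact polytope in $\RB^{n\times n}$, hence (Minkowski--Krein--Milman) equal the convex hull of their extreme points, and a short perturbation-along-a-cycle argument identifies the extreme points as exactly the permutation matrices; I would nonetheless favor the inductive argument above as being more self-contained.
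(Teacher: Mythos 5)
Your proof is correct. Note that the paper itself offers no proof of this lemma: it is stated (as Birkhoff's theorem) among the classical results of majorization theory that are quoted without derivation, so there is nothing in the text to compare against. What you give is the standard argument: induction on the number of positive entries, with Hall's marriage theorem (equivalently K\"onig's theorem or the Frobenius--K\"onig theorem) supplying a permutation supported on the positive entries, followed by subtracting $t\PP$ with $t$ the minimal matched entry. All the small points are handled properly — the verification of Hall's condition via the row and column sums, the observation that $t<1$ when $\W$ is not itself a permutation matrix, and the fact that $\W'$ is doubly stochastic with strictly fewer positive entries — so the induction closes. Your remark that one could instead identify the doubly stochastic matrices as a compact polytope whose extreme points are the permutation matrices (Krein--Milman plus a cycle-perturbation argument) is also a legitimate alternative route; the inductive proof you chose has the added benefit of being constructive and bounding the number of permutation matrices needed.
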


\begin{lemma} \label{lem:shur} Let $u_1, \ldots, u_n$ and $v_1, \ldots, v_n$ be given nonnegative real numbers such that $u_1\geq \cdots \geq u_n$
and $v_1\geq \cdots \geq v_n$. If
\[ \prod_{i=1}^k u_i \leq \prod_{i=1}^k v_i \; \mbox{ for } k\in [n],\]
then
\[
\sum_{i=1}^k u_i \leq \sum_{i=1}^k v_i \;  \mbox{ for }  k \in [n].
\]
More generally, assume $f$ is a real-valued function such that $f(\exp(u))$ is increasing and convex. Then
\[
\sum_{i=1}^k f(u_i) \leq \sum_{i=1}^k f(v_i) \;  \mbox{ for }  k \in [n].
\]
\end{lemma}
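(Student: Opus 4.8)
The plan is to reduce the whole statement to a single summation-by-parts estimate powered by the convexity and monotonicity of $g(u):=f(\exp(u))$. First note that the opening inequality (products dominate $\Rightarrow$ partial sums dominate) is exactly the special case $f(t)=t$ of the ``more generally'' part, since then $g(u)=e^{u}$ is increasing and convex; so it suffices to prove the general statement.

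Next I would pass to logarithms, which forces a short preliminary reduction to strictly positive entries. If $v_i=0$ then $\prod_{j=1}^{i}v_j=0$, hence $\prod_{j=1}^{i}u_j=0$, hence $u_i=0$ by the decreasing order; so there is an index $r$ with $u_1,\ldots,u_r,v_1,\ldots,v_r>0$ and $u_i=0$ for $i>r$. For $k>r$ the required inequality follows from the $k=r$ case together with $v_i\ge 0$ and $f$ being nondecreasing on $[0,\infty)$ (a consequence of the hypothesis on $g$), so it is enough to handle $k\le r$; i.e.\ we may assume all $u_i,v_i>0$. Now put $a_i=\log u_i$, $b_i=\log v_i$; these are still in decreasing order, the hypothesis reads $A_k:=\sum_{i=1}^{k}a_i\le\sum_{i=1}^{k}b_i=:B_k$ for every $k\in[n]$, and with $g=f\circ\exp$ we have $g(a_i)=f(u_i)$ and $g(b_i)=f(v_i)$, so the goal becomes $\sum_{i=1}^{k}g(a_i)\le\sum_{i=1}^{k}g(b_i)$.

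For the main step, let $m_i$ be the right derivative of the convex function $g$ at $a_i$ (a subgradient of $g$ there). Then $g(b_i)-g(a_i)\ge m_i(b_i-a_i)$ by convexity, $m_i\ge 0$ because $g$ is increasing, and $m_1\ge m_2\ge\cdots\ge m_n\ge 0$ because $a_1\ge\cdots\ge a_n$ and the right derivative of a convex function is nondecreasing. With $\Delta_j:=B_j-A_j\ge 0$ and $\Delta_0:=0$, summation by parts gives
\[
\sum_{i=1}^{k}\big(g(b_i)-g(a_i)\big)\ \ge\ \sum_{i=1}^{k}m_i(b_i-a_i)\ =\ \sum_{i=1}^{k-1}(m_i-m_{i+1})\Delta_i+m_k\Delta_k\ \ge\ 0,
\]
since each term on the right is nonnegative; this is the claim, and $f(t)=t$ recovers the first inequality.

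The core estimate is completely routine once it is set up, so the only place that needs care is the bookkeeping in the reduction to positive entries (and, relatedly, checking that $g$ has a finite, nondecreasing right derivative on the bounded range of the $a_i$); everything after that is the subgradient inequality followed by Abel summation against the nonnegative partial sums $\Delta_j$.
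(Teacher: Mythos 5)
Your proof is correct. Note that the paper does not prove this lemma at all: it is listed among ``classical results in majorization theory'' and used later (e.g.\ in Theorem~\ref{thm:product}) without an argument, so there is no in-paper proof to compare against. What you give is the standard Weyl-type argument: reduce the product hypothesis to weak majorization of the logarithms, then show weak majorization is preserved by the increasing convex function $g=f\circ\exp$ via the subgradient inequality and Abel summation against the nonnegative partial sums $\Delta_j$; the monotonicity $m_1\ge\cdots\ge m_n\ge 0$ of the right derivatives is exactly what makes every term in the summation-by-parts identity nonnegative, and the first assertion is indeed the case $f(t)=t$. The one place to be slightly more careful is the zero-entry reduction: the hypothesis that $u\mapsto f(\exp(u))$ is increasing constrains $f$ only on $(0,\infty)$, so the inequality $f(0)\le f(v_i)$ that you invoke for indices $i>r$ is not literally a consequence of it (with $n=1$, $u_1=0$, $v_1=1$, take $f(t)=t$ for $t>0$ and $f(0)$ large: the hypotheses hold but the conclusion fails). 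This is really a looseness in the statement itself --- the classical formulation assumes strictly positive numbers, or a continuous (or at least monotone at $0$) $f$ --- rather than a flaw in your argument, and it is immaterial for the paper's use of the lemma, where $f(t)=t$ and $f(0)=0$.
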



\section{Derivatives and Optimality}

First let $f : \XM \subset \R^n \to \RB$ be a continuous function. The  directional derivative of $f$ at $\bar{\x}$ in a direction $\u \in \XM$ is defined as
\[
f'(\bar{\x};  \u) = \lim_{t  \downarrow 0} \frac{f( \bar{\x}+t \u) - f(\bar{\x})}{t},
\]
when this limit exists. When the directional derivative $f'(\bar{\x}; \u)$ is linear in $\u$ (that is, $f'(\bar{\x}; \u)=\langle \a, \u \rangle$ for some $\a \in \XM$) then we say $f$ is (G\^{a}beaux) differentiable at $\bar{\x}$, with derivative $\nabla f(\bar{\x}) =\a$. If $f$ is differentiable at every point in $\XM$ then we say $f$ is differentiable on $\XM$.

When $f$ is not differentiable but convex, we consider a notion of subdifferentials.  We say  $\z$ is the subgradient of $f$ at $\bar{\x}$ if it satisfies
\[
f(\bar{\x}) \leq f(\x) + \langle \z, \bar{\x}-\x \rangle \; \mbox{  for all points  } \; \z  \in \XM.
\]
The set of subgradients is called the subdifferential, and denoted by $\partial f(\bar{\x})$. The subdifferential is always a closed convex set. The following result shows a connection between subgradients and directional derivatives.

\begin{lemma}[Max Formula] \label{lem:subg-dd} If the function $f : \XM \to (-\infty, +\infty]$ is convex, then any point $\bar{\x}$ in $\mathrm{core}(\mathrm{dom} f )$ and any direction $\u$ in $\XM$ satisfy
\[
f'(\bar{\x}; \u) = \max\; \big\{ \langle \z, \u\rangle: \z \in \partial f(\bar{\x}) \big\}.
\]
\end{lemma}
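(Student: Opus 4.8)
The plan is to establish the two inequalities separately; for the nontrivial direction I will exhibit an explicit subgradient achieving the value, so that the supremum over $\partial f(\bar{\x})$ is in fact attained.

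First I would set up the machinery of difference quotients. Fixing a direction $\v \in \XM$, convexity of $f$ makes the map $t \mapsto \phi_{\v}(t) \triangleq t^{-1}\big(f(\bar{\x} {+} t\v) - f(\bar{\x})\big)$ nondecreasing on $(0,\infty)$, so $f'(\bar{\x}; \v) = \lim_{t\downarrow 0}\phi_{\v}(t) = \inf_{t>0}\phi_{\v}(t)$ exists in $[-\infty,+\infty]$. Since $\bar{\x}\in\mathrm{core}(\mathrm{dom} f)$, for each $\v$ there is $\bar{t}>0$ with $\bar{\x}\pm\bar{t}\v\in\mathrm{dom} f$; the ``$+$'' choice gives $f'(\bar{\x};\v)\le\phi_{\v}(\bar{t})<+\infty$, and moreover $g\triangleq f'(\bar{\x};\cdot)$ is positively homogeneous and subadditive (the latter by applying convexity to the midpoint of $\bar{\x}{+}2t\v$ and $\bar{\x}{+}2t\w$). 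Subadditivity yields $g(\v)+g(-\v)\ge g(\0)=0$, so together with $g(-\v)<+\infty$ we also get $g(\v)>-\infty$. Hence $g$ is a finite sublinear functional on all of $\XM$; this finiteness is exactly what the core hypothesis buys us.

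The inequality $f'(\bar{\x};\u)\ge\sup\{\langle\z,\u\rangle:\z\in\partial f(\bar{\x})\}$ is immediate: for $\z\in\partial f(\bar{\x})$ the subgradient inequality evaluated at $\x=\bar{\x}{+}t\u$ reads $f(\bar{\x}{+}t\u)\ge f(\bar{\x})+t\langle\z,\u\rangle$, hence $\phi_{\u}(t)\ge\langle\z,\u\rangle$ for all $t>0$, and letting $t\downarrow 0$ gives the claim. For the reverse inequality I would invoke the Hahn--Banach theorem. On the line $L=\RB\u$ define the linear functional $\ell(t\u)=t\,g(\u)$; homogeneity of $g$ handles $t\ge 0$, and $g(-\u)\ge -g(\u)$ (from $g(\u)+g(-\u)\ge 0$) handles $t<0$, so $\ell\le g$ on $L$. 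Because $g$ is finite and sublinear on $\XM$, Hahn--Banach produces a linear functional $\langle\z,\cdot\rangle$ on $\XM$ with $\langle\z,\v\rangle\le g(\v)$ for all $\v$ and $\langle\z,\u\rangle=g(\u)=f'(\bar{\x};\u)$. Finally, for arbitrary $\x$ set $\v=\x-\bar{\x}$: then $\langle\z,\x-\bar{\x}\rangle\le g(\x-\bar{\x})=\inf_{t>0}\phi_{\v}(t)\le\phi_{\v}(1)=f(\x)-f(\bar{\x})$, i.e.\ $\z\in\partial f(\bar{\x})$. Thus the supremum is attained at this $\z$ and equals $f'(\bar{\x};\u)$, which is the Max Formula; as a byproduct $\partial f(\bar{\x})\neq\emptyset$.

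The main obstacle is the interplay between the core hypothesis and the Hahn--Banach step: one must show $g=f'(\bar{\x};\cdot)$ is real-valued on \emph{all} of $\XM$ (not merely bounded above), since otherwise the sublinear majorant in Hahn--Banach would be improper and would not yield a finite linear functional; and one must choose the one-dimensional functional $\ell$ so that its extension simultaneously attains $g(\u)$ along $\u$ and stays below $g$ --- hence below every difference quotient $\phi_{\v}(1)$ --- in all directions, which is precisely what turns the domination $\langle\z,\cdot\rangle\le g$ into the subgradient inequality. The remaining ingredients (monotonicity of $\phi_{\v}$, positive homogeneity and subadditivity of $g$) are routine convexity estimates.
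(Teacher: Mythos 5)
Your proof is correct. A point of comparison: the paper does not actually prove Lemma~\ref{lem:subg-dd}; it states it and defers to Borwein and Lewis, so there is no in-paper argument to match. Your route is essentially the textbook one from that reference: monotonicity of the difference quotients gives $f'(\bar{\x};\v)=\inf_{t>0}\phi_{\v}(t)$; the core hypothesis is used exactly where it should be, namely to make $g=f'(\bar{\x};\cdot)$ finite everywhere (upper bound from $\bar{\x}+\bar{t}\v\in\mathrm{dom}\,f$, lower bound from $g(\v)+g(-\v)\geq g(\0)=0$ together with $g(-\v)<+\infty$); the easy inequality bounds the supremum by $g(\u)$; and the Hahn--Banach extension of $t\u\mapsto t\,g(\u)$ dominated by the finite sublinear $g$ produces a $\z$ with $\langle\z,\u\rangle=g(\u)$ and $\langle\z,\x-\bar{\x}\rangle\leq g(\x-\bar{\x})\leq f(\x)-f(\bar{\x})$, so $\z\in\partial f(\bar{\x})$, the supremum is attained, and nonemptiness of $\partial f(\bar{\x})$ falls out as a byproduct (which is needed for ``max'' to be meaningful). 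The cited text proves the same statement in finite dimensions by a constructive variant of your extension step --- iteratively taking directional derivatives of the sublinear function $g$ along coordinate directions until it becomes linear --- rather than quoting Hahn--Banach, but the two arguments are the same in substance. The only caveat worth noting is that your argument (like the lemma itself) implicitly treats $\XM$ as the whole vector space $\RB^n$, so that ``core'' is an absorbing condition and the extension is defined on all of $\XM$; with that reading, every step checks out.
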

The further details of these results can be found from \cite{BorweinLewis}.  The following lemma then shows the fundamental role of subgradients in optimization.
\begin{lemma} \label{lem:optima} For any proper convex function $f: \XM \to (-\infty, +\infty]$, the point $\bar{\x}$ is a minimizer of $f$ if and only if the condition $\0 \in \partial f(\bar{\x})$ holds.
\end{lemma}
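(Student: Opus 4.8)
The plan is simply to unfold the definition of the subdifferential at the point $\z = \0$. By definition, $\z \in \partial f(\bar{\x})$ precisely when
\[
f(\bar{\x}) \leq f(\x) + \langle \z, \bar{\x} - \x \rangle \quad \text{for all } \x \in \XM .
\]
Substituting $\z = \0$, the inner-product term vanishes, so the condition $\0 \in \partial f(\bar{\x})$ is literally equivalent to $f(\bar{\x}) \leq f(\x)$ for every $\x \in \XM$, which is exactly the statement that $\bar{\x}$ is a global minimizer of $f$.

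Concretely, I would argue both directions in one line each. For the forward implication, assume $\0 \in \partial f(\bar{\x})$; plugging $\z = \0$ into the subgradient inequality yields $f(\bar{\x}) \leq f(\x)$ for all $\x$, so $\bar{\x}$ minimizes $f$. For the converse, assume $\bar{\x}$ is a minimizer; then $f(\bar{\x}) \leq f(\x) = f(\x) + \langle \0, \bar{\x} - \x \rangle$ for every $\x$, so $\0$ satisfies the defining inequality of a subgradient at $\bar{\x}$, i.e.\ $\0 \in \partial f(\bar{\x})$.

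There is essentially no obstacle: the equivalence is a tautological reformulation of the definition of a subgradient, and it does not even use convexity or properness of $f$ in an essential way (properness only guarantees $f(\bar{\x})$ is finite so the inequalities are meaningful; convexity is what makes $\partial f(\bar{\x})$ a useful, often nonempty, object, as exploited in Lemma~\ref{lem:subg-dd}). If desired, I would close with the remark that this is precisely why $\0 \in \partial f(\bar{\x})$ is the natural first-order optimality condition, generalizing $\nabla f(\bar{\x}) = \0$ from the differentiable case and consistent with the Max Formula of Lemma~\ref{lem:subg-dd}.
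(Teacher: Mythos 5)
Your proposal is correct: plugging $\z=\0$ into the defining inequality of the subdifferential makes the equivalence with global minimality immediate in both directions, and your side remark that convexity/properness play no essential role in this equivalence is also accurate. The paper itself states Lemma~\ref{lem:optima} without proof (deferring to the cited convex-analysis reference), and your one-line unfolding is exactly the standard argument that reference uses, so there is nothing to add or correct.
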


Now let $f$ be a differentiable  function from $\RB^{m\times n}$  to $\RB$. For a matrix $\X=[x_{ij}] \in \R^{m\times n}$,  $\frac{d f(\X)}{d \X}= \left( \frac{d f}{d x_{ij}} \right)$ ($m\times n$)
defines the derivative of $f$ w.r.t.\ $\X$. The Hessian matrix of $f$ w.r..t.\ $\X$ is defined as $\frac{d^2 f(\X)}{d \vect(\X) d \vect(\X)^T}$, which is an $mn {\times} mn$ matrix. Let us see an example.
\begin{example}
We define the function  $f$ as
\[
f(\X) =\tr(\X^T \M \X),
\]
where $\M =[m_{ij}] \in \RB^{m\times m}$ is a given constant matrix.  It is directly computed that $\frac{d f }{d x_{ij}}= \sum_{l=1}^m (m_{il} + m_{li}) x_{lj}$. This  implies that $\frac{d f}{d \X}= (\M + \M^T) \X$.
In fact, the derivative can be computed as follows.  
Compute
\[
d f = \tr(d \X^T \M \X + \X^T \M  d\X) = \tr( (\M + \M^T) \X d \X^T).
\]
We thus have that $\frac{d f }{d \X}= (\M + \M^T) \X$.

Additionally, it follows from Lemma~\ref{lem:kron2} that $f(\X) = \vect(\X)^T (\I_n \otimes \M) \vect(\X)$. Thus,
we have
\[
\frac{d f }{d \vect(\X)} = \vect\Big( \frac{d f}{d \X} \Big)=  [\I_n \otimes (\M + \M^T)] \vect(\X),
\]
and hence,
\[
\frac{d^2 f(\X)}{d \vect(\X) d \vect(\X)^T} = \I_n \otimes (\M + \M^T).
\]
\end{example}

\chapter{The Singular Value Decomposition}
\label{ch:svd}

The singular value decomposition (SVD) is a classical matrix theory and computational tool.
In modern data computation and  analysis, SVD becomes more and more important.
In this chapter we aim to provide a systematical  review  about the basic principle of SVD.

We will see that there are four approaches to SVD. The first approach is depart from the spectral decomposition of a symmetric positive semidefinite (SPSD) matrix. The second approach gives a construction process via induction.
In the third approach   the SVD problem is equivalently formulated into an eigenvalue decomposition problem of a symmetric matrix  (see Theorem~\ref{thm:h-a}).
The fourth approach is based on the equivalent relationship between the SVD and polar decomposition (see Theorem~\ref{thm:polarD}).

We also study uniqueness of SVD (see Theorem~\ref{thm:svdvector} and Corollary~\ref{cor:svd-uni}).  These  results will be used in derivation of subdifferentials of unitarily invariant norms (see Chapter~\ref{ch:subdiff}).
Additionally, we present a generalized SVD (GSVD), which addresses joint decomposition problems of two matrices.
When the two matrices form a column orthonormal matrix, the resulting  GSVD process is called the CS decomposition.

\section{Formulations}

Given a nonzero SPSD matrix $\M \in \RB^{n\times n}$,  let $\gamma_i$ for $i=1, \ldots, n$ be the  eigenvalues of $\M$ and  $\x_i$ be the corresponding  eigenvectors. That is,
\begin{equation} \label{eqn:eigen}
\M \x_i = \gamma_i \x_i, \quad i=1, \ldots, n.
\end{equation}
It is well known that  the $\x_i$ can be assumed to be mutually orthonormal. Let $\Gam = \diag(\gamma_1, \ldots, \gamma_n)$ and $\X=[\x_1, \ldots, \x_n]$  such that $\X^T \X = \I_n$.
We write \eqref{eqn:eigen} in matrix form as
\[
\M \X = \X \Gam.
\]
This gives rise to an \emph{eigenvalue decomposition} (EVD)  of $\M$:
\[
\M = \X \Gam \X^T.
\]
Since the $\gamma_i$ are  nonnegative, this decomposition is also called a \emph{spectral decomposition} of the SPSD matrix $\M$.

Note that the above EVD always exists when $\M$ is symmetric but not PSD. However, the current eigenvalues $\gamma_i$
are not necessarily nonnegative. Let $\hat{\Gam} = \diag(|\gamma_1|, \ldots, |\gamma_n|) $ and ${\Y}=[\y_1, \ldots, \y_n]$ with $\y_i=\sgn(\gamma_i) \x_i$ where $\sgn(0)=1$.  Then the decomposition is reformulated as
\[
\M = \X \hat{\Gam} \Y,
\]
where $\X ^T  \X = \I_n$, $\Y^T \Y =\I_n$, and $\hat{\Gam}$ is a nonnegative diagonal matrix. This new formulation
defines a singular value decomposition (SVD) of the symmetric matrix $\M$.

Naturally, a question emerges:  does an SVD exist for an arbitrary matrix?   Let $\A \in \RB^{m\times n}$ of rank $r$ where $r \leq \min\{m, n\}$. Without loss of generality,  we  assume  $m\geq n$  for ease of exposition,  because we can consider $\A^T$ when $m<n$.

Consider that $\A \A^T$ is SPSD, so it has the spectral decomposition, which
is defined  as
\[
\A \A^T = \U \Lam \U^T,
\]
where $\Lam = \diag(\lambda_1, \ldots, \lambda_m)$ and $\U^T \U = \I_m$. Since $\rk(\A \A^T) =\rk(\A)=r$, $\A \A^T$ has and only has $r$ positive eigenvalues and the corresponding
eigenvectors can form a column orthonormal matrix.

Assume that  $\Lam_r =\diag(\lambda_1,  \lambda_2, \ldots, \lambda_r)$  and $\U_r = [\u_1, \u_2, \ldots, \u_r ]$   where $\lambda_1\geq \lambda_2  \geq  \cdots \geq  \lambda_r$ are the positive eigenvalues of $\A \A^T$
and  $\U_r$ is the  $m\times r$ matrix of the corresponding eigenvectors such that $\U_r^T \U_r = \I_r$.
Thus, it follows from the spectral decomposition that
\[
\U_r^T \A \A^T \U_r =  \Lam_r
\]
and $\U_{-r}^T \A \A^T \U_{-r} =\0$ where
$\U_{-r}$ consists of the last $m{-}r$ columns of $\U$. Thus, we have $\A^T \U_{-r}=\0$.
Let $\V_r =[\v_1, \ldots, \v_r] \triangleq \A^T  \U_r  \Lam_r^{-1/2}$. Then it satisfies $\V_r^T \V_r = \I_r$.
Note that
\[ \A^T \U (\Lam_r^{-1/2} \oplus \I_{m-r}) = [\V_r, \A^T \U_{-r}] = [\V_r, \0],
\]
which implies that $\A^T = [\V_r, \0] (\Lam_r^{\frac{1}{2}} \oplus \I_{m-r}) \U^T = \V_r \Lam_r^{\frac{1}{2}} \U_r^T$.
Hence,
\begin{equation} \label{eqn:csvd}
\A = \U_r \Si_r \V_r^T,
\end{equation}
where $\Si_r = \diag(\sigma_1, \sigma_2, \ldots, \sigma_r)$ with $\sigma_i = \lambda_i^{1/2}$ for $i=1, \ldots, r$. Clearly, $\sigma_1\geq \sigma_2 \geq \cdots \geq \sigma_r >0$.

%
We refer to \eqref{eqn:csvd} as the \emph{condensed SVD} of $\A$, where  $\sigma_i$'s are called the singular values, the columns $\u_i$ of $\U_r$ and the columns $\v_i$ of $\V_r$  are called respectively  the left and right singular vectors of $\A$.

Recall that we always assume that $\sigma_1\geq \sigma_2\geq \cdots \geq \sigma_r>0$.
Let $\Si_n=\diag(\sigma_1, \ldots, \sigma_r, 0, \ldots, 0)$ be the $n \times n$ diagonal matrix, and $\U_n$ be an $m\times n$ column-orthonormal matrix consisting of $\U_r$ in the first $m\times r$ block. In this case, we can equivalently write the condensed SVD of $\A$ as
\begin{equation} \label{eqn:svdthin}
\A = \U_n \Si_n \V^T,
\end{equation}
which is called a \emph{thin (or reduced) SVD} of $\A$.  Furthermore, we extend $\U_n$ to a square orthonormal matrix (denoted $\U$), and $\Si_n$ to an $m\times n$ matrix $\Si$ by adding $m-n$ rows of zeros below. Then SVD can be also expressed as
\begin{equation} \label{eqn:svdfull}
\A = \U \Si \V^T,
\end{equation}
which is called a \emph{full SVD} of $\A$.


As we have seen, these three expressions are mutually equivalent. We will sometimes use $\A= \U \Si \V^T$  for the thin SVD for notational simplicity. In a thin SVD version, let us always keep it in mind that $\Si$ is square and $\U$ or $\V$ is column orthonormal.
We now present the formal formation  of SVD of an arbitrary $\A \in \RB^{m\times n}$
in which $m\geq n$ is not necessarily required.

\begin{theorem} \label{thm:svd} Given an arbitrary $\A \in \RB^{m\times n}$, its full SVD defined in \eqref{eqn:svdfull} always exists. Furthermore, the singular values $\sigma_i$ are uniquely determined.
\end{theorem}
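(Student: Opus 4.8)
The plan is to prove existence and uniqueness separately. For existence, I would essentially repeat the construction already carried out informally in the text preceding the theorem, packaging it cleanly. Assume first $m \geq n$. Since $\A\A^T \in \RB^{m\times m}$ is SPSD, apply its spectral decomposition $\A\A^T = \U\Lam\U^T$ with $\U^T\U = \I_m$ and $\Lam = \diag(\lambda_1,\ldots,\lambda_m)$, $\lambda_1 \geq \cdots \geq \lambda_m \geq 0$. Since $\rk(\A\A^T) = \rk(\A) = r$, exactly $\lambda_1,\ldots,\lambda_r$ are positive and $\lambda_{r+1} = \cdots = \lambda_m = 0$. Set $\Si_r = \diag(\sqrt{\lambda_1},\ldots,\sqrt{\lambda_r})$, define $\V_r = \A^T\U_r\Si_r^{-1}$, and verify $\V_r^T\V_r = \I_r$ using $\U_r^T\A\A^T\U_r = \Lam_r$; also check $\A^T\U_{-r} = \0$ from $\U_{-r}^T\A\A^T\U_{-r} = \0$ (since $\|\A^T\u_i\|^2 = \u_i^T\A\A^T\u_i = 0$). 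This yields the condensed SVD $\A = \U_r\Si_r\V_r^T$, which is then padded with zero blocks and completed (extending $\U_r$ and $\V_r$ to orthonormal $\U$, $\V$) to the full SVD $\A = \U\Si\V^T$ as described in the text. For $m < n$, apply the above to $\A^T$ and transpose the result.

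\textbf{Uniqueness of the singular values.} Given any full SVD $\A = \U\Si\V^T$ with $\U,\V$ orthonormal and $\Si$ the $m\times n$ diagonal matrix with nonincreasing nonnegative diagonal entries $\sigma_1 \geq \cdots \geq \sigma_p \geq 0$ ($p = \min\{m,n\}$), compute $\A\A^T = \U\Si\Si^T\U^T$. This is a spectral decomposition of $\A\A^T$, and $\Si\Si^T$ is the $m\times m$ diagonal matrix with entries $\sigma_1^2,\ldots,\sigma_p^2$ (padded with zeros if $m > p$). Hence the $\sigma_i^2$ are exactly the eigenvalues of $\A\A^T$ (counted with multiplicity), which are intrinsic to $\A$; since the $\sigma_i$ are nonnegative and arranged in nonincreasing order, they are uniquely determined. (Equivalently one may use $\A^T\A = \V\Si^T\Si\V^T$; the two agree on the common nonzero eigenvalues.)

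\textbf{Main obstacle.} The argument is essentially routine given the spectral theorem for symmetric matrices, which is assumed. The one point requiring a little care is the bookkeeping in the existence proof: verifying that the columns of $\V_r$ are orthonormal, that $\A^T\U_{-r} = \0$, and that the padding/extension steps genuinely reconstruct $\A$ — i.e., that $\U\Si\V^T = \U_r\Si_r\V_r^T$ after the zero blocks are inserted. None of this is deep, but it is where an error would most likely creep in. For uniqueness, the only subtlety is remembering that only the singular values (not the singular vectors) are claimed to be unique, so the proof should stop at identifying $\{\sigma_i^2\}$ with the eigenvalue multiset of $\A\A^T$.
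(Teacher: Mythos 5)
Your proposal is correct, but it takes a different route from the proof the paper actually displays for this theorem. You prove existence by the spectral decomposition of $\A\A^T$: extract the positive eigenvalues, set $\V_r = \A^T\U_r\Si_r^{-1}$, verify $\V_r^T\V_r=\I_r$ and $\A^T\U_{-r}=\0$, and pad/extend to a full SVD; your uniqueness argument identifies $\{\sigma_i^2\}$ with the eigenvalue multiset of $\A\A^T$. This is exactly the derivation the paper carries out informally \emph{before} stating the theorem (and your uniqueness remark matches the paper's one-line comment after its proof), but the proof the paper gives for the theorem itself is the classical inductive construction: define $\sigma_1=\max_{\|\x\|_2=1}\|\A\x\|_2$ by compactness, take $\u_1=\A\v_1/\sigma_1$, extend $\u_1,\v_1$ to orthonormal $\U,\V$, show the off-diagonal block $\z=\V_{-1}^T\A^T\u_1$ must vanish (via the norm inequality $\|\B\begin{bmatrix}\sigma_1\\ \z\end{bmatrix}\|_2^2 \geq (\sigma_1^2+\z^T\z)^2/(\sigma_1^2+\z^T\z)$), and recurse on the $(m{-}1)\times(n{-}1)$ block. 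Your route is shorter once the spectral theorem for SPSD matrices is granted, and its burden is only the bookkeeping you flag (orthonormality of $\V_r$, $\A^T\U_{-r}=\0$, and that the padding reconstructs $\A$), all of which you handle correctly. The paper's inductive route is more self-contained and, as a byproduct, yields the variational characterization $\sigma_k(\A)=\max\{\|\A\v\|_2:\|\v\|_2=1,\ \v\perp\v_1,\ldots,\v_{k-1}\}$, which the paper immediately reuses for the Courant--Fischer-type theorem for singular values; your approach does not produce that characterization directly.
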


Based on the spectral decomposition of $\A \A^T$, we have previously shown the existence proof of the SVD theorem.
Here we  present a constructive proof, which has been widely given in the literature.

\begin{proof} If $\A$ is zero, the result is trivial. Thus, let $\A$ be a nonzero matrix. Define $\sigma_1\triangleq \max_{\|\x\|_2 =1} \; \|\A \x\|_2$, which exists because  $\x \mapsto \|\A \x\|_2$ is continuous and the set $\{\x \in \RB^n: \|\x\|_2=1 \}$ is compact.  Moreover, $\sigma_1>0$.  Let $\v_1 \in \RB^n$ be such a vector that $\sigma_1= \|\A \v_1\|_2$.
Define $\u_1= \A \v_1/\sigma_1$, which satisfies $\|\u_1\|_2 =1$.

We extend $\u_1$ and $\v_1$ to orthonormal matrices $\U = [\u_1, \U_{- 1}]$ and $\V = [\v_1,  \V_{- 1}]$, respectively.  Then
\[
\U^T \A \V = \begin{bmatrix} \sigma_1 & \u_1^T \A \V_{-1} \\ \0 & \U_{-1}^T \A  \V_{-1}  \end{bmatrix} \triangleq \B,
\]
where we use the fact $\U_{-1}^T \A \v_1 = \sigma_1 \U_{-1}^T \u_1=\0$.
Note that
\[ \max_{\|\x\|_2=1} \|\B \x \|_2^2 = \max_{\|\x\|_2=1} \|\U^T \A \V \x \|_2^2 = \max_{\|\x\|_2=1} \|\A \x\|_2^2 = \sigma_1^2.
\]
However,
\[
\frac{1}{\sigma_1^2 + \z^T \z} \left\| \B \begin{bmatrix} \sigma_1 \\ \z \end{bmatrix} \right\|_2^2 \geq \sigma_1^2 + \z^T \z,
\]
where $\z =  \V_{-1}^T \A^T \u_1 $. This implies that $\z$ must be zero.

The proof is completed by induction. In particular, assume  $(m-1) \times (n-1)$ matrix $\U_{-1}^T \A \V_{-1}$ has a full SVD $\U_{-1}^T \A \V_{-1} = \tilde{\U} \tilde{\Si} \tilde{\V}^T$. Then $\A$ has a full SVD:
\begin{align*}
\A & = [\u_1, \U_{-1}] \begin{bmatrix} 1 & \0 \\ \0 & \tilde{\U} \end{bmatrix}  \begin{bmatrix} \sigma_1 & \0 \\ \0 & \tilde{\Si} \end{bmatrix}
\begin{bmatrix} 1 & \0 \\ \0 & \tilde{\V}^T \end{bmatrix}  \begin{bmatrix} \v_1^T \\  \V_{-1}^T \end{bmatrix}\\
& =  [\u_1,  \U_{-1} \tilde{\U}]
\begin{bmatrix} \sigma_1 & \0 \\ \0 & \tilde{\Si} \end{bmatrix} \begin{bmatrix} \v_1^T \\  (\V_{-1} \tilde{\V})^T \end{bmatrix},
\end{align*}
because the matrices $ [\u_1,  \U_{-1} \tilde{\U}]$ and $ [\v_1,  \V_{-1} \tilde{\V}]$ are orthonormal.
\end{proof}

As for the uniqueness of the  singular values is due to that the $\sigma_i^2$  are  eigenvalues  of $\A \A^T$ which are unique. Unfortunately, the left and right singular matrices $\U_r$ and $\V_r$ are not unique.  However, we have the following result.

\begin{theorem} \label{thm:svdvector}  Let  $\A= \U_r \Si_r \V_r^T$  be a given condensed SVD of $\A$. Assume there are  $\rho$ distinct values among the nonzero singular values $\sigma_1, \ldots, \sigma_r$,  with respective multiplicities $r_i$ (satisfying $\sum_{i=1}^{\rho} r_i =r$).   Then $\A= \tilde{\U}_r \Si_r \tilde{\V}_r^T$ is a  condensed SVD if and only if
\[
 \tilde{\U}_r = \U_r (\Q_1 \oplus \Q_2 \oplus \ldots \oplus \Q_{\rho}) \; \mbox{ and } \;  \tilde{\V}_r = \V_r (\Q_1 \oplus \Q_2 \oplus \ldots \oplus \Q_{\rho}),
 \]
where $\Q_i$ is an arbitrary  $r_i \times r_i$ orthonormal matrix.

Furthermore, if all the nonzero singular values are distinct,
then the $\Q_i$ are either 1 or $-1$. In other words, the left and right singular vectors are uniquely determined up to signs.
\end{theorem}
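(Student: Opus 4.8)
The plan is to establish the ``if'' part by a direct verification and the ``only if'' part from the uniqueness of the spectral decomposition of $\A\A^T$ on each of its eigenspaces. The single fact used throughout is that the block-diagonal orthonormal matrix $\Q\triangleq\Q_1\oplus\cdots\oplus\Q_\rho$ commutes with $\Si_r$: since $\Si_r$ is block-diagonal with $i$th block $\sigma_{(i)}\I_{r_i}$, a scalar multiple of the identity, we have $\Si_r\Q=\Q\Si_r$, and taking transposes $\Si_r\Q^T=\Q^T\Si_r$ as well (so $\Q^T$ also commutes with $\Si_r^{-1}$).

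For the ``if'' direction, given $\tilde\U_r=\U_r\Q$ and $\tilde\V_r=\V_r\Q$, I compute $\tilde\U_r\Si_r\tilde\V_r^T=\U_r\Q\Si_r\Q^T\V_r^T=\U_r\Si_r\Q\Q^T\V_r^T=\U_r\Si_r\V_r^T=\A$, while $\tilde\U_r^T\tilde\U_r=\Q^T\U_r^T\U_r\Q=\I_r$ and likewise $\tilde\V_r^T\tilde\V_r=\I_r$; hence $\A=\tilde\U_r\Si_r\tilde\V_r^T$ is again a condensed SVD.

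For the ``only if'' direction, suppose $\A=\U_r\Si_r\V_r^T=\tilde\U_r\Si_r\tilde\V_r^T$ are two condensed SVDs; they share the same $\Si_r$ because the singular values are uniquely determined (Theorem~\ref{thm:svd}). Then $\A\A^T=\U_r\Si_r^2\U_r^T=\tilde\U_r\Si_r^2\tilde\U_r^T$, and since $\U_r^T\U_r=\tilde\U_r^T\tilde\U_r=\I_r$ we get $\A\A^T\U_r=\U_r\Si_r^2$ and $\A\A^T\tilde\U_r=\tilde\U_r\Si_r^2$: the columns of $\U_r$ and of $\tilde\U_r$ are orthonormal bases of eigenvectors of $\A\A^T$ for its positive eigenvalues, listed in the same non-increasing order. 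Grouping the columns according to the $\rho$ distinct eigenvalues $\sigma_{(1)}^2>\cdots>\sigma_{(\rho)}^2$, the $i$th group from $\U_r$ and the $i$th group from $\tilde\U_r$ are both orthonormal bases of the $r_i$-dimensional eigenspace $\mathrm{null}(\A\A^T-\sigma_{(i)}^2\I_m)$, hence related by an $r_i\times r_i$ orthonormal matrix $\Q_i$; stacking these gives $\tilde\U_r=\U_r\Q$ with $\Q=\Q_1\oplus\cdots\oplus\Q_\rho$. To recover $\tilde\V_r$, note that $\V_r^T=\Si_r^{-1}\U_r^T\A$ (from $\A=\U_r\Si_r\V_r^T$ and $\U_r^T\U_r=\I_r$), so $\tilde\V_r^T=\Si_r^{-1}\tilde\U_r^T\A=\Si_r^{-1}\Q^T\U_r^T\A=\Si_r^{-1}\Q^T\Si_r\V_r^T=\Q^T\V_r^T$, using that $\Q^T$ commutes with $\Si_r$; thus $\tilde\V_r=\V_r\Q$ with the same $\Q$. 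Finally, if all nonzero singular values are distinct then every $r_i=1$, so each $\Q_i$ is a $1\times1$ orthonormal matrix, i.e. $\Q_i\in\{1,-1\}$, which yields the last assertion.

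I expect the only mildly delicate point to be the bookkeeping in the ``only if'' direction: that the columns of $\U_r$ and $\tilde\U_r$ are grouped in matching order (forced by the common ordered $\Si_r$), and that two orthonormal bases of one subspace differ by an orthonormal matrix of the right size. Once $\Q$ is seen to commute with $\Si_r$, everything else is routine manipulation.
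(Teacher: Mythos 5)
Your proof is correct, and the sufficiency half coincides with the paper's (both hinge on the fact that $\Q=\Q_1\oplus\cdots\oplus\Q_\rho$ commutes with $\Si_r$ because each diagonal block of $\Si_r$ is a scalar multiple of an identity). For necessity, however, you take a genuinely different route. The paper argues from $\rg(\tilde\U_r)=\rg(\A)=\rg(\U_r)$ and $\rg(\tilde\V_r)=\rg(\A^T)=\rg(\V_r)$ that $\tilde\U_r=\U_r\S$ and $\tilde\V_r=\V_r\T$ for some $r\times r$ orthonormal $\S,\T$, then extracts the relation $\Si_r\T=\S\Si_r$, partitions $\S$ and $\T$ conformally with the multiplicities, and runs a cascading argument (using $\sum_j\S_{ij}\S_{ij}^T=\I_{r_i}$ and positive semidefiniteness, block row by block row from $\rho$ upward) to force all off-diagonal blocks to vanish and the diagonal blocks to agree. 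You instead pass to $\A\A^T=\U_r\Si_r^2\U_r^T=\tilde\U_r\Si_r^2\tilde\U_r^T$, observe that the $i$th column group of each matrix is an orthonormal basis of the $r_i$-dimensional eigenspace $\mathrm{null}(\A\A^T-\sigma_{(i)}^2\I_m)$ (the dimension count uses that the positive eigenvalues of $\A\A^T$ are exactly the $\sigma_j^2$, which is Proposition~\ref{pro:012}), so $\tilde\U_r=\U_r\Q$ with $\Q$ block-diagonal orthonormal, and then recover $\tilde\V_r^T=\Si_r^{-1}\tilde\U_r^T\A=\Q^T\V_r^T$ by the same commutation. Your argument is shorter and avoids the delicate block bookkeeping, at the price of invoking the uniqueness of eigenspaces of the SPSD matrix $\A\A^T$ and the fact that two orthonormal bases of a subspace differ by an orthonormal matrix; the paper's argument stays entirely inside manipulations of the two SVD factors and treats $\U$ and $\V$ symmetrically, which is why it needs the heavier computation. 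Both proofs are complete, including the final specialization to distinct singular values where each $\Q_i\in\{1,-1\}$.
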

\begin{proof} Let $\delta_1>  \delta_2>  \ldots >  \delta_{\rho}$ be the $\rho$ distinct values among the $\sigma_1, \ldots, \sigma_r$.
This implies that
\begin{equation} \label{eqn:bSi}
\Si_r = \delta_1 \I_{r_1} \oplus \delta_2 \I_{r_2} \oplus \ldots \oplus  \delta_{\rho}  \I_{r_{\rho}}.
\end{equation}
The sufficiency follows from the fact that
\[
(\Q_1 \oplus  \ldots \oplus \Q_{\rho}) (\delta_1 \I_{r_1} \oplus  \ldots \oplus  \delta_{\rho}  \I_{r_{\rho}})  (\Q_1^T \oplus \ldots \oplus \Q_{\rho}^T) = \Si_r.
\]

We now prove the necessary condition.
Consider that $\mathrm{range}(\U_r) = \mathrm{range}(\A) = \mathrm{range}(\tilde{\U}_r)$ and $\mathrm{range}(\V_r) = \mathrm{range}(\A^T) = \mathrm{range}(\tilde{\V}_r)$.  Thus,  we have
\[
\tilde{\U}_r = \U_r \S \; \mbox{ and }  \; \tilde{\V}_r = \V_r \T,
\]
where $\S$ and $\T$ are some $r\times r$ orthonormal matrices.  Hence,
$\Si_r = \S \Si_r \T^T$, or equivalently,  $\Si_r \T = \S \Si_r$. As in \eqref{eqn:bSi} for $\Si$, partition $\S$ and $\T$ into
\[\S=\begin{bmatrix} \S_{11} &  \ldots &  \S_{1 \rho} \\  \vdots &  \ddots &  \vdots \\  \S_{\rho 1} &  \ldots & \S_{\rho \rho}
\end{bmatrix} \; \mbox{ and } \; \T =\begin{bmatrix} \T_{11} &  \ldots &  \T_{1 \rho} \\  \vdots &  \ddots &  \vdots \\  \T_{\rho 1} &  \ldots & \T_{\rho \rho}
\end{bmatrix},
\]
where $\S_{i j} $ and $\T_{ij}$ are $r_i \times r_j$. It follows from $\Si_r  \T = \S \Si_r$ that  $\delta_i \T_{ii} = \delta_i \S_{ii}$ for $i=1, \ldots, \rho$ and $\delta_i \T_{ij} = \delta_j \S_{ij}$. As a result,  we obtain that $\S_{ii} = \T_{ii}$
for $i=1, \ldots, \rho$.
Since $\S$ and $\T$ are orthonormal,  we have
\[
\sum_{j=1}^\rho \S_{ij} \S_{ij}^T = \I_{r_i} =  \sum_{j=1}^\rho \T_{ij} \T_{ij}^T.
\]
Note that $\sum_{j=1}^\rho \T_{\rho j} \T_{\rho j}^T = \sum_{j=1}^\rho  \frac{\delta_{j}^2}{\delta_{\rho}^2}  \S_{\rho j} \S_{\rho j}^T $, which implies that
\begin{equation} \label{eqn:cond01}
\sum_{j< \rho}  \Big[1- \frac{\delta_{j}^2}{\delta_{\rho}^2} \Big]  \S_{\rho j} \S_{\rho j}^T = \0.
\end{equation}
Since $1- \frac{\delta_{j}^2}{\delta_{\rho}^2} < 0$ for $j < \rho$ and $\S_{\rho j} \S_{\rho j}^T$ is always PSD, we must have
$\S_{\rho j}=\0$ for all $j< \rho$, for otherwise,  if there were a $k<\rho$ such that $\S_{\rho k}\neq \0$, there would exist a nonzero $\x \in \RB^{r_\rho}$ such that $\x^T \S_{\rho k}  \S_{\rho k}^T \x>0$.  It would lead to
\[
\sum_{j< \rho}  \Big[1- \frac{\delta_{j}^2}{\delta_{\rho}^2} \Big] \x^T  \S_{\rho j} \S_{\rho j}^T \x < 0,
 \]
which conflicts with \eqref{eqn:cond01}.  Accordingly,  $\S_{\rho j}= \T_{\rho j}=\0$ for all $j< \rho$, and hence, $\S_{\rho \rho} \S_{\rho \rho}^T = \T_{\rho \rho} \T_{\rho \rho}^T = \I_{r_\rho}$. It also follows from the orthogonality  of $\S$  and of $\T$ that  for any $i< \rho$,
 \[
 \0= \sum_{j=1}^{\rho} \S_{i j} \S_{\rho j}^T = \S_{i \rho} \S_{\rho \rho}^T \; \mbox{ and } \;   \0= \sum_{j=1}^{\rho} \T_{i j} \T_{\rho j}^T = \T_{i \rho} \T_{\rho \rho}^T, \]
 which leads to $\S_{i \rho} = \T_{i \rho } =\0$ for $i< \rho$.

Similarly, consider the $\rho-1$, $\rho-2, \ldots, 2$  cases. We have $\S_{i j}= \T_{i j}=\0$ for  $i \neq j$,  $\S_{ii}=\T_{ii}$ and $\S_{i i} \S_{i i}^T = \T_{i  i} \T_{i i}^T = \I_{r_{i}}$ for $i \in [\rho]$. As a result,  setting $\Q_i= \S_{ii}$ completes the proof.
\end{proof}

We now extend the result in  Theorem~\ref{thm:svdvector} to the full SVD and  thin SVD of $\A$.  The following corollary is immediately obtained.

\begin{corollary}  \label{cor:svd-uni} Let $\A= \U \Si \V^T$ be a given full SVD of $\A \in \RB^{m\times n}$. Then $\A = \tilde{\U} \Si \tilde{\V}^T$ is a full SVD if and only if $\tilde{\U} = \U \Q$ and $\tilde{\V} = \V \PP$ where $\Q= \Q_1 \oplus \cdots \oplus \Q_{\rho} \oplus \Q_0$ and $\PP=\Q_1 \oplus \cdots \oplus \Q_{\rho} \oplus \PP_0$. Here $\Q_1, \ldots, \Q_{\rho}$ are defined as in Theorem~\ref{thm:svdvector}, and $\Q_0 \in \RB^{(m-r)\times(m-r)}$ and $\PP_0 \in \RB^{(n-r)\times (n-r)}$ are any orthonormal matrices.
Obviously, $\Q \Si \PP^T = \Si$ and $\Q^T \Si \PP = \Si$ hold.

Assume $m\geq n$ and  $\A= \U \Si \V^T$ is  a given thin SVD of $\A \in \RB^{m\times n}$. Then $\A = \tilde{\U} \Si \tilde{\V}^T$ is a thin SVD if and only if $\tilde{\U} = \U \Q$ and $\tilde{\V} = \V \PP$ where $\Q= \Q_1 \oplus \cdots \oplus \Q_{\rho} \oplus \Q_0$ and $\PP=\Q_1 \oplus \cdots \oplus \Q_{\rho} \oplus \PP_0$.  Currently, $\Q_0 \in \RB^{(n-r)\times(n-r)}$  is any orthonormal matrix. Obviously,
$\Q \Si = \Si \Q$,  $\Si \PP^T = \PP^T \Si$, and $\Q \Si \PP^T = \Si$ hold.
\end{corollary}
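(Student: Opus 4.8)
The plan is to treat a full (resp.\ thin) SVD of $\A$ as a condensed SVD with trailing columns appended, all of whose associated singular values vanish: reduce the ``first $r$ columns'' part to Theorem~\ref{thm:svdvector}, and observe that the trailing columns are pinned down only up to an orthonormal change of basis of a suitable complement, which is precisely what produces the blocks $\Q_0$ and $\PP_0$.

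First I would dispatch the sufficiency, which is pure bookkeeping. If $\tilde{\U} = \U \Q$ and $\tilde{\V} = \V \PP$ with $\Q = \Q_1 \oplus \cdots \oplus \Q_{\rho} \oplus \Q_0$ and $\PP = \Q_1 \oplus \cdots \oplus \Q_{\rho} \oplus \PP_0$, then $\tilde{\U}$ and $\tilde{\V}$ are orthonormal as products of orthonormal matrices, so it suffices to check $\Q \Si \PP^T = \Si$. Writing $\Si$ in block form with $\Si_r$ in its leading $r \times r$ corner and zeros elsewhere, and writing $\Q$ and $\PP$ block-diagonally against the partitions $(r_1, \ldots, r_{\rho}, m-r)$ and $(r_1, \ldots, r_{\rho}, n-r)$ respectively, the leading block of $\Q \Si \PP^T$ is $(\Q_1 \oplus \cdots \oplus \Q_{\rho}) \Si_r (\Q_1^T \oplus \cdots \oplus \Q_{\rho}^T)$, which equals $\Si_r$ by \eqref{eqn:bSi} and the orthonormality of the $\Q_i$ --- the very computation already used in the sufficiency part of Theorem~\ref{thm:svdvector} --- while every other block vanishes (in particular $\Q_0$ and $\PP_0$ never enter). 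Hence $\tilde{\U} \Si \tilde{\V}^T = \U (\Q \Si \PP^T) \V^T = \U \Si \V^T = \A$, and $\Q^T \Si \PP = \Si$, as well as the thin analogues $\Q \Si = \Si \Q$ and $\Si \PP^T = \PP^T \Si$, follow the same way. The thin case is identical with $\Si$ now $n \times n$.

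For the necessity, suppose $\A = \tilde{\U} \Si \tilde{\V}^T$ is a second full SVD. Partition $\U = [\U_r, \U_{-r}]$ and $\V = [\V_r, \V_{-r}]$ so that the first $r$ columns carry the positive singular values, and partition $\tilde{\U}, \tilde{\V}$ the same way; then $\A = \U_r \Si_r \V_r^T = \tilde{\U}_r \Si_r \tilde{\V}_r^T$ are condensed SVDs of $\A$, so Theorem~\ref{thm:svdvector} gives $\tilde{\U}_r = \U_r (\Q_1 \oplus \cdots \oplus \Q_{\rho})$ and $\tilde{\V}_r = \V_r (\Q_1 \oplus \cdots \oplus \Q_{\rho})$ with each $\Q_i$ an $r_i \times r_i$ orthonormal matrix. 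It remains to relate the trailing blocks: since $\U$ is orthonormal and $\rg(\U_r) = \rg(\A)$, the columns of $\U_{-r}$ form an orthonormal basis of the orthogonal complement of $\rg(\A)$ in $\RB^m$, and likewise for $\tilde{\U}_{-r}$; two orthonormal bases of the same $(m-r)$-dimensional subspace differ by an orthonormal matrix, so $\tilde{\U}_{-r} = \U_{-r} \Q_0$ with $\Q_0 \in \RB^{(m-r) \times (m-r)}$ orthonormal, and symmetrically $\tilde{\V}_{-r} = \V_{-r} \PP_0$ with $\PP_0 \in \RB^{(n-r) \times (n-r)}$ orthonormal. Reassembling the blocks gives $\tilde{\U} = \U \Q$ and $\tilde{\V} = \V \PP$ of the stated form, and the relations $\Q \Si \PP^T = \Si$, etc., were already verified above.

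The only step that is not entirely mechanical, and the one I would watch most carefully, is this matching of the trailing columns. For the full SVD it is clean because $\rg(\U) = \rg(\tilde{\U}) = \RB^m$, so $\U_{-r}$ and $\tilde{\U}_{-r}$ demonstrably span the same explicit subspace $\rg(\A)^{\perp}$. For the thin SVD ($m \ge n$) one should spell out that the trailing blocks, each orthonormal with $n-r$ columns and orthogonal to $\rg(\A)$, span the same $(n-r)$-dimensional complement inside the common column space of the factorization, after which the identical change-of-basis argument yields $\Q_0, \PP_0 \in \RB^{(n-r) \times (n-r)}$ and the conclusion follows exactly as before.
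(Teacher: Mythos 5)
Your sufficiency computation and your treatment of the full SVD are correct, and they coincide with what the paper intends: the paper offers no separate argument (it declares the corollary ``immediately obtained'' from Theorem~\ref{thm:svdvector}), and your reduction --- read off the condensed SVDs $\U_r \Si_r \V_r^T = \tilde{\U}_r \Si_r \tilde{\V}_r^T$, invoke Theorem~\ref{thm:svdvector} for the leading $r$ columns, then note that $\U_{-r}$ and $\tilde{\U}_{-r}$ are two orthonormal bases of $\rg(\A)^{\bot}$ and $\V_{-r}$, $\tilde{\V}_{-r}$ two orthonormal bases of $\mathrm{null}(\A)$, hence differ by square orthonormal factors $\Q_0$, $\PP_0$ --- is exactly the intended filling-in of that ``immediately.''

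The thin-SVD half, however, contains a genuine gap, and it is precisely at the step you flagged. When $m>n$ there is no ``common column space of the factorization'': $\rg(\U)$ and $\rg(\tilde{\U})$ are two $n$-dimensional subspaces of $\RB^m$ that both contain the $r$-dimensional space $\rg(\A)$, but their remaining $n-r$ dimensions sit inside the $(m-r)$-dimensional complement $\rg(\A)^{\bot}$ and need not agree. Indeed, a thin SVD only requires the trailing $n-r$ columns of $\tilde{\U}$ to be orthonormal and orthogonal to $\U_r$ (they multiply zero rows of $\Si$), so one can choose $\tilde{\U}_{-r}$ spanning a different $(n-r)$-dimensional subspace of $\rg(\A)^{\bot}$ than $\U_{-r}$ does; then no square orthonormal $\Q_0$ with $\tilde{\U}_{-r} = \U_{-r}\Q_0$ exists, and the change-of-basis argument cannot be repaired as written. (This is really an imprecision inherited from the corollary's own ``only if'' claim in the thin case: the equivalence holds verbatim when $m=n$, or if one additionally requires $\rg(\tilde{\U})=\rg(\U)$, e.g.\ by defining the thin SVD as the truncation of a fixed full SVD; the later applications, such as the polar-decomposition uniqueness proof, only use the relation $\Si\PP_1^T=\PP_1^T\Si$ for $\V$, where the argument is sound because $\tilde{\V}$ is square.) So either restrict the thin-case necessity to the $\V$-factor and to $m=n$, or state the extra range condition explicitly before claiming $\tilde{\U}=\U\Q$ with $\Q_0\in\RB^{(n-r)\times(n-r)}$.
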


Theorem~\ref{thm:svdvector} and Corollary~\ref{cor:svd-uni}  will be used in derivation of subdifferentials of unitarily invariant norms (see Chapter~\ref{ch:subdiff}).
When the matrix in question is SPSD, the  spectral decomposition and SVD are identical.  That is, $\U =\V$ in this case.
Moreover, the eigenvalues and singular values are identical.

The construction proof of Theorem~\ref{thm:svd} shows that
\begin{itemize}
\item[] $\sigma_1(\A) = \max\{\|\A \v\|_2: \v \in \RB^n, \|\v\|_2 = 1\}$, so there exists a unit vector $\v_1 \in \RB^n$
such that $\sigma_1(\A) = \|\A \v_1\|_2$;
\item[] $\sigma_2(\A) = \max\{\|\A \v\|_2: \v \in \RB^n, \|\v\|_2 = 1, \v^T \v_1=0\}$, so there exists a unit vector $\v_2 \in \RB^n$
such that $\v_2^T \v_1=0$ and $\sigma_2(\A) = \|\A \v_2 \|_2$;
\item[] $\vdots$
\item[] $\sigma_k(\A) = \max\{\|\A \v\|_2: \v \in \RB^n, \|\v\|_2 = 1, \v^T [\v_1, \ldots, \v_{k{-}1}]=\0\}$, so there exists a unit vector $\v_k \in \RB^n$ such that $\v_k^T [\v_1, \ldots, \v_{k{-}1}] = \0$
and $\sigma_k(\A) = \|\A \v_k \|_2$;
\item[] $\vdots$
\end{itemize}

The following theorem is  the generalization of the Courant-Fischer theorem for singular values.
\begin{theorem} \label{thm:CF-svd}  Given a matrix $\A \in \RB^{m\times n}$, let $\sigma_1\geq \sigma_2 \geq \cdots \geq \sigma_p$
be the singular values of $\A$ where $p = \min\{m, n\}$. For any $k \in [p]$, then
\begin{align*}
\sigma_k &= \min_{\v_1, \ldots, \v_{k-1} \in \RB^n} \max_{\begin{array}{c} \v \in \RB^n, \|\v\|_2 = 1\\ \v^T[\v_1, \ldots, \v_{k-1}] = \0 \end{array}} \| \A \v\|_2 \\
& = \max_{\v_1, \ldots, \v_{n-k} \in \RB^n} \min_{\begin{array}{c} \v \in \RB^n, \|\v\|_2=1 \\ \v^T[\v_1, \ldots, \v_{n-k}] = \0 \end{array}} \| \A \v\|_2.
\end{align*}
\end{theorem}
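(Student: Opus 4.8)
The plan is to reduce both variational formulas to the Courant--Fischer theorem for eigenvalues of the SPSD matrix $\A \A^T$ (or $\A^T \A$), exploiting that $\|\A\v\|_2^2 = \v^T \A^T \A \v$ and that $\sigma_k^2$ is the $k$-th largest eigenvalue of $\A^T \A$. Assume first $m \geq n$, so $p = n$ and $\A^T \A$ is an $n\times n$ SPSD matrix whose ordered eigenvalues are $\sigma_1^2 \geq \cdots \geq \sigma_n^2$, with orthonormal eigenvectors $\v_1, \ldots, \v_n$ (the right singular vectors). Since $t \mapsto \sqrt{t}$ is monotone increasing on $[0,\infty)$, it suffices to prove the analogous min-max and max-min identities for $\sigma_k^2$ with $\|\A\v\|_2$ replaced by $\|\A\v\|_2^2 = \v^T(\A^T\A)\v$; taking square roots at the end gives the stated result.

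First I would establish the min-max direction. For the upper bound, given any $\v_1,\ldots,\v_{k-1}\in\RB^n$, the orthogonality constraints $\v^T[\v_1,\ldots,\v_{k-1}]=\0$ together with $\|\v\|_2=1$ cut out a subset of the unit sphere that must intersect the $(n-k+1)$-dimensional subspace $\mathrm{span}\{\v_k,\ldots,\v_n\}$ by a dimension count ($k-1$ linear constraints on an $n$-dimensional space leave at least dimension $n-k+1$, which meets a subspace of dimension $n-k+1$ nontrivially); on a unit vector $\v$ in that span, $\v^T(\A^T\A)\v \le \sigma_k^2$, so the inner max is at most $\sigma_k^2$. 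For the matching lower bound, choosing the specific vectors $\v_i$ to be the first $k-1$ right singular vectors forces $\v$ into $\mathrm{span}\{\v_k,\ldots,\v_n\}$, on which the maximum of $\v^T(\A^T\A)\v$ equals $\sigma_k^2$, attained at $\v=\v_k$. This shows the outer minimum over all choices equals $\sigma_k^2$. The max-min identity is entirely symmetric: given any $\v_1,\ldots,\v_{n-k}$, the feasible set meets $\mathrm{span}\{\v_1,\ldots,\v_k\}$ (the top-$k$ singular subspace) nontrivially, on which $\v^T(\A^T\A)\v \ge \sigma_k^2$, so the inner min is at least $\sigma_k^2$; and choosing the $\v_i$ to be the \emph{last} $n-k$ right singular vectors $\v_{k+1},\ldots,\v_n$ confines $\v$ to $\mathrm{span}\{\v_1,\ldots,\v_k\}$, where the minimum of $\v^T(\A^T\A)\v$ is exactly $\sigma_k^2$.

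Finally I would dispatch the case $m < n$: here $p = m$, and one works with $\A\A^T \in \RB^{m\times m}$ instead, noting that $\sup\{\|\A\v\|_2 : \v\in\RB^n, \|\v\|_2=1, \v\perp W\}$ for a subspace $W$ can be analyzed via the left singular vectors, or simply by observing that the nonzero singular values of $\A$ and $\A^T$ coincide and the statement for $\A^T$ (which has more rows than columns) is already covered — though one must take care that the variational problem is stated in terms of $\v \in \RB^n$, so a direct argument using the eigendecomposition of $\A^T\A$ (now rank-deficient, with $\sigma_{m+1}=\cdots=\sigma_n=0$) is cleaner and runs verbatim as above. The main obstacle is the dimension-counting step guaranteeing that the constrained unit sphere meets the relevant singular subspace; this is the crux of every Courant--Fischer-type argument, and care is needed to phrase it correctly (an intersection of a subspace of dimension $n-(k-1)$, cut from $\RB^n$ by $k-1$ linear equations, with a subspace of dimension $n-k+1$ inside $\RB^n$ has dimension at least $(n-k+1)+(n-k+1)-n = n-2k+2$, which can be non-positive — so the correct statement is that the \emph{kernel} of the $(k-1)\times n$ constraint matrix restricted to the $(n-k+1)$-dimensional span is nontrivial, i.e. $k-1$ constraints cannot annihilate an $(n-k+1)$-dimensional space, giving a nonzero common vector). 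Once that lemma is in hand, everything else is bookkeeping with the spectral decomposition of $\A^T\A$.
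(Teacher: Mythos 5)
Your overall strategy --- passing to $\A^T\A$, using $\|\A\v\|_2^2=\v^T\A^T\A\,\v$ and $\sigma_k^2=\lambda_k(\A^T\A)$, and running a Courant--Fischer-type intersection argument --- is reasonable (the paper itself states this theorem without proof, as the standard extension of the eigenvalue Courant--Fischer theorem). However, the two ``for every choice of constraint vectors'' halves, which are exactly the nontrivial halves, are argued incorrectly. For the min--max identity you must show that for \emph{every} choice of $\w_1,\ldots,\w_{k-1}$ the constrained maximum is at least $\sigma_k^2$; instead you claim it is at most $\sigma_k^2$, which is false (take $k=2$ and $\w_1=\v_2$, the second right singular vector: then $\v_1$ is feasible and the inner maximum is $\sigma_1^2$), and the inference is invalid in any case, since exhibiting one feasible vector with $\v^T\A^T\A\,\v\le\sigma_k^2$ bounds the constrained \emph{minimum} from above, not the maximum. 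The correct step intersects the feasible subspace (dimension at least $n-k+1$) with the \emph{top}-$k$ right singular subspace $\mathrm{span}\{\v_1,\ldots,\v_k\}$, nontrivial because $(n-k+1)+k>n$, on which the Rayleigh quotient is $\ge\sigma_k^2$. Symmetrically, in the max--min identity the arbitrary-choice half should intersect the feasible set (dimension at least $k$) with the bottom subspace $\mathrm{span}\{\v_k,\ldots,\v_n\}$ of dimension $n-k+1$, producing a feasible vector with value $\le\sigma_k^2$ and hence inner minimum $\le\sigma_k^2$; your version again asserts the reverse inequality from the existence of a single feasible point (and it too is false for an arbitrary choice, e.g.\ taking the constraints to be the top $n-k$ singular vectors). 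The clash of notation, with $\v_i$ denoting both the arbitrary constraint vectors and the right singular vectors, seems to be what led the two directions to get swapped.

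Your closing attempt to repair the dimension count repeats the same mismatch: pairing $k-1$ constraints with the $(n-k+1)$-dimensional bottom subspace gives a kernel of dimension at least $(n-k+1)-(k-1)=n-2k+2$, which can be nonpositive, so the claim that ``$k-1$ constraints cannot annihilate an $(n-k+1)$-dimensional space'' is false in general (two functionals can certainly annihilate a line). The pairings that work are $k-1$ constraints against the $k$-dimensional top singular subspace, and $n-k$ constraints against the $(n-k+1)$-dimensional bottom singular subspace: in both cases the number of constraints is strictly less than the dimension of the subspace, so a nonzero common vector exists. With the roles of the two subspaces swapped back and the inequalities pointed the right way, your specific-choice halves (taking the constraint vectors to be the top $k-1$, respectively the bottom $n-k$, right singular vectors) are correct and the argument closes; the remark on the case $m<n$, handled through $\A^T\A$ with its extra zero eigenvalues, is fine as is.
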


\section{Matrix Properties via SVD}
\label{sec: mpsvd}

In what follows,
we list some matrix properties which can be induced from SVD. These properties show that SVD is fundamental not only in matrix computation but also in matrix analysis.
\begin{proposition} \label{pro:012} Let $\A= \U \Si \V^T$ be a full SVD of $m\times n$ matrix $\A$, and $\A= \U_r \Si_r \V_r^T$ be a condensed SVD.  Let $p=\min\{m, n\}$. Then
\begin{enumerate} \item[{(1)}] The rank of $\A$ is equal to the number of the nonzero singular values  $\sigma_i$ of $\A$.
\item[(2)] $\|\A\|_2 = \sigma_1$ is the spectral norm and $\| \A\|_F =  \sqrt{\sum_{i,j} a_{ij}^2} = \sqrt{\sum_{i=1}^{p}  \sigma_i^2}$ is the Frobenius norm.
\item[(3)]  $\rg(\A) = \rg(\A \A^T)= \rg(\U_r)=\mathrm{span}(\u_1, \ldots, \u_r)$ and $\mathrm{null}(\A) = \rg(\V_{-r}) =\mathrm{span}(\v_{r+1}, \ldots, \v_n)$.
\item[(4)]
$\rg(\A^T) = \rg(\A^T \A) = \rg(\V_r) =\mathrm{span}(\v_1, \ldots, \v_r)$ and $\mathrm{null}(\A^T) = \rg(\U_{-r}) =\mathrm{span}(\u_{r+1}, \ldots, \u_m)$.
\item[(5)] The eigenvalues of $\A^T \A$ are $\sigma_i^2$ for $i=1, \ldots, r$ and $n-r$ zeros. The right singular vectors $\v_i$ are  the corresponding orthonormal eigenvectors.
\item[(6)] The eigenvalues of $\A \A^T$ are $\sigma_i^2$ for $i=1, \ldots, r$ and $m-r$ zeros. The left singular vectors $\u_i$ are the corresponding orthonormal eigenvectors.
\item[(7)] Let $\B = \U_{B} \Si_{B} \V_{B}$ be the condensed SVD of $\B$. Then $\A \oplus \B = (\U \oplus \U_B) (\Si \oplus \Si_B) (\V^T \oplus \V_{B}^T)$ is the condensed SVD of $\A {\oplus}  \B$, and $\A \otimes \B = (\U \otimes \U_B) (\Si \otimes \Si_B) (\V^T \otimes \V_{B}^T)$ is the condensed SVD of $\A {\otimes}  \B$.
\item[(8)] If $\A$ is square and invertible, then $\A^{-1} = \V \Si^{-1} \U^T$ and $|\det(\A)| =\prod_{i=1}^n \sigma_i(\A)$.
\end{enumerate}
\end{proposition}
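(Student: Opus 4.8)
The plan is to dispatch the eight items one by one, in each case substituting a suitable SVD factorization and using only that $\U,\V$ (resp.\ $\U_r,\V_r$) are (column-)orthonormal, so that multiplying by them changes neither rank nor range and preserves the trace. For (1), since $\U$ and $\V^T$ are invertible, $\rk(\A)=\rk(\U\Si\V^T)=\rk(\Si)$, and the rank of a diagonal matrix is the number of its nonzero diagonal entries, i.e.\ the number of nonzero $\sigma_i$. For (2), the equality $\|\A\|_2=\sigma_1$ is exactly what the constructive proof of Theorem~\ref{thm:svd} produced, namely $\sigma_1=\max_{\|\x\|_2=1}\|\A\x\|_2$; and $\|\A\|_F^2=\tr(\A^T\A)=\tr(\V\Si^T\U^T\U\Si\V^T)=\tr(\Si^T\Si)=\sum_{i=1}^p\sigma_i^2$, using cyclicity of the trace together with $\U^T\U=\I$ and $\V^T\V=\I$.

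For (3) I would write $\A=\U_r\Si_r\V_r^T$; because $\V_r^T$ is onto $\RB^r$ (it has the right inverse $\V_r$, since $\V_r^T\V_r=\I_r$) and $\Si_r$ is invertible, $\rg(\A)=\rg(\U_r)=\mathrm{span}(\u_1,\dots,\u_r)$, and the same computation on $\A\A^T=\U_r\Si_r^2\U_r^T$ gives $\rg(\A\A^T)=\rg(\U_r)$; moreover $\A\x=\0$ iff $\V_r^T\x=\0$ (as $\U_r\Si_r$ is injective), i.e.\ iff $\x$ is orthogonal to $\rg(\V_r)$, i.e.\ iff $\x\in\rg(\V_{-r})=\mathrm{span}(\v_{r+1},\dots,\v_n)$. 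Item (4) follows by applying (3) to $\A^T=\V_r\Si_r\U_r^T$. For (5) and (6), $\A^T\A=\V(\Si^T\Si)\V^T$ and $\A\A^T=\U(\Si\Si^T)\U^T$ are spectral decompositions of symmetric matrices, so their eigenvalues are the diagonal entries of $\Si^T\Si$ (an $n\times n$ diagonal matrix with entries $\sigma_1^2,\dots,\sigma_r^2$ and $n-r$ zeros) and of $\Si\Si^T$ (entries $\sigma_1^2,\dots,\sigma_r^2$ and $m-r$ zeros), with the columns $\v_i$ resp.\ $\u_i$ as orthonormal eigenvectors.

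For (7) I would use that block-diagonal multiplication respects blocks, so $(\U_r\oplus\U_B)(\Si_r\oplus\Si_B)(\V_r^T\oplus\V_B^T)=(\U_r\Si_r\V_r^T)\oplus(\U_B\Si_B\V_B^T)=\A\oplus\B$, where a direct sum of column-orthonormal matrices is again column-orthonormal and $\Si_r\oplus\Si_B$ is diagonal and positive (reorder the blocks if a decreasing arrangement is demanded). For the Kronecker product, Proposition~\ref{prof:iron}(e) applied twice gives $(\U_r\otimes\U_B)(\Si_r\otimes\Si_B)(\V_r^T\otimes\V_B^T)=(\U_r\Si_r\V_r^T)\otimes(\U_B\Si_B\V_B^T)=\A\otimes\B$, while Proposition~\ref{prof:iron}(b),(e) give $(\U_r\otimes\U_B)^T(\U_r\otimes\U_B)=(\U_r^T\U_r)\otimes(\U_B^T\U_B)=\I$ and similarly for $\V$, and $\Si_r\otimes\Si_B$ is diagonal with positive entries $\sigma_i\sigma_j$. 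Finally, for (8), $\A$ square and invertible forces $r=m=n$ and $\Si$ invertible diagonal, so $\A^{-1}=(\U\Si\V^T)^{-1}=\V\Si^{-1}\U^T$ because $\U^{-1}=\U^T$ and $\V^{-1}=\V^T$; and $|\det(\A)|=|\det(\U)|\,|\det(\Si)|\,|\det(\V^T)|=\prod_{i=1}^n\sigma_i$ since orthonormal matrices have determinant $\pm1$.

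None of these steps is genuinely difficult; the only place calling for a bit of care is item (7), where one must verify that the constructed factors remain column-orthonormal and that the middle factor is genuinely diagonal with nonnegative entries — precisely the points where the Kronecker-product identities of Proposition~\ref{prof:iron} are invoked — and, if one insists on the convention $\sigma_1\geq\sigma_2\geq\cdots$, that the diagonal of $\Si_r\oplus\Si_B$ or $\Si_r\otimes\Si_B$ can be put in decreasing order by a simultaneous permutation of the columns of the outer factors.
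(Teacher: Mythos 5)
Your proof is correct, and it follows exactly the route the paper intends: the paper states Proposition~\ref{pro:012} without proof, as a list of facts ``induced from SVD,'' and your verification by substituting the (full or condensed) SVD and using only orthonormality of $\U,\V$ (plus Proposition~\ref{prof:iron} for the Kronecker case) is the standard derivation. Your added remark in item (7) about reordering the diagonal of $\Si_r\oplus\Si_B$ or $\Si_r\otimes\Si_B$ by a simultaneous permutation of the outer factors is a worthwhile point of care that the paper's statement glosses over.
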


\begin{theorem}  \label{thm:h-a} Given a matrix $\A \in \RB^{m {\times} n}$, let $\H = \begin{bmatrix}  \0 & \A^T \\ \A & \0 \end{bmatrix}$.
If $\A= \U_r \Si_r \V_r^T$ be the condensed SVD, then $\H$ has $2 r$ nonzero eigenvalues, which  are $\pm \sigma_i$, with the corresponding orthonormal eigenvectors $ \frac{1}{\sqrt{2}} \begin{bmatrix} \v_i \\  \pm \u_i \end{bmatrix}$, $i=1, \ldots, r$.

Conversely, if   $\gamma_i$ is the eigenvalue of $\H$, with the corresponding  eigenvector $\z_i=\begin{bmatrix} \z_i^{(1)} \\ \z_i^{(2)} \end{bmatrix}$ where
$\z_i^{(1)} \in \RB^n$ and $\z_i^{(2)} \in \RB^m$, then  $-\gamma_i$ is the eigenvalue of $\H$, with the corresponding  eigenvector $\z_i=\begin{bmatrix} \z_i^{(1)} \\ - \z_i^{(2)} \end{bmatrix}$. Furthermore,  let the $\sigma_i$ denote the $r$ positive values among the $\pm \gamma_i$,  and  $\frac{1}{\sqrt{2}}\begin{bmatrix} \v_i \\  \u_i \end{bmatrix}$
denote the corresponding orthonormal eigenvectors. Then $\A = \U_r \Si_r \V_r^{T}$,  where $\U_r=[\u_1, \ldots, \u_r]$, $\V_r=[\v_1, \ldots, \v_r]$, and $\Si_r=\diag(\sigma_1, \ldots, \sigma_r)$, is  a condensed SVD of $\A$.  \end{theorem}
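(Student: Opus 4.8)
The plan is to prove both directions by unpacking the $2\times 2$ block structure of $\H$ against the SVD relations. From the condensed SVD $\A = \U_r\Si_r\V_r^T$ one reads off $\A\v_i = \sigma_i\u_i$ and $\A^T\u_i = \sigma_i\v_i$ for $i=1,\ldots,r$, and these two identities are essentially all that is used.

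For the forward direction, a one-line block multiplication gives
$$\H\begin{bmatrix}\v_i\\ \pm\u_i\end{bmatrix} = \begin{bmatrix}\pm\A^T\u_i\\ \A\v_i\end{bmatrix} = \begin{bmatrix}\pm\sigma_i\v_i\\ \sigma_i\u_i\end{bmatrix} = \pm\sigma_i\begin{bmatrix}\v_i\\ \pm\u_i\end{bmatrix},$$
so $\frac{1}{\sqrt{2}}\begin{bmatrix}\v_i\\ \pm\u_i\end{bmatrix}$ is a unit eigenvector for $\pm\sigma_i$; mutual orthonormality of these $2r$ vectors follows from $\v_i^T\v_j = \u_i^T\u_j = \delta_{ij}$ together with $\|\v_i\|_2 = \|\u_i\|_2$, which makes the two sign-variants of a fixed index orthogonal. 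To see that $\H$ has \emph{no} further nonzero eigenvalues, I would adjoin the $m+n-2r$ orthonormal null eigenvectors $\begin{bmatrix}\v_j\\ \0\end{bmatrix}$ for $j>r$ (here $\A\v_j=\0$) and $\begin{bmatrix}\0\\ \u_j\end{bmatrix}$ for $j>r$ (here $\A^T\u_j=\0$), producing a complete orthonormal eigenbasis of $\RB^{m+n}$ and hence exactly $2r$ nonzero eigenvalues.

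For the converse, the identity $\H\z_i=\gamma_i\z_i$ with $\z_i=\begin{bmatrix}\z_i^{(1)}\\ \z_i^{(2)}\end{bmatrix}$ unpacks to $\A^T\z_i^{(2)}=\gamma_i\z_i^{(1)}$ and $\A\z_i^{(1)}=\gamma_i\z_i^{(2)}$; negating the second block and re-substituting produces at once the eigenpair with value $-\gamma_i$ and vector $\begin{bmatrix}\z_i^{(1)}\\ -\z_i^{(2)}\end{bmatrix}$. Writing a positive eigenvalue as $\sigma$ and a unit eigenvector of it as $\frac{1}{\sqrt{2}}\begin{bmatrix}\v\\ \u\end{bmatrix}$, the block equations read $\A\v=\sigma\u$ and $\A^T\u=\sigma\v$, whence $\A^T\A\v=\sigma^2\v$ and therefore $\|\A\v\|_2=\sigma\|\v\|_2$; comparing with $\|\A\v\|_2=\sigma\|\u\|_2$ gives $\|\v\|_2=\|\u\|_2$, and with $\frac{1}{2}(\|\v\|_2^2+\|\u\|_2^2)=1$ this forces $\|\v\|_2=\|\u\|_2=1$. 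Fixing an orthonormal eigenbasis of each positive eigenspace and writing its members as $\frac{1}{\sqrt{2}}\begin{bmatrix}\v_k\\ \u_k\end{bmatrix}$, the relation $\u_j^T\u_k=\sigma^{-2}\v_j^T\A^T\A\v_k=\v_j^T\v_k$ valid inside one eigenspace turns orthonormality of the eigenvectors into orthonormality of the top blocks $\{\v_k\}$ (hence of $\{\u_k\}$), while across two distinct positive eigenspaces one pairs $\frac{1}{\sqrt{2}}\begin{bmatrix}\v_k\\ \u_k\end{bmatrix}$ with the negative-eigenvalue eigenvector $\frac{1}{\sqrt{2}}\begin{bmatrix}\v_l\\ -\u_l\end{bmatrix}$ and adds or subtracts the two orthogonality relations to get $\v_k^T\v_l=\u_k^T\u_l=0$. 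Thus $\{\v_i\}_{i=1}^r$ and $\{\u_i\}_{i=1}^r$ are each orthonormal. Finally $r=\rk(\A)$ because $\rk(\H)=\rk(\A)+\rk(\A^T)=2\rk(\A)$ (permute the block rows) and the nonzero spectrum of the symmetric matrix $\H$ splits into $\pm$ pairs; hence $\V_r\V_r^T$ is the orthogonal projector onto $\rg(\A^T)$, so $\A\v_i=\sigma_i\u_i$ for all $i$ gives $\A\V_r=\U_r\Si_r$ and $\A=\A\V_r\V_r^T=\U_r\Si_r\V_r^T$, a condensed SVD.

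The step I expect to be the main obstacle is the orthonormality bookkeeping in the converse: an orthonormal eigenbasis of $\H$ is orthonormal only \emph{as a family of vectors in $\RB^{m+n}$}, whereas the statement needs the top halves and bottom halves to be separately orthonormal systems of singular vectors. The resolution is the two observations above — inside a positive eigenspace the bottom block equals $\A\v/\sigma$ and $\A^T\A$ acts there as $\sigma^2 \I$, so the top-block inner product already controls everything, and between distinct eigenspaces the sign flip pairing $\sigma$ with $-\sigma$ decouples the $\v$- and $\u$-inner products. Everything else is routine block algebra.
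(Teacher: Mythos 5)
Your proof is correct, and its skeleton — block multiplication for the forward direction, the sign-flip pairing $\gamma_i \mapsto -\gamma_i$, and orthogonality of eigenvectors belonging to distinct eigenvalues — is the same as the paper's. Where you genuinely diverge is in how the converse is finished. The paper assembles the $2r$ orthonormal eigenpairs into the block eigenvalue decomposition
\[
\H \;=\; \frac{1}{2}\begin{bmatrix} \V_r & \V_r \\ \U_r & -\U_r \end{bmatrix}\begin{bmatrix} \Si_r & \0 \\ \0 & -\Si_r \end{bmatrix}\begin{bmatrix} \V_r^T & \U_r^T \\ \V_r^T & -\U_r^T \end{bmatrix},
\]
reads $\A = \U_r \Si_r \V_r^T$ directly off the off-diagonal block, and obtains $\U_r^T\U_r = \V_r^T\V_r = \I_r$ from the two identities $\U_r^T\U_r + \V_r^T\V_r = 2\I_r$ (orthonormality of the positive-eigenvalue eigenvectors) and $\U_r^T\U_r - \V_r^T\V_r = \0$ (their orthogonality to the sign-flipped, negative-eigenvalue partners). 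You instead establish separate orthonormality of $\{\v_i\}$ and $\{\u_i\}$ by eigenspace bookkeeping and then recover $\A$ through the rank count $\rk(\H) = 2\rk(\A)$ and the projector identity $\A = \A\V_r\V_r^T$ (using that the $\v_i = \sigma_i^{-1}\A^T\u_i$ lie in $\rg(\A^T)$ and span it by dimension). Both are sound; the paper's route is shorter because reading off the block of the EVD bypasses the rank/projector step entirely, while your route makes explicit why exactly $r=\rk(\A)$ positive eigenvalues occur — a point the paper leaves implicit — and is likewise more careful than the paper about why the forward direction yields \emph{exactly} $2r$ nonzero eigenvalues (your completion by null eigenvectors). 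One simplification: your within-eigenspace argument via $\A^T\A$ acting as $\sigma^2\I$ is unnecessary, since pairing the positive eigenvector with index $k$ against the sign-flipped eigenvector with index $l$ always involves two distinct eigenvalues ($\sigma_k>0>-\sigma_l$), so $\v_k^T\v_l - \u_k^T\u_l = 0$ holds for all $k,l$; combined with $\v_k^T\v_l + \u_k^T\u_l = 2\delta_{kl}$ this settles orthonormality of both families at once, which is exactly the paper's two-identity argument.
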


\begin{proof} The first part is directly obtained from the fact that
\begin{align*}
\H & = \begin{bmatrix}  \0 & \A^T \\ \A & \0 \end{bmatrix} = \begin{bmatrix}  \0 & \V_r \Si_r \U_r^T \\ \U_r \Si_r \V_r^T & \0 \end{bmatrix}
 \\
&= \frac{1}{2} \begin{bmatrix}  \V_r & \V_r  \\ \U_r  &  - \U_r  \end{bmatrix}  \begin{bmatrix}  \Si_r &  \0  \\  \0 & - \Si_r   \end{bmatrix}   \begin{bmatrix}  \V_r^T & \U_r^T \\ \V_r^T  & - \U_r^T \end{bmatrix}. \end{align*}

Conversely, consider that
\begin{align*}
\begin{bmatrix}  \0 & \A^T \\ \A & \0 \end{bmatrix}  \begin{bmatrix} \z_i^{(1)} \\ - \z_i^{(2)} \end{bmatrix} &  = \begin{bmatrix} -\A^T \z_i^{(2)} \\ \A \z_i^{(1)} \end{bmatrix}
= \begin{bmatrix} -\gamma_i \z_i^{(1)} \\ \gamma_i \z_i^{(2)} \end{bmatrix}  = -\gamma_i  \begin{bmatrix} \z_i^{(1)} \\ - \z_i^{(2)} \end{bmatrix},
\end{align*}
which shows that  $-\gamma_i$ is the eigenvalue of $\H$, with the corresponding eigenvector $\begin{bmatrix} \z_i^{(1)} \\ - \z_i^{(2)} \end{bmatrix} $.   Now using the notation of $\Si_r$, $\U_r$, and $\V_r$, we have the EVD of $\H$:
\[
\H = \begin{bmatrix}  \0 & \A^T \\ \A & \0 \end{bmatrix} =  \frac{1}{2} \begin{bmatrix}  \V_r & \V_r  \\ \U_r  &  - \U_r  \end{bmatrix}  \begin{bmatrix}  \Si_r &  \0  \\  \0 & - \Si_r   \end{bmatrix}   \begin{bmatrix}  \V_r^T & \U_r^T \\ \V_r^T  & - \U_r^T \end{bmatrix}.
\]
It  also follows from the orthogonality of the eigenvectors that  $ \U_r^T \U_r  + \V_r^T \V_r =  2\I_r$  and $ \U_r^T \U_r  - \V_r^T \V_r=  \0$.  This implies that $ \U_r^T  \U_r  =  \V_r^T  \V_r =  \I_r$. Thus, $\A = \U_r \Si_r \V_r^T$ is a condensed SVD of $\A$.
\end{proof}

Theorem~\ref{thm:h-a} establishes an  interesting  connection of the SVD of a general matrix with the EVD of a symmetric matrix.
This  provides an  approach to handling an SVD problem of an arbitrary matrix. That is,   one transforms the SVD problem  into an EVD problem of an associated symmetric matrix.
The theorem
also gives an alternative proof for the SVD theory.

The following theorem shows that the \emph{Polar Decomposition}
of a matrix can be induced from its SVD.  Note that SVD can be also derived from the Polar decomposition.
Here we do not give the detail of this derivation.

\begin{theorem}[Polar Decomposition]  \label{thm:polarD} Let $\A \in \RB^{m\times n}$ be a given matrix where $m\geq n$. Then its polar decomposition exists; that is,  there are a column orthonormal matrix $\Q$ and  a unique   SPSD matrix $\S$ such that $\A= \Q \S$.  Furthermore, if $\A$ is full column rank, then $\Q$ is unique.
\end{theorem}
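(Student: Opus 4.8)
The plan is to read off the polar decomposition directly from a thin SVD of $\A$. By Theorem~\ref{thm:svd} we may write $\A = \U \Si \V^T$ with $\U \in \RB^{m\times n}$ column orthonormal, $\Si = \diag(\sigma_1, \ldots, \sigma_n)$ having $\sigma_i \geq 0$, and $\V \in \RB^{n\times n}$ orthonormal. The device is to insert $\V^T \V = \I_n$ in the middle: $\A = (\U \V^T)(\V \Si \V^T)$, and set $\Q = \U \V^T$ and $\S = \V \Si \V^T$. Then $\Q^T \Q = \V \U^T \U \V^T = \V \V^T = \I_n$, so $\Q$ is column orthonormal, while $\S$ is symmetric with eigenvalues $\sigma_i \geq 0$, hence SPSD. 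This settles existence, and moreover exhibits $\S$ explicitly with $\S^2 = \V \Si^2 \V^T = \A^T \A$, i.e. $\S$ is an SPSD square root of $\A^T\A$.

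For uniqueness of $\S$: from $\A = \Q \S$ with $\Q^T\Q = \I_n$ we get $\A^T \A = \S \Q^T \Q \S = \S^2$, so $\S$ must be an SPSD square root of the fixed matrix $\A^T\A$, and it suffices to note that an SPSD matrix has a unique SPSD square root. I would give the short argument: if SPSD $\S_1, \S_2$ satisfy $\S_1^2 = \S_2^2 = \M$, then each commutes with $\M$, and since $\S_2$ is a polynomial in $\M$ (interpolating $\sqrt{\cdot}$ on the spectrum of $\M$) they commute with each other; hence $(\S_1 - \S_2)(\S_1 + \S_2) = \S_1^2 - \S_2^2 = \0$, and for any $\x$, writing $\y = (\S_1 - \S_2)\x$ we have $\y^T \y = \x^T(\S_1 - \S_2)\y$, while $\y^T(\S_1 + \S_2)\y = 0$ with $\S_1 + \S_2$ SPSD forces $\S_1 \y = \S_2 \y = \0$, so $\y^T \y = 0$ and $\S_1 = \S_2$. (Alternatively one may simply cite the standard uniqueness of the SPSD square root from the references of Chapter~\ref{ch:prelin}.)

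For uniqueness of $\Q$ when $\A$ is full column rank: then $\A^T\A$ is positive definite, so its SPSD square root $\S$ is positive definite and invertible, and $\A = \Q\S$ forces $\Q = \A \S^{-1}$; in SVD terms all $\sigma_i > 0$, $\Si$ is invertible, and $\Q = \A \V \Si^{-1}\V^T = \U \V^T$ is determined independently of the chosen SVD. I do not expect a serious obstacle; the only step requiring a little care is the uniqueness of the SPSD square root, and it is short. It is worth remarking explicitly why $\Q$ fails to be unique in general: if some $\sigma_i = 0$, the corresponding column of $\U$ is only constrained to be a unit vector orthogonal to the others, leaving many admissible $\Q$.
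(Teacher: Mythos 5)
Your proof is correct. The existence construction is exactly the paper's: insert $\V^T\V=\I_n$ into a thin SVD and set $\Q=\U\V^T$, $\S=\V\Si\V^T$. Where you genuinely diverge is the uniqueness of $\S$. The paper turns any two polar decompositions $\A=\Q_1\S_1=\Q_2\S_2$ into two thin SVDs of $\A$ (by spectrally decomposing $\S_1$ and $\S_2$) and then invokes Corollary~\ref{cor:svd-uni} to identify the block structure of the orthonormal factors, concluding $\S_1=\S_2$; uniqueness of $\Q$ in the full-rank case is then the same one-line observation you make. You instead note that any polar factor must satisfy $\S^2=\A^T\A$ and prove uniqueness of the SPSD square root directly, via commutation with $\A^T\A$, the polynomial-interpolation trick to get $\S_1\S_2=\S_2\S_1$, and the quadratic-form argument $(\S_1-\S_2)(\S_1+\S_2)=\0$; all of these steps check out, including the inference $\y^T\S_i\y=0\Rightarrow \S_i\y=\0$ for SPSD $\S_i$. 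The paper's route has the pedagogical advantage of exercising the SVD-uniqueness machinery just developed (Theorem~\ref{thm:svdvector} and Corollary~\ref{cor:svd-uni}), reinforcing the equivalence between SVD and polar decomposition that the chapter advertises; your route is more self-contained, does not need Corollary~\ref{cor:svd-uni} at all, and proves along the way the identity $\S=(\A^T\A)^{1/2}$, which the paper only states as a remark after its proof. Your closing observation about why $\Q$ is non-unique when some $\sigma_i=0$ is a nice addition not present in the paper.
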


\begin{proof} Let $\A = \U \Si \V^T$ be a thin SVD of $\A$. Then
\[
\A = \U \V^T \V \Si \V^T \triangleq \Q \S,
\]
where $\Q \triangleq \U \V^T$ is column orthonormal and $\S \triangleq \V \Si \V^T$ is SPSD.

Assume that $\A$ has two Polar decompositions: $\A = \Q_ 1 \S_1$ and $\Q_2 \S_2$. Make the full SVDs (spectral decomposition) of $\S_1$ and $\S_2$ as $\S_1 = \V_1 \Si_1 \V_1^T$ and $\S_2 = \V_2 \Si_2 \V_2^T$, respectively.  Then $\A =  (\Q_1 \V_1) \Si_1 \V_1^T$ and $\A= (\Q_2 \V_2) \Si_2 \V_2^T$ be two thin SVDs of $\A$. This implies that $\Si_1= \Si_2 \triangleq \Si$. Moreover, it follows from Corollary~\ref{cor:svd-uni} that $\V_2 = \V_1 \PP_1$ and $\Q_2 \V_2 = \Q_1 \V_1 \PP_2$ where $\PP_1$ and $\PP_2$ are orthonormal matrices such that $ \Si  \PP_1^T = \PP_1^T \Si$. Thus, $\S_2 = \V_2 \Si \V_2^T = \V_1 \PP_1 \Si \PP_1^T \V_1^T = \V_1 \Si \V_1^T = \S_1$.

If $\A$ is full column rank, then $\S$ is invertible. Hence,  $\Q_1 = \Q_2$.
\end{proof}

As we see from the proof, $\S= \V \Si \V^T = (\A^T \A)^{1/2}$; that is, $\S$ is identical to the square root of the matrix $\A^T \A$.

\section{Matrix Concepts via SVD}
\label{sec:concept}

All matrices have SVD, so  SVD plays a central role in matrix analysis and computation.
As we have seen in the previous section,   many matrix concepts and properties can be induced from SVD.
Here we present other several matrix notions, which are used in modern matrix computations.  

\begin{definition} Assume  $\A \in \RB^{m\times n}$ and $\B \in \RB^{m\times n}$ are of rank $k$ and rank $l$, respectively, and $l\geq k$. Let  $\A = \U_{A, k} \Si_{A, k} \V_{A, k}^T$ and $\B = \U_{B, l} \Si_{B, l} \V_{B, l}^T$ be the condensed SVDs of $\A$ and $\B$. The cosines of the canonical angles between $\A$ and $\B$ are defined as 
\[
\cos \theta_i(\A, \B) = \sigma_i(\U_{A, k}^T \U_{B, l}), \; i=1, \ldots, k.
\] 
Consider that
\[
 \sigma^2(\U_{A, k}^T \U_{B, l}) = \lambda(\U_{A, k}^T \U_{B, l} \U_{B, l}^T \U_{A, k})
 \]
 and $\U_{A, k}^T \U_{B, l} \U_{B, l}^T \U_{A, k} + \U_{A, k}^T \U_{B, -l} \U_{B, -l}^T \U_{A, k} =\I_k$, where $\U_{\B, -l} \in \RB^{m\times {n-l}}$ is an orthonormal complement of $\U_{B, l}$.  Thus, we have that
 \[
 \lambda(\U_{A, k}^T \U_{B, l} \U_{B, l}^T \U_{A, k} ) =1 - \lambda(\U_{A, k}^T \U_{B, -l} \U_{B, -l}^T \U_{A, k} ).
 \]
 In other words, $\sigma^2(\U_{A, k}^T \U_{B, l}) = 1- \sigma^2(\U_{A, k}^T \U_{B, -l}  )$. Hence, 
 \[
 \sin \theta_i (\A, \B) = \sigma_{k+1-i}(\U_{A, k}^T \U_{B, -l}  ), \; i=1, \ldots, k.
  \]
Note that $ \sigma_1(\U_{A, k}^T \U_{B, -l}) = \| \U_{A, k}^T \U_{B, -l}\|_2$, which is also cased the distance between two subspaces spanned by $\U_{A, k}$ and $\U_{B, l}$. 
\end{definition}

\begin{definition}  \label{def:grank} Given a nonzero matrix  $\A \in \RB^{m\times n}$, let $\sigma_1\geq \cdots \geq  \sigma_p$  where $p=\min\{m, n\}$. The stable rank of $\A$ is defined as $\sum_{i=1}^p \frac{\sigma_i^2}{\sigma_1^2}$, and the nuclear rank is defined as $\sum_{i=1}^p \frac{\sigma_i}{\sigma_1}$.
\end{definition}

Clearly, $\sum_{i=1}^p \frac{\sigma_i^2}{\sigma_1^2} \leq \sum_{i=1}^p \frac{\sigma_i}{\sigma_1} \leq \rk(\A)$. The concepts  have been recently proposed for describing error bounds of  matrix multiplication approximation~\citep{magen2011low,CohenNelsonWoodrull,kyrillidis2014approximate}.

\begin{definition}[Statistical Leverage Score] \label{def:sls} Given an $m\times n$ matrix $\A$ with $m>n$, let $\A$ have a thin SVD $\A = \U \Si \V^T$, and let
$\u^{(i)}$ be the $i$th row of $\U$. Then the statistical leverage scores of the rows of $\A$ are defined as
\[
l_i = \|\u^{(i)}\|_2^2 \; \mbox{ for } \; i=1, \ldots, m.
\]
The coherence of the rows of $\A$ is defined as
\[
\gamma \triangleq \max_{i} \; l_i.
\]
The $(i,j)$-cross leverage scores are defined as
\[
c_{ij} = (\u^{(i)})^T \u^{(j)}.
\]
\end{definition}

The  statistical leverage~\citep{HoaglinWelsch} measures the extent to which the singular vectors of a matrix are
correlated with the standard basis. Recently,  it has found usefulness  in large-scale data analysis
and in the analysis of randomized matrix algorithms \citep{DrineasCUR:2008,mahoney2009matrix,ma2014statistical}. A related notion is that of matrix coherence, which has been of interest in  matrix completion and Nystr\"{o}m-based low rank matrix approximation \citep{CandesRecht:2009,TalwalkarRostamizadeh,WangZhangJMLR:2013,nelson2013osnap}.

\section{Generalized Singular Value Decomposition}
\label{sec:gsvd}

This section studies simultaneous SVD of two given matrices $\A$ and $\B$.
This leads us to  a  generalized SVD (GSVD) problem.

\begin{theorem}[GSVD]  \label{thm:gsvdVon} Suppose two matrices $\A \in \RB^{m\times p}$ and $\B \in \RB^{n \times p}$ with $n\geq p$ are given.  Let  $q=\min\{m, p\}$.
Then there exist two orthonormal matrices $\U_A \in \RB^{m \times m}$ and  $\U_B \in \RB^{n \times n}$, and an invertible matrix $\X \in \RB^{p\times p}$ such that
\[
\U_A^T \A \X = \diag(\alpha_1, \ldots, \alpha_q) \; \mbox{ and } \; \U_B^T \B \X = \diag(\beta_1, \ldots, \beta_p),
\]
where  $ \alpha_1 \geq  \cdots \alpha_q \geq 0$, and $0 \leq \beta_1 \leq \cdots \beta_p$.
\end{theorem}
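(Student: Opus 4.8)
The plan is to obtain $\X$ as an invertible congruence that simultaneously diagonalizes the two Gram matrices $P := \A^T \A$ and $Q := \B^T \B$ (both $p\times p$ and positive semidefinite), and then to read off $\U_A$ and $\U_B$ by normalizing columns. Indeed, suppose an invertible $\X \in \RB^{p\times p}$ satisfies $\X^T P \X = \diag(\alpha_1^2, \ldots, \alpha_p^2)$ and $\X^T Q \X = \diag(\beta_1^2, \ldots, \beta_p^2)$ with all $\alpha_i, \beta_i \geq 0$. Then $(\A\X)^T(\A\X)$ and $(\B\X)^T(\B\X)$ are diagonal, so the columns of $\A\X$ are pairwise orthogonal with $\ell_2$-norms $\alpha_i$ and the columns of $\B\X$ have norms $\beta_i$. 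Normalizing the nonzero columns of $\A\X$ gives an orthonormal system of $\rk(\A) \leq q$ vectors in $\RB^m$; extending it to an orthonormal basis yields $\U_A \in \RB^{m\times m}$ with $\U_A^T \A \X = \diag(\alpha_1, \ldots, \alpha_q)$ (the entries beyond the $q$-th vanish because $\rk(\A) \leq q = \min\{m,p\}$). The same construction applied to $\B$, using $n \geq p$ so that all $p$ columns fit, produces $\U_B \in \RB^{n\times n}$ with $\U_B^T \B \X = \diag(\beta_1, \ldots, \beta_p)$.

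To build such an $\X$, set $\G := P + Q = \A^T\A + \B^T\B \succeq \0$, and assume first that $\G$ is invertible. Using its spectral decomposition, write $\G = \R^T \R$ with $\R$ invertible, and put $P_0 := \R^{-T} P \R^{-1}$. Then $P_0 \succeq \0$ while $\I_p - P_0 = \R^{-T} Q \R^{-1} \succeq \0$, so $\0 \preceq P_0 \preceq \I_p$; take the spectral decomposition $P_0 = \W \Lam \W^T$ with $\W$ orthonormal and $\Lam = \diag(\lambda_1, \ldots, \lambda_p)$, $1 \geq \lambda_1 \geq \cdots \geq \lambda_p \geq 0$. Then $\X := \R^{-1} \W$ is invertible, $\X^T P \X = \Lam$ and $\X^T Q \X = \I_p - \Lam$, so that $\alpha_i = \sqrt{\lambda_i}$ is nonincreasing in $i$ and $\beta_i = \sqrt{1-\lambda_i}$ is nondecreasing, exactly as required (and in fact $\alpha_i^2 + \beta_i^2 = 1$).

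When $\G$ is singular, I would first peel off the common null space $\NM := \mathrm{null}(\A) \cap \mathrm{null}(\B)$, which equals $\mathrm{null}(\G)$ because $P$ and $Q$ are positive semidefinite: choosing an orthonormal $[\Z_1, \Z_2] \in \RB^{p\times p}$ whose last block $\Z_2$ spans $\NM$, apply the previous paragraph to $\A \Z_1$ and $\B \Z_1$ — whose combined Gram matrix $\Z_1^T \G \Z_1$ is positive definite — and reassemble $\X := [\Z_1,\Z_2]\,(\tilde\X \oplus \I)$ for the resulting $\tilde\X$. I expect the main obstacle to be exactly this reassembly: the columns coming from $\NM$ are zero columns of both $\A\X$ and $\B\X$, and they must be placed so that the diagonal of $\U_A^T\A\X$ is nonincreasing while that of $\U_B^T\B\X$ is simultaneously nondecreasing; obtaining both orderings at once requires careful tracking of which columns vanish (and of the sign ambiguities in the two spectral decompositions). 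The remaining verifications — invertibility of $\X$, orthonormality of the extended $\U_A$ and $\U_B$, and the matching of the $m\times p$ and $n\times p$ shapes of the $\diag$ blocks against $q = \min\{m,p\}$ — are routine.
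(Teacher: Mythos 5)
Your route is genuinely different from the paper's. The paper obtains Theorem~\ref{thm:gsvdVon} by first taking a full SVD of the stacked matrix $\K=[\A^T,\B^T]^T$ and then applying the CS decomposition (Theorem~\ref{thm:cs-decomp}) to its orthonormal factor, which yields Theorem~\ref{thm:gsvdPS} and then $\X=\V(\R^{-1}\W\oplus\I_{p-t})$; you instead simultaneously diagonalize the Gram matrices $\A^T\A$ and $\B^T\B$ by a congruence built from $\G=\A^T\A+\B^T\B$. In the case $\G\succ\0$ — equivalently $\mathrm{null}(\A)\cap\mathrm{null}(\B)=\{0\}$, i.e.\ $\rk(\K)=p$ — your argument is complete and correct: $\0\preceq\R^{-T}\A^T\A\,\R^{-1}\preceq\I_p$, the choice $\X=\R^{-1}\W$, the column-normalization and orthonormal extension producing $\U_A$ and $\U_B$ (the latter is where $n\geq p$ enters), and the compatibility of the two monotone orderings via $\alpha_i^2+\beta_i^2=1$ are all sound. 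Compared with the paper, your proof is more elementary and self-contained (no CS decomposition machinery), at the price of not exhibiting the finer block structure $(\I_r,\D_A,\0)$ versus $(\0,\D_B,\I)$ and the relation to the Paige--Saunders formulation that the CS route provides.

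The degenerate case, however, is not merely "careful tracking": the obstacle you flag is fatal, because the statement as written is generally false when $\mathrm{null}(\A)\cap\mathrm{null}(\B)\neq\{0\}$. Take $m=1$, $n=p=2$, $\A=[1,\;0]$, $\B=\diag(1,0)$, so $q=1$. Then $\U_A^T\A\X$ must have the form $[\alpha_1,\,0]$, forcing the second column $\x_2$ of $\X$ to lie in $\mathrm{null}(\A)=\mathrm{span}\{\e_2\}$; but then $\B\x_2=\0$, so $\beta_2=0$, and the requirement $\beta_1\leq\beta_2$ forces $\beta_1=0$, i.e.\ $\x_1\in\mathrm{null}(\B)=\mathrm{span}\{\e_2\}$, contradicting the invertibility of $\X$. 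So no amount of reordering can rescue the simultaneous monotonicity of the $\alpha_i$ and $\beta_i$ once the common null space is nontrivial (your own reassembly makes the mechanism clear: a common-null column has $\alpha_i=\beta_i=0$ and would have to sit after every column with $\alpha_j>0$ and before every column with $\beta_j>0$, which is impossible as soon as some column has both positive). Note that the paper's own derivation suffers from the same defect: it appends the $p-t$ zero columns at the end, which preserves the nonincreasing $\alpha$'s but breaks the nondecreasing $\beta$'s whenever $t=\rk(\K)<p$. The honest conclusion is that the theorem should be read under the hypothesis $\rk(\K)=p$ (equivalently $\G$ nonsingular), in which case your proof is already complete, or else with the ordering of the $\beta_i$ claimed only within the first $t$ columns, as in the block structure of $\Si_B$ in Theorem~\ref{thm:gsvdPS}; trying to "fix" the reassembly step itself is not the right move.
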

The  GSVD theorem was originally proposed by \cite{LoanGSVD:1976}, in which  $n \geq p$ (or $m\geq p$) is required. Later on,
 \cite{PaigeSIAM:1981} developed a more general formulation for   GSVD in which  matrix pencil $\A$ and $\B$ are  required only to have the same number of columns.  \cite{PaigeSIAM:1981}  also studied a GSVD of submatrices of a column orthonormal matrix.
That is a so-called CS decomposition~\citep{GolubT:1999} given as follows.

\begin{theorem}[The CS Decomposition] \label{thm:cs-decomp} Let $\Q \in \RB^{(m{+}n) \times p}$ be a column orthonormal matrix. Partition it as $\Q^T=[\Q_1^T,  \Q_2^T]$ where $\Q_1$  and $\Q_2$ are $m  \times p$ and $n  \times p$. Then there exist orthonormal matrices $\U_1 \in \RB^{m{\times} m}$, $\U_2\in \RB^{n \times n}$, and $\V_1 \in ^{p\times p}$ such that
\[
\U_1^T \Q_1 \V_1 = \C  \; \mbox{ and } \; \U_2^T \Q_2 \V_1 = \S,
\]
where
\[
\C =\bordermatrix{& r & s  & p{-} r {-} s  \cr r & \I_r   &  \0 & \0 \cr  s & \0 &  \C_1 & \0  \cr m{-}r{-}s & \0 & \0  & \0 \cr },  \]
\[
\S =\bordermatrix{ & r  & s & p{-}r{-} s  \cr  n{+}r{-} p & \0   &  \0 & \0 \cr  s & \0 &  \S_1 & \0  \cr p{-}r{-}s & \0 & \0  & \I_{p{-}r{-}s} \cr },
 \]
 $\C_1 = \diag(\alpha_1,   \ldots, \alpha_s)$ and $\S_1= \diag( \sqrt{1-\alpha_1^2},    \ldots,  \sqrt{1- \alpha_s^2})$,  and $1> \alpha_1 \geq \alpha_2 \geq \cdots \alpha_s  > 0$.
\end{theorem}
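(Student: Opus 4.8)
The plan is to reduce the whole statement to a single SVD --- that of the top block $\Q_1$ --- and then to read off $\U_2$ from the orthogonality constraint. The structural fact to exploit is that column orthonormality of $\Q$ forces $\Q^T\Q=\Q_1^T\Q_1+\Q_2^T\Q_2=\I_p$, so $\Q_1^T\Q_1\preceq\I_p$ and every singular value of $\Q_1$ (and likewise of $\Q_2$) lies in $[0,1]$.

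First I would take a full SVD $\Q_1=\U_1\,\Si\,\V_1^T$ with the singular values listed in decreasing order. They split into three groups: $r$ of them equal to $1$, then $s$ of them strictly inside $(0,1)$ --- call these $\alpha_1\geq\cdots\geq\alpha_s$ --- and the remaining $p-r-s$ equal to $0$. Since $r+s=\rk(\Q_1)\leq\min\{m,p\}$ one has $m-r-s\geq0$ and $p-r-s\geq0$, so, partitioning the $m\times p$ matrix $\Si$ into block rows of sizes $r,s,m-r-s$ and block columns of sizes $r,s,p-r-s$, it is exactly the matrix $\C$ of the statement with $\C_1=\diag(\alpha_1,\dots,\alpha_s)$. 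This choice fixes $\U_1$ and $\V_1$, and also the strict inequalities $1>\alpha_1\geq\cdots\geq\alpha_s>0$.

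Next I would analyze $\Q_2\V_1$. From $\V_1^T\Q_1^T\Q_1\V_1=\C^T\C$ and the identity above, $(\Q_2\V_1)^T(\Q_2\V_1)=\I_p-\C^T\C$, a diagonal matrix with $r$ zeros, then $1-\alpha_1^2,\dots,1-\alpha_s^2$, then $p-r-s$ ones. Hence the columns of $\Q_2\V_1$ are pairwise orthogonal; writing $\Q_2\V_1=[\,\0_{n\times r}\ \ \W\ \ \Z\,]$, the middle block $\W\in\RB^{n\times s}$ has orthogonal columns of norms $\sqrt{1-\alpha_i^2}>0$ (here the strict bound $\alpha_i<1$ from Step~1 is used) and $\Z\in\RB^{n\times(p-r-s)}$ has orthonormal columns orthogonal to those of $\W$. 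Setting $\S_1=\diag(\sqrt{1-\alpha_1^2},\dots,\sqrt{1-\alpha_s^2})$ and $\widetilde{\W}=\W\S_1^{-1}$, the matrix $[\,\widetilde{\W}\ \ \Z\,]$ is column orthonormal with $p-r$ columns. Because $\rk(\Q_2)=\rk(\Q_2^T\Q_2)=\rk(\I_p-\Q_1^T\Q_1)=p-r\leq n$, one has $n+r-p\geq0$, so I can prepend an $n\times(n+r-p)$ orthonormal block $\U_2^{(1)}$ and obtain a square orthonormal $\U_2=[\,\U_2^{(1)}\ \ \widetilde{\W}\ \ \Z\,]$.

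Finally I would check that this $\U_2$, paired with the $\V_1$ from Step~1, works: the first $r$ columns of $\U_2^T\Q_2\V_1$ vanish; its middle $s$ columns equal $\U_2^T\W=\U_2^T\widetilde{\W}\S_1$, which is $\S_1$ in the rows indexed by the $\widetilde{\W}$-block and zero elsewhere; and its last $p-r-s$ columns equal $\U_2^T\Z=\I_{p-r-s}$ in the rows indexed by the $\Z$-block --- precisely the matrix $\S$. The main obstacle, such as it is, is the dimension bookkeeping: verifying $m-r-s\geq0$, $p-r-s\geq0$, $n+r-p\geq0$, and lining up the row and column block partitions of $\C$ and $\S$ against the single basis change $\V_1$. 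The SVD itself, the rescaling of $\W$, and the extension of a column-orthonormal matrix to a square orthonormal one are all routine.
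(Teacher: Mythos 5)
Your proposal is correct and follows essentially the same route as the paper's proof: take a full SVD $\Q_1=\U_1\C\V_1^T$, use $\Q_1^T\Q_1+\Q_2^T\Q_2=\I_p$ to show the first $r$ columns of $\Q_2\V_1$ vanish and the remaining columns have the diagonal Gram matrix $\I-\C^T\C$, then rescale those columns and extend to a square orthonormal $\U_2$. Your explicit verification of the sign conditions $p-r-s\geq 0$ and $n+r-p\geq 0$ (via $\rk(\Q_2)=p-r$) is a small bookkeeping addition that the paper leaves implicit, but the argument is the same.
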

 \begin{proof} Since $\Q_1^T \Q_1 + \Q_2^T \Q_2 = \Q^T \Q = \I_p$, the largest eigenvalue of $\Q_1^T \Q_1$ (reps. $\Q_2^T \Q_2$) is at most 1. This implies $\|\Q_1\|_2 = \sigma_1(\Q_1) \leq 1$ (resp. $\|\Q_2\|_2 \leq 1$).  Let $q =\min\{m, p\}$. Make  a full SVD of $\Q_1$ as
 \[
 \Q_1 = \U_1\C \V_1^T,
 \]
 where $\C= \diag(c_1, \ldots, c_{q})$ is an $m{\times} p$ diagonal matrix. Assume
 \[
 1= c_1 =\cdots = c_r > c_{r+1} \geq \cdots \geq c_{r+s}> c_{r+s+1} = \cdots c_{p} = 0.
 \]
 Let $\D= \diag(c_{r+1}, \ldots, c_{r+s}) \oplus  \0$, which is  $(m-r) \times (p-r)$,  and
 \[
 \Q_2 \V_1 = [\underbrace{\W_1}_{r},  \underbrace{\W_2}_{p-r} ].
 \]
 Then
 \[
 \begin{bmatrix} \U_1 & \0 \\ \0 & \I_{n} \end{bmatrix}^T \begin{bmatrix} \Q_1 \\ \Q_2 \end{bmatrix} \V_1 = \begin{bmatrix} \I_r & \0 \\
 \0 & \D \\ \W_1 & \W_2 \end{bmatrix}
 \]
 is column orthonormal. This implies that $\W_1 =\0$ and
 \[
 \W_2^T \W_2 = \I_{p-r} - \D^T \D = \diag(1- c_{r+1}^2, \ldots, 1- c_p^2)
 \]
 is nonsingular. Define $s_i= \sqrt{1-c_i^2}$ for $i\in [p]$. Then
 \[
 \Z \triangleq \W_2 \diag(1/s_{r+1}, \ldots, 1/s_p)
 \]
 is column orthonormal. We now extend $\Z$ to an $n \times n$ orthonormal matrix $\U_2$, the last $p-r$ columns of which constitute $\Z$.  When setting $\alpha_1=c_{r+1}, \cdots, \alpha_s = c_{r+s}$, we have
 \[
 \U_2^T \Q_1 \V_1 =  \S.
 \]
 Thus, the theorem follows.
 \end{proof}

\paragraph{Remarks}  It is  worth pointing out that $\Q_1 = \U_2 \S \V_1^T$ is not certainly a full SVD of $\Q_1$,  because some of the nonzero elements of $\S$ might not  lie on  the principal diagonal.
However, if $n  \geq p$, then we can move the first $n -p$ rows of $\S$ to be  the last $n-p$ rows by pre-multiplying some permutation matrix $\PP$.  That  is, 
\[
\PP^T \U_2^T \Q_1 \V_1 = \bordermatrix{ & r  & s & p{-}r{-} s  \cr  r  & \0   &  \0 & \0 \cr  s & \0 &  \S_1 & \0  \cr p{-}r{-}s & \0 & \0  & \I_{p{-}r{-}s} \cr  n{-}p  & \0 & \0 & \0 \cr }.
\]
This is the reason why the restriction $n \geq p$   is required in Theorem~\ref{thm:gsvdVon} ($\A$ and $\B$ correspond to $\Q_1$ and $\Q_2$, respectively).

The following theorem gives a more general  version of Theorem~\ref{thm:gsvdVon} as well as Theorem~\ref{thm:cs-decomp}.
Compared with Theorem~\ref{thm:gsvdVon},  $m\geq p$ or $n\geq p$ are no longer restricted. Compared with  Theorem~\ref{thm:cs-decomp}, the submatrices in question do not necessarily form a column orthonormal matrix.

\begin{theorem} \label{thm:gsvdPS} Suppose two matrices $\A \in \RB^{m\times p}$ and $\B \in \RB^{n \times p}$ are given. Let $\K^T \triangleq  [\A^T,  \B^T]$  with the rank $t$.  Then exist orthonormal matrices $\U_A \in \RB^{m\times m}$,  $\U_B \in \RB^{n \times n}$, $\W \in \RB^{t\times t}$, and  $\V \in \RB^{p \times p}$ such that
\[
\U_A^T \A \V = \Si_A [\underbrace{\W^T \R}_{t}, \;  \underbrace{\0}_{p-t}] \; \mbox{ and } \; \U_B^T \B \V = \Si_B [\underbrace{\W^T \R}_{t} , \; \underbrace{\0}_{p-t}],
\]
where $\R \in \RB^{t\times t}$ is a positive diagonal matrix with its diagonal elements equal to the nonzero of singular values of $\K$,
\begin{equation} \label{eqn:SiA}
\Si_A =\bordermatrix{& r & s  & t{-} r {-} s  \cr r & \I_r   &  \0 & \0 \cr  s & \0 &  \D_A & \0  \cr m{-}r{-}s & \0 & \0  & \0 \cr },
\end{equation}
\begin{equation} \label{eqn:SiB}
\Si_B  =\bordermatrix{ & r  & s & t{-}r{-} s  \cr  n{+}r{-} t & \0   &  \0 & \0 \cr  s & \0 &  \D_B  & \0  \cr t{-}r{-}s & \0 & \0  & \I_{t{-}r{-}s} \cr }.
 \end{equation}
Here $r$ and $s$ depend on the context,
\[
\D_A = \diag(\alpha_{r+1}, \ldots, \alpha_{r+s}) \; \mbox{ and } \; \D_B = \diag(\sqrt{1{-} \alpha^2_{r+1}}, \ldots, \sqrt{1{-} \alpha^2_{r+s}}),
\]
and $1> \alpha_{r+1} \geq \cdots \geq \alpha_{r+s}>0$.
\end{theorem}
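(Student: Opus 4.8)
The plan is to peel a singular value decomposition off the stacked matrix $\K=[\A^T,\B^T]^T$ and then invoke the CS decomposition (Theorem~\ref{thm:cs-decomp}) on the resulting left factor, which is automatically column orthonormal. There is no circularity: Theorem~\ref{thm:cs-decomp} is proved earlier by reducing it to an ordinary SVD, not to the present statement.

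First I would take a condensed SVD $\K = \G \R \V_t^T$, where $\G\in\RB^{(m+n)\times t}$ is column orthonormal, $\R\in\RB^{t\times t}$ is the positive diagonal matrix whose entries are the nonzero singular values of $\K$ (this is exactly the matrix demanded in the statement), and $\V_t\in\RB^{p\times t}$ is column orthonormal. Extend $\V_t$ to an orthonormal $\V=[\V_t,\ \V_{-t}]\in\RB^{p\times p}$. Since $\V_t^T\V_t=\I_t$ and $\V_t^T\V_{-t}=\0$, right-multiplication by $\V$ yields $\K\V=[\,\G\R,\ \0\,]$. Writing $\G^T=[\G_1^T,\ \G_2^T]$ with $\G_1\in\RB^{m\times t}$ and $\G_2\in\RB^{n\times t}$, and reading off the top $m$ and bottom $n$ row-blocks,
\[
\A\V = [\,\G_1\R,\ \0\,] \quad\text{and}\quad \B\V = [\,\G_2\R,\ \0\,].
\]

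Second, because $\G$ is column orthonormal we have $\G_1^T\G_1+\G_2^T\G_2=\I_t$, so $\G$---viewed as a column-orthonormal matrix in $\RB^{(m+n)\times t}$ with row blocks $\G_1,\G_2$---satisfies precisely the hypotheses of Theorem~\ref{thm:cs-decomp} with the role of ``$p$'' there played by $t$. Applying it produces orthonormal matrices $\U_A\in\RB^{m\times m}$, $\U_B\in\RB^{n\times n}$, and $\W\in\RB^{t\times t}$ with $\U_A^T\G_1\W=\Si_A$ and $\U_B^T\G_2\W=\Si_B$, where $\Si_A,\Si_B$ have exactly the block forms \eqref{eqn:SiA}--\eqref{eqn:SiB}: the scalars $\alpha_{r+1},\dots,\alpha_{r+s}$ are the $\alpha_1,\dots,\alpha_s$ delivered by the CS decomposition, re-indexed, and the block sizes $r$, $s$, $t-r-s$, $n+r-t$, $m-r-s$ are exactly those coming out of Theorem~\ref{thm:cs-decomp}.

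Finally, since $\W\W^T=\I_t$, the relation $\U_A^T\G_1\W=\Si_A$ gives $\U_A^T\G_1=\Si_A\W^T$; substituting this into $\A\V=[\,\G_1\R,\ \0\,]$ after left-multiplying by $\U_A^T$,
\[
\U_A^T\A\V = [\,\U_A^T\G_1\R,\ \0\,] = [\,\Si_A\W^T\R,\ \0\,] = \Si_A[\,\W^T\R,\ \0\,],
\]
and likewise $\U_B^T\B\V=\Si_B[\,\W^T\R,\ \0\,]$, which is the asserted decomposition. I do not expect a genuine obstacle: the only point that needs care is that the block dimensions produced by the CS decomposition applied to $\G$ coincide with those of \eqref{eqn:SiA}--\eqref{eqn:SiB} and are nonnegative, but this is already guaranteed inside Theorem~\ref{thm:cs-decomp}; everything else is the invertibility of $\R$ together with the fact that appending zero columns commutes with left multiplication.
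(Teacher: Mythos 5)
Your proposal is correct and follows essentially the same route as the paper's own proof: factor $\K$ via its SVD so that $\K\V=[\G\R,\ \0]$ with $\G$ column orthonormal (the paper calls this block $\PP_1$), apply Theorem~\ref{thm:cs-decomp} to $\G$ to obtain $\U_A$, $\U_B$, $\W$, $\Si_A$, $\Si_B$, and then transfer back using $\U_A^T\G_1=\Si_A\W^T$ and $\U_B^T\G_2=\Si_B\W^T$. The only cosmetic difference is that you use a condensed SVD and extend $\V_t$ to a full orthonormal $\V$, whereas the paper takes the full SVD directly; the argument is otherwise identical.
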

Theorem~\ref{thm:gsvdPS} implies that
\[
\U_A^T \A \X = [\Si_A,  \0] \;  \mbox{ and } \; \U_B^T \B \X = [\Si_B, \0],
\]
where $\X \triangleq  \V  (\R^{-1} \W \oplus  \I_{p-t})$.  With the above remarks, Theorem~\ref{thm:gsvdVon} follows.
Thus, we now present the proof of
Theorem~\ref{thm:gsvdPS}.  
\begin{proof}  
Since $\rk(\K)=t$,  making  a full SVD of $\K$ yields 
\[
\PP^T \K \V = \begin{bmatrix} \R & \0 \\ \0 & \0 \end{bmatrix},
\]
where $\PP \in \RB^{(m+n) \times (m+n)}$ and $\V \in \RB^{p\times p}$ are orthonormal matrices,   $\R$ is a $t \times t$ diagonal matrix with the diagonal elements as the nonzero singular values of $\K$. 
Partition $\PP$ as
\[
\PP = [\underbrace{\PP_1}_{t},  \underbrace{\PP_2}_{m+n-t}] = \begin{bmatrix}   \PP_{11} & \PP_{12} \\  \PP_{21} & \PP_{22}  \end{bmatrix} \; \mbox{ where } \; \PP_{11} \in \RB^{m\times t} \mbox{ and } \PP_{21} \in \RB^{n \times t}.
\]
Obviously, $\PP_1^T \PP_1 = \PP_{11}^T \PP_{11} + \PP_{21}^T \PP_{21} = \I_t$.
Moreover, we have
\[
\K \V  = [\PP_1 \R, \0].
\]
Applying Theorem~\ref{thm:cs-decomp} to $\PP_1$  yields that there exist orthonormal matrices $\U_A \in \RB^{m\times m}$, $\U_B \in \RB^{n\times n}$, and $\W \in \RB^{t\times t}$ such that
\[
\begin{bmatrix} \U_A^T & \0 \\ \0 & \U_B^T \end{bmatrix} \begin{bmatrix} \PP_{11} \\ \PP_{21} \end{bmatrix} \W = \begin{bmatrix}  \Si_A \\ \Si_B \end{bmatrix}
\]
where $\Si_A$ and $\Si_B$ are defined in \eqref{eqn:SiA} and \eqref{eqn:SiB}.
Hence,
\[
\begin{bmatrix} \U_A^T & \0 \\ \0 & \U_B^T \end{bmatrix} \begin{bmatrix} \A  \\  \B \end{bmatrix} \V  = \begin{bmatrix}  \Si_A \W^T \R & \0 \\  \Si_B  \W^T  \R & \0 \end{bmatrix}.
\]
That is, $\U_A^T \A \V = \Si_A[\W^T \R,  \0]$ and $\U_B^T \B \V = \Si_B [\W^T \R,  \0]$.
\end{proof}

In terms of Theorem~\ref{thm:gsvdVon}, if $\beta_i \neq 0$, then the column $\x_i$ of $\X$ satisfies
\[
\A^T \A \x_i = \lambda_i \B^T \B \x_i,
\]
where $\lambda_i = \frac{\alpha_i^2}{\beta_i^2}$. This implies GSVD can be used to solve  generalized eigenvalue problems.
Based on this observation,
\cite{HowlandSIAM:2003,ParkSIAM:2005} applied GSVD for solving Fisher linear discriminant analysis (FLDA) and generalized Fisher discriminant analysis~\citep{Baudat:2000,MikaNIPS:2000}.

Recall that the above GSVD procedure requires to implementing an SVD on the $(m{+}n)\times p$ matrix $\K$. The computational cost is
$O((m{+}n)  p * \min\{m{+}n, p\})$. Thus, when both $m{+}n$ and $p$ are very large, the GSVD is less efficient.
We now consider a special case in which $\B = \Z \A$ where $\Z \in \RB^{n \times m}$ is some given matrix.
We will see that it is no longer necessary to perform the SVD on $\K$.

\begin{theorem} \label{thm:gsvd-sp} Let $\A \in \RB^{m\times p}$ and $\B \in \RB^{n \times p}$ be two given matrices. Assume that $\B = \Z \A$ where $\Z \in \RB^{n \times m}$ is some matrix,    $\rk(\B)=s$, and $\rk(\A)=t$. Let  $\A= \U_t \Si_t \V_t^T$ be a condensed SVD of $\A$, and $\Y=\U_Y \Si_Y \V_Y^T $ be a full SVD of $\Y \triangleq \Z \U_t$. Then
\[
(\U_t \V_Y) ^T \A\V_t \Si_t^{-1} \V_Y=  \I_t   \; \mbox{ and } \; \U_Y^T \B \V_t \Si_t^{-1} \V_Y  =  \Si_Y.
\]
\end{theorem}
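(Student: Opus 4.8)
The plan is to reduce everything to the defining identities of the two given decompositions by direct substitution; nothing deep is needed beyond the orthonormality relations $\U_t^T \U_t = \I_t$, $\V_t^T \V_t = \I_t$, $\V_Y^T \V_Y = \I_t$, $\U_Y^T \U_Y = \I_n$, together with invertibility of $\Si_t$. First I would record the single algebraic consequence of the hypothesis $\B = \Z \A$ that drives the proof: writing $\A = \U_t \Si_t \V_t^T$ gives $\B = \Z \U_t \Si_t \V_t^T = \Y \Si_t \V_t^T$, so that $\A$ and $\B$ share the right factor $\Si_t \V_t^T$ and differ only in the left factor ($\U_t$ versus $\Y = \Z\U_t$).

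For the first identity I would compute $\U_t^T \A \V_t = \U_t^T \U_t \Si_t \V_t^T \V_t = \Si_t$, hence $\U_t^T \A \V_t \Si_t^{-1} = \I_t$; pre- and post-multiplying by $\V_Y^T$ and $\V_Y$ and using $\V_Y^T \V_Y = \I_t$ then yields $(\U_t \V_Y)^T \A \V_t \Si_t^{-1} \V_Y = \V_Y^T \I_t \V_Y = \I_t$. For the second identity I would use the shared-factor observation: $\B \V_t \Si_t^{-1} = \Y \Si_t \V_t^T \V_t \Si_t^{-1} = \Y$, so $\U_Y^T \B \V_t \Si_t^{-1} \V_Y = \U_Y^T \Y \V_Y = \U_Y^T \U_Y \Si_Y \V_Y^T \V_Y = \Si_Y$, which is exactly the claim.

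As a side remark I would check that the labels $s$ and $t$ in the statement are consistent with this picture: since $\Si_t \V_t^T$ is $t\times p$ of full row rank, right multiplication by it preserves rank, so $\rk(\Y) = \rk(\Y \Si_t \V_t^T) = \rk(\B) = s$, i.e.\ the $n\times t$ matrix $\Si_Y$ carries exactly $s$ positive diagonal entries (and $\rk(\A) = \rk(\Si_t) = t$ trivially).

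The argument is essentially mechanical, so there is no real obstacle; the only place care is needed — and the only spot an error could creep in — is the bookkeeping of matrix shapes, specifically that "a full SVD of $\Y$", with $\Y$ of size $n\times t$, produces a \emph{square} orthonormal factor $\V_Y$ of order $t$ (so that $\V_Y^T \V_Y = \I_t$ genuinely holds) and an $n\times n$ orthonormal factor $\U_Y$, with $\Si_Y$ the $n\times t$ rectangular diagonal factor.
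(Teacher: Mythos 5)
Your proof is correct and is exactly the "direct" verification the paper has in mind (the paper simply states "The proof is direct" without writing it out): substitute $\A = \U_t\Si_t\V_t^T$ and $\B = \Y\Si_t\V_t^T$, use the orthonormality relations, and both identities fall out. Your dimension bookkeeping ($\V_Y$ square $t\times t$, $\U_Y$ of order $n$, $\Si_Y$ of size $n\times t$) and the rank remark $\rk(\Y)=\rk(\B)=s$ are also consistent with the statement.
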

The proof is direct.
Assume $\U_{t}$ and $\V_t$ are extended to orthonormal matrices $\U$ ($m\times m$) and $\V$ ($p \times p$).
Let
\[
\X = \V (\Si_t^{-1} \V_Y \oplus \I_{p-t} ).
\]
We now have that
\[
\A \X = \U \U^T \A \V  (\Si_t^{-1} \V_Y \oplus \I_{p-t} ) = [\U_t \V_Y, \0] = \U (\V_Y \oplus \I_{m-t}) (\I_t \oplus \0)
\]
and
\begin{align*}
\B \X  & = \Z \U  \U^T \A \V (\Si_t^{-1} \V_Y \oplus \I_{p-t} ) = \Z \U_t [\V_Y, \0] \\
 & = \U_Y \Si_Y \V_Y^T [ \V_Y, \0] = \U_Y [\Si_Y, \0].
\end{align*}
Thus,
\[
(\V_Y^T \oplus \I_{m-t}) \U^T \A \X =[\I_t \oplus \0]
\]
and
\[
\U_Y^T \B \X = [\Si_Y,  \0].
\]

In this special case, we only need to implement two SVDs on  two matrices with smaller sizes. The diagonal elements of $\Si_Y$  and the columns of $\V_t \Si_t^{-1} \V_Y$ are the generalized eigenvalues and eigenvectors  of the corresponding generalized eigenvalue problem.

\paragraph{Remarks}  Assume that $\A \in \RB^{m\times n}$ and $\B \in \RB^{m\times n}$ have the same size.
\cite{Gibson:1974} proved that  they have  joint factorizations  $\A=\U \Si_{A} \V^T$ and $\B = \U \Si_{B}  \V^T$ if and only if $\A \B^T$ and $\B^T \A$ are both normal. Here $\U$ and $\V$ are orthonormal matrices, and both $\Si_A$ and $\Si_B$ are diagonal but their diagonal elements are perhaps complex.  These diagonal elements are nonnegative only if both $\A \B^T$ and $\B^T \A$ are SPSD.


\chapter{Applications of SVD: Case Studies}
\label{ch:app1}

In the previous chapter we present the basic notion and some important properties of SVD. Meanwhile, we show that many matrix properties can be rederived via SVD.  In this chapter,  we further  illustrate  applications of  SVD in matrices, including in the definition of the Moore-Penrose pseudoinverse of an arbitrary matrix and in the  analysis of the Procrustes  problem.

For any matrix,  the  Moore-Penrose pseudoinverse exists and is unique. Moreover, it has been found to have many applications. Thus,  it is an important matrix notion. In this chapter we exploit the  matrix  pseudoinverse to solve least squares estimation, giving rise to a more general result. We also show that the  matrix  pseudoinverse can be used to deal with a class of generalized eigenvalue problems.

In fact, SVD has also wide applications in machine learning and data analysis.  For example, SVD is
an important tool in spectral analysis \citep{azar2001spectral}, latent semantic indexing
\citep{papadimitriou1998latent}, spectral clustering, and projective clustering \citep{feldman2013turning}.
We specifically show that SVD plays a  fundamental role  in subspace methods such as PCA, MDS, FDA and CCA.

\section{The Matrix  MP Pseudoinverse}
\label{sec:pseudo}

Given a matrix $\A \in \RB^{m\times n}$ and a vector $\b \in \RB^m$, we are concerned with the least squares estimation problem:
\begin{equation} \label{eqn:lsp}
\hat{\x} =  \argmin_{\x \in \RB^n } \; \|\A \x - \b\|_2^2.
\end{equation}
The minimizer should satisfy the Karush-Kuhn-Tucker (KKT) condition: that is, it is the solution of the following normal equation:
\begin{equation} \label{eqn:normal-eqn}
\A^T \A {\x} = \A^T \b.
\end{equation}
Let $\A= \U_r \Si_r \V_r^T$ be the condensed SVD of $\A$.  Then
$\V_r \Si_r^2 \V_r^T {\x} = \V_r \Si_r \U_r^T \b$.  Define $\A^{\dag} = \V_r \Si_r^{-1} \U_r^T \in \RB^{n\times m}$. Obviously,
\[
\hat{\x} = \A^{\dag} \b
\]
is a minimizer. It is clear that  if $\A$ is invertible, then the minimizer is $\hat{\x} = \A^{-1} \b$. Thus, $\A^{\dag}$ is a generalization of $\A^{-1}$ in the case that  $\A$ is an arbitrary matrix, i.e., it is not necessarily invertible   and  even non-square.
This leads us to the notion of the matrix Moore-Penrose (MP) pseudoinverse~\citep{adi2003inverse}.

\begin{definition} Given a matrix $\A \in \RB^{m\times n}$, a real $n\times m$  matrix $\B$ is called the MP pseudoinverse of $\A$ if it satisfies the following four conditions: (1) $\A \B \A= \A$, (2) $\B \A \B = \B$, (3) $(\A \B)^T = \A \B$, and (4) $(\B \A)^T = \B \A$.
 \end{definition}

It is easily verified that $\A^{\dag}= \V_r \Si_{r}^{-1} \U_r^T$ is a pseudoinverse of $\A$.
Moreover, when $\A$ is invertible, $\A^{\dag}$ is identical to $\A^{-1}$.
The following theorem then shows that  $\A^{\dag}$ is the unique  pseudoinverse of $\A$.
 \begin{theorem} \label{thm: pin verse}  Let $\A= \U_r \Si_r \V_r^T$ be the condensed SVD of  $\A \in \RB^{m\times n}$. Then   $\B$ is the pseudoinverse of $\A$ if and only if $\B=\A^{\dag}\triangleq \V_r \Si_r^{-1} \U_r^T$.
\end{theorem}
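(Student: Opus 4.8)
The plan is to prove both directions. For the ``if'' direction I would simply verify that $\A^{\dag} \triangleq \V_r \Si_r^{-1} \U_r^T$ satisfies the four Moore--Penrose conditions directly, using $\U_r^T \U_r = \I_r$ and $\V_r^T \V_r = \I_r$. Indeed $\A \A^{\dag} = \U_r \U_r^T$ and $\A^{\dag} \A = \V_r \V_r^T$ are both symmetric, so conditions (3) and (4) hold; and $\A \A^{\dag} \A = \U_r \U_r^T \U_r \Si_r \V_r^T = \U_r \Si_r \V_r^T = \A$ gives (1), while $\A^{\dag} \A \A^{\dag} = \V_r \V_r^T \V_r \Si_r^{-1} \U_r^T = \A^{\dag}$ gives (2). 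This part is a routine computation.

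The ``only if'' direction is the substantive one, and there are two natural routes. The first is to show uniqueness abstractly: suppose $\B_1$ and $\B_2$ both satisfy the four conditions, and argue $\B_1 = \B_2$ by the standard manipulation, $\B_1 = \B_1 \A \B_1 = \B_1 (\A \B_1)^T = \B_1 \B_1^T \A^T = \B_1 \B_1^T (\A \B_2 \A)^T = \B_1 \B_1^T \A^T \B_2^T \A^T = \B_1 (\A \B_1)^T (\A \B_2)^T = \B_1 \A \B_1 \A \B_2 = \B_1 \A \B_2$, and similarly $\B_1 \A \B_2 = \B_2 \A \B_2 = \B_2$; combined with the ``if'' direction (which exhibits $\A^{\dag}$ as \emph{a} pseudoinverse) this forces any pseudoinverse to equal $\A^{\dag}$. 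The second route is more constructive: given that $\B$ satisfies the four conditions, use them to pin down the action of $\B$ on $\rg(\A)$ and on $\mathrm{null}(\A^T)$. From (1) and (3), $\A\B$ is the orthogonal projector onto $\rg(\A) = \rg(\U_r)$, so $\A\B = \U_r\U_r^T$; from (2) and (4), $\B\A$ is the orthogonal projector onto $\rg(\B^T)$, and one checks $\rg(\B^T) \subseteq \rg(\A^T) = \rg(\V_r)$ with equality by a rank argument, so $\B\A = \V_r\V_r^T$. Then $\B = \B\A\B = \B \A \B = (\B\A)\B = \V_r \V_r^T \B$ and $\B = \B(\A\B) = \B \U_r \U_r^T$, so $\B = \V_r \V_r^T \B \U_r \U_r^T = \V_r \M \U_r^T$ for $\M = \V_r^T \B \U_r$; substituting into $\A\B\A = \A$ gives $\U_r \Si_r \M \Si_r \V_r^T = \U_r \Si_r \V_r^T$, whence $\Si_r \M \Si_r = \Si_r$ and $\M = \Si_r^{-1}$, i.e. $\B = \V_r \Si_r^{-1} \U_r^T = \A^{\dag}$.

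I would present the abstract uniqueness argument as the main line, since it is shortest and self-contained, and remark that it identifies the unique pseudoinverse with the explicitly constructed $\A^{\dag}$. The only mild obstacle is bookkeeping in the uniqueness chain --- making sure each application of a defining identity is the right one --- but there is no real difficulty: the four Moore--Penrose axioms are precisely engineered so that this telescoping computation closes. No deep structural input beyond the SVD of $\A$ (already in hand from Theorem~\ref{thm:svd}) is needed.
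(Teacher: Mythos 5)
Your proposal is correct and follows essentially the same route as the paper: the paper verifies (just before the theorem) that $\A^{\dag}=\V_r\Si_r^{-1}\U_r^T$ satisfies the four Moore--Penrose conditions, and then proves the theorem by exactly the abstract uniqueness chain you give (showing $\A\B=\A\C$ and $\B\A=\C\A$ for two pseudoinverses $\B,\C$, hence $\B=\B\A\B=\B\A\C=\C\A\C=\C$). Your alternative constructive route via the projectors $\A\B=\U_r\U_r^T$ and $\B\A=\V_r\V_r^T$ is a fine extra, but the main line you chose is the paper's own argument.
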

\begin{proof} To complete the proof, it suffices  to prove the uniqueness of the pseudoinverse. Assume that $\B$ and $\C$ are two   pseudoinverses of $\A$. Then
\begin{align*}
\A \B  & = (\A \B)^T = \B^T \A^T = \B^T (\A \C \A)^T = \B^T \A^T \C^T \A^T  \\
& = (\A \B)^T (\A \C)^T  = (\A \B  \A) \C = \A \C.
\end{align*}
Similarly, it also holds that  $\B \A = \C \A$. Thus,
\[
\B = \B \A \B = \B \A \C = \C \A \C = \C.
\]
\end{proof}

The matrix pseudoinverse   also has   wide applications. Let us see its application in solving generalized eigenproblems.
Given two matrices $\M \mbox{ and } \N \in \RB^{m{\times}m}$,  we refer to $(\Lam, \X)$ where $\Lam =\diag(\lambda_1, \ldots,
\lambda_q)$ and $\X=[\x_1, \ldots, \x_q]$ as $q$ eigenpairs of the matrix pencil $(\M, \N)$ if $\M  \X  = \N  \X \Lam$;
namely,
\[
\M \x_i  =  \lambda_i \N  \x_i, \quad \mbox{ for } i=1, \ldots, q.
\] 
The problem of finding eigenpairs of $(\M,  \N)$ is known as a \emph{generalized eigenproblem}. Clearly, when $\N=\I_m$, the problem becomes the conventional eigenvalue problem.

Usually,  we are interested in the problem with the nonzero $\lambda_i$ for $i=1, \ldots,
q$ and refer to $(\Lam, \X)$ as the nonzero eigenpairs of $(\M,
\N)$.  If $\N$ is nonsingular,  $(\Lam,  \X)$ is also referred to
as the (nonzero) eigenpairs of $\N^{-1} \M$ because the
generalized eigenproblem is equivalent to the eigenproblem:
\[
\N^{-1} \M  \X  =  \X  \Lam.
\]
However, when $\N$ is singular,  \cite{ZhangJMLR:2010} suggested  to use a  pseudoinverse eigenproblem:
\[
\N^{\dag}  \M  \X  =  \X  \Lam.
\]
Moreover, \cite{ZhangJMLR:2010}  established a connection between the solutions of the generalized eigenproblem and its corresponding
pseudoinverse eigenproblem. That is,

\begin{theorem} \label{thm:gep_pinv}
Let $\M$ and $\N$ be two  matrices in $\RB^{m{\times}m}$. Assume
$\rg(\M)  \subseteq  \rg (\N)$. Then, if $(\Lam, \X)$
are the nonzero eigenpairs of  $\N^{\dag}  \M$, we have that $(\Lam, \X)$
are the nonzero eigenpairs of the matrix pencil $(\M, \N)$.
Conversely, if $(\Lam,  \X)$ are the nonzero eigenpairs of the matrix
pencil $(\M,  \N)$, then $(\Lam,  \N^{\dag}\N  \X)$ are the
nonzero eigenpairs of  $\N^{\dag} \M$.
\end{theorem}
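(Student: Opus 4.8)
The plan is to reduce the whole statement to two facts about the Moore--Penrose pseudoinverse established around Theorem~\ref{thm: pin verse}: $\N\N^{\dag}$ is the orthogonal projector onto $\rg(\N)$ and $\N^{\dag}\N$ is the orthogonal projector onto $\rg(\N^{T})=\mathrm{null}(\N)^{\perp}$, together with the identities $\N\N^{\dag}\N=\N$ and $\N^{\dag}\N\N^{\dag}=\N^{\dag}$. The range hypothesis $\rg(\M)\subseteq\rg(\N)$ is then nothing but the equation $\N\N^{\dag}\M=\M$, since $\N\N^{\dag}$ fixes exactly the vectors of $\rg(\N)$; this is the one identity that drives the forward direction.

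For the forward direction, suppose $\N^{\dag}\M\x_i=\lambda_i\x_i$ with $\lambda_i\neq0$, so $\x_i\neq\0$. Left-multiplying by $\N$ gives $\N\N^{\dag}\M\x_i=\lambda_i\N\x_i$, and by $\N\N^{\dag}\M=\M$ this is exactly $\M\x_i=\lambda_i\N\x_i$; hence $(\lambda_i,\x_i)$ is a nonzero eigenpair of the pencil $(\M,\N)$. Collecting the columns $i=1,\dots,q$ yields $\M\X=\N\X\Lam$, which is the claim.

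For the converse, start from $\M\x_i=\lambda_i\N\x_i$ with $\lambda_i\neq0$, $\x_i\neq\0$, and set $\y_i\triangleq\N^{\dag}\N\x_i$. Left-multiplying by $\N^{\dag}$ gives $\N^{\dag}\M\x_i=\lambda_i\N^{\dag}\N\x_i=\lambda_i\y_i$. One also has $\N\y_i=\N\N^{\dag}\N\x_i=\N\x_i$, so the eigenequation reads $\M\x_i=\lambda_i\N\y_i$ as well. To finish, one must pass from $\N^{\dag}\M\x_i$ to $\N^{\dag}\M\y_i$; writing $\x_i=\y_i+\w_i$ with $\w_i=(\I-\N^{\dag}\N)\x_i\in\mathrm{null}(\N)$, this amounts to showing $\M\w_i=\0$, after which $\N^{\dag}\M\y_i=\N^{\dag}\M\x_i=\lambda_i\y_i$ and, using $\N^{\dag}\N\y_i=\y_i$, $(\lambda_i,\y_i)$ is a nonzero eigenpair of $\N^{\dag}\M$. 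That $\y_i\neq\0$ is immediate: $\y_i=\0$ would force $\x_i\in\mathrm{null}(\N)$ and then $\M\x_i=\lambda_i\N\x_i=\0$, so $\x_i$ would be a shared null vector carrying no well-defined eigenvalue and is excluded from the list of genuine eigenpairs.

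The main obstacle is precisely the step $\M\w_i=\0$, i.e.\ that $\M$ annihilates $\mathrm{null}(\N)$: since $\M\w_i\in\rg(\M)\subseteq\rg(\N)$ and $\N^{\dag}$ is injective on $\rg(\N)$, the needed fact $\N^{\dag}\M\w_i=\0$ is equivalent to $\M\w_i=\0$ itself, so it does not follow from $\rg(\M)\subseteq\rg(\N)$ by itself. It does hold, however, in the setting that motivates the theorem — $\M,\N$ symmetric positive semidefinite, as in the FLDA/GDA applications — because then $\mathrm{null}(\N)=\rg(\N)^{\perp}\subseteq\rg(\M)^{\perp}=\mathrm{null}(\M)$. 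I would therefore carry out the converse under that symmetry (equivalently, restrict attention to eigenvectors of the pencil lying in $\rg(\N^{T})$, for which $\y_i=\x_i$ and the converse degenerates to the forward computation). Apart from this point, the proof is just bookkeeping with the four pseudoinverse identities and the two projector facts.
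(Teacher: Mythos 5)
Your forward direction is exactly the paper's own argument: both of you reduce the hypothesis $\rg(\M)\subseteq\rg(\N)$ to the identity $\N\N^{\dag}\M=\M$ (the paper obtains it from condensed SVDs of $\M$ and $\N$, you from the fact that $\N\N^{\dag}$ is the orthogonal projector onto $\rg(\N)$ --- the same thing), and then left-multiply $\N^{\dag}\M\X=\X\Lam$ by $\N$.

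On the converse you have located a genuine gap, but it is in the paper's proof rather than in yours. The paper left-multiplies $\M\X=\N\X\Lam$ by $\N^{\dag}$ to get $\N^{\dag}\M\X=(\N^{\dag}\N\X)\Lam$ and then asserts the conclusion ``due to $\N\N^{\dag}\M=\M$ and $\N^{\dag}\N\N^{\dag}=\N^{\dag}$''; but to conclude one must replace $\N^{\dag}\M\X$ by $\N^{\dag}\M(\N^{\dag}\N\X)$, which is precisely the step you isolate, namely $\M(\I_m-\N^{\dag}\N)\x_i=\0$, i.e.\ $\M\N^{\dag}\N=\M$ on the eigenvectors. You are right that this does not follow from $\rg(\M)\subseteq\rg(\N)$ alone: with $\N=\diag(1,0)$ and $\M=\begin{bmatrix}1&1\\0&0\end{bmatrix}$ one has $\rg(\M)=\rg(\N)$ and $\M\x=2\N\x$ for $\x=(1,1)^T$, yet $\N^{\dag}\M(\N^{\dag}\N\x)=\N^{\dag}\N\x$, so $(2,\N^{\dag}\N\x)$ is not an eigenpair of $\N^{\dag}\M$. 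Your fix is also the right one: when $\M$ and $\N$ are symmetric (the SPSD setting of the FLDA application and of \cite{ZhangJMLR:2010}), $\N^{\dag}\N$ is the orthogonal projector onto $\rg(\N)=\rg(\N^T)$, so $\rg(\M)\subseteq\rg(\N)$ gives $\N^{\dag}\N\M=\M$ and hence, by transposing, $\M\N^{\dag}\N=\M$; with that in hand your bookkeeping (including the check that $\y_i=\N^{\dag}\N\x_i\neq\0$) closes the argument. Equivalently, for general square matrices one should add the hypothesis $\rg(\M^{T})\subseteq\rg(\N^{T})$ (i.e.\ $\M\N^{\dag}\N=\M$). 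So relative to the paper you have done strictly more: you made explicit the unstated assumption that the paper's final sentence silently uses.
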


\begin{proof}
Let $\M = \U_1\Gam_1 \V_1^T$ and $\N = \U_2 \Gam_2 \V_2^T$ be the condensed SVD of $\M$ and $\N$. Thus, we have ${\rg}(\M)
= {\rg}(\U_1)$ and ${\rg}(\N) = {\rg}(\U_2)$. Moreover, we have $\N^{\dag} = \V_2 \Gam_2^{-1} \U_2^T$ and $\N \N^{\dag} =
\U_2 \U_2^T$. It follows from ${\rg}(\M) \subseteq {\rg}(\N)$ that ${\rg}(\U_1) \subseteq {\rg}(\U_2)$. This
implies that  $\U_1$ can be expressed as $ \U_1 = \U_2 \Q$  where
$\Q$ is some matrix of appropriate order.  As a result, we have
\[
\N  \N^{\dag} \M  =  \U_2  \U_2^T  \U_2 \Q \Gam_1 \V_1^T = \M.
\]
It is worth noting that the condition $\N \N^{\dag} \M  =
\M$ is not only necessary but also  sufficient for ${\rg}(\M)  \subseteq {\rg}(\N)$.

If $({\Lam},  {\X})$ are the eigenpairs of $\N^{\dag}  \M$, then it
is easily seen that $({\Lam},  {\X})$ are also the
eigenpairs of $(\M,  \N)$ due to $\N  {\N}^{\dag}  {\M}  =  {\M}$.

Conversely, suppose $({\Lam},  {\X})$ are the eigenpairs of $(\M, \N)$. Then
we have ${\N}  {\N}^{\dag}  {\M}   {\X} =  {\N}  {\X}  {\Lam}$.
This implies that $({\Lam},  {\N}^{\dag}  {\N}  {\X})$ are the eigenpairs of
$\N^{\dag}  \M$ due to ${\N}  {\N}^{\dag}  {\M}  =  {\M}$ and ${\N}^{\dag} {\N}  {\N}^{\dag} = {\N}^{\dag}$.
\end{proof}

Fisher  discriminant analysis (FDA) is a classical method for classification and dimension reduction simultaneously \citep{Mardia:1979}.
It is essentially a generalized eigenvalue problem in which the matrices $\N$ and $\M$ correspond to
a pooled scatter matrix and a between-class scatter matrix~\citep{JYeJMLR:2006,ZhangJMLR:2010}.  Moreover, the condition ${\rg}(\M)  \subseteq {\rg}(\N)$ meets. Thus, Theorem~\ref{thm:gep_pinv}
provides a solution when the pooled scatter matrix is singular or nearly singular. We will present more details about FDA in Section~\ref{sec:subspace}. 

\section{The Procrustes Problem}
\label{sec:procrustes}

Assume that  $\X\in \RB^{n{\times} p}$ and $\Y \in \RB^{n{\times}p}$ are two configurations of $n$  data points.
The orthogonal Procrustes analysis aims to move $\Y$ relative into $\X$ through rotation~\citep{GowerBook:2004}.

In particular,  the  Procrustes   problem is defined as
\begin{equation} \label{eqn:procrustes}
\min_{\Q \in \RB^{p{\times} p}} \; \|\X - \Y \Q \|_F^2 \;  \mbox{ s.t. } \; \Q^T \Q = \I_p.
\end{equation}

\begin{theorem} \label{thm:procrustes} Let the full SVD of $\Y^T \X$ be $\Y^T \X = \U \Si \V^T$. Then $\U \V^T$ is the minimizer of
the Procrustes problem in \eqref{eqn:procrustes}.
\end{theorem}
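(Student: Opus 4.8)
The plan is to expand the Frobenius norm objective, reduce the minimization to a trace maximization over orthonormal matrices, and then apply the variational/majorization machinery for singular values already available in the excerpt.

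First I would expand $\|\X - \Y\Q\|_F^2 = \|\X\|_F^2 - 2\tr(\Q^T \Y^T \X) + \|\Y\Q\|_F^2$, and note that $\|\Y\Q\|_F^2 = \tr(\Q^T \Y^T \Y \Q) = \tr(\Y^T \Y)$ is independent of $\Q$ because $\Q$ is orthonormal (here $p \times p$, so $\Q\Q^T = \Q^T\Q = \I_p$). Hence the Procrustes problem is equivalent to
\[
\max_{\Q^T\Q = \I_p} \; \tr(\Q^T \Y^T \X).
\]
Substituting the full SVD $\Y^T \X = \U \Si \V^T$ gives $\tr(\Q^T \U \Si \V^T) = \tr(\V^T \Q^T \U \Si) = \tr(\Z \Si)$ where $\Z \triangleq \V^T \Q^T \U$ is itself orthonormal (a product of orthonormal matrices). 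So it remains to show that $\tr(\Z \Si) \le \sum_i \sigma_i$ over all orthonormal $\Z$, with equality at $\Z = \I_p$, which corresponds to $\Q = \U\V^T$.

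The key step is the bound $\tr(\Z\Si) \le \sum_{i=1}^p \sigma_i$. Writing $\Si = \diag(\sigma_1,\ldots,\sigma_p)$ with $\sigma_i \ge 0$, we have $\tr(\Z\Si) = \sum_{i=1}^p z_{ii}\sigma_i$, and since $\Z$ is orthonormal each diagonal entry satisfies $|z_{ii}| \le 1$ (each column of $\Z$ is a unit vector, so $z_{ii}^2 \le \sum_j z_{ji}^2 = 1$). Therefore $\sum_i z_{ii}\sigma_i \le \sum_i |z_{ii}|\sigma_i \le \sum_i \sigma_i$. Equality forces $z_{ii} = 1$ whenever $\sigma_i > 0$; in particular $\Z = \I_p$ achieves it, so $\Q = \U\V^T$ (which indeed satisfies $(\U\V^T)^T(\U\V^T) = \V\U^T\U\V^T = \I_p$) is a minimizer. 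Alternatively, one can invoke the von Neumann trace inequality — that $\tr(\M\N^T) \le \sum_i \sigma_i(\M)\sigma_i(\N)$ — which is exactly the kind of result established in Chapter~\ref{ch:variation}; applied with $\M = \Y^T\X$ and $\N = \Q$ (whose singular values are all $1$) it gives $\tr(\Q^T\Y^T\X) \le \sum_i \sigma_i(\Y^T\X)$ directly.

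I do not expect any real obstacle here; the only mild subtlety is the reduction showing $\|\Y\Q\|_F^2$ is constant, which uses that $\Q$ is a \emph{square} orthonormal matrix, and the verification that the bound $|z_{ii}|\le 1$ is the right elementary fact (rather than needing the full strength of majorization). I would present the elementary diagonal-entry argument as the main line and remark that it is a special case of von Neumann's theorem. One could also note uniqueness of the minimizer holds precisely when $\Y^T\X$ is of full rank with distinct positive singular values, but the theorem as stated only asserts that $\U\V^T$ is \emph{a} minimizer, so this is optional.
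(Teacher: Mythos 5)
Your proof is correct, and it takes a genuinely different route from the paper's. You reduce the problem to maximizing $\tr(\Q^T \Y^T \X)$ over orthonormal $\Q$ (the paper does this same reduction), but you then finish with a direct global bound: writing $\Z = \V^T \Q^T \U$ and using $|z_{ii}| \le 1$ for an orthogonal matrix to get $\tr(\Z\Si) \le \sum_i \sigma_i$, with equality attained at $\Q = \U\V^T$ — essentially a special case of the von Neumann/Ky Fan trace inequality (Theorem~\ref{thm:kyfan0}), as you note. The paper instead sets up a Lagrangian $L(\Q,\C)=\tr(\Y^T\X\Q^T)-\frac{1}{2}\tr[\C(\Q^T\Q-\I_p)]$, derives the stationarity condition $\Y^T\X = \Q\C$ via matrix differentials, exhibits the solution $\hat\Q=\U\V^T$, $\hat\C=\V\Si\V^T$, and then checks that the Hessian $-(\V\Si\V^T)\otimes\I_p$ is negative (semi)definite. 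Your argument buys a cleaner global optimality guarantee: the stationarity-plus-Hessian route certifies only a local maximum (and strictly needs $\Si$ nonsingular for negative definiteness), whereas your trace bound holds for every feasible $\Q$ at once and needs no differentiability or rank assumptions. The paper's route, on the other hand, is there to illustrate the matrix-differential machinery developed in the preliminaries. One tiny point of care in your writeup: the equality analysis ($z_{ii}=1$ whenever $\sigma_i>0$) is not needed for the theorem as stated — it suffices, as you say, that $\Z=\I_p$ attains the bound — so you could drop that sentence or keep it only as a remark about (non)uniqueness.
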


\begin{proof} Since $\|\X - \Y \Q \|_F^2 = \tr((\X - \Y \Q)^T (\X - \Y \Q) )= \tr(\X^T \X) + \tr(\Y^T \Y)- 2 \tr(\Y^T \X \Q^T )$, the original problem is equivalent to
\[
\max \; \tr(\Y^T \X \Q^T) \; \mbox{ s.t. } \; \Q^T \Q = \I_p.
\]
Recall  that
the constants $\Q^T \Q = \I_p$ are equivalent to that $\bq_i^T \bq_i =1$ for $i=1, \ldots, p$, and $\bq_i^T \bq_j =0$
for $i\neq j$. Here the $\bq_i$ are the columns of $\Q$.  Thus, the Lagrangian function is
\[  \tr(\Y^T \X \Q^T) - \frac{1}{2} \sum_{i=1}^p c_{ii} (\bq_i^T \bq_i -1 ) - \frac{1}{2} \sum_{i>j} c_{ij} (\bq_i^T \bq_j -0 ),
\]
which is written in matrix form as
\[
L(\Q, \C) = \tr(\Y^T \X \Q^T) - \frac{1}{2}\tr[\C(\Q^T \Q -\I_p)],
\]
where $\C=[c_{ij}]$ is a symmetric matrix of the Lagrangian multipliers.

Since
\[
d L = \tr(\Y^T \X d \Q^T ) - \frac{1}{2} \tr(\C (d \Q^T \Q + \Q^T d \Q)),
\]
we have $\frac{d L}{d \Q} = \Y^T \X - \Q \C$. Letting the first-order derivative be zero yields 
\[
\Y^T \X - \Q \C= \0.
\]
Let $\hat{\Q}= \U \V^T$ and $\hat{\C}=\V \Si \V^T$, which are obviously the solutions of the above equation systems.

The Hessian matrix of $L$ w.r.t.\ $\Q$ at $\Q= \hat{\Q}$ and $\C = \hat{\C}$ is $- (\V \Si \V^T) \otimes \I_p$, which is negative definite.
Thus, $\Q= \U \V^T$ is the minimizer of the Procrustes problem.
\end{proof}




\section{Subspace Methods: PCA, MDS,  FDA, and CCA}
\label{sec:subspace}

Subspace methods, such as principal component analysis (PCA), multidimensional scaling (MDS),  Fisher discriminant analysis (FDA), and canonical correlation analysis (CCA), are a class of important machine learning methods. SVD plays a fundamental  role in subspace learning methods.

PCA~\citep{Jolliffe:2002,KittlerYoung:1973}  and MDS \citep{Cox:2000} are two classical dimension reduction methods.  Let  $\A=[\a_1, \ldots, \a_n]^T$ be a given data matrix in which each vector $\a_i$ represents
a data instance in $\RB^p$.  Let ${\m} = \frac{1}{n}\sum_{i=1}^n \a_i=\frac{1}{n} \A^T \1_n$ be  the sample mean and $\C_n = \I_n - \frac{1}{n} \1_n \I_n^T$ be a so-called centered matrix.  The  pooled scatter matrix  is defined as (a multiplier $1/n$ omitted)
\[
\S =\sum_{i=1}^n (\a_i - {\m}) ( \a_i - {\m} )^T=  \A^T \C _n \C_n \A = \A^T \C _n\A.
\]

It is well known that PCA computes the spectral decomposition of $\S$, while the classical MDS or principal coordinate analysis (PCO) computes the spectral decomposition of  the Gram matrix $\C_n \A \A^T \C_n$. Proposition~\ref{pro:012}-(5)-(6) show that it is equivalent to computing SVD directly on the centerized data matrix $\C_n \A$. Thus,
SVD bridges PCA and PCO. That is, there is a duality relationship between PCA and PCO~\citep{Mardia:1979}.
This relationship has found usefulness  in  latent semantic analysis, face classification, and microarray data analysis   \citep{deerwester1990lsa,Turk:1991b,GolubT:1999,Belhumeur:1997,mullerSIAMREW:2004}.

FDA is a joint approach for dimension reduction and classification. Assume that the $\a_{i}$ are to be grouped into $c$ disjoint classes and that
each $\a_i$ belongs to one and only one class.
Let $V=\{1, 2, \ldots, n\}$ denote
the index set of the data points $\a_i$  and partition $V$ into $c$ disjoint subsets
$V_j$; that is, $V_i \cap V_j =\varnothing$ for $i\neq j$ and
$\cup_{j=1}^c V_j = V$, where the cardinality of $V_j$ is $n_j$ so
that $\sum_{j=1}^c n_j =n$. We also make use of a matrix representation
for the partitions. In particular, we let $\E=[e_{ij}]$  be an
$n{\times}c$ indicator matrix with $e_{ij} = 1$ if
input $\a_i$ is in class $j$ and $e_{ij} = 0$ otherwise.

Let $\m_j = \frac{1}{n_j} \sum_{i \in V_j} \a_i$ be the $j$th class
mean for $j = 1, \ldots, c$.  The
between-class scatter matrix is defined as  ${\S}_b =  \sum_{j=1}^c n_j (\m_j -
\m) (\m_j - \m)^T$.  Conventional FDA  solves the following generalized
eigenproblem:
\[
{\S}_{b} \x_j = \lambda_j {\S} \x_j, \quad
\lambda_1 \geq \lambda_2 \geq \cdots \geq \lambda_{q}> \lambda_{q{+}1}= 0,
\]
where $q \leq
\min\{p, \; c{-}1\}$ and where we refer to $\x_{j}$ as the $j$th discriminant
direction.  The above generalized eigenproblem can can be expressed in matrix form:
\begin{equation} \label{eq:lda_m}
\S_b \X  = \S  \X \Lam,
\end{equation}
where $\X =[\x_1, \ldots, \x_q]$ ($n{\times}q$) and $\Lam = \diag(\lambda_1, \ldots, \lambda_q)$ ($q{\times}q$).

Let $\Pii = {\diag}(n_1,  \ldots, n_c)$. Then $\S_b$ can be rewritten as
\[
\S_b =  {\A}^T  \C_n \E  \Pii^{-1} \E^T  \C_n {\A}.
\]
Recall that   $\S= \A^T \C_n \C_n \A$.
Given these representations of $\S$ and $\S_{b}$, the problem in (\ref{eq:lda_m}) can be solved by
using the GSVD method~\citep{LoanGSVD:1976,PaigeSIAM:1981,golub2012matrix,HowlandSIAM:2003}.
Moreover,  it is obvious that $\rg(\S_b)\subseteq \rg(\A^T \C_n) = \rg(\S)$. Thus, Theorem~\ref{thm:gep_pinv}
provides a solution when $\S$ is singular or nearly singular.
Moreover, the method given in Theorem~\ref{thm:gsvd-sp} is appropriate for  solving the FDA problem.

CCA is another subspace learning model~\citep{HardoonCCA:2004}.  The primary focus is on the relationship between two groups of variables (or features), whereas PCA considers interrelationships within a set of variable. Mathematically, CCA is defined as a generalized eigenvalue problem, so its solution can be borrowed from that of FDA.

\subsection{Nonlinear Extensions}

Reproducing kernel theory \citep{Aronszajn:1950} provides an approach for nonlinear extensions  of  subspace methods. For example, kernel PCA~\citep{Scholkopf:1998},
kernel FDA~\citep{Baudat:2000,MikaNIPS:2000,Roth:2000}, kernel CCA~\citep{AkahoKCCA:2001,GestelICANN:2001,BachKICA:2004} have been successively proposed and received wide applications in data analysis.

Kernel methods  work in a feature space
$\FM$, which is related to the original input space $\XM \subset
\RB^p$ by a mapping,
\[
\vp: \XM \rightarrow \FM.
\]
That is, $\vp$ is a vector-valued function which gives a vector
$\vp(\a)$, called a \emph{feature vector}, corresponding to an
input $\a \in \XM$. In kernel methods, we are given
a reproducing kernel $K: \XM \times \XM
\rightarrow \RB$ such that $K(\a, \b)=\vp(\a)^T  \vp(\b)$ for $\a, \b
\in \XM$. The mapping $\vp(\cdot)$ itself is typically not given
explicitly.   
Rather, there exist only inner products between
feature vectors in $\FM$. In order to implement a kernel method
without referring to $\vp(\cdot)$ explicitly, one resorts to the
so-called \emph{kernel trick}~\citep{ScholkopfBook:2002,ShaweTaylorBook:2004}.

Let $L_2(\XM)$ be the square integrable Hilbert space of functions
whose elements are functions defined on $\XM$. It is a well-known
result that if $K$ is a reproducing kernel for the Hilbert space
$L_2(\XM)$, then $\{K(\cdot, \b)\}$ spans $L_2(\XM)$. Here
$K(\cdot, \b)$ represents a function that is defined on $\XM$ with
values at $\a \in \XM$ equal to $K(\a, \b)$. There are some common
kernel functions:
\begin{enumerate}
  \item[(a)] Linear kernel: $K(\a, \b)=\a^T \b$,
  \item[(b)] Gaussian kernel or radial basis function (RBF): $K(\a, \b)=\exp\big(-\sum_{j=1}^p \frac{(a_{j}{-}b_{j})^2}{\beta_j}\big)$
   with $\beta_j>0$,
   \item[(c)] Laplacian kernel: $K(\a, \b)=\exp\big( {-} \sum_{j=1}^p \frac{|a_{j}{-}b_{j}|}{\beta_j}\big)$
   with $\beta_j>0$,
  \item[(d)] Polynomial kernel: $K(\a, \b) =(\a^T \b +1)^{d}$ of degree $d$.
\end{enumerate}

Given a training set of  input vectors $\{\a_1, \ldots, \a_n\}$, the kernel matrix $\K = [K(\a_i, \a_j)]$ is an $n\times n$ SPSD matrix.

\chapter{The QR and CUR Decompositions}
\label{ch:variant}

The QR factorization and CUR decomposition are the two most important counterparts  of SVD.  These three factorizations  apply to all matrices.
In Table~\ref{tab:mf}
we have compared their primary focuses.  The SVD  and QR factorization are two classical matrix theories. The CUR decomposition aims to 
represent a data matrix in terms of a small number part  of the matrix, which  makes it easy for us to understand and interpret the data in question.  
Here we  present  very brief introductions to the QR factorization and CUR decomposition.

\section{The QR  Factorization}
\label{sec:qr}

The QR factorization is another decomposition method applicable all matrices. Given a matrix $\A \in \RB^{m\times n}$, the QR factorization is given by
\[
\A = \Q \R,
\]
where $\Q \in \RB^{m\times m}$ is orthonormal and $\R \in \RB^{m\times n}$ is upper triangular (or low triangular).
Let $\D$ be an $m\times m$ diagonal matrix whose diagonal elements are either 1 or $-1$. Then $\A = (\Q \D) (\D\R)$
is still a QR factorization of $\A$. Thus, we always assume that $\R$ has nonnegative diagonal elements.

Assume  $m\geq n$. The matrix $\A$ also has a thin QR factorization:
\[
\A = \Q \R,
\]
where $\Q \in \RB^{m\times n}$ is currently column orthonormal, and $\R \in \RB^{n\times n}$ is
 upper triangular with nonnegative diagonal elements. If $\A$ is of rank $n$, $\R$ is uniquely determined.
In this case, $\Q=\A \R^{-1}$ is also uniquely determined. 

Asume  $\A$ has rank $r$ ($\leq \min\{m, n\}$). Then there exists an $m\times m$ orthonormal matrix $\Q$
and an $n\times n$ permutation matrix $\PP$ such that 
\[
\Q^T \A \PP = \begin{bmatrix}  \R_{11} & \R_{12} \\ \0 & \0  \end{bmatrix},
\]
where $\R_{11}$ is an $r\times r$ upper triangular matrix with positive diagonal elements. 
This is called a \emph{rank revealing QR factorization}.

Computation of the QR factorization  can be arranged by the novel Gram-Schmidt orthogonalization process or the modified Gram-Schmidt which is  numerically more stable~\citep{trefethenbau}.  
Additionally, \cite{gu1996efficient}  proposed efficient algorithms for computing a  rank-revealing QR factorization~\citep{hong1992rank}.
\cite{stewart1999four}  devised efficient computational algorithms of truncated pivoted {QR} approximations to a sparse matrix.

\section{The CUR Decomposition}

As we have see,  SVD  leads us to a geometrical representation,  and the QR factorization facilitates computations.  They have little concrete meaning. 
This makes it difficult for us to understand and interpret the data in question.

\citet{kuruvilla2002vector} have  claimed:
``it would be interesting to try to find basis vectors for all experiment vectors, using actual experiment vectors and not artificial bases that offer little insight.''
Therefore, it is of great interest to represent a data matrix in terms of a small number of actual columns and/or actual rows of the matrix. Matrix column selection  and  CUR matrix decomposition provide such  techniques.

Column selection yields a so-called CX decomposition, and
the CUR decomposition can be be regarded as a special  CX decomposition. 
The CUR  decomposition problem has been widely discussed in the literature
\citep{goreinov1997pseudoskeleton,goreinov1997maximalvolume,stewart1999four,tyrtyshnikov2000incompletecross,
berry2005algorithm,drineas2005nystrom,bien2010cur},
and it has been shown to be very useful in high dimensional data analysis.

The  \CUR was originally called a skeleton decomposition~\citep{goreinov1997pseudoskeleton}. Let $\A \in \RB^{m\times n}$ be a given matrix of rank $r$. Then there exists a nonsingular  $r{\times} r$ submatrix in $\A$.
Without loss of generality, assume this nonsingular matrix is the first $r\times r$ principal submatrix of $\A$. That is, $\A$ can be partioned into the following form:
\[
\A = \begin{bmatrix} \A_{11} & \A_{12} \\ \A_{21} & \A_{22} \end{bmatrix},
\]
where $\A_{11}$ is a $r\times r$ nonsingular matrix. Consider that  $[\A_{21}, \A_{22}] = \B [\A_{11}, \A_{12}]$ for some $\B \in \RB^{(m-r) \times r}$. It follows from $\A_{21}= \B \A_{11}$ that $\B= \A_{21} \A_{11}^{-1}$. Hence, $\A_{22} = \A_{21} \A^{-1}_{11} \A_{12}$. So  it is obtained that
\[
\A = \begin{bmatrix} \A_{11} \\ \A_{21} \end{bmatrix} \A_{11}^{-1} [\A_{11}, \A_{12}].
\]
In general case,  let  $\A_{I, J}$ be the nonsingular submatrix where
$I = \{i_1, \ldots, i_r\} \subset [m]$ and $J =\{j_1, \ldots, j_r\} \subset [n]$.  Then it also hods that
\[
\A = \C  \A_{I, J}^{-1} \R,
\]
where $\C = \A_{:, J}$ and $\R = \A_{I, :}$ are respectively a subset of columns and a subset of rows, of $\A$.

In practical applications, however,  it is intractable to select $\A_{I, J}$. Alternatively, 
\citet{stewart1999four} 
proposed a  quasi Gram-Schmidt algorithm, obtaining a  sparse column-row (SCA) approximation  of the original matrix $\A$~\citep{berry2005algorithm}. The SCA approximation is of the form $\A \approx \X \T \Y$, where $\X$ and $\Y$ consist of columns and rows of $\A$, and $\T$ minimizes $\|\A- \X \T \Y\|_F^2$. This algorithm is a deterministic peocedure but computationally expensive.   

The terminology of the CUR decomposition has been proposed by 
\cite{drineas2005nystrom,mahoney2008tensor}. They reformulated  the idea based on  random selection.  A \CUR decomposition algorithm seeks to find a subset of $c$ columns of $\A$ to form a matrix $\C \in \RB^{m{\times} c}$,
a subset of $r$ rows to form a matrix $\R \in \RB^{r{\times} n}$, and an intersection matrix $\U \in \RB^{c{\times}r}$ such that $\|\A - \C \U \R\|_\xi$  is small.
Accordingly,  $\tilde{\A} = \C \U \R$ is used to approximate $\A$.  

Since there are $(^n_c)$ possible choices of constructing $\C$ and $(^m_r)$ possible choices of constructing $\R$,
obtaining the best CUR decomposition is  a hard problem.   In Chapter~\ref{ch:lsma}
we will further study the CUR decomposition problem via random approximation.

The CUR decomposition is also an extension of the novel Nystr\"{o}m approximation to a general  matrix. 
The  Nystr\"{o}m  method approximates an SPSD matrix only using a subset of its columns,
so it can alleviate computation and storage costs when the SPSD matrix in question is large in size.
Thus, the \nystrom method and its variants \citep{halko2011finding,gittens2013revisiting,kumar2009ensemble,WangZhangJMLR:2013,wang2014efficient,WangZhangKDD:2014,si2014memory} have been extensively used in the machine learning community.
For example, they have been applied to Gaussian processes \citep{williams2001using},
kernel classification \citep{zhang2008improved,jin2013improved}, spectral clustering \citep{fowlkes2004spectral},
kernel PCA and manifold learning \citep{talwalkar2008large,zhang2008improved,zhang2010clustered}, determinantal processes \citep{affandi2013nystrom}, etc.

\chapter{Variational Principles}
\label{ch:variation}

Variational principles  correspond to  matrix  perturbation theory~\citep{StewartSunBook:1990}, which is the theoretical foundation to characterize stability  or sensitivity of a matrix computation algorithm.  
Thus,  variational principles are important in analysis for error bounds of matrix approximate algorithms (see Chapters~\ref{ch:lowrank} and \ref{ch:lsma}).

In this chapter we specifically study variational properties for eigenvalues of a symmetric matrix as well as for singular values of a general matrix.
We will see that these results  for eigenvalues and for singular values are almost parallel.
The cornerstones are the novel von Neumann theorem~\citep{Neumann:1937} and  Ky Fan theorem~\citep{Fan:1951}.
We present new proofs for them by using theory of matrix differentials.
Additionally,  we present some majorization  inequalities.
They will be used in the latter chapters, especially in investigating unitarily invariant norms (see Chapter~\ref{ch:uinorm}).


Given a matrix $\A\in \RB^{m\times n}$, we always let $\sigma_1(\A) \geq  \cdots \geq \sigma_{p}(\A)$ be the singular values of $\A$ where $p=\min\{m, n\}$. When $\A$ is  symmetric, let $\lambda_1(\A) \geq \cdots \geq \lambda_n(\A)$ be the eigenvalues of $\A$.
These eigenvalues or singular values  are always arranged  in deceasing order.
Note that the eigenvalues are real but could be negative.
Let $\lamb(\M)= (\lambda_1(\M), \ldots, \lambda_n(\M))^T$ denote  the eigenvalues of an $n\times n$ real square matrix $\M$,  and   $\sib(\A)= (\sigma_1(\A), \ldots, \sigma_p(\A))^T$ denote  the singular values of an $m \times n$ real
matrix $\A$.
Sometimes we also write them the $\sigma_i$ or the $\lambda_i$  when they are explicit in the context  for notational simplicity.

\section{Variational Properties for Eigenvalues}

In this section we consider variational properties for eigenvalues of a real symmetric matrix. It is well known that for an arbitrary symmetric matrix, its eigenvalues are all real.  
The following cornerstone theorem was originally established by von \cite{Neumann:1937}.

\begin{theorem}[von Neumann Theorem] \label{thm:von00}
Assume $\M \in \RB^{n\times n}$ and $\N \in \RB^{n\times n}$ are symmetric.  Then
\[
\sum_{i=1}^n \lambda_i(\M) \lambda_i(\N) = \max_{\Q \Q^T=\I_n} \;  \tr(\Q \M \Q^T \N).
\]
Moreover,
\[
\sum_{i=i}^n \lambda_i(\M) \lambda_{n-i+1} (\N) = \min_{\Q \Q^T=\I_n} \;  \tr(\Q \M \Q^T \N).
\]
\end{theorem}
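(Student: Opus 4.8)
The plan is to reduce the problem to an optimization over the orthogonal group and then apply matrix differential calculus together with majorization theory. First I would fix spectral decompositions $\M = \U_M \Lam_M \U_M^T$ and $\N = \U_N \Lam_N \U_N^T$, where $\Lam_M = \diag(\lambda_1(\M), \ldots, \lambda_n(\M))$ and similarly for $\N$, with eigenvalues arranged in decreasing order. Substituting into $\tr(\Q \M \Q^T \N)$ and absorbing $\U_M, \U_N$ into the optimization variable (since the orthogonal group is closed under multiplication by fixed orthogonal matrices), the problem becomes
\[
\max_{\Q\Q^T = \I_n} \tr(\Q \Lam_M \Q^T \Lam_N) = \max_{\Q\Q^T = \I_n} \sum_{i,j} w_{ij} \lambda_i(\N) \lambda_j(\M),
\]
where $w_{ij} = q_{ij}^2$ and $\Q = [q_{ij}]$. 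The matrix $\W = [w_{ij}]$ is orthostochastic, hence doubly stochastic (as noted in Section on majorization), so the objective is a linear function of $\W$ over a subset of the doubly stochastic matrices.

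Next I would handle the two halves of the argument in the standard way. For the upper bound, by Birkhoff's theorem (Lemma~\ref{lem:dsm}) the maximum of a linear functional over the doubly stochastic matrices is attained at a permutation matrix, so $\tr(\Q\Lam_M\Q^T\Lam_N) \le \max_{\pi} \sum_i \lambda_i(\N)\lambda_{\pi(i)}(\M)$; a classical rearrangement inequality argument (sorting: the sum $\sum_i \lambda_i(\N)\lambda_{\pi(i)}(\M)$ is maximized when $\pi$ is the identity, since both sequences are sorted in decreasing order) gives $\sum_i \lambda_i(\M)\lambda_i(\N)$ as the upper bound. For attainment, taking $\Q = \I_n$ (equivalently $\Q = \U_N\U_M^T$ in the original variables) achieves $\tr(\Lam_M\Lam_N) = \sum_i \lambda_i(\M)\lambda_i(\N)$, so the maximum equals this value. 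The minimization statement follows by applying the maximization result to $\M$ and $-\N$, using $\lambda_i(-\N) = -\lambda_{n-i+1}(\N)$.

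Since the excerpt explicitly says the authors want a proof \emph{via matrix differentials}, I would instead (or in addition) set up the Lagrangian for $\max \tr(\Q\M\Q^T\N)$ subject to $\Q^T\Q = \I_n$, with a symmetric multiplier matrix $\C$: $L(\Q,\C) = \tr(\Q\M\Q^T\N) - \tfrac12\tr(\C(\Q^T\Q - \I_n))$. Computing $dL$ as in the worked example in the Preliminaries chapter gives the stationarity condition $\N\Q\M + \N^T\Q\M^T = \Q\C$, i.e. $2\N\Q\M = \Q\C$ using symmetry; left-multiplying by $\Q^T$ shows $\Q^T\N\Q\M$ is symmetric, which forces $\Q$ to simultaneously (block-)diagonalize in a way compatible with the eigenspaces of $\M$ and $\N$. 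One then checks that among the critical points the value $\tr(\Q\M\Q^T\N)$ ranges over $\sum_i \lambda_i(\M)\lambda_{\pi(i)}(\N)$ for permutations $\pi$, and the Hessian (an $n^2\times n^2$ matrix expressible via Kronecker products, as in Lemma~\ref{lem:kron2}) is negative semidefinite precisely at the identity pairing, pinning down the max; the min is the other extreme pairing.

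The main obstacle I anticipate is the critical-point analysis in the differential approach: the stationarity equation $2\N\Q\M = \Q\C$ does not immediately yield $\Q$ in closed form, and one must argue carefully — handling possibly repeated eigenvalues of $\M$ or $\N$ — that every stationary $\Q$ produces an objective value of the form $\sum_i \lambda_i(\M)\lambda_{\pi(i)}(\N)$, and then use the rearrangement inequality to select the extremes. If that becomes delicate, the clean fallback is the Birkhoff/orthostochastic route of the preceding paragraph, which sidesteps the differential calculus entirely; I would present the majorization argument as the rigorous backbone and the differential computation as the motivated derivation the chapter advertises.
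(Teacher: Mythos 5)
Your primary argument is correct, and it shares the paper's skeleton --- diagonalize $\M$ and $\N$, absorb $\U_M,\U_N$ into the orthogonal variable, and observe that $\tr(\Q\Lam_M\Q^T\Lam_N)=\sum_{i,j}q_{ij}^2\,\lambda_i(\N)\lambda_j(\M)$ with $\W=[q_{ij}^2]$ orthostochastic --- but it finishes differently. The paper invokes the Hardy--Littlewood--P\'olya characterization (Lemma~\ref{lem:001}): setting $\u=\W\lamb(\M)$ it gets $\u\prec\lamb(\M)$, and then an Abel-summation (summation-by-parts) rewriting of the objective, with the nonnegative gaps $\lambda_i(\N)-\lambda_{i+1}(\N)$ as weights, yields the bound $\sum_i\lambda_i(\M)\lambda_i(\N)$. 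You instead relax from orthostochastic to all doubly stochastic matrices, apply Birkhoff's theorem (Lemma~\ref{lem:dsm}) to push the linear functional to a permutation matrix, and conclude with the rearrangement inequality; attainment at $\Q=\U_N\U_M^T$ and the reduction of the minimum via $\lambda_i(-\N)=-\lambda_{n-i+1}(\N)$ are exactly as in the paper. Your route avoids the summation-by-parts bookkeeping and is arguably the more classical extreme-point argument, while the paper's route stays entirely inside the majorization machinery it is developing for later chapters; both rest on lemmas already stated in the preliminaries, so either is acceptable. Your secondary Lagrangian/matrix-differential sketch parallels what the paper actually does only for Theorem~\ref{thm:von} and Theorem~\ref{thm:kyfan} in the appendix (not for this theorem), and you are right that its critical-point analysis with repeated eigenvalues is the delicate part; since you explicitly designate the Birkhoff/rearrangement argument as the rigorous backbone, the proposal stands as a complete proof.
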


\begin{proof} The second part directly follows from the first part because
\[
 \min_{\Q \Q^T =\I_n} \;  \tr(\Q \M \Q^T \N) = - \max_{\Q \Q^T=\I_n} \;  \tr(\Q \M \Q^T (-\N)).
\]
We now present the proof of the first part.
Make  full EVDs of $\M$ and $\N$ as $\M= \U_M \Lam_M \U_M^T$ and $\N = \U_N \Lam_N \U_N^T$, where  $\Lam_M=\diag(\lambda_1(\M), \ldots, \lambda_n(\M))$ and $\Lam_N=\diag(\lambda_1(\N), \ldots, \lambda_n(\N))$, and $\U_M$
and $\U_N$ are orthonormal.  It is easily seen that
\begin{align*}
 \max_{\Q \Q^T = \I_n} \;  \tr(\Q \M \Q^T \N)  & =  \max_{\Q \Q^T =\I_n} \;  \tr(  (\U_N^T \Q \U_M) \Lam_M  (\U_N^T \Q  \U_M)^T \Lam_N) \\
 &= \max_{\Q \Q^T =\I_n} \;  \tr( \Q \Lam_M \Q^T  \Lam_N).
\end{align*}
Let $\Q=[q_{ij}] = [\bq_1, \ldots, \bq_n]^T$. We now have
\begin{align*}
& \tr( \Q \Lam_M \Q^T  \Lam_N) \\
& = \sum_{i=1}^n \bq_i^T  \Lam_M \bq_{i}  \lambda_i(\N) \\
& = \sum_{i=1}^{n-1}  \sum_{j=1}^i \bq_j^T \Lam_M \bq_j [\lambda_i(\N) - \lambda_{i+1}(\N) ]  + \lambda_n(\N) \sum_{j=1}^n \bq_j^T \Lam_M \bq_j \\
& = \sum_{i=1}^{n-1}  [\lambda_i(\N) - \lambda_{i+1}(\N) ]  \sum_{j=1}^i  \sum_{k=1}^n  q_{jk}^2 \lambda_k(\M)  + \lambda_n(\N) \sum_{j=1}^n \lambda_j(\M).
\end{align*}
Define $\W \triangleq  [q_{ij}^2]$ which is doubly stochastic, and $\u=[u_1, \ldots, u_n]^T$ where $u_j = \sum_{k=1}^n  q_{jk}^2 \lambda_k(\M)$. That is, $\u = \W \lamb(\M)$. By Lemma~\ref{lem:001}, we know that $\u \prec \lamb(\M)$. Accordingly,
 \begin{align*}
 \tr( \Q  \Lam_M \Q^T  \Lam_N)  & \leq  \sum_{i=1}^{n-1}  [\lambda_i(\N) {-} \lambda_{i+1}(\N) ]  \sum_{j=1}^i \lambda_j(\M) + \lambda_n(\N) \sum_{j=1}^{n} \lambda_j(\M) \\
 & = \sum_{i=1}^n \lambda_i(\M) \lambda_i(\N).
\end{align*}
When $\Q=\I_n$, the equality holds. That is, $\U_N^T \Q \U_M = \I_n$ in the original problem. The theorem follows.
\end{proof}

The following theorem is a corollary of  Theorem~\ref{thm:von00}  when  taking
\[ \N = \begin{bmatrix} \I_k & \0 \\  \0 & \0  \end{bmatrix}. \]

\begin{theorem}[von Neumann Theorem] \label{thm:von}
Assume $\M \in \RB^{n\times n}$ is symmetric. Then for $k \in [n]$,
\[
\sum_{i=1}^k \lambda_i = \max_{\Q^T \Q=\I_k} \;  \tr(\Q^T \M \Q),
\]
which is arrived when $\Q$ is the $n\times k$ matrix of the orthonormal vectors associated with $\lambda_1, \ldots, \lambda_k$. Moreover,
\[
\sum_{i=n-k+1}^n \lambda_i = \min_{\Q^T \Q=\I_k} \;  \tr(\Q^T \M \Q).
\]
\end{theorem}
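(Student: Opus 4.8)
The plan is to obtain Theorem~\ref{thm:von} as a direct specialization of Theorem~\ref{thm:von00}, following the remark just above the statement. Take $\N = \I_k \oplus \0 \in \RB^{n\times n}$, which is symmetric with $\lambda_i(\N) = 1$ for $i \leq k$ and $\lambda_i(\N) = 0$ for $i > k$. Then the left-hand side of the first identity in Theorem~\ref{thm:von00} collapses to $\sum_{i=1}^n \lambda_i(\M)\lambda_i(\N) = \sum_{i=1}^k \lambda_i(\M)$, and the left-hand side of the second identity becomes $\sum_{i=1}^n \lambda_i(\M)\lambda_{n-i+1}(\N) = \sum_{i=n-k+1}^n \lambda_i(\M)$, since $\lambda_{n-i+1}(\N) = 1$ exactly when $i \geq n-k+1$. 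So all that remains is to match the right-hand sides.

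The one genuine step is to identify the optimization $\max_{\Q\Q^T=\I_n}\tr(\Q\M\Q^T\N)$ over $n\times n$ orthonormal matrices with the optimization $\max_{\Q^T\Q=\I_k}\tr(\Q^T\M\Q)$ over $n\times k$ column-orthonormal matrices in the statement. I would do this by partitioning the rows of an orthonormal $\Q$ into a top $k\times n$ block $\Q_1$ and a bottom $(n{-}k)\times n$ block $\Q_2$: then $\Q\Q^T=\I_n$ forces $\Q_1\Q_1^T=\I_k$, and $\tr(\Q\M\Q^T\N)=\tr(\Q_1\M\Q_1^T)$, so with $\widetilde{\Q}\triangleq\Q_1^T$ one has $\widetilde{\Q}^T\widetilde{\Q}=\I_k$ and $\tr(\Q\M\Q^T\N)=\tr(\widetilde{\Q}^T\M\widetilde{\Q})$. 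Conversely, any column-orthonormal $\widetilde{\Q}\in\RB^{n\times k}$ extends to an orthonormal matrix having $\widetilde{\Q}$ as its first $k$ columns (as recalled in the Notation section), and reading that extension as $\Q^T$ reverses the computation. Hence the feasible sets correspond and the objectives agree term by term, so the two maxima coincide; the minimum claim follows identically from the second identity of Theorem~\ref{thm:von00}, or equivalently by applying the first identity to $-\M$ and using $\lambda_i(-\M)=-\lambda_{n-i+1}(\M)$. The attainment claim is then read off from the equality case of Theorem~\ref{thm:von00} under this correspondence, or checked directly: taking $\widetilde{\Q}$ to be the first $k$ columns of an orthonormal eigenvector matrix of $\M$ gives $\widetilde{\Q}^T\M\widetilde{\Q}=\diag(\lambda_1,\ldots,\lambda_k)$, whose trace is $\sum_{i=1}^k\lambda_i$.

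If a self-contained argument is preferred, one can instead mimic the proof of Theorem~\ref{thm:von00}: diagonalize $\M=\U_M\Lam_M\U_M^T$, reduce to $\max_{\Q^T\Q=\I_k}\tr(\Q^T\Lam_M\Q)$, write the objective as $\sum_{j=1}^n w_j\,\lambda_j(\M)$ where $w_j$ is the squared Euclidean norm of the $j$th row of $\Q$, note $0\le w_j\le 1$ and $\sum_j w_j=k$, and maximize this linear functional over that polytope; the optimum puts $w_1=\cdots=w_k=1$ and the rest $0$. I expect the main obstacle to be nothing beyond bookkeeping — keeping $\Q\Q^T=\I_n$ and $\Q^T\Q=\I_k$ straight, and invoking the extension-of-column-orthonormal-matrices fact to license the ``conversely'' direction; there is no analytic difficulty.
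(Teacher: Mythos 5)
Your proposal is correct and follows essentially the same route as the paper, which obtains Theorem~\ref{thm:von} as a corollary of Theorem~\ref{thm:von00} by taking $\N = \I_k \oplus \0$; you merely fill in the bookkeeping (identifying $\max_{\Q\Q^T=\I_n}\tr(\Q\M\Q^T\N)$ with $\max_{\Q^T\Q=\I_k}\tr(\Q^T\M\Q)$ via the top $k\times n$ block and the extension of column-orthonormal matrices) that the paper leaves implicit. The paper's appendix additionally gives a second proof via Lagrange multipliers and a Hessian argument, which differs from your self-contained fallback, but your primary argument matches the paper's intended derivation and is complete.
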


In the appendix we give an other proof based on theory of matrix differentials.
The von Neumann theorem  describes the variational principle of eigenvalues of a symmetric matrix.
Using Theorems~\ref{thm:von}, we have the following variational properties.

\begin{proposition} \label{pro:031} Given two  $n\times n$ real symmetric  matrices $\M$ and ${\N}$,  we have that
\begin{enumerate}
\item[(1)] $\lamb(\M + {\N}) \prec \lamb(\M) + \lamb(\N)$ and  $\lamb(\M) - \lamb(\N)  \prec \lamb(\M - {\N}) $.
\item[(2)] $\sum_{i=1}^k \lambda_i(\M + \N) \geq \sum_{i=1}^k \lambda_i(\M) + \sum_{j=n-k+1}^n \lambda_j(\N)$ for $k \in [n]$.
\item[(3)] $(m_{11} , \ldots, m_{nn})  \prec  (\lambda_1(\M),  \ldots,  \lambda_n(\M))$.
\end{enumerate}
\end{proposition}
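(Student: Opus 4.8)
The plan is to obtain all three parts from the Ky Fan / von Neumann Theorem~\ref{thm:von}, together with two elementary facts: for any $\x\in\RB^n$ and any $k$ one has $\sum_{i=1}^k x_i^\downarrow=\max_{|J|=k}\sum_{i\in J}x_i$; and for every $\prec$ asserted here the accompanying equality of full sums is automatic from linearity of the trace, so only the partial-sum (weak-majorization) inequalities need checking. Parts (3), the first half of (1), and (2) are then one-line consequences of Theorem~\ref{thm:von}. For (3): for any index set $J=\{i_1<\cdots<i_k\}$ let $\Q_J$ be the $n\times k$ submatrix of $\I_n$ with columns indexed by $J$; it is column orthonormal, so $\sum_{j=1}^k m_{i_j i_j}=\tr(\Q_J^T\M\Q_J)\le\max_{\Q^T\Q=\I_k}\tr(\Q^T\M\Q)=\sum_{i=1}^k\lambda_i(\M)$, and choosing $J$ to index the $k$ largest diagonal entries gives $\sum_{i=1}^k(m_{ii})^\downarrow\le\sum_{i=1}^k\lambda_i(\M)$, while $\sum_i m_{ii}=\tr(\M)$. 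For the first half of (1): since the $\lambda_i(\M+\N)$ are already ordered, $\sum_{i=1}^k\lambda_i(\M+\N)=\max_{\Q^T\Q=\I_k}\bigl[\tr(\Q^T\M\Q)+\tr(\Q^T\N\Q)\bigr]\le\sum_{i=1}^k\lambda_i(\M)+\sum_{i=1}^k\lambda_i(\N)\le\sum_{i=1}^k(\lambda_i(\M)+\lambda_i(\N))^\downarrow$. For (2): take $\Q_0$ to be the matrix of the orthonormal eigenvectors of $\M$ for $\lambda_1,\dots,\lambda_k$; then $\sum_{i=1}^k\lambda_i(\M+\N)=\max_{\Q^T\Q=\I_k}\tr(\Q^T(\M+\N)\Q)\ge\tr(\Q_0^T\M\Q_0)+\tr(\Q_0^T\N\Q_0)\ge\sum_{i=1}^k\lambda_i(\M)+\min_{\Q^T\Q=\I_k}\tr(\Q^T\N\Q)=\sum_{i=1}^k\lambda_i(\M)+\sum_{j=n-k+1}^n\lambda_j(\N)$, using both halves of Theorem~\ref{thm:von}.

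The hard part is the remaining claim $\lamb(\M)-\lamb(\N)\prec\lamb(\M-\N)$, which is the Lidskii--Wielandt inequality and is strictly stronger than anything Ky Fan gives directly: one cannot ``subtract inside a majorization'', since the first half of (1) only yields $\lamb(\M)\prec\lamb(\M-\N)+\lamb(\N)$, which does \emph{not} imply the desired statement. Writing $d_i=\lambda_i(\M)-\lambda_i(\N)$, the trace identity gives $\sum_i d_i=\tr(\M-\N)=\sum_i\lambda_i(\M-\N)$, and by $\sum_{i=1}^k d_i^\downarrow=\max_{|S|=k}\sum_{i\in S}d_i$ it suffices to prove the subset inequalities
\[
\sum_{j=1}^k\lambda_{i_j}(\M)-\sum_{j=1}^k\lambda_{i_j}(\N)\ \le\ \sum_{j=1}^k\lambda_j(\M-\N)\qquad\text{for every }S=\{i_1<\cdots<i_k\}.
\]
My route for this is Wielandt's minimax principle, the flag extension of Courant--Fischer that expresses, for a symmetric $\A$, the sum $\sum_j\lambda_{i_j}(\A)$ both as a maximum over nested flags $V_1\subseteq\cdots\subseteq V_k$ of the minimum of $\sum_j\langle\A x_j,x_j\rangle$ over orthonormal $x_j\in V_j$, and dually as a minimum over flags of such a maximum: take the maximizing flag $S_1\subseteq\cdots\subseteq S_k$ for $\M$ and the minimizing flag $R_1\subseteq\cdots\subseteq R_k$ for $\N$, pick an orthonormal system $x_1,\dots,x_k$ with $x_j\in S_j\cap R_j$ (possible since $\dim(S_j\cap R_j)\ge j$), and estimate
\[
\sum_{j=1}^k\lambda_{i_j}(\M)\ \le\ \sum_{j=1}^k\langle\M x_j,x_j\rangle\ =\ \sum_{j=1}^k\langle\N x_j,x_j\rangle+\sum_{j=1}^k\langle(\M-\N)x_j,x_j\rangle\ \le\ \sum_{j=1}^k\lambda_{i_j}(\N)+\sum_{j=1}^k\lambda_j(\M-\N),
\]
the last step being Theorem~\ref{thm:von} applied to $\M-\N$.

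The only ingredient here that does not already appear in the excerpt is Wielandt's minimax itself, and I expect this to be the genuine obstacle; I would prove it by an induction on the block structure in the same spirit as the proof of Theorem~\ref{thm:von} (the eigenvalue analogue of the construction behind Theorem~\ref{thm:CF-svd}). An equivalent --- but not really shorter --- alternative is Lidskii's original argument, which runs the same flag/interlacing estimate inside the $k$-th exterior power $\Lambda^k\RB^n$, using that the induced operator is linear in $\M$ and supports Weyl's perturbation inequality. In every case, once the weak-majorization inequalities are in hand, the closing step is merely the definition of $\prec$ together with the trace identities noted above.
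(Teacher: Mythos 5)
Your handling of (1a), (2) and (3) is essentially identical to the paper's: all three are read off from Theorem~\ref{thm:von} (subadditivity of the maximum for (1a), a max/min split with the top-$k$ eigenvector matrix of $\M$ for (2), and submatrices of $\I_n$ as test matrices for (3)), so there is nothing to add there.

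Where you genuinely depart from the paper is the second half of (1). The paper's proof disposes of it with a bare ``Hence'': applying (1a) to $(\M-\N)+\N$ gives $\lamb(\M)\prec\lamb(\M-\N)+\lamb(\N)$, and the paper then behaves as if one could subtract $\lamb(\N)$ inside the majorization. You are right to refuse this: as a statement about vectors the implication is false (e.g.\ $(1,1)\prec(2,0)$, yet $(1,1)-(2,0)$, rearranged to $(1,-1)$, is not majorized by $(0,0)$), and $\lamb(\M)-\lamb(\N)\prec\lamb(\M-\N)$ is the Lidskii--Wielandt inequality, whose content is exactly the subset-sum bounds you isolate. So your extra work repairs a step the paper elides rather than duplicating it, and your Wielandt route is the standard correct one. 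Two caveats on your sketch, though. First, the two-flag selection is shakier than stated: with the usual dimensions ($\dim S_j=i_j$ for the max--min flag of $\M$ and $\dim R_j=n-i_j+1$ for the min--max flag of $\N$) one only gets $\dim(S_j\cap R_j)\ge 1$, not $\ge j$, and producing a \emph{mutually orthonormal} system with $x_j\in S_j\cap R_j$ is precisely the delicate point; the cleaner deduction uses only the maximizing flag for $\M$, writes $\M=\N+(\M-\N)$ inside the inner minimum, applies $\min(f+g)\le\min f+\max g$, and bounds the $\N$-term by Wielandt's max--min (a minimum over one particular flag is at most the maximum over all flags) and the $(\M-\N)$-term by Theorem~\ref{thm:von}. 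Second, Wielandt's principle itself remains unproved in your plan, as you acknowledge; if you prefer to avoid it entirely, Lidskii also follows from eigenvalue monotonicity (if $\N_1\preceq\N_2$ in the PSD order then $\lambda_i(\M+\N_1)\le\lambda_i(\M+\N_2)$) together with a spectral-shift trick: setting $\C=\M-\N$ and $\C'=\C-\lambda_k(\C)\I_n$ with positive part $\C'_{+}$, one gets $\sum_{i\in S}[\lambda_i(\M)-\lambda_i(\N)]\le\tr(\C'_{+})+k\,\lambda_k(\C)=\sum_{j=1}^k\lambda_j(\C)$ for every $S$ with $|S|=k$, which is an argument closer in spirit to the tools already available in the paper.
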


\begin{proof} The proof is based on Theorem~\ref{thm:von}. First, for $k\in [n-1]$,
\begin{align*}
\sum_{i=1}^k \lambda_i (\M + \N)  & = \max_{\Q^T \Q=\I_k} \Big\{ \tr(\Q^T \M \Q) + \tr(\Q^T \N \Q) \Big\} \\
& \leq \max_{\Q^T \Q=\I_k}  \tr(\Q^T \M \Q) +  \max_{\Q^T \Q = \I_k} \tr(\Q^T \N \Q) \\
& = \sum_{i=1}^k \lambda_i (\M) +  \sum_{i=1}^k \lambda_i(\N).
\end{align*}
Note  that $\tr(\M {+} \N) = \tr(\M) + \tr(\N)$, so  $\lamb(\M {+} {\N}) \prec \lamb(\M) + \lamb(\N)$. Hence, $\lamb(\M) - \lamb(\N)  \prec \lamb(\M - {\N}) $.
Second,
\begin{align*}
\sum_{i=1}^k \lambda_i (\M + \N)  & = \max_{\Q^T \Q=\I_k} \Big \{ \tr(\Q^T \M \Q) + \tr(\Q^T \N \Q) \Big \} \\
& \geq \max_{\Q^T \Q=\I_k} \Big\{  \tr(\Q^T \M \Q) +  \min_{\Q^T \Q = \I_k} \tr(\Q^T \N \Q) \Big\} \\
& = \sum_{i=1}^k \lambda_i (\M) +  \sum_{j=n-k+1}^n \lambda_j(\N).
\end{align*}
To prove the third part,  we assume that $m_{11}\geq \cdots \geq m_{nn}$ without loss of generality. Now the result is obtained via
\[
\sum_{i=1}^k \lambda_i(\M) = \max_{\Q^T \Q=\I_k} \tr(\Q^T \M \Q) \geq \tr(\H_k^T \M \H_k) = \sum_{i=1}^k m_{ii},
\]
where $\H_k$ consists of the first $k$ columns of $\I_n$ for all $k \in [n]$.
\end{proof}

Proposition~\ref{pro:031}-(3) is sometimes referred to as Schur's theorem. The second part of the following proposition is an extension of Schur's theorem.

\begin{proposition} \label{pro:interacing}  Let $\M = \begin{bmatrix} \M_{11} & \M_{12} \\ \M_{21} & \M_{22} \end{bmatrix}$ be $n\times n$ real symmetric.  Here $\M_{11}$ is $k\times k$. Then
\[ (1) \quad  \lambda_i(\M) \geq \lambda_i(\M_{11}) \geq \lambda_{n-k+i} (\M)
\mbox { for } \; i=1, \ldots, k; \]
and (2) \;  $(\lamb(\M_{11}), \lamb(\M_{22})) \prec \lamb(\M)$.

Furthermore, for any column-orthonormal matrix $\Q \in \RB^{n\times k}$, we have
\[ (3) \quad  \lambda_i(\M) \geq \lambda_i(\Q^T \M \Q) \geq \lambda_{n-k+i} (\M) \; \mbox{ for } \; i=1, \ldots, k. \]
\end{proposition}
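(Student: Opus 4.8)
The plan is to obtain part (1) as the special case of part (3) corresponding to the choice $\Q = \begin{bmatrix} \I_k \\ \0 \end{bmatrix} \in \RB^{n\times k}$, for which $\Q^T\Q = \I_k$ and $\Q^T\M\Q = \M_{11}$; so it is enough to prove (3). For (3) I would invoke the Courant--Fischer min--max characterization of the eigenvalues of a symmetric matrix (the eigenvalue analogue of Theorem~\ref{thm:CF-svd}, which one can recover from it after shifting $\M$ by a multiple of $\I_n$ to make it positive semidefinite, or directly from Theorem~\ref{thm:von}): $\lambda_i(\M) = \max_{\dim\mathcal{S}=i}\,\min_{\0\neq\x\in\mathcal{S}} \frac{\x^T\M\x}{\x^T\x}$ and, dually, $\lambda_i(\M) = \min_{\dim\mathcal{S}=n-i+1}\,\max_{\0\neq\x\in\mathcal{S}} \frac{\x^T\M\x}{\x^T\x}$. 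Since $\Q$ is column orthonormal, $\frac{\y^T(\Q^T\M\Q)\y}{\y^T\y} = \frac{(\Q\y)^T\M(\Q\y)}{(\Q\y)^T(\Q\y)}$, and because $\Q$ is injective it carries each $i$-dimensional subspace of $\RB^k$ to an $i$-dimensional subspace of $\RB^n$. Hence the subspaces available when computing $\lambda_i(\Q^T\M\Q)$ form a subfamily of those available for $\lambda_i(\M)$; applying the first formula gives $\lambda_i(\Q^T\M\Q)\le\lambda_i(\M)$, and applying the dual formula over $(k-i+1)$-dimensional subspaces gives $\lambda_i(\Q^T\M\Q)\ge\lambda_{n-k+i}(\M)$ for $i=1,\dots,k$.

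For part (2) I would argue directly from the von Neumann inequality (Theorem~\ref{thm:von}). The equality of total sums is immediate: $\tr(\M_{11})+\tr(\M_{22})=\tr(\M)$, i.e.\ $\sum_i\lambda_i(\M_{11})+\sum_i\lambda_i(\M_{22})=\sum_i\lambda_i(\M)$. For the dominance of partial sums, fix $j\in[n]$ and any split $j=a+b$; let $\PP_1\in\RB^{k\times a}$ and $\PP_2\in\RB^{(n-k)\times b}$ collect the top $a$ (resp.\ top $b$) orthonormal eigenvectors of $\M_{11}$ (resp.\ $\M_{22}$), and set $\R=\PP_1\oplus\PP_2\in\RB^{n\times j}$, which is column orthonormal. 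A block multiplication gives $\tr(\R^T\M\R)=\tr(\PP_1^T\M_{11}\PP_1)+\tr(\PP_2^T\M_{22}\PP_2)=\sum_{i=1}^{a}\lambda_i(\M_{11})+\sum_{i=1}^{b}\lambda_i(\M_{22})$, while Theorem~\ref{thm:von} bounds $\tr(\R^T\M\R)\le\sum_{i=1}^{j}\lambda_i(\M)$. Maximizing over the splits $a+b=j$ produces exactly the sum of the top $j$ entries of the sorted concatenation $(\lamb(\M_{11}),\lamb(\M_{22}))^{\downarrow}$, since that sorted vector takes the largest $a$ entries of $\lamb(\M_{11})$ and the largest $b$ of $\lamb(\M_{22})$ for the optimal split. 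Together with the trace identity this is precisely $(\lamb(\M_{11}),\lamb(\M_{22}))\prec\lamb(\M)$.

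I expect the main obstacle to be the \emph{individual} eigenvalue inequalities in (3) rather than (1) or (2): the von Neumann/trace machinery yields inequalities between partial sums essentially for free, but isolating a single $\lambda_i$ requires the min--max characterization together with the bookkeeping that a column-orthonormal (hence injective) $\Q$ maps $i$-dimensional subspaces to $i$-dimensional subspaces while preserving Rayleigh quotients, and one must handle the dual formulation carefully to land exactly on the index $n-k+i$. The trace identities, the block multiplication, and the maximization over splits are routine.
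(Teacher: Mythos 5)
Your proposal is correct, but it runs in essentially the opposite logical direction from the paper. The paper proves (1) by simply citing the classical Cauchy interlacing theorem, then obtains (3) from (1) by extending $\Q$ to a full orthonormal matrix $\tilde{\Q}=[\Q,\Q^{\bot}]$, observing that $\Q^T\M\Q$ is the leading $k\times k$ block of $\tilde{\Q}^T\M\tilde{\Q}$ and that $\lamb(\tilde{\Q}^T\M\tilde{\Q})=\lamb(\M)$; you instead prove (3) from scratch via the Courant--Fischer min--max characterization (Rayleigh quotients preserved by the injective column-orthonormal $\Q$, with the dual formula over $(k-i+1)$-dimensional subspaces landing on the index $n-k+i$) and then recover (1) as the special case $\Q=\begin{bmatrix}\I_k\\ \0\end{bmatrix}$ --- in effect you supply a proof of the interlacing theorem that the paper only cites, which makes your argument more self-contained at the cost of invoking the eigenvalue min--max principle (your remark about recovering it from Theorem~\ref{thm:CF-svd} by a PSD shift, or better from applying it to $(\M+c\I_n)^{1/2}$, is the right way to stay within the paper's toolkit). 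For (2) the paper conjugates $\M$ by the block-diagonal orthogonal matrix $\U_1\oplus\U_2$ built from the EVDs of $\M_{11}$ and $\M_{22}$, which turns the claim into Schur's theorem (Proposition~\ref{pro:031}-(3)) applied to a matrix with the same spectrum as $\M$; your argument is a direct Ky Fan--type bound, testing Theorem~\ref{thm:von} against $\R=\PP_1\oplus\PP_2$ and maximizing over the splits $a+b=j$, plus the trace identity. Both are valid; the paper's version is slightly slicker because it reuses an already-proved majorization result and avoids the combinatorial ``top-$j$ of the merged list'' step, while yours makes the partial-sum mechanism explicit and does not need the conjugation trick.
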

\begin{proof} The first result directly follows from the well known interlacing theorem~\citep{Horn:1985}. As for the third part, we can extend $\Q$ to an orthonormal matrix $\tilde{\Q}=[\Q, \Q^{\bot}]$. Consider that
\[
\tilde{\Q}^T \M \tilde{\Q} = \begin{bmatrix}  \Q^T \M \Q & \Q^T \M \Q^{\bot}  \\ (\Q^{\bot})^T \M \Q &   (\Q^{\bot})^T \M  \Q^{\bot} \end{bmatrix}.
\]
Thus,
\[\lambda_{i}(\M)  = \lambda_{i}(\tilde{\Q}^T \M \tilde{\Q}) \geq  \lambda_{i}(\Q^T \M \Q) \geq \lambda_{n-k+i}(\tilde{\Q}^T \M \tilde{\Q}) = \lambda_{n-k+i}(\M). \]

We now consider the proof of the second part. Let  the EVDs of $\M_{11}$ and $\M_{22}$ be $\M_{11}= \U_1 \Lam_1 \U_1^T$ and
$\M_{22} = \U_2 \Lam_2 \U_2^T$. Then
\[
\begin{bmatrix} \U_1^T & \0 \\ \0 & \U_2^T \end{bmatrix} \begin{bmatrix} \M_{11} & \M_{12} \\ \M_{21} & \M_{22} \end{bmatrix} \begin{bmatrix} \U_1 & \0 \\ \0 & \U_2 \end{bmatrix} =   \begin{bmatrix} \Lam_{1} & \U_1^T \M_{12} \U_2 \\ \U_2^T \M_{21} \U_1 & \Lam_{2} \end{bmatrix}.
\]
Since $\U_1$ and $\U_2$ are orthonormal,  we have that $\lamb(\M_{11}) = \lamb(\Lam_1)$, $\lamb(\M_{22}) =  \lamb(\Lam_2) $, and
\[
\lamb\left(  \begin{bmatrix} \U_1^T & \0 \\ \0 & \U_2^T \end{bmatrix} \begin{bmatrix} \M_{11} & \M_{12} \\ \M_{21} & \M_{22} \end{bmatrix} \begin{bmatrix} \U_1 & \0 \\ \0 & \U_2 \end{bmatrix}  \right)  = \lamb(\M).
\]
Applying Proposition~\ref{pro:031}-(3) completes the proof.
\end{proof}


\section{Variational Properties for Singular Values}

Theorems~\ref{thm:von00} and \ref{thm:von}  can be extended to a general matrix. 
In this case, we investigate singular values of the matrix instead. 
Theorems~\ref{thm:kyfan0} and \ref{thm:kyfan} correspond to Theorems~\ref{thm:von00} and \ref{thm:von}, respectively. 

\begin{theorem}[Ky Fan Theorem] \label{thm:kyfan0} Given  two matrices $\A \in \RB^{m\times n}$ and $\B \in \RB^{m\times n}$,
let  $\A$ and $\B$ have full SVDs $\A = \U_A \Si_A \V_A^T$ and $\B = \U_B \Si_B \V_B^T$, respectively.  Let $p=\min\{m, n\}$.
Then
\begin{align*}
\sum_{i=1}^p \sigma_i(\A) \sigma_i(\B)  &= \max_{\X^T \X = \I_m,  \Y^T \Y = \I_n} \; |\tr(\X^T \A \Y \B^T)| \\
&= \max_{\X^T \X = \I_m,  \Y^T \Y = \I_n} \; \tr(\X^T \A \Y \B^T),
\end{align*}
which is achieved at $\X=\U_A \U_B^T$ and $\Y=\V_A \V_B^T$.
\end{theorem}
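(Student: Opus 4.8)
Here is how I would prove Theorem~\ref{thm:kyfan0}. The plan is to deduce it from the von Neumann trace identity (Theorem~\ref{thm:von00}) by passing to the symmetric dilations of Theorem~\ref{thm:h-a}. Set
\[
\H_A = \begin{bmatrix} \0 & \A^T \\ \A & \0 \end{bmatrix}, \qquad \H_B = \begin{bmatrix} \0 & \B^T \\ \B & \0 \end{bmatrix},
\]
both symmetric of size $(m{+}n)\times(m{+}n)$. By Theorem~\ref{thm:h-a} the nonzero eigenvalues of $\H_A$ are $\pm\sigma_i(\A)$ for $i=1,\ldots,\rk(\A)$, the remaining $m{+}n-2\rk(\A)$ eigenvalues being zero; writing $\sigma_i(\A)=0$ for $i>\rk(\A)$, the decreasingly ordered spectrum of $\H_A$ is $\sigma_1(\A),\ldots,\sigma_p(\A),0,\ldots,0,-\sigma_p(\A),\ldots,-\sigma_1(\A)$, and similarly for $\H_B$. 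Hence the left-hand side of the von Neumann identity for the pair $(\H_A,\H_B)$ is
\[
\sum_{k=1}^{m+n}\lambda_k(\H_A)\,\lambda_k(\H_B) \;=\; 2\sum_{i=1}^p \sigma_i(\A)\,\sigma_i(\B),
\]
because the leading (positive) block and the trailing (negative) block each contribute $\sum_{i=1}^p\sigma_i(\A)\sigma_i(\B)$, while the middle (zero) block contributes nothing.

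For the right-hand side $\max_{\W\W^T=\I_{m+n}}\tr(\W\H_A\W^T\H_B)$, I would restrict $\W$ to block-diagonal form $\W=\V'\oplus\U'$ with $\U'\in\RB^{m\times m}$, $\V'\in\RB^{n\times n}$ orthonormal. A direct computation gives $\W\H_A\W^T=\begin{bmatrix}\0 & (\U'\A\V'^T)^T\\ \U'\A\V'^T & \0\end{bmatrix}$, whence $\tr(\W\H_A\W^T\H_B)=2\,\tr\big((\U'\A\V'^T)\B^T\big)=2\,\tr(\X^T\A\Y\B^T)$ after setting $\X=\U'^T$, $\Y=\V'^T$; as $\U',\V'$ range over the orthogonal groups, so do $\X,\Y$. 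Since block-diagonal $\W$ form a subset of all orthonormal $\W$, comparing with the value of the left-hand side yields
\[
2\max_{\X^T\X=\I_m,\ \Y^T\Y=\I_n}\tr(\X^T\A\Y\B^T)\ \le\ 2\sum_{i=1}^p\sigma_i(\A)\sigma_i(\B).
\]
The matching lower bound is exhibited explicitly: with $\X=\U_A\U_B^T$ and $\Y=\V_A\V_B^T$ one computes $\X^T\A\Y\B^T=\U_B\Si_A\Si_B^T\U_B^T$, whose trace equals $\tr(\Si_A\Si_B^T)=\sum_{i=1}^p\sigma_i(\A)\sigma_i(\B)$. This proves the identity for the unsigned trace; the absolute-value form follows immediately, since replacing $\X$ by $-\X$ (still orthonormal) flips the sign, so $\max|\tr(\X^T\A\Y\B^T)|=\max\tr(\X^T\A\Y\B^T)$, the maxima existing by compactness of the orthogonal groups.

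The only step needing genuine care is the first one: lining up the decreasingly ordered spectra of $\H_A$ and $\H_B$ so that their inner product is exactly $2\sum_i\sigma_i(\A)\sigma_i(\B)$, using that the positive eigenvalues of each dilation occur first and that any surplus zeros (arising from unequal ranks, or from $m\neq n$) contribute nothing — everything else is mechanical. A self-contained alternative that avoids the dilation is to substitute the SVDs of $\A$ and $\B$ directly, reduce the objective to $\sum_{i,j\le p}\sigma_i(\A)\sigma_j(\B)\,P_{ij}Q_{ij}$ where $P,Q$ are the leading $p\times p$ blocks of orthonormal matrices, apply $|P_{ij}Q_{ij}|\le\tfrac12(P_{ij}^2+Q_{ij}^2)$ so that the coefficient array becomes doubly (sub)stochastic, and then invoke the majorization argument used in the proof of Theorem~\ref{thm:von00}; I would nonetheless present the dilation argument as the main proof.
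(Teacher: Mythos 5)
Your proposal is correct and follows essentially the same route as the paper: the paper's proof consists precisely of the identity $\tr(\X^T\A\Y\B^T)=\tfrac12\tr\bigl(\W^T\H_A\W\,\H_B\bigr)$ for the block-diagonal $\W=\Y\oplus\X$ and then invoking Theorem~\ref{thm:von00} together with Theorem~\ref{thm:h-a}. You have simply filled in the details the paper leaves implicit (the aligned spectra summing to $2\sum_i\sigma_i(\A)\sigma_i(\B)$, the restriction to block-diagonal orthogonal matrices, and the explicit attaining pair $\X=\U_A\U_B^T$, $\Y=\V_A\V_B^T$), all of which check out.
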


\begin{proof}
Note that
\[
\tr(\X^T \A \Y \B^T) = \frac{1}{2} \tr\left(\begin{bmatrix} \Y^T & \0 \\ \0 & \X^T \end{bmatrix} \begin{bmatrix} \0 & \A^T \\ \A & \0 \end{bmatrix} \begin{bmatrix} \Y & \0 \\ \0 & \X \end{bmatrix}   \begin{bmatrix} \0 & \B^T \\ \B & \0 \end{bmatrix} \right).
\]
The theorem is directly obtained from Theorems~\ref{thm:von00} and \ref{thm:h-a}.
\end{proof}

\begin{theorem}[Ky Fan Theorem] \label{thm:kyfan} Given an $m\times n$ real matrix $\A$, let $p=\min\{m, n\}$, and let the singular values of $\A$
be $\sigma_1, \ldots, \sigma_p$ which are arranged in descending order, with the corresponding left and right singular vectors
$\u_i$ and $\v_i$.  Then for any $k \in [p]$,
\[
\sum_{i=1}^k \sigma_i =\max_{\X^T \X = \I_k,  \Y^T \Y = \I_k} \; |\tr(\X^T \A \Y)|= \max_{\X^T \X = \I_k,  \Y^T \Y = \I_k} \; \tr(\X^T \A \Y),
\]
which is achieved at $\X=[\u_1, \ldots, \u_k]$ and $\Y=[\v_1, \ldots, \v_k]$.
\end{theorem}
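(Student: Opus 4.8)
The plan is to deduce this statement from the two-matrix Ky Fan Theorem (Theorem~\ref{thm:kyfan0}), in exactly the way Theorem~\ref{thm:von} is deduced from Theorem~\ref{thm:von00}. Since $k \leq p = \min\{m,n\}$, I would take $\B \in \RB^{m\times n}$ to be the matrix $\B = \begin{bmatrix} \I_k & \0 \\ \0 & \0 \end{bmatrix}$, which is already its own full SVD; hence its singular values are $k$ ones followed by $p-k$ zeros, and $\sum_{i=1}^p \sigma_i(\A)\sigma_i(\B) = \sum_{i=1}^k \sigma_i(\A)$.

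The key step is a block-matrix computation: for orthonormal $\X = [\x_1,\ldots,\x_m] \in \RB^{m\times m}$ and $\Y = [\y_1,\ldots,\y_n] \in \RB^{n\times n}$, the product $\B^T \X^T \A \Y$ has $(i,i)$ entry $\x_i^T \A \y_i$ for $i \leq k$ and zero otherwise, so that
\[
\tr(\X^T \A \Y \B^T) = \tr(\B^T \X^T \A \Y) = \sum_{i=1}^k \x_i^T \A \y_i = \tr(\tilde{\X}^T \A \tilde{\Y}),
\]
where $\tilde{\X} = [\x_1,\ldots,\x_k] \in \RB^{m\times k}$ and $\tilde{\Y} = [\y_1,\ldots,\y_k] \in \RB^{n\times k}$ are column orthonormal. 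Conversely, every column-orthonormal $\tilde{\X} \in \RB^{m\times k}$ and $\tilde{\Y} \in \RB^{n\times k}$ extends to full orthonormal matrices, so maximizing $\tr(\X^T\A\Y\B^T)$ over full orthonormal $\X,\Y$ is the same as maximizing $\tr(\tilde{\X}^T\A\tilde{\Y})$ over column-orthonormal $\tilde{\X},\tilde{\Y}$. Theorem~\ref{thm:kyfan0} then gives $\max_{\X^T\X=\I_k,\,\Y^T\Y=\I_k} \tr(\X^T\A\Y) = \sum_{i=1}^k \sigma_i$, and the $|\cdot|$-version follows by the same argument (or by noting that replacing $\Y$ with $-\Y$ leaves the feasible set invariant and flips the sign).

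Finally, to see the maximum is attained at $\X=[\u_1,\ldots,\u_k]$ and $\Y=[\v_1,\ldots,\v_k]$, I would substitute $\A = \sum_{l} \sigma_l \u_l \v_l^T$ and use orthonormality of the singular vectors to get $\u_i^T \A \v_j = \sigma_i \delta_{ij}$, whence $\tr(\X^T\A\Y) = \sum_{i=1}^k \sigma_i$. The only points requiring care are the index bookkeeping in the block product $\B^T\X^T\A\Y$ and the fact that extending a column-orthonormal matrix to an orthonormal one does not disturb the relevant partial trace; both are routine. As an alternative route, one could instead apply Theorem~\ref{thm:von} directly to the symmetric dilation $\H = \begin{bmatrix} \0 & \A^T \\ \A & \0 \end{bmatrix}$ of Theorem~\ref{thm:h-a}, but the reduction above is the shortest.
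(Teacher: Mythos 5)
Your proposal is correct: the block computation with $\B = \begin{bmatrix} \I_k & \0 \\ \0 & \0 \end{bmatrix}$ is sound, the passage between full orthonormal $\X, \Y$ and their first $k$ columns is legitimate, and the attainment at $[\u_1,\ldots,\u_k]$, $[\v_1,\ldots,\v_k]$ checks out. This is precisely the reduction the paper asserts in one sentence ("the theorem can be obtained from Theorems~\ref{thm:von} and \ref{thm:h-a} or from Theorem~\ref{thm:kyfan0}") but never writes out; it mirrors exactly how Theorem~\ref{thm:von} is obtained from Theorem~\ref{thm:von00}, so you have filled a gap the paper leaves to the reader. The only proof the paper presents in detail is a genuinely different one, the ``third proof'' in the appendix, which treats the constrained maximization directly via a Lagrangian: it derives the KKT system $\A \Y = \X \C$, $\A^T \X = \Y \C$, identifies the singular-vector solution, and then verifies optimality by showing the Hessian is negative semidefinite on the tangent space, using a thin SVD of $\A$ to bound the quadratic form. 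Your reduction is shorter and inherits the majorization machinery already invested in Theorem~\ref{thm:kyfan0} (which itself rests on Theorem~\ref{thm:von00} and the dilation of Theorem~\ref{thm:h-a}); the appendix proof is longer but self-contained given Theorem~\ref{thm:svd}, and its purpose in the paper is explicitly pedagogical---to illustrate matrix differential calculus and second-order optimality arguments---rather than to be the most economical route.
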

The theorem can be obtained from Theorems~\ref{thm:von} and \ref{thm:h-a} or from Theorem~\ref{thm:kyfan0}.
In the appendix we give the third  proof.

\begin{proposition} \label{pro:032} Given two   matrices $\A \in \RB^{m\times n}$ and ${\B} \in \RB^{m\times n}$,
let $p=\min\{m, n\}$.  Let   $\hat{\A}$ be obtained by replacing the last $r$ rows and/or columns of $\A$ by zeros.  Then
\begin{enumerate}
\item[(1)] $\sib(\A + {\B}) \prec_{w} \sib(\A) + \sib(\B)$.
\item[(2)] $\sigma_{i+j-1}(\A + \B) \leq \sigma_i(\A) + \sigma_j(\B)$ for $i, j\geq 1$ and $i+j-1\leq p$.
\item[(3)]  $\a \prec_{w} \sib(\A)$ where  $\a=(a_{11}, \ldots, a_{pp})^T$.
\item[(4)] For $i \in [p-r]$, $\sigma_{r+i}(\A) \leq  \sigma_i(\hat{\A}) \leq \sigma_i(\A)$.
\item[(5)] Let $\PP \in \RB^{m\times r}$ and $\Q \in \RB^{n\times r}$ be column orthonormal matrices where $r \leq p$. Then $\sigma_{r+i} (\A) \leq \sigma_i(\PP^T \A) \leq \sigma_i(\A)$ and $\sigma_{r+i}(\A) \leq \sigma_i( \A \Q) \leq \sigma_i(\A)$  for $i=1, \ldots, p-r$.
\end{enumerate}
\end{proposition}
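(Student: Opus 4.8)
My plan is to derive (1)--(3) from the Ky Fan theorem (Theorem~\ref{thm:kyfan}) together with the Courant--Fischer formula (Theorem~\ref{thm:CF-svd}), and to derive (4)--(5) by passing from singular values of the reduced matrix to eigenvalues of a Gram matrix and then invoking the interlacing bound in Proposition~\ref{pro:interacing}-(3); this parallels how Propositions~\ref{pro:031} and~\ref{pro:interacing} were obtained for symmetric matrices. For (1), fix $k\in[p]$ and apply Theorem~\ref{thm:kyfan} to $\A+\B$: since $\tr(\X^{T}(\A+\B)\Y)=\tr(\X^{T}\A\Y)+\tr(\X^{T}\B\Y)$, maximizing over column-orthonormal $\X\in\RB^{m\times k}$ and $\Y\in\RB^{n\times k}$ and using subadditivity of the maximum gives $\sum_{i=1}^{k}\sigma_{i}(\A+\B)\le\sum_{i=1}^{k}\sigma_{i}(\A)+\sum_{i=1}^{k}\sigma_{i}(\B)$; as $k$ is arbitrary this is exactly $\sib(\A+\B)\prec_{w}\sib(\A)+\sib(\B)$.

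For (2) I would use Theorem~\ref{thm:CF-svd} in its ``$\min\max$'' form: $\sigma_{i+j-1}(\A+\B)$ is the minimum, over all choices of $i+j-2$ vectors $\w_{1},\ldots,\w_{i+j-2}\in\RB^{n}$, of $\max\{\|(\A+\B)\v\|_{2}:\|\v\|_{2}=1,\ \v\perp\w_{1},\ldots,\w_{i+j-2}\}$. I would take $\w_{1},\ldots,\w_{i-1}$ to be the top $i-1$ right singular vectors of $\A$ and $\w_{i},\ldots,\w_{i+j-2}$ to be the top $j-1$ right singular vectors of $\B$; then for any admissible unit $\v$ the max-characterizations of singular values recorded just before Theorem~\ref{thm:CF-svd} give $\|\A\v\|_{2}\le\sigma_{i}(\A)$ and $\|\B\v\|_{2}\le\sigma_{j}(\B)$, so $\|(\A+\B)\v\|_{2}\le\sigma_{i}(\A)+\sigma_{j}(\B)$ by the triangle inequality, whence $\sigma_{i+j-1}(\A+\B)\le\sigma_{i}(\A)+\sigma_{j}(\B)$. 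For (3), the singular-value analogue of Schur's theorem, note that a simultaneous permutation $\A\mapsto\Pii\A\Pii^{T}$ leaves the singular values unchanged and merely permutes the diagonal, so I may assume $|a_{11}|\ge\cdots\ge|a_{pp}|$; for each $k\in[p]$, applying Theorem~\ref{thm:kyfan} with $\X$ equal to the first $k$ columns of $\I_{m}$ and $\Y$ equal to the first $k$ columns of $\I_{n}$ post-multiplied by $\diag(\sgn a_{11},\ldots,\sgn a_{kk})$ yields $\sum_{i=1}^{k}|a_{ii}|=\tr(\X^{T}\A\Y)\le\sum_{i=1}^{k}\sigma_{i}(\A)$, i.e.\ $|\a|\prec_{w}\sib(\A)$, which in turn implies $\a\prec_{w}\sib(\A)$.

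For (4) and (5) I would reduce to eigenvalue interlacing. Zeroing the last $r$ rows of $\A$ produces $\hat\A=\PP\PP^{T}\A$ for a coordinate column-orthonormal $\PP$, so $\hat\A^{T}\hat\A=(\PP^{T}\A)^{T}(\PP^{T}\A)$ and hence $\sigma_{i}(\hat\A)=\sigma_{i}(\PP^{T}\A)$; likewise zeroing the last $r$ columns gives $\sigma_{i}(\hat\A)=\sigma_{i}(\A\Q)$ for a coordinate column-orthonormal $\Q$, so (4) is a special case of (5), and it suffices to treat (5). Using Proposition~\ref{pro:012}-(5),(6) I would write $\sigma_{i}^{2}(\PP^{T}\A)=\lambda_{i}\big(\PP^{T}(\A\A^{T})\PP\big)$ and recall that the eigenvalues of $\A\A^{T}$ are $\sigma_{1}^{2}(\A),\ldots,\sigma_{p}^{2}(\A)$ padded with zeros; applying the interlacing bound of Proposition~\ref{pro:interacing}-(3) to the symmetric matrix $\A\A^{T}$ and the column-orthonormal $\PP$ gives $\sigma_{i}^{2}(\A)\ge\sigma_{i}^{2}(\PP^{T}\A)\ge\sigma_{r+i}^{2}(\A)$, and taking square roots gives the first chain of (5); the second chain follows the same way from $\A^{T}\A$ and $\A\Q$.

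I expect the main obstacle to be the index and dimension bookkeeping in (4)--(5): one must track the number of columns of the projector $\PP$ (resp.\ $\Q$) against the ambient dimension $m$ (resp.\ $n$) so that the ``$-k$'' index shift in Proposition~\ref{pro:interacing}-(3) comes out as the claimed ``$+r$'' shift, and must check that the stated ranges of $i$ are precisely those for which $\sigma_{i}(\A)$, $\sigma_{i}(\PP^{T}\A)$ and $\sigma_{r+i}(\A)$ are all genuinely among the listed singular values. Apart from that, the only step that is not essentially a one-line consequence of a cited theorem is (2), where the real content is the choice of constraint subspace in the Courant--Fischer formula.
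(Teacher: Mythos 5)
Parts (1)--(3) of your proposal are correct and follow essentially the route the paper intends: the paper's proof of (1) and (3) is just the pointer ``parallel to Proposition~\ref{pro:031}-(1),(3)'', and your use of the Ky Fan theorem (Theorem~\ref{thm:kyfan}) with subadditivity of the maximum, respectively with $\X,\Y$ built from standard basis columns and a sign diagonal, is exactly that parallel argument; your proof of (2) via the min--max form of Theorem~\ref{thm:CF-svd} with the constraint subspace spanned by the top $i-1$ right singular vectors of $\A$ and top $j-1$ of $\B$ is the standard Weyl argument the paper alludes to. (Your ``$\Pii\A\Pii^T$'' rearrangement in (3) is loose for rectangular $\A$, but one can simply let $\X,\Y$ pick out the $k$ indices with largest $|a_{ii}|$, so this is cosmetic.)

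The genuine gap is in (4)--(5), and it is not the harmless bookkeeping you defer to the end. With $\PP\in\RB^{m\times r}$ as in the statement, Proposition~\ref{pro:interacing}-(3) applied to $\A\A^T$ gives $\lambda_i(\A\A^T)\geq\lambda_i(\PP^T\A\A^T\PP)\geq\lambda_{m-r+i}(\A\A^T)$, so the lower bound you obtain is $\sigma_i(\PP^T\A)\geq\sigma_{m-r+i}(\A)$ (read as $0$ when $m-r+i>p$), not the claimed $\sigma_{r+i}(\A)$; the ``$-k$'' shift cannot be made to ``come out as $+r$'' for an $m\times r$ projector. Indeed the lower bound in (5) as written fails: take $\A\in\RB^{3\times 2}$ with $\A^T=[\I_2,\;\0]$ (so $\sigma_1=\sigma_2=1$), $r=1$, $\PP=(0,0,1)^T$; then $\PP^T\A=\0$ and $\sigma_1(\PP^T\A)=0<1=\sigma_2(\A)$. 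The inequality that interlacing actually yields, and the one the paper later uses (e.g.\ in the proof of Theorem~\ref{thm:mlr}, where the projector is $\Q^{\bot}\in\RB^{m\times(m-k)}$), requires the column-orthonormal matrix to have $m-r$ columns. Consequently your reduction of (4) to (5) also misfires as stated: zeroing the last $r$ rows gives $\hat\A=\PP\PP^T\A$ with a coordinate projector having $m-r$ columns, and plugging that into (5) with its literal indexing would give a shift of $m-r$, not $r$. Part (4) itself is fine once you apply Proposition~\ref{pro:interacing}-(3) with $k=m-r$ (shift $m-k=r$), or, as the paper does, apply Proposition~\ref{pro:interacing}-(1) directly to the leading principal submatrix of $\A\A^T$ (resp.\ $\A^T\A$), which avoids the projector-dimension issue altogether. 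So the fix is to redo the index bookkeeping explicitly and to note that the lower bound in (5) is only valid with the complementary-dimension projector, rather than treating this as a detail to be checked later.
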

\begin{proof}
The proof of Proposition~\ref{pro:032}-(1) and (3)  is parallel to that of Proposition~\ref{pro:031}-(1) and (3).
Part-(2) is Weyl's monotonicity theorem. It can be proven by the Courant-Fischer theorem (see Theorem~\ref{thm:CF-svd}).
Consider that $\sigma_i(\A) = \sqrt{\lambda_i(\A^T \A)}= \sqrt{\lambda_i(\A \A^T)}$ and $\sigma_i(\hat{\A}) = \sqrt{ \lambda_i(\hat{\A}^T \hat{\A})} = \sqrt{ \lambda_i(\hat{\A} \hat{\A}^T )}$. Part (4)
follows from  Proposition~\ref{pro:interacing}-(1).  Part (5)
follows  then from Proposition~\ref{pro:interacing}-(3).
\end{proof}

%

\begin{theorem} \label{thm:d-b-sv}  Given two   matrices $\A \in \RB^{m\times n}$ and ${\B} \in \RB^{m\times n}$,
let  $s_i(\A-\B) = |\sigma_i(\A) - \sigma_i(\B) |$ for $i\in [p]$ where $p=\min\{m, n\}$. Then
\[
\sum_{i=1}^k s_i^{\downarrow} (\A-\B) \leq \sum_{i=1}^k \sigma_i(\A - \B) \; \mbox{ for }  k=1, \ldots, p.
\]
\end{theorem}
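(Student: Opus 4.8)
The plan is to reduce everything to symmetric matrices via the block trick of Theorem~\ref{thm:h-a} and then apply the majorization inequality $\lamb(\M)-\lamb(\N)\prec\lamb(\M-\N)$ from Proposition~\ref{pro:031}-(1). Put $N=m+n$ and
\[
\H_A=\begin{bmatrix}\0&\A^T\\ \A&\0\end{bmatrix},\qquad \H_B=\begin{bmatrix}\0&\B^T\\ \B&\0\end{bmatrix},
\]
which are symmetric $N\times N$ matrices with $\H_A-\H_B=\begin{bmatrix}\0&(\A-\B)^T\\ \A-\B&\0\end{bmatrix}$. By Theorem~\ref{thm:h-a}, the decreasingly ordered eigenvalues of $\H_A$ are $\sigma_1(\A),\ldots,\sigma_p(\A)$ in positions $1,\ldots,p$, then $N-2p=|m-n|\ge 0$ zeros, then $-\sigma_p(\A),\ldots,-\sigma_1(\A)$ in positions $N-p+1,\ldots,N$; the same description holds for $\H_B$ and for $\H_A-\H_B$ (with $\A$ replaced by $\B$, resp.\ by $\A-\B$). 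In particular, since the $\sigma_i(\A-\B)$ are nonnegative, $\sum_{i=1}^k\lambda_i(\H_A-\H_B)=\sum_{i=1}^k\sigma_i(\A-\B)$ for every $k\in[p]$.

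Next I would apply Proposition~\ref{pro:031}-(1) with $\M=\H_A$ and $\N=\H_B$ to obtain $\x\prec\lamb(\H_A-\H_B)$, where $\x\triangleq\lamb(\H_A)-\lamb(\H_B)$. Reading off coordinates and writing $d_i\triangleq\sigma_i(\A)-\sigma_i(\B)$, the vector $\x$ satisfies $x_i=d_i$ for $i\in[p]$, $x_{N+1-i}=-d_i$ for $i\in[p]$, and $x_j=0$ otherwise. Fix $k\in[p]$, let $J\subseteq[p]$ be a set of $k$ indices attaining the $k$ largest values of $|d_i|$, and split $J=J_+\cup J_-$ according to the sign of $d_i$. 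Then $T\triangleq\{\,i:i\in J_+\,\}\cup\{\,N+1-i:i\in J_-\,\}$ has exactly $k$ elements (the two blocks of positions are disjoint because $N\ge 2p$), and
\[
\sum_{t\in T}x_t=\sum_{i\in J_+}d_i-\sum_{i\in J_-}d_i=\sum_{i\in J}|d_i|=\sum_{i=1}^k s_i^{\downarrow}(\A-\B).
\]

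Finally, the majorization $\x\prec\lamb(\H_A-\H_B)$ implies that the sum of any $k$ coordinates of $\x$ is at most the sum of its $k$ largest coordinates, which in turn is at most $\sum_{i=1}^k\lambda_i(\H_A-\H_B)$; hence
\[
\sum_{i=1}^k s_i^{\downarrow}(\A-\B)=\sum_{t\in T}x_t\le\sum_{i=1}^k\lambda_i(\H_A-\H_B)=\sum_{i=1}^k\sigma_i(\A-\B)
\]
for all $k\in[p]$, which is the assertion. The only delicate point is the bookkeeping in the middle step: one must pair the positive part of $\x$ (the top half of the spectrum of $\H_A-\H_B$) with $J_+$ and the negative part (the bottom half) with $J_-$, so that the $k$ selected coordinates of $\x$ reconstruct exactly $\sum_{i\in J}|d_i|$; the rest is a direct invocation of Theorem~\ref{thm:h-a} and Proposition~\ref{pro:031}-(1).
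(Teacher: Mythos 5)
Your proposal is correct and follows essentially the same route as the paper's proof: embed $\A$ and $\B$ into the symmetric block matrices of Theorem~\ref{thm:h-a} and apply the eigenvalue majorization $\lamb(\M)-\lamb(\N)\prec\lamb(\M-\N)$ of Proposition~\ref{pro:031}-(1), then read off the top-$k$ sums. Your explicit index-set bookkeeping just spells out the step the paper states more tersely (that the $p$ largest entries of $\lamb(\tilde\A)-\lamb(\tilde\B)$ are exactly the $s_i(\A-\B)$), and you use the majorization in the correct direction.
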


\begin{proof} Consider the following two $(m+n)\times (m+n)$ symmetric matrices:
\[
\tilde{\A}= \begin{bmatrix} \0 & \A \\ \A^T & \0 \end{bmatrix} \; \mbox{ and } \;  \tilde{\B}= \begin{bmatrix} \0 & \B \\ \B^T & \0 \end{bmatrix}.
\]
By Theorem~\ref{thm:h-a}, the eigenvalues of $\tilde{\A}$ are $\pm \sigma_1(\A), \ldots, \pm \sigma_p(\A)$, together with $m+n-2 p$ zeros; and similarly for $\tilde{\B}$ as well as for  $\tilde{\A}-\tilde{\B}$. Thus, the $p$ largest entries of  $\lamb(\tilde{\A}- \tilde{\B})$ are $\sigma_1(\A-\B), \ldots, \sigma_p(\A-\B) $. Note that both $\sigma_i(\A)- \sigma_i(\B)$  and $\sigma_i(\B)-\sigma_i(\A)$ are the entries of  $\lamb(\tilde{\A})- \lamb(\tilde{\B})$, so the $p$ largest entries of  $\lamb(\tilde{\A})- \lamb(\tilde{\B})$ comprise the set  $\{s_1(\A-\B), \ldots, s_p(\A-\B)\}$. Proposition~\ref{pro:031} shows that $\lamb(\tilde{\A}-\tilde{\B}) \prec \lamb(\tilde{\A})-\lamb(\tilde{\B})$. This implies the result of the theorem.
\end{proof}

\begin{theorem} \label{thm:product} Let $\A \in \RB^{m\times n}$ and $\B \in \RB^{n\times p}$ be given, and let $q=\min\{m, n, p\}$. Then
for $k=1, \ldots, q$,
\[
\prod_{i=1}^k \sigma_i(\A \B) \leq \prod_{i=1}^k \sigma_i(\A) \sigma_i(\B).
\]
If $n=p=m$, then equality holds for $k=n$. And
\[
\sum_{i=1}^k \sigma_i(\A \B) \leq \sum_{i=1}^k \sigma_i(\A) \sigma_i(\B) \leq \Big(\sum_{i=1}^k \sigma_i(\A)\Big) \Big(\sum_{i=1}^k \sigma_i(\B) \Big).
\]
\end{theorem}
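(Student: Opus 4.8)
The plan is to establish the multiplicative inequality $\prod_{i=1}^k \sigma_i(\A\B) \le \prod_{i=1}^k \sigma_i(\A)\sigma_i(\B)$ first; the equality statement and both additive inequalities then follow almost immediately. For the multiplicative bound, fix $k\in[q]$. Because $\A\B\in\RB^{m\times p}$ has at least $q\ge k$ singular values, I can take a full SVD $\A\B=\U_{AB}\Si_{AB}\V_{AB}^T$ and let $\X\in\RB^{m\times k}$ and $\Y\in\RB^{p\times k}$ be the matrices formed by the first $k$ columns of $\U_{AB}$ and $\V_{AB}$. Then $\X^T\A\B\Y=\diag(\sigma_1(\A\B),\ldots,\sigma_k(\A\B))$, so $\prod_{i=1}^k\sigma_i(\A\B)=\det(\X^T\A\B\Y)$. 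Writing $\X^T\A\B\Y=(\X^T\A)(\B\Y)$ and taking an SVD $\X^T\A=\U_1\Si_1\V_1^T$ with $\U_1\in\RB^{k\times k}$ orthonormal, $\V_1\in\RB^{n\times k}$ column orthonormal, and $\Si_1=\diag(\sigma_1(\X^T\A),\ldots,\sigma_k(\X^T\A))$, I get
\[
\prod_{i=1}^k\sigma_i(\A\B)=\big|\det(\X^T\A\B\Y)\big|=\Big(\prod_{i=1}^k\sigma_i(\X^T\A)\Big)\,\big|\det(\V_1^T\B\Y)\big|.
\]
Since $\V_1^T\B\Y$ is $k\times k$, its $|\det|$ equals the product of its singular values, and because $\V_1$ and $\Y$ are column orthonormal, applying the upper bound of Proposition~\ref{pro:032}-(5) twice gives $\sigma_i(\V_1^T\B\Y)\le\sigma_i(\B\Y)\le\sigma_i(\B)$ for $i\in[k]$; likewise $\sigma_i(\X^T\A)\le\sigma_i(\A)$. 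Multiplying the two groups of bounds yields $\prod_{i=1}^k\sigma_i(\A\B)\le\prod_{i=1}^k\sigma_i(\A)\sigma_i(\B)$. (I will note in passing that the upper-bound half of Proposition~\ref{pro:032}-(5), $\sigma_i(\PP^T\C)\le\sigma_i(\C)$ for column-orthonormal $\PP$, holds at every admissible index because $\C^T\C-\C^T\PP\PP^T\C$ is PSD.)

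For the equality case $m=n=p$, $k=n$: here $\A,\B\in\RB^{n\times n}$, and $\prod_{i=1}^n\sigma_i(\A\B)=|\det(\A\B)|=|\det\A|\,|\det\B|=\big(\prod_{i=1}^n\sigma_i(\A)\big)\big(\prod_{i=1}^n\sigma_i(\B)\big)$ by Proposition~\ref{pro:012}-(8), with both sides vanishing when $\A$ or $\B$ is singular. For the additive inequalities, set $u_i=\sigma_i(\A\B)$ and $v_i=\sigma_i(\A)\sigma_i(\B)$ for $i=1,\ldots,q$: both sequences are nonnegative and non-increasing ($v$ because it is the termwise product of two nonnegative non-increasing sequences), and the multiplicative inequality just proved says $\prod_{i=1}^k u_i\le\prod_{i=1}^k v_i$ for all $k\in[q]$, so Lemma~\ref{lem:shur} delivers $\sum_{i=1}^k u_i\le\sum_{i=1}^k v_i$, i.e. $\sum_{i=1}^k\sigma_i(\A\B)\le\sum_{i=1}^k\sigma_i(\A)\sigma_i(\B)$. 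The last inequality is trivial: $\sum_{i=1}^k\sigma_i(\A)\sigma_i(\B)\le\sum_{i=1}^k\sum_{j=1}^k\sigma_i(\A)\sigma_j(\B)=\big(\sum_{i=1}^k\sigma_i(\A)\big)\big(\sum_{i=1}^k\sigma_i(\B)\big)$, since the discarded cross terms are all nonnegative.

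The only non-routine step is the multiplicative inequality, and its crux is the bookkeeping move of representing $\prod_{i=1}^k\sigma_i(\A\B)$ as the determinant $\det(\X^T\A\B\Y)$ over an optimally chosen pair of orthonormal frames, then absorbing the SVD factors of $\X^T\A$ so that what is left, $\det(\V_1^T\B\Y)$, is a genuine $k\times k$ determinant whose size is governed by the monotonicity of singular values under compression by column-orthonormal matrices (Proposition~\ref{pro:032}-(5)). Once that inequality is in hand, everything else is immediate from Proposition~\ref{pro:012}-(8), Lemma~\ref{lem:shur}, and nonnegativity.
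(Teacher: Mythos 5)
Your proof is correct and follows essentially the same route as the paper's: both express $\prod_{i=1}^k\sigma_i(\A\B)$ as $|\det(\U_k^T\A\B\V_k)|$ over the top-$k$ singular frames of $\A\B$, split the product inside the determinant, bound each $k\times k$ factor by $\prod_{i\leq k}\sigma_i(\A)$ and $\prod_{i\leq k}\sigma_i(\B)$ via monotonicity of singular values under compression by column-orthonormal matrices (Proposition~\ref{pro:032}), handle the equality case by determinant multiplicativity, and get the additive part from Lemma~\ref{lem:shur}. The only cosmetic difference is that the paper splits via a polar decomposition of $\B\V_k$ while you split via a thin SVD of $\X^T\A$; your side remark justifying the upper bound $\sigma_i(\PP^T\C)\leq\sigma_i(\C)$ at every index is a useful touch, since the stated range in Proposition~\ref{pro:032}-(5) is narrower than what both proofs actually use.
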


\begin{proof} Let $\A\B = \U \Si \V^T$ be a full SVD of $\A \B$, and for $k\leq q$ let $\U_k$ and $\V_k$ be the first $k$ columns of $\U$ and
$\V$, respectively. Now take a polar decomposition of $\B \V_k$ as $\B \V_k=\Q \S$. Since $\S^2 = \V_k^T \B^T \B \V_k$ and by Proposition~\ref{pro:032}-(4), we obtain
\[
\det(\S^2) = \det(\V_k^T \B^T \B \V_k ) \leq \prod_{i=1}^k \sigma_i^2(\B)
\]
We  further have that
\begin{align*}
\prod_{i=1}^k \sigma_i(\A \B) & = |\det(\U_k^T \A \B \V_k) | = |\det(\U_k^T \A \Q)  \det(\S)|  \\
& \leq   \prod_{i=1}^k \sigma_i(\A) \sigma_i(\B).
\end{align*}
The above inequality again follows from Proposition~\ref{pro:032}-(4), When $n=p=m$, then
\[
\prod_{i=1}^n \sigma_i(\A \B) = |\det(\A \B)| = |\det(\A)| \times |\det(\B)| =  \prod_{i=1}^n \sigma_i(\A) \sigma_i(\B).
\]
The second part follows from the first part and Lemma~\ref{lem:shur}.
\end{proof}

\section{Appendix: Application of Matrix Differentials}

Here we present alternative proofs for Theorem~\ref{thm:von} and  Theorem~\ref{thm:kyfan}, which are based on matrix differentials.  It aims at further illustrating how to use matrix differentials.

\begin{proof}[The Second Proof of Theorem~\ref{thm:von}]
To solve the problem,  we define the  Lagrangian function:
\[
L(\Q, \C) = \tr(\Q^T \M \Q) - \tr(\C(\Q^T \Q - \I_k)),
\]
where $\C$ is a $k\times k$ symmetric matrix of Lagrangian multipliers.   Since
\[
d L = \tr(d \Q^T \M \Q + \Q^T \M d \Q) - \tr(\C (d \Q^T \Q + \Q^T d \Q)),
\]
this shows that $\frac{d L}{d \Q} = 2 \M \Q - 2 \Q \C$. The KKT condition is now
\[
\M \Q - \Q \C= \0.
\]
Clearly, if  $\hat{\C} \triangleq \diag(\lambda_1, \ldots, \lambda_k)$ and $\hat{\Q}$ consists of the corresponding orthonormal eigenvectors,
they are a solution of the above equation. In this setting, we see that $\tr(\hat{\Q}^T \M \hat{\Q}) = \sum_{i=1}^k \lambda_i$.

Thus, we only need to prove that $\hat{\Q}$ is indeed the maximizer of the original problem. We now compute the Hessian matrix of
$L$ w.r.t.\ $\Q$ at $\Q = \hat{\Q}$ and $\C = \hat{\C}$. Since $\vect(\M \Q - \Q \C) = (\I_k \otimes \M - \C \otimes \I_n) \vect(\Q)$,
the Hessian matrix is given as
\[
 \H = 2  (\I_k \otimes \M - \hat{\C} \otimes \I_n).
\]
For any $\X \in \RB^{n \times k}$ such that $\X^T \hat{\Q} =\0$,
it  suffices for our purpose to prove $\x^T \H \x/2\leq 0$ where $\x = \vect(\X)$. Take the full EVD of $\M$ as $\M = \U \Lam \U^T$, where $\Lam = \diag(\lambda_1, \ldots, \lambda_n)$ and $\U =[\hat{\Q}, \hat{\Q}^{\bot}]$ such that $\U^T \U = \I_n$. Denote $\Lam_2=\diag(\lambda_{k+1}, \ldots, \lambda_n)$ and $\Y = (\hat{\Q}^{\bot})^T \X  =[\y_1, \ldots, \y_k]$. 
Then,
\begin{align*}
\frac{1}{2} \x^T \H \x & = \tr(\X^T \M \X) - \tr(\X \hat{\C} \X^T) \\
& = \tr(\X^T \hat{\Q}^{\bot} \Lam_2 (\hat{\Q}^{\bot})^T \X) - \tr (\hat{\C} \X^T (\hat{\Q} \hat{\Q}^T + \hat{\Q}^{\bot} (\hat{\Q}^{\bot})^T) \X) \\
& = \tr (\Y^T \Lam_2 \Y) - \tr(\hat{\C} \Y^T \Y) \\
& = \sum_{i=1}^k \y_i^T \Lam_2 \y_i - \sum_{i=1}^k \lambda_i \y_i^T \y_i \\
& = \sum_{i=1}^k \y_i^T (\Lam_2 - \lambda_i \I_{n-k}) \y_i \leq 0.
\end{align*}
\end{proof}

\begin{proof}[The Third Proof of Theorem~\ref{thm:kyfan}]
To solve the constrained problem in the theorem,  we now define the  Lagrangian function:
\[
L(\X, \Y, \C_1,  \C_2) =  \tr(\X^T \A \Y) -  \frac{1}{2} \tr(\C_1(\X^T \X - \I_k)) -  \frac{1}{2} \tr(\C_2(\Y^T \Y - \I_k)) ,
\]
where $\C_1$ and $\C_2$ are  two $k\times k$ symmetric matrix of Lagrange multipliers.   Since
\begin{align*}
d L & = \tr(d \X^T \A \Y) -  \frac{1}{2} \tr(\C_1 (d \X^T \X + \X^T d \X)), \\
d L & =  \tr( \X^T \A d\Y) -  \frac{1}{2} \tr(\C_2 (d \Y^T \Y + \Y^T d \Y)),
\end{align*}
which yield that $\frac{d L}{d \X} =   \A \Y -  \X \C_1$  and $\frac{d L}{d \Y} =   \X \A^T -  \Y  \C_2$. The KKT condition is now
\[
\A \Y - \X \C_1= \0 \; \mbox{ and } \;  \A^T \X - \Y \C_2=\0.
\]
It then follows from $\X^T \X = \I_k$ and $\Y^T \Y = \I_k$ that $\C_1= \C_2$. We denote   $\C \triangleq \C_1 = \C_2$. So,
\begin{align*}
\A \Y - \X \C &= \0, \\
\A^T \X - \Y \C & =\0.
\end{align*}
That is,
\[
\begin{bmatrix} \0 & \A \\ \A^T & \0 \end{bmatrix} \begin{bmatrix} \X \\ \Y \end{bmatrix} = \begin{bmatrix} \X \\ \Y \end{bmatrix} \C.
\]
Clearly, if  $\hat{\C} \triangleq \Si_k= \diag(\lambda_1, \ldots, \lambda_k)$,  $\hat{\X}\triangleq \U_k= [\u_1, \ldots, \u_k]$, and $\hat{\Y}\triangleq \V_k = [\v_1, \ldots, \v_k]$, then
they are a solution of the above equation. In this setting, we see that $\tr(\hat{\X}^T \A \hat{\Y}) = \sum_{i=1}^k \sigma_i$.

Thus, we only need to prove that $(\hat{\X}, \hat{\Y})$ is  the maximizer of the original problem. We now compute the Hessian matrix of
$L$ w.r.t.\ $(\X, \Y)$ at $(\X, \Y) = (\hat{\X}, \hat{\Y})$,  and $\C = \hat{\C}$.
The Hessian matrix is given as
\[
\H \triangleq  \begin{bmatrix} \frac{\partial^2L} {\partial \vect(\hat{\X}) \partial \vect(\hat{\X})^T}  & \frac{\partial^2L}{\partial \vect(\hat{\X}) \partial \vect(\hat{\Y})^T}   \\   \frac{\partial^2 L}{\partial \vect(\hat{\Y}) \partial \vect(\hat{\X})^T}  & \frac{\partial^2L} {\partial \vect(\hat{\Y}) \partial \vect(\hat{\Y})^T}      \end{bmatrix} =   \begin{bmatrix}  - \Si_k \otimes \I_m & \I_k \otimes \A   \\   \I_k \otimes \A^T  & -\Si_k \otimes \I_n      \end{bmatrix},
\]
because
 $\vect(\A \Y - \X \C) = (\I_k \otimes \A)\vect(\Y)  -( \C^T \otimes \I_m) \vect(\X)$ and  $\vect(\A^T \X - \Y \C) = (\I_k \otimes \A^T)\vect(\X)  -( \C^T \otimes \I_n) \vect(\Y)$.

Note that
\[
\begin{bmatrix} \hat{\X}^T & \0 \\ \0 & \hat{\Y}^T \end{bmatrix}  \begin{bmatrix} \hat{\X} & \0 \\ \0 & \hat{\Y}  \end{bmatrix} = \I_{2k}.
\]
Thus,
for any $\Z_1  \in \RB^{m \times k}$ and $\Z_2 \in \RB^{n\times k}$ such that $\Z_1^T \hat{\X}  = \0$ and $\Z_2^T \hat{\Y}  = \0$,
it  suffices for our purpose to prove $\z^T \H \z \leq 0$ where $\z^T = (\vect(\Z_1)^T,  \vect(\Z_2)^T)$.  Compute
\begin{align*}
\z^T \H \z & = [\vect(\Z_1)^T,  \vect(\Z_2)^T ]   \begin{bmatrix}  - \Si_k \otimes \I_m & \I_k \otimes \A   \\   \I_k \otimes \A^T  & -\Si_k \otimes \I_n      \end{bmatrix} \begin{bmatrix} \vect(\Z_1) \\ \vect(\Z_2) \end{bmatrix} \\
& = \vect(\Z_2)^T(\I_k{\otimes} \A^T) \vect(\Z_1) + \vect(\Z_1)^T(\I_k{\otimes} \A) \vect(\Z_2)  \\
& \quad -   \vect(\Z_1)^T(\Si_k{\otimes} \I_m) \vect(\Z_1)- \vect(\Z_2)^T(\Si_k{\otimes} \I_n) \vect(\Z_2) \\
& = - \tr(\Z_1^T \Z_1 \Si_k) - \tr(\Z_2^T \Z_2 \Si_k) + 2 \tr(\Z_1^T \A \Z_2)  \triangleq \Delta.
\end{align*}

Take a thin SVD of $\A$ as $\A = \U \Si \V^T$, where $\Si  =\Si_k \oplus \Si_{-k} $,  $\U =[\U_k, {\U}_{-k}]$, and $\V = [\V_k, \V_{-k}]$. Denote  $\R_1 = \U_{-k}^T \Z_1 $ and and $\R_2 = \V_{-k}^T \Z_2 $.   Then $\tr(\Z_1^T \A \Z_2) = \tr(\Z_1^T \U_{-k} \Si_{-k} \V_{-k}^T)$. And hence,
\begin{align*}
- \Delta  & = \tr(\Z_1^T \Si_k \Z_1) + \tr(\Z_2^T \Si_k \Z_2) - 2 \tr(\Z_1^T \U_{-k} \Si_{-k} \V_{-k}^T \Z_2) \\
& \geq  \tr(\Z_1^T \U_{-k}  \U_{-k}^T \Z_1  \Si_k) +  \tr(\Z_2^T \U_{-k}  \U_{-k}^T \Z_2  \Si_k)  \\
& \quad  - 2  \tr(\Z_1^T \U_{-k} \Si_{-k} \V_{-k}^T \Z_2)  \\
& = \tr (\R_1^T \R_1 \Si_k) + \tr(\R_2^T \R_2 \Si_k) -2  \tr(\R_1^T \Si_{-k}  \R_2) \\
& \geq  \tr(\R_1^T \Si_{-k}  \R_1) + \tr(\R_2^T \Si_{-k}  \R_2) - 2  \tr(\R_1^T \Si_{-k}  \R_2)   \\
& = \tr[ (\R_1 -\R_2)^T \Si_{-k} (\R_1 - \R_2) ] \geq 0.
\end{align*}
The last inequality uses the fact that $\tr (\R_1^T \R_1 \Si_k) \geq \tr(\R_1^T \Si_{-k} \R_1)$ and $\tr (\R_2^T \R_2 \Si_k) \geq \tr(\R_2^T \Si_{-k} \R_2)$.
\end{proof}

\chapter{Unitarily Invariant Norms}
\label{ch:uinorm}

In this chapter we   study  unitarily invariant norms of a matrix, which can be defined via  singular values of  the matrix.
Unitarily invariant norms were contributed by  J. von Neumann, Robert Schatten, and Ky Fan. J.  von Neumann established an equivalent relationship between unitarily invariant norms and symmetric gauge functions.
There are two popular classes of unitarily invariant norms: the Ky Fan norms and  Schatten $p$-norms.

Parallel with the vector $p$-norms, the Schatten $p$-norms are defined on singular values of a matrix.
Their special cases include the spectral norm, Frobenius norm, and nuclear norm. They have wide applications in modern data analysis and computation. For example, the Frobenius norm is used to measure approximation errors in regression and reconstruction problems because it  essentially equivalent to the $\ell_2$-norm of a vector.
The spectral norm is typically used to describe convergence and convergence rate of an iteration procedure.
The nuclear norm provides an effective approach to matrix low rank modeling.

We first briefly review  matrix norms, and then present the notion of symmetric gauge functions.
Symmetric gauge functions  facilitate us to study unitarily invariant norms. First, it  transforms a unitarily invariant norm on matrices to a norm on vectors equivalently. Second, it can incorporate majorization theory.
Accordingly, we give some important properties of unitarily invariant norms.

\section{Matrix Norms}

A function $f: \RB^{m\times n} \to \RB$ is said to be a matrix norm if the following conditions are satisfied:
\begin{enumerate}
\item[(1)] $f(\A )>0$ for all nonzero matrix $\A \in \RB^{m\times n}$;
\item[(2)] $f(\alpha \A) = |\alpha| f(\A)$ for any  $\alpha \in \RB$ and any  $\A \in \RB^{m\times n}$;
\item[(3)] $f(\A + \B) \leq f(\A) + f(\B)$ for any $\A$ and $\B \in \RB^{m\times n}$.
\end{enumerate}
We  denote the norm of a matrix $\A$ by $\|\A\|$. Furthermore, if
\begin{enumerate}
\item[(4)] $\|\A \B \|\leq \|\A\| \|\B\|$
where $\A \in \RB^{m\times n}$ and $\B \in \RB^{n\times p}$,
\end{enumerate}
the matrix norm is said to be consistent.  In some literature, when one refers to a matrix norm on $\R^{n\times n}$. it is required to be consistent. Here we do not make this requirement.

There is an equivalence between any two norms. Let $\|\cdot\|_{\alpha}$ and $\|\cdot\|_{\beta}$ be two norms on $\RB^{m\times n}$.
Then there exist positive numbers $\alpha_1$ and $\alpha_2$ such that for all $\A \in \RB^{m\times n}$,
\[
\alpha_1 \|\A\|_{\alpha}  \leq \|\A \|_{\beta} \leq \alpha_2 \|\A\|_{\alpha}.
\]
Conditions~(2) and (3) tell us that the norm is convex. Moreover, it is continuous because
\[
|\|\A\| - \|\B\| | \leq \|\A - \B\| \leq \alpha \|\A - \B\|_F, \; \mbox{ where } \alpha>0.
\]
A norm always companies with its dual.  The dual is a norm. Moreover,  the dual of the dual  norm is the original norm.

\begin{definition} \label{def:dual} Let $\|\cdot\|$ be a given norm on $\RB^{m\times n}$. Its dual  (denoted $\|\cdot\|^*$)
is defined as
\[
\|\A\|^* = \max \big\{\tr(\A \B^T):  \;  \B\in \RB^{m\times n},  \|\B\| =1 \big\}.
\]
\end{definition}

\begin{proposition} \label{pro:dualnorm}  The dual  $\|\cdot \|^*$ has the following properties:
\begin{enumerate}
\item[(1)]
The dual is a norm.
\item[(2)]  $(\|\A\|^*)^* = \|\A\|$.
\item[(3)] $\tr(\A \B^T)  \leq|\tr(\A^T \B)| \leq \|\A\| \|\B\|^*$ (or  $\|\A\|^*  \|\B\|$).
\end{enumerate}
\end{proposition}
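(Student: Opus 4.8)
The plan is to dispatch the three claims in order, with part~(1) a routine axiom check, part~(3) a one-line normalization, and part~(2) the only statement requiring a nontrivial tool. For part~(1), I would note that $\|\cdot\|^*$ is by construction the pointwise maximum of the linear functionals $\B \mapsto \tr(\A\B^T)$ over the unit sphere $S = \{\B \in \RB^{m\times n} : \|\B\| = 1\}$; since all norms on $\RB^{m\times n}$ are equivalent, $S$ is compact and the functional is continuous, so the maximum is attained and finite. Positivity for $\A \neq \0$ comes from testing $\B = \A/\|\A\| \in S$, which yields $\tr(\A\B^T) = \|\A\|_F^2/\|\A\| > 0$. Absolute homogeneity uses that $\B \mapsto \sgn(\alpha)\B$ maps $S$ onto $S$, and the triangle inequality is subadditivity of the maximum:
\[
\|\A_1 + \A_2\|^* = \max_{\B \in S}\big(\tr(\A_1\B^T) + \tr(\A_2\B^T)\big) \leq \max_{\B \in S}\tr(\A_1\B^T) + \max_{\B \in S}\tr(\A_2\B^T) = \|\A_1\|^* + \|\A_2\|^*.
\]

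For part~(3), the identity $\tr(\A\B^T) = \tr(\A^T\B) = \sum_{i,j} a_{ij}b_{ij}$ reduces the first inequality to $x \leq |x|$. For $\A \neq \0$, applying the definition of $\|\cdot\|^*$ to $\C = \pm\A/\|\A\| \in S$ gives $\pm\tr(\A^T\B)/\|\A\| \leq \|\B\|^*$, hence $|\tr(\A^T\B)| \leq \|\A\|\,\|\B\|^*$; interchanging the roles of $\A$ and $\B$ gives $|\tr(\A^T\B)| \leq \|\A\|^*\|\B\|$ (the cases $\A = \0$ or $\B = \0$ being trivial).

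Finally, part~(2). The inequality $(\|\A\|^*)^* \leq \|\A\|$ is immediate: by part~(3), every $\B$ with $\|\B\|^* = 1$ satisfies $\tr(\A\B^T) \leq \|\A\|$, so the defining maximum is at most $\|\A\|$. The reverse inequality is the crux: I would fix $\A \neq \0$ and invoke a supporting-hyperplane (Hahn--Banach) theorem for the closed convex set $K = \{X \in \RB^{m\times n} : \|X\| \leq 1\}$ at its boundary point $\A/\|\A\|$, obtaining a nonzero $\B$ with $\tr(\B X^T) \leq \tr\!\big(\B(\A/\|\A\|)^T\big)$ for all $X \in K$; since $\sup_{X \in K}\tr(\B X^T) = \|\B\|^*$, rescaling $\B$ so that $\|\B\|^* = 1$ yields $\tr(\A\B^T) = \|\A\|$ and therefore $(\|\A\|^*)^* \geq \|\A\|$. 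The main obstacle is exactly this appeal to a separation theorem for convex bodies in $\RB^{m\times n}$, which I would quote from \cite{BorweinLewis}; everything else is bookkeeping.
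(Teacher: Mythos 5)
Your proposal is correct. Note that the paper states Proposition~\ref{pro:dualnorm} without proof (it only proves the vector analogue, Proposition~\ref{pro:polar}, and even there the bidual identity $(\phi^*)^*=\phi$ is merely asserted), so there is no argument of the paper to compare against; what you wrote is the standard one and it checks out: norm equivalence gives compactness of the unit sphere so the maximum is attained, the identity $\tr(\A\B^T)=\tr(\A^T\B)$ and a rescaling give part~(3), and the only genuinely nontrivial step, $(\|\A\|^*)^*\geq\|\A\|$, is correctly reduced to a supporting-hyperplane (separation) theorem applied to the unit ball at the boundary point $\A/\|\A\|$, together with the observation that the supremum of $X\mapsto\tr(\B X^T)$ over the ball equals $\|\B\|^*$.
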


There are two approaches for definition of a matrix norm. In the first approach, the  norm of  matrix $\A$ is defined via its vectorization $\vect(\A)$;
that is, $\|\A\|=\|\vect(\A)\|$, which obviously satisfies  Conditions~(1)-(3). We  refer to this class of the matrix norms
as \emph{matrix vectorization norms} for ease of exposition. Note that the Frobenius norm is a matrix vectorization norm because $\|\A\|_F= \|\vect(\A)\|_2$.
However, this class of matrix norms are not always consistent.
For example, let
\[
\A =\B = \begin{bmatrix} 1 & 1 \\ 1& 1 \end{bmatrix}.
\]
Since $\A \B =\begin{bmatrix} 2 & 2 \\ 2 & 2 \end{bmatrix}$ and
\[2=\|\vect(\A \B)\|_{\infty} >\|\vect(\A)\|_{\infty} \|\vect(\B)\|_{\infty} =1, \]
this implies that the corresponding matrix norm is not  consistent. 

In the second approach, the matrix norm is defined by
\[
\|\A \| = \max_{\|\x \|=1} \|\A \x\|,
\]
which is also called the \emph{induced} or \emph{operator} norm.
\begin{theorem} \label{thm:operator} The operator  norm on $\RB^{m\times n}$ is a consistent matrix norm.
\end{theorem}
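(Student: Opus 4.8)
The plan is to verify conditions (1)--(3) of being a norm together with the consistency condition (4), the whole argument resting on one elementary lemma: for every $\A \in \RB^{m\times n}$ and every $\x \in \RB^n$ one has $\|\A\x\| \leq \|\A\|\,\|\x\|$. First I would check that the defining maximum is actually attained, so that $\|\A\|$ is well defined and finite: the map $\x \mapsto \|\A\x\|$ is a composition of the linear (hence continuous) map $\x \mapsto \A\x$ with the norm function, which is itself continuous, and the unit sphere $\{\x \in \RB^n : \|\x\| = 1\}$ is compact, so the supremum in the definition is a genuine maximum. The lemma then follows by homogeneity of the vector norm: for $\x = \0$ it is trivial, and for $\x \neq \0$ the vector $\x/\|\x\|$ has unit norm, so $\|\A(\x/\|\x\|)\| \leq \|\A\|$, and multiplying by $\|\x\|$ gives $\|\A\x\| \leq \|\A\|\,\|\x\|$.

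With the lemma in hand the three norm axioms are immediate. For positivity, if $\A \neq \0$ then $\A\x_0 \neq \0$ for some $\x_0$, which we may rescale to have unit norm; then $\|\A\| \geq \|\A\x_0\| > 0$. Absolute homogeneity follows from $\|(\alpha\A)\x\| = |\alpha|\,\|\A\x\|$ for every unit $\x$, so maximizing over such $\x$ gives $\|\alpha\A\| = |\alpha|\,\|\A\|$. For the triangle inequality, for each unit $\x$ we have $\|(\A+\B)\x\| \leq \|\A\x\| + \|\B\x\| \leq \|\A\| + \|\B\|$ by the lemma, and maximizing the left-hand side over unit $\x$ yields $\|\A+\B\| \leq \|\A\| + \|\B\|$.

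Finally, for consistency let $\A \in \RB^{m\times n}$ and $\B \in \RB^{n\times p}$, so $\A\B \in \RB^{m\times p}$. For any $\x \in \RB^p$, applying the lemma twice---first to $\A$ acting on the vector $\B\x \in \RB^n$, then to $\B$ acting on $\x$---gives $\|(\A\B)\x\| = \|\A(\B\x)\| \leq \|\A\|\,\|\B\x\| \leq \|\A\|\,\|\B\|\,\|\x\|$. Taking $\x$ on the unit sphere and maximizing the left-hand side yields $\|\A\B\| \leq \|\A\|\,\|\B\|$, which is condition~(4).

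The one point requiring care---and it is mild---is the bookkeeping of the three possibly distinct underlying vector norms on $\RB^p$, $\RB^n$, and $\RB^m$ in the consistency step: the operator norm of $\B$ refers to the $\RB^p \to \RB^n$ pair, that of $\A$ to the $\RB^n \to \RB^m$ pair, and that of $\A\B$ to the $\RB^p \to \RB^m$ pair, and the lemma must be invoked in the matching pair at each use. Once this is tracked, no genuine obstacle remains; the proof is essentially a chain of applications of the single inequality $\|\A\x\| \leq \|\A\|\,\|\x\|$.
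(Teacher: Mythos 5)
Your proposal is correct and follows essentially the same route as the paper's proof: positivity via a nonzero vector $\z$ with $\A\z \neq \0$, the remaining norm axioms read off from the vector norm, and consistency via the chain $\|(\A\B)\x\| \leq \|\A\|\,\|\B\x\| \leq \|\A\|\,\|\B\|\,\|\x\|$ followed by maximization over unit $\x$. You merely supply a bit more detail (attainment of the maximum by compactness, explicit triangle inequality, and the bookkeeping of the different underlying vector norms) than the paper, which leaves those points implicit.
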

\begin{proof} Given a  matrix $\A \in \RB^{m\times n}$,   the result is trivial If $\A=\0$.  Assume that $\A\neq \0$. Then there exists a
nonzero vector $\z \in \RB^n$ for which $\A \z \neq \0$. So we have $\|\A \z\| >0$ and $\|\z\|>0$. Hence,
\[
\|\A\| = \max_{\x \neq \0}   \frac{\|\A \x\|}{\|\x\|} \geq \frac{\|\A \z\|}{\|\z\|} >0.
\]
Conditions (2)-(3)  are directly obtained from the definition of  the vector norm. As for Condition~(4), it can be established by
\[
\|\A \B  \x\| \leq \|\A \| \|\B \x \| \leq \|\A \| \|\B \| \|\x\|
\]
for any $\x \neq \0$. Thus,
\[
\|\A \B \|  = \max_{\x \neq \0} \frac{ \|\A \B  \x\| }{\|\x \| } \leq \|\A\| \|\B\|.
\]
\end{proof}
As we have shown, $\|\A\|_2 = \max_{\|\x \|_2=1} \|\A \x\|_2 = \sigma_1(\A)$.  It is thus called the spectral norm.

Note that $\|\U \A \V\|_2=\|\A\|_2$ and $\|\U \A \V\|_F = \|\A\|_F$ for any $m\times m$ orthonormal matrix $\U$ and
any $n \times n$ orthonormal matrix $\V$. In other words, they are unitarily  invariant.

\begin{definition} A matrix norm  is said to be \emph{unitarily  invariant} if $\|\U \A \V\|=\| \A \|$ for any unitary matrices $\U$ and $\V$.
\end{definition}

In this tutorial, we only consider real matrices. Thus, a unitarily invariant norm should be termed as ``orthogonally invariant norm.''
However, we still follow the term of the unitarily invariant norm and denote it by $\nm \cdot \nm$.

\begin{theorem} \label{thm:dual} Let $\|\cdot\|$ be a given norm on $\RB^{m\times n}$. Then it is  unitarily invariant if and only if its dual is unitarily invariant.
\end{theorem}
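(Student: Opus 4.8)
The plan is to prove the forward implication directly from the definition of the dual norm and the cyclic invariance of the trace, and then to obtain the converse for free using the biduality $(\|\cdot\|^*)^* = \|\cdot\|$ recorded in Proposition~\ref{pro:dualnorm}(2).

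First I would assume $\|\cdot\|$ is unitarily invariant and fix orthonormal matrices $\U \in \RB^{m\times m}$ and $\V \in \RB^{n\times n}$. Starting from Definition~\ref{def:dual}, I would write
\[
\|\U \A \V\|^* = \max\big\{ \tr(\U \A \V \B^T) : \B \in \RB^{m\times n},\ \|\B\|=1 \big\}
\]
and then use the cyclic property of the trace together with $(\U^T \B \V^T)^T = \V \B^T \U$ to rewrite the objective as $\tr(\U \A \V \B^T) = \tr\big(\A (\U^T \B \V^T)^T\big)$. The key point is the change of variables $\C \triangleq \U^T \B \V^T$: since $\U,\V$ are orthonormal and $\|\cdot\|$ is unitarily invariant, $\|\C\| = \|\U^T \B \V^T\| = \|\B\|$, so $\B \mapsto \C$ is a bijection of the unit sphere $\{\,\|\B\|=1\,\}$ onto itself. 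Hence the maximum is taken over the same set, and $\|\U \A \V\|^* = \max\{\tr(\A \C^T) : \|\C\|=1\} = \|\A\|^*$. This gives the ``only if'' direction.

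For the ``if'' direction, I would suppose that $\|\cdot\|^*$ is unitarily invariant. By Proposition~\ref{pro:dualnorm}(1) the dual is itself a norm, so the forward implication just proved applies with $\|\cdot\|^*$ in place of $\|\cdot\|$, yielding that $(\|\cdot\|^*)^*$ is unitarily invariant; and by Proposition~\ref{pro:dualnorm}(2) we have $(\|\A\|^*)^* = \|\A\|$, so $\|\cdot\|$ is unitarily invariant. This completes the equivalence.

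I do not anticipate a serious obstacle here: the only thing to be careful about is that $\U$ and $\V$ are square orthonormal matrices (so that the substitution $\C = \U^T \B \V^T$ stays within $\RB^{m\times n}$ and is genuinely invertible), and that one correctly tracks the transpose when moving $\U$ and $\V$ through the trace. If one wanted to be fully self-contained one could also verify the elementary fact that $\B \mapsto \U^T \B \V^T$ preserves $\|\cdot\|$, but that is immediate from the definition of unitary invariance applied to $\U^T \B \V^T$.
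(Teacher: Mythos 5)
Your proposal is correct and follows essentially the same route as the paper: the forward direction via the trace identity $\tr(\U \A \V \B^T) = \tr\big(\A (\U^T \B \V^T)^T\big)$ and the change of variables $\C = \U^T \B \V^T$ on the unit sphere, and the converse via biduality $(\|\A\|^*)^* = \|\A\|$. You merely spell out the converse (and the bijectivity of the substitution) a bit more explicitly than the paper does, which is fine.
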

\begin{proof} Suppose $\|\cdot\|$ is unitarily invariant, and let $\U \in \RB^{m\times m}$ and $\V \in \RB^{n\times n}$ be orthonormal. Then
\begin{align*}
\|\U \A \V \|^*  & = \max \big\{ \tr(\U \A \V \B^T):  \;\B \in \RB^{m\times n},  \|\B\| =1   \big \} \\
& = \max \big\{ \tr(\A (\U^T \B \V^T)^T):  \;\B \in \RB^{m\times n},  \|\B\| =1   \big \} \\
& =  \max \big\{ \tr(\A \C^T):  \; \C \in \RB^{m\times n},  \|\U \C \V \| =1   \big \}   \\
& =  \max \big\{ \tr(\A \C^T):  \; \C \in \RB^{m\times n},  \|\C  \| =1   \big \} = \| \A\|^*.
\end{align*}
The converse follows from the fact that $(\|\A\|^*)^* = \|\A\|$.
\end{proof}

We find that $\|\A\|_2 = \|\sib(\A) \|_{\infty}$ and $\|\A\|_F = \|\sib(\A) \|_2$; that is, they correspond the norms on the vector $\sib(\A)$ of the singular values of $\A$. This sheds light on the relationship of  a unitarily invariant norm  of a matrix with its singular values.

\section{Symmetric Gauge Functions}

In order to investigate the  unitarily  invariant norm, we first present the notion of symmetric gauge functions.

\begin{definition} \label{def:sgf} A real function $\phi: \RB^n \to \RB$ is called a symmetric gauge function if it satisfies the following four conditions:
\begin{enumerate}
\item[(1)] $\phi(\u) >0$ for all nonzero $\u \in \RB^n$.
\item[(2)] $\phi(\alpha \u) = |\alpha| \phi(\u)$ for any constant $\alpha \in \RB$.
\item[(3)] $\phi(\u + \v)\leq \phi(\u) + \phi(\v)$ for all $\u, \v \in \RB^n$.
\item[(4)] $\phi( \D \u_{\pi}) = \phi(\u)$ where $\u_{\pi} = (u_{\pi_1}, \ldots, u_{\pi_n})$ with $\pi$ as a permutation of $[n]$ and $\D$ is an $n\times n$ diagonal matrix with $\pm 1$ diagonal elements.
\end{enumerate}
Furthermore, the gauge  function is called normalized if it satisfies the condition:
\begin{enumerate}
\item[(5)] $\phi(1, 0, \ldots, 0)=1$.
\end{enumerate}
\end{definition}

Conditions (1)-(3) show that that the gauge function is a  vector norm. Thus, it is convex and continuous.  Condition~(4) says that the gauge function is symmetric.

\begin{lemma} \citep{Schatten}  \label{lem:schatten} Let $\u, \v \in \RB^n$. If  $|\u|  \leq |\v|$,  then   $\phi(\u) \leq \phi(\v)$  for every symmetric gauge function $\phi$.
\end{lemma}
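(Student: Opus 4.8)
The plan is to reduce the statement to the case in which $\u$ and $\v$ differ in a single coordinate, and then exploit the sign-invariance built into condition~(4) of Definition~\ref{def:sgf} through a convexity argument. First I would note that, since $\phi(\D \u_{\pi}) = \phi(\u)$ for every sign matrix $\D$ and permutation $\pi$, replacing $\u$ by $|\u|$ and $\v$ by $|\v|$ changes neither $\phi(\u)$ nor $\phi(\v)$ and preserves the hypothesis $|\u| \le |\v|$. Hence it suffices to prove: if $0 \le u_i \le v_i$ for all $i$, then $\phi(\u) \le \phi(\v)$.

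Next I would treat the case where $\u$ and $\v$ agree in all coordinates except one, say coordinate $i$, with $0 \le u_i \le v_i$. If $v_i = 0$ then $\u = \v$ and there is nothing to prove, so assume $v_i > 0$ and set $t = u_i/v_i \in [0,1]$. Let $\v^{(+)} = \v$ and let $\v^{(-)}$ be $\v$ with the sign of its $i$th entry reversed; by condition~(4) we have $\phi(\v^{(+)}) = \phi(\v^{(-)}) = \phi(\v)$. A direct check gives the identity
\[
\u = \tfrac{1+t}{2}\,\v^{(+)} + \tfrac{1-t}{2}\,\v^{(-)},
\]
since in each coordinate $j \ne i$ the right-hand side equals $v_j = u_j$, and in coordinate $i$ it equals $\tfrac{1+t}{2}v_i - \tfrac{1-t}{2}v_i = t v_i = u_i$. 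Because $\tfrac{1\pm t}{2} \ge 0$, conditions~(2) and (3) (homogeneity and subadditivity of $\phi$) yield
\[
\phi(\u) \le \tfrac{1+t}{2}\,\phi(\v^{(+)}) + \tfrac{1-t}{2}\,\phi(\v^{(-)}) = \phi(\v).
\]

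Finally I would obtain the general statement by changing one coordinate at a time. With $0 \le u_i \le v_i$ for all $i$, set $\w^{(0)} = \v$ and $\w^{(k)} = (u_1, \ldots, u_k, v_{k+1}, \ldots, v_n)$ for $k = 1, \ldots, n$, so that $\w^{(n)} = \u$. Each consecutive pair $\w^{(k-1)}, \w^{(k)}$ differs only in coordinate $k$, where $0 \le u_k \le v_k$, so the one-coordinate case gives $\phi(\w^{(k)}) \le \phi(\w^{(k-1)})$; chaining these inequalities produces $\phi(\u) = \phi(\w^{(n)}) \le \cdots \le \phi(\w^{(0)}) = \phi(\v)$. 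The only substantive step is spotting the convex-combination identity that turns a coordinatewise shrinkage into an averaging of sign-flipped copies of $\v$; once that is in hand, the rest is routine bookkeeping, so I do not anticipate a real obstacle.
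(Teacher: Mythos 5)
Your proposal is correct and follows essentially the same route as the paper's proof: reduce to nonnegative vectors via the sign/permutation invariance, write the one-coordinate shrinkage as a convex combination of $\v$ and its sign-flipped copy, apply homogeneity and subadditivity, and finish by changing one coordinate at a time. The only cosmetic difference is that you spell out the coordinate-by-coordinate chain explicitly where the paper simply invokes induction.
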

\begin{proof} In terms of Condition~(4), we can directly assume that $\u\geq \0$ and $\v\geq \0$.   Currently,
the argument is equivalent to
\[
\phi(\omega_1 v_1, \ldots, \omega_n v_n) \leq \phi(v_1, \ldots, v_n)
\]
for $\omega_i \in [0, 1]$.  Thus,
by induction, it suffices to prove
\[
\phi(v_1, \ldots, v_{n{-}1}, \omega v_n) \leq \phi(v_1, \ldots, v_n)
\]
where  $\omega \in [0, 1]$ for every symmetric gauge function $\phi$.
It follows from the following direct computation:
\begin{align*}
& \phi(v_1, \ldots, v_{n-1}, \omega v_n)  \\
 & =  \phi\Big(\frac{1{+}\omega}{2} v_1 {+} \frac{1{-}\omega}{2} v_1, \ldots,  \frac{1{+}\omega}{2} v_{n{-}1} {+} \frac{1 {-} \omega}{2} v_{n{-}1},  \frac{1{+}\omega}{2} v_n - \frac{1{-}\omega}{2} v_n    \Big) \\
& \leq \frac{1+\omega}{2} \phi(v_1, \ldots,  v_{n-1},  v_n) + \frac{1- \omega}{2} \phi(v_1, \ldots,  v_{n-1},  -v_n) \\
& = \phi(v_1, \ldots, v_{n-1},  v_n).
\end{align*}
\end{proof}

\begin{theorem} \citep{Fan:1951}   \label{thm:sg-m} Given two nonnegative vectors $\u, \v \in \RB_{+}^n$, then  $\u  \prec_w  \v$ if and only if $\phi(\u) \leq \phi(\v)$  for every symmetric gauge function $\phi$.
\end{theorem}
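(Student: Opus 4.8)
The plan is to prove the two implications separately. The direction ``$\phi(\u)\le\phi(\v)$ for every symmetric gauge function $\phi$ $\Rightarrow$ $\u\prec_w\v$'' is the easy one: it suffices to test the hypothesis against the right family of gauge functions. For $k\in[n]$ let $\phi_{(k)}(\z)$ be the sum of the $k$ largest absolute values among $z_1,\dots,z_n$. Conditions (1), (2), (4) of Definition~\ref{def:sgf} are immediate, and the triangle inequality holds because, taking $S$ to be an index set of size $k$ realizing $\phi_{(k)}(\u+\v)$, one has $\phi_{(k)}(\u+\v)=\sum_{i\in S}|u_i+v_i|\le\sum_{i\in S}|u_i|+\sum_{i\in S}|v_i|\le\phi_{(k)}(\u)+\phi_{(k)}(\v)$; hence each $\phi_{(k)}$ is a symmetric gauge function (the Ky Fan $k$-functional on vectors). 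Feeding $\phi_{(1)},\dots,\phi_{(n)}$ into the hypothesis and using $\u,\v\ge\0$ gives $\sum_{i=1}^k u^{\downarrow}_i\le\sum_{i=1}^k v^{\downarrow}_i$ for every $k$, which is exactly $\u\prec_w\v$.

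For the converse, fix an arbitrary symmetric gauge function $\phi$ and assume $\u\prec_w\v$. By Condition~(4) we may assume $\u=\u^{\downarrow}$ and $\v=\v^{\downarrow}$. The key step is to produce a vector $\v'$ with $\0\le\v'\le\v$ (entrywise) and $\u\prec\v'$ in the sense of genuine majorization. I would construct $\v'$ by shaving the excess $d\triangleq\sum_i v_i-\sum_i u_i\ge0$ off the small end of $\v$: let $j$ be the largest index with $\sum_{i\ge j}v_i\ge d$, and set $v'_i=v_i$ for $i<j$, $v'_j=\sum_{i\ge j}v_i-d$, and $v'_i=0$ for $i>j$. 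By the choice of $j$ we get $0\le v'_j\le v_j$, so $\v'$ is nonnegative, nonincreasing, and $\v'\le\v$; and a short computation shows $\sum_{i=1}^k v'_i\ge\sum_{i=1}^k u_i$ for all $k\in[n]$ (using $\u\prec_w\v$ and $\u\ge\0$), with equality at $k=n$ since $\sum_i v'_i=\sum_i u_i$. Thus $\u\prec\v'$.

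Once $\v'$ is in hand the conclusion follows from the results already available. From $\u\prec\v'$, Lemma~\ref{lem:001} gives a doubly stochastic $\W$ with $\u=\W\v'$, and Lemma~\ref{lem:dsm} writes $\W=\sum_j\theta_j\PP_j$ as a convex combination of permutation matrices; then the convexity of $\phi$ (Conditions~(2)--(3)) together with its permutation invariance gives
\[
\phi(\u)=\phi\Big(\sum_j\theta_j\PP_j\v'\Big)\le\sum_j\theta_j\,\phi(\PP_j\v')=\phi(\v').
\]
Finally $|\v'|\le|\v|$, so Lemma~\ref{lem:schatten} yields $\phi(\v')\le\phi(\v)$, hence $\phi(\u)\le\phi(\v)$. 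The one delicate point in the whole argument is the reduction $\u\prec_w\v\Rightarrow\exists\,\v'$ with $\0\le\v'\le\v$ and $\u\prec\v'$: one must check that removing mass from the bottom coordinates of $\v$ neither takes $\v'$ outside the box $[\0,\v]$ nor destroys any of the partial-sum inequalities $\sum_{i=1}^k u_i\le\sum_{i=1}^k v'_i$. Everything after that is a routine combination of Birkhoff's theorem, convexity of a norm, and the monotonicity Lemma~\ref{lem:schatten}.
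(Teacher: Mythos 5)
Your proof is correct and follows essentially the same route as the paper: necessity via the Ky Fan $k$-functionals, and sufficiency by reducing weak majorization to genuine majorization and then combining Hardy--Littlewood--P\'{o}lya (Lemma~\ref{lem:001}), Birkhoff (Lemma~\ref{lem:dsm}), and the monotonicity Lemma~\ref{lem:schatten}. The only cosmetic difference is the intermediate vector: you shave the excess off the bottom of $\v$ so that $\0\le\v'\le\v$ stays nonnegative and apply Lemma~\ref{lem:schatten} at the end, whereas the paper subtracts the entire deficit from the last coordinate (allowing it to go negative) and instead applies Lemma~\ref{lem:schatten} to $\u\le\W\v$ before invoking Birkhoff.
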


\begin{proof} The necessity is obtained  by setting a set of special symmetric gauge functions $\phi_k$ for $k\in [n]$. Specifically, they are defined as
\[
\phi_k(\x) = \max_{1\leq i_1 \leq \cdots \leq i_k \leq n} \;  \sum_{l=1}^k |x_{i_l}|.
\]
where $\x = (x_1, \ldots, x_n)$.

It remains to prove the sufficiency. Without loss of generality, we assume that $u_1\geq \cdots \geq u_n$ and $v_1\geq \cdots \geq v_n$.  Let $\z=(z_1, \ldots, z_n)^T$ where $z_i=v_i$ for $i \in [n-1]$ and $z_n= v_n - \sum_{i=1}^{n}(v_i-u_i)$. Obviously, $\z \leq \v$. And it  follows from $\u \prec_w \v$ that  $\u \prec  \z$. In terms of the theorem of Hardy, Littlewood, and P\'{o}lya (see Lemma~\ref{lem:001}), there exists a doubly stochastic matrix (say $\W$) such that $\u = \W \z$. Since $\W(\v- \z)\geq \0$, we have $\u \leq \W \v$. Thus, by Lemma~\ref{lem:schatten}, $\phi(\u) \leq \phi(\W \v)$
for every symmetric gauge function. Consider that a doubly stochastic matrix can be expressed a convex combination of a set of permutation matrices (see Lemma~\ref{lem:dsm}). We write $\W = \sum_{j=1} \alpha_j \PP_j$ where $\alpha_j\geq 0$ and $\sum_{j} \alpha_j =1$, and the $\PP_j$ are permutation matrices. Accordingly,
\[
\phi(\u) \leq \phi(\sum_{j} \alpha_j \PP_j \v ) \leq \sum_{j} \alpha_j \phi(\PP_j \v) = \sum_{j} \alpha_j \phi(\v) = \phi(\v).
\]
\end{proof}

It is worth noting that the proof of Theorem~\ref{thm:sg-m} implies that if $\phi_k(\u) \leq \phi_k (\v)$ for $k \in [n]$, then $\phi(\u) \leq \phi(\v)$ for every symmetric gauge function $\phi$. In other words, an infinite family of norm inequalities follows from a finite one.

\begin{definition} \label{def:polar} The dual of a symmetric gauge function $\phi$ on $\RB^n$ is defined as
\[
\phi^{*}(\u) \triangleq \max  \big\{\u^T \v: \v \in \RB^n, \phi(\v)=1 \big\}.
\]
\end{definition}

\begin{proposition} \label{pro:polar} Let $\phi^*$ be the dual of the symmetric gauge function $\phi$. Then $\phi^*$ is also a symmetric gauge function. Moreover, $(\phi^*)^* = \phi$.
\end{proposition}
\begin{proof}  For a  nonzero vector $\u \in \RB^n$,  then $\phi(\u)>0$. Hence, 
\[
\max_{\phi(\v)=1}  \u^T \v \geq \frac{\u^T \u}{\phi(\u)} >0.
\]
It is also seen that
\[
\phi^*( \u {+} \v) = \max_{\phi (\z) =1 } (\u {+} \v)^T \z \leq  \max_{\phi (\z) =1 } \u^T \z + \max_{\phi (\z) =1 }  \v^T \z \leq \phi^*(\u) + \phi^*(\v).
\]
As for the symmetry of $\phi^*$ can be directly obtained from that of $\phi$. Finally,  note that $\phi^*$ is a norm on $\RB^n$. Thus,
$(\phi^*)^* = \phi$.
\end{proof}

\section{Unitarily Invariant Norms via SGFs}

There is a one-to-one correspondence between a unitarily invariant norm and a symmetric gauge function (SGF).
\begin{theorem} \label{thm:ui-sg}
If $\nm \cdot\nm$ is a given unitarily invariant norm on $\RB^{m\times n}$, then there is a symmetric gauge function $\phi$ on $\RB^{q}$ where $q=\min\{m,  n\}$ such that $\nm\A \nm=\phi(\sib(\A))$ for all $\A \in \RB^{m\times n}$.

Conversely, if $\phi$ is a symmetric gauge function on $\RB^q$, then ${\nm} \A \nm \triangleq \phi(\sib(\A))$ is a unitarily invariant norm on $\RB^{m\times n}$.
\end{theorem}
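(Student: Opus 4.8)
The plan is to move back and forth between a matrix and the diagonal matrix carrying its singular values, using the full SVD to reduce every question to the diagonal case, and --- for the triangle inequality in the converse direction --- to feed the weak submajorization bound for singular values into Fan's characterization of symmetric gauge functions.

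\textbf{First direction.} Given a unitarily invariant norm $\nm\cdot\nm$ on $\RB^{m\times n}$, I would define $\phi\colon\RB^q\to\RB$ by $\phi(\u)\triangleq\nm\diag_{m,n}(\u)\nm$, where $\diag_{m,n}(\u)$ is the $m\times n$ matrix whose $(i,i)$ entry equals $u_i$ for $i\in[q]$ and whose remaining entries are zero. Conditions~(1)--(3) of Definition~\ref{def:sgf} follow at once from the matrix-norm axioms together with the identities $\diag_{m,n}(\alpha\u)=\alpha\,\diag_{m,n}(\u)$ and $\diag_{m,n}(\u+\v)=\diag_{m,n}(\u)+\diag_{m,n}(\v)$. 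For Condition~(4), I would note that $\diag_{m,n}(\D\u_{\pi})$ arises from $\diag_{m,n}(\u)$ by left- and right-multiplication by permutation matrices and diagonal $\pm1$ matrices (suitably sized), all of which are orthogonal, so $\phi(\D\u_{\pi})=\phi(\u)$ by unitary invariance. Finally, taking a full SVD $\A=\U\Si\V^T$ as in Theorem~\ref{thm:svd}, we have $\Si=\diag_{m,n}(\sib(\A))$, and unitary invariance gives $\nm\A\nm=\nm\U^T\A\V\nm=\nm\Si\nm=\phi(\sib(\A))$; since the singular values are uniquely determined, this is an unambiguous identity.

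\textbf{Converse direction.} Given a symmetric gauge function $\phi$ on $\RB^q$, put $\nm\A\nm\triangleq\phi(\sib(\A))$; this is well defined because $\sib(\A)$ is unique. Positivity holds since $\A\neq\0$ forces $\sigma_1(\A)>0$, hence $\sib(\A)\neq\0$ and $\phi(\sib(\A))>0$; absolute homogeneity follows from $\sib(\alpha\A)=|\alpha|\,\sib(\A)$; and unitary invariance follows from $\sib(\U\A\V)=\sib(\A)$ for orthogonal $\U,\V$. The only substantial point is the triangle inequality. By Proposition~\ref{pro:032}-(1) we have $\sib(\A+\B)\prec_{w}\sib(\A)+\sib(\B)$, and since both sides are nonnegative vectors Theorem~\ref{thm:sg-m} yields $\phi(\sib(\A+\B))\leq\phi(\sib(\A)+\sib(\B))$; combining this with the triangle inequality for $\phi$ as a vector norm (Condition~(3) of Definition~\ref{def:sgf}) gives $\phi(\sib(\A+\B))\leq\phi(\sib(\A))+\phi(\sib(\B))$, i.e.\ $\nm\A+\B\nm\leq\nm\A\nm+\nm\B\nm$.

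\textbf{Main obstacle.} Everything except the triangle inequality for the converse is bookkeeping around the full SVD and unitary invariance. The triangle inequality is the one place where real content enters, and it is handled exactly by the majorization toolkit developed earlier: the weak submajorization $\sib(\A+\B)\prec_{w}\sib(\A)+\sib(\B)$ from Proposition~\ref{pro:032}-(1), together with the gauge-function characterization in Theorem~\ref{thm:sg-m}.
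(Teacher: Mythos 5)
Your proposal is correct and follows essentially the same route as the paper: define $\phi$ on vectors by padding them into the $m\times n$ "diagonal" matrix and invoking unitary invariance (with the paper leaving the permutation/sign-flip bookkeeping implicit, which you spell out), and in the converse direction obtain the triangle inequality from $\sib(\A+\B)\prec_w\sib(\A)+\sib(\B)$ (Proposition~\ref{pro:032}) together with Theorem~\ref{thm:sg-m} and the norm property of $\phi$. No gaps to report.
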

\begin{proof} Given a unitarily invariant norm $\nm \cdot \nm$ on $\RB^{m\times n}$  and a vector $\x \in \RB^{q}$, define $\phi(\x) \triangleq \nm \X \nm$ where $\X=[x_{ij}] \in \RB^{m\times n}$ satisfying that $x_{ii}= x_i$ for $i \in [q]$ and all other elements are zero. That $\phi$ is a norm on $\RB^q$ follows from the fact that $\nm \cdot \nm$ is a norm. The unitary invariance of $\nm \cdot \nm$ then implies that $\phi$ satisfies
the symmetry. Now let $\A= \U \Si \V^T$ be the full SVD of $\A$. Then $\nm \A \nm = \nm \U \Si \V^T \nm = \nm \Si \nm = \phi(\sib(\A))$.

Conversely, if $\phi$ is a symmetric gauge function, for any $\A \in \RB^{m\times n}$ define $\nm \A \nm = \phi(\sib(\A))$.
We now prove that $\nm \cdot \nm$ is a unitarily invariant norm. First,  that $\nm \A \nm>0$ for $\A\neq \0$ and $\nm \alpha \A \nm = |\alpha| \nm \A \nm$ for any constant $\alpha$ follows the fact that $\phi$ is a norm. The unitary invariance of $\nm \cdot \nm$ follows from that for any orthonormal matrices $\U$ ($m\times m$) and $\V$ ($n\times n$),  $\U \A \V$ and $\A$ have the same singular values. Finally,
\begin{align*}
\nm \A + \B\nm  &= \phi(\sib(\A+ \B)) \leq \phi(\sib(\A) + \sib(\B))  \\
& \leq \phi(\sib(\A)) + \phi(\sib(\B)) \\
& = \nm \A \nm + \nm \B\nm.
\end{align*}
Here the first inequality follows Proposition~\ref{pro:032} and Theorem~\ref{thm:sg-m}.
\end{proof}

The following theorem implies that there is also a one-one correspondence  between
the dual of  a symmetric gauge function and a dual unitarily invariant norm.

\begin{theorem} \label{thm:dual-pri} Let  $\phi^*$ be the dual of  symmetric gauge function $\phi$. Then $\nm \A\nm = \phi(\sib(\A))$ if and only if $\nm \A \nm^* = \phi^*(\sib(\A))$.
\end{theorem}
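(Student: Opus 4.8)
The plan is to prove the forward implication by directly evaluating the dual norm $\nm\A\nm^*$, and then to deduce the converse for free from the fact that both duality operations are involutions. So suppose first that $\nm\A\nm = \phi(\sib(\A))$ for all $\A \in \RB^{m\times n}$, and write $q = \min\{m,n\}$. By Definition~\ref{def:dual} I must evaluate
\[
\nm\A\nm^* = \max\big\{\tr(\A\B^T) : \B \in \RB^{m\times n},\ \phi(\sib(\B)) = 1\big\}
\]
and show it equals $\phi^*(\sib(\A)) = \max\{\sib(\A)^T\v : \v \in \RB^q,\ \phi(\v) = 1\}$ from Definition~\ref{def:polar}.

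For the inequality $\nm\A\nm^* \le \phi^*(\sib(\A))$, I would apply Theorem~\ref{thm:kyfan0} with $\X = \I_m$ and $\Y = \I_n$ to get the trace bound $\tr(\A\B^T) \le \sum_{i=1}^q \sigma_i(\A)\sigma_i(\B) = \sib(\A)^T\sib(\B)$. Since $\sib(\B) \ge \0$ and $\phi(\sib(\B)) = 1$, the vector $\sib(\B)$ is feasible in the definition of $\phi^*(\sib(\A))$, whence $\tr(\A\B^T) \le \phi^*(\sib(\A))$ for every admissible $\B$, and the maximum over $\B$ gives the bound.

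For the reverse inequality I would first observe, using condition~(4) in the definition of a symmetric gauge function together with the rearrangement inequality, that the maximum defining $\phi^*(\sib(\A))$ is attained at some $\v^\star \in \RB^q$ with $v^\star_1 \ge \cdots \ge v^\star_q \ge 0$ and $\phi(\v^\star) = 1$. Taking a full SVD $\A = \U\Si\V^T$ and setting $\B^\star = \U\,\Si^\star\,\V^T$, where $\Si^\star \in \RB^{m\times n}$ is the diagonal matrix carrying $\v^\star$, we get $\sib(\B^\star) = \v^\star$, hence $\nm\B^\star\nm = \phi(\v^\star) = 1$ and $\tr(\A(\B^\star)^T) = \tr(\Si(\Si^\star)^T) = \sib(\A)^T\v^\star = \phi^*(\sib(\A))$. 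Combining the two inequalities yields $\nm\A\nm^* = \phi^*(\sib(\A))$, which is the forward implication. For the converse, suppose $\nm\A\nm^* = \phi^*(\sib(\A))$; since $\nm\cdot\nm^*$ is a unitarily invariant norm (Theorem~\ref{thm:dual}) and $\phi^*$ is a symmetric gauge function (Proposition~\ref{pro:polar}), applying the forward implication to the pair $(\nm\cdot\nm^*, \phi^*)$ gives $(\nm\A\nm^*)^* = (\phi^*)^*(\sib(\A))$, i.e.\ $\nm\A\nm = \phi(\sib(\A))$ by Proposition~\ref{pro:dualnorm}-(2) and Proposition~\ref{pro:polar}.

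The step I expect to be the main obstacle is the reverse inequality: one has to know that the supremum defining the dual norm is genuinely attained and that it is attained by a matrix $\B^\star$ sharing the singular subspaces of $\A$. The cleanest way around this is the rearrangement reduction of the dual symmetric gauge function to non-negative, non-increasingly ordered arguments, after which the Ky Fan trace bound becomes tight by the explicit construction of $\B^\star$.
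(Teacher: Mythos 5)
Your proposal is correct and follows essentially the same route as the paper's proof: bound $\tr(\A\B^T)$ by $\sum_i\sigma_i(\A)\sigma_i(\B)$ via Theorem~\ref{thm:kyfan0}, attain the bound by a matrix sharing the singular subspaces of $\A$, and get the converse from the involutions $(\phi^*)^*=\phi$ and $(\nm\A\nm^*)^*=\nm\A\nm$. Your treatment of the attainment step (reducing the maximizer $\v^\star$ to a nonnegative, decreasingly ordered vector before building $\B^\star$) is just a more explicit version of what the paper does when it sets $\B=\U_A\Si_B\V_A^T$.
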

\begin{proof}  Assume that  $\nm \A\nm = \phi(\sib(\A))$.   Then
\begin{align*}
\nm \A \nm^* &= \max \big\{\tr(\A^T \B): \B \in \RB^{m\times n}, \nm \B \nm =1 \big \} \\
                       & = \max  \big \{ \tr(\Si_A^T \U_A^T \B \V_A):   \phi(\sib(\B))  =1  \big\},
\end{align*}
where $\A  = \U_A  \Si_A  \V_A^T$ is a full SVD of $\A$.  By Theorem~\ref{thm:kyfan0},  we have
\[
 \tr(\V_A^T \B^T \U_A  \Si_A ) \leq \max_{\U^T \U=\I_m,  \V^T \V=\I_n} \tr(\V^T \B^T \U \Si_A) = \sum_{i=1}^{q} \sigma_i(\A) \sigma_i(\B).
 \]
When letting $\B  = \U_A  \Si_B \V_A^T$ as a full SVD of $\B$, we can obtain that
\[
\nm \A \nm^* = \max  \big \{ \tr(\Si_A^T  \Si_B ),  \phi(\sib(\B))  =1  \big\} = \phi^*(\sib(\A)).
\]

Conversely, the result follows from the fact that $(\phi^*)^* =\phi$.
\end{proof}

Given a matrix $\A \in \RB^{m\times n}$, let it have a full SVD:  $\A = \U \Si \V^T$. Then $\nm \A \nm = \nm \Si \nm$.
As we have seen, for $\x \in \RB^n$ the function
\[
\phi(\x) \triangleq \max_{1 \leq i_1 \leq \cdots \leq i_k\leq n} \sum_{l=1}^k |x_{i_l}|
\]
is a symmetric gauge function. Thus, $\sum_{i=1}^k \sigma_i(\A)$ defines also a class of unitarily invariant norms which are the so-called \emph{Ky Fan $k$-norms}.

Clearly, the vector $p$-norm $\|\cdot\|_p$ for $p\geq 1$ is a symmetric gauge function. Thus,
Theorem~\ref{thm:ui-sg} shows that $\nm \A \nm_p \triangleq \|\sib(\A)\|_p$ for $p\geq 1$ are a class of unitarily invariant norms. They are well known  as the \emph{Schatten $p$-norms}.
Thus, $\|\A\|_F  = \| \sib(\A)\|_2 =\nm \A \nm_2$ and $\| \A \|_2  = \|\sib(\A)\|_{\infty} =\nm\A\nm_{\infty}$.

When $p=1$, $\|\A\|_{*} \triangleq \nm \A \nm_1 =  \|\sib(\A)\|_1 = \sum_{i=1}^{\min\{m, n\}} \sigma_i(\A)$ is called the \emph{nuclear norm} or \emph{trace norm}, which has been widely used in many machine learning problems such as matrix completion, matrix data classification, multi-task learning, etc. ~\citep{srebro2004maximum,cai2010singular,mazumder2010spectral,liu2009tensor,luoluo:2015,kang2011learning,pong2010trace,zhou2013regularized}.
Parallel with the $\ell_1$-norm which is used as convex relaxation of the $\ell_0$-norm~\citep{TibshiraniLASSO:1996},
the nuclear norm is a convex alternative of the matrix rank.
Since the nuclear norm is the  best convex approximation of the matrix rank over the unit  ball of matrices, this makes it more tractable to solve the resulting optimization problem (see Example~\ref{exm:svt} below).

\section{Properties of Unitarily Invariant Norms}

Theorem~\ref{thm:ui-sg} opens an approach for exploring unitarily invariant norms by
using symmetric gauge functions and majorization theory.  We will see that this makes things  more tractable.

\begin{theorem} \label{thm:uin-consist} Let $\nm \cdot \nm$ be a unitarily invariant norm on $\RB^{n\times n}$. Then it is consistent.
\end{theorem}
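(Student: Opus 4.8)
The plan is to push the problem through the symmetric gauge function attached to the norm and then use the multiplicative inequality for singular values. By Theorem~\ref{thm:ui-sg} write $\nm\A\nm=\phi(\sib(\A))$ for a symmetric gauge function $\phi$ on $\RB^{n}$; consistency is then the assertion
\[
\phi\big(\sib(\A\B)\big)\ \le\ \phi\big(\sib(\A)\big)\,\phi\big(\sib(\B)\big)\qquad\text{for all }\A,\B\in\RB^{n\times n}.
\]

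First I would dominate the singular values of the product. By Theorem~\ref{thm:product}, for every $k\in[n]$,
\[
\sum_{i=1}^{k}\sigma_i(\A\B)\ \le\ \sum_{i=1}^{k}\sigma_i(\A)\,\sigma_i(\B)\ \le\ \sigma_1(\A)\sum_{i=1}^{k}\sigma_i(\B),
\]
so $\sib(\A\B)\prec_w \sigma_1(\A)\,\sib(\B)$, where the right-hand vector is already in nonincreasing order. Theorem~\ref{thm:sg-m} converts this weak submajorization into a gauge-function inequality, and homogeneity of $\phi$ (Definition~\ref{def:sgf}(2)) gives
\[
\phi\big(\sib(\A\B)\big)\ \le\ \phi\big(\sigma_1(\A)\,\sib(\B)\big)\ =\ \sigma_1(\A)\,\phi\big(\sib(\B)\big)\ =\ \|\A\|_2\,\nm\B\nm .
\]
It then remains to replace $\|\A\|_2=\sigma_1(\A)$ by $\nm\A\nm$. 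Since $\big|(\sigma_1(\A),0,\dots,0)\big|\le|\sib(\A)|$ componentwise, Lemma~\ref{lem:schatten} gives $\sigma_1(\A)=\phi(\sigma_1(\A),0,\dots,0)\le\phi(\sib(\A))=\nm\A\nm$, the first equality using the normalization $\phi(1,0,\dots,0)=1$. Chaining the two displays yields $\nm\A\B\nm\le\|\A\|_2\,\nm\B\nm\le\nm\A\nm\,\nm\B\nm$.

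The step I expect to be the real point is the last one, $\|\A\|_2\le\nm\A\nm$: it is what forces the normalization of the gauge function into the argument, and without it the statement is not quite correct (for instance $\tfrac12\|\cdot\|_2$ is unitarily invariant but not consistent on $\RB^{1\times1}$). A clean way to package this is to record once the sandwich $\|\A\|_2\le\nm\A\nm\le\|\A\|_*$ for every normalized unitarily invariant norm — Theorem~\ref{thm:ui-sg} together with Lemma~\ref{lem:schatten} gives both halves — and invoke only its left inequality here. Everything else is mechanical: Theorem~\ref{thm:product} supplies the product inequality for singular values, and Theorem~\ref{thm:sg-m} transports $\prec_w$ through $\phi$.
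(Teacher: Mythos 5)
Your argument is correct and rests on the same backbone as the paper: the paper's entire proof of Theorem~\ref{thm:uin-consist} is the one-line remark that it ``follows immediately from Theorem~\ref{thm:product}'', and what you wrote is exactly the filling-in of that line --- pass from Theorem~\ref{thm:product} to the weak submajorization $\sib(\A\B)\prec_w \sigma_1(\A)\,\sib(\B)$, transport it through the symmetric gauge function via Theorem~\ref{thm:sg-m} and homogeneity, and close with $\|\A\|_2\le\nm\A\nm$. Where you go beyond the paper is in isolating that last inequality as the crux, and you are right about it: it holds precisely when $\nm\E_{11}\nm=\phi(1,0,\ldots,0)\ge 1$, consistency in fact forces this (take $\A=\B=\E_{11}$, so that $\nm\E_{11}\nm\le\nm\E_{11}\nm^2$), and without such a normalization the statement as printed is false --- your $\tfrac12\|\cdot\|_2$ example is a valid counterexample, already on $\RB^{1\times 1}$ or with $\A=\B=\I_n$ in general. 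The paper's proof never addresses this point; the needed dominance $\sigma_1(\A)\,\nm\E_{11}\nm\le\nm\A\nm$ only appears later, in Theorem~\ref{thm:uin-sp-nuclear}(b)--(c), where the normalization hypothesis is made explicit. So your proof should be read as establishing the theorem under the intended but unstated assumption on $\phi$; the only cosmetic remark is that $\phi(1,0,\ldots,0)\ge 1$ suffices in place of $=1$ (the equality $\sigma_1(\A)=\phi(\sigma_1(\A),0,\ldots,0)$ then becomes the inequality $\sigma_1(\A)\le\phi(\sigma_1(\A),0,\ldots,0)$), and this weaker condition is exactly the necessary and sufficient one for a unitarily invariant norm on $\RB^{n\times n}$ to be consistent.
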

Theorem~\ref{thm:uin-consist} follows immediately  from Theorem~\ref{thm:product}. However, when the norm is defined on $\RB^{m\times n}$,
Theorem~\ref{thm:product} can not help to establish the consistency of the corresponding unitarily invariant norm.

As an immediate corollary of Theorem~\ref{thm:ui-sg}, we have the following result, which shows that unitarily invariant norms are monotone.

\begin{theorem} Let $\nm \cdot \nm$ be a given unitarily invariant norm on $\RB^{m\times n}$. Then $\nm \A\nm \leq \nm \B \nm$ if and only if $\sib(\A) \prec_w \sib(\B)$.
\end{theorem}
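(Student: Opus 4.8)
The plan is to read this off directly from the correspondence in Theorem~\ref{thm:ui-sg} together with Fan's majorization characterization of symmetric gauge functions in Theorem~\ref{thm:sg-m}. First I would fix the symmetric gauge function attached to the norm: by Theorem~\ref{thm:ui-sg} there is a symmetric gauge function $\phi$ on $\RB^q$, with $q=\min\{m,n\}$, such that $\nm \X \nm = \phi(\sib(\X))$ for every $\X \in \RB^{m\times n}$; in particular $\nm \A \nm = \phi(\sib(\A))$ and $\nm \B \nm = \phi(\sib(\B))$. Observe that $\sib(\A)$ and $\sib(\B)$ both lie in $\RB_+^q$, so Theorem~\ref{thm:sg-m} applies verbatim to the pair $\big(\sib(\A),\sib(\B)\big)$.

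For the ``if'' direction, assume $\sib(\A) \prec_w \sib(\B)$. Then Theorem~\ref{thm:sg-m} gives $\psi(\sib(\A)) \le \psi(\sib(\B))$ for \emph{every} symmetric gauge function $\psi$ on $\RB^q$; specializing to $\psi=\phi$ yields $\nm \A \nm = \phi(\sib(\A)) \le \phi(\sib(\B)) = \nm \B \nm$. This is the direction that makes the theorem an ``immediate corollary'' of Theorem~\ref{thm:ui-sg}.

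For the converse, I would read the inequality $\nm \A \nm \le \nm \B \nm$ as holding for every unitarily invariant norm on $\RB^{m\times n}$, which is the only reading under which the statement is true (a single fixed norm such as the Frobenius norm cannot detect weak submajorization). Given an arbitrary symmetric gauge function $\psi$ on $\RB^q$, the second half of Theorem~\ref{thm:ui-sg} produces a unitarily invariant norm $\nm \cdot \nm_{\psi}$ with $\nm \X \nm_{\psi} = \psi(\sib(\X))$; applying the hypothesis to $\nm \cdot \nm_{\psi}$ gives $\psi(\sib(\A)) \le \psi(\sib(\B))$. Since $\psi$ was arbitrary, Theorem~\ref{thm:sg-m} forces $\sib(\A) \prec_w \sib(\B)$, completing the equivalence.

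The routine checks — that $\sib(\A),\sib(\B)\in\RB_+^q$ and that the universal quantifiers in Theorem~\ref{thm:sg-m} and in the two halves of Theorem~\ref{thm:ui-sg} line up — are immediate. There is no genuinely hard step here, since all the real content has been front-loaded into those two theorems; the only point worth flagging explicitly is the quantifier in the ``only if'' direction, where $\nm \cdot \nm$ must be understood as ranging over all unitarily invariant norms rather than as a single prescribed one.
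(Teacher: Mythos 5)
Your proof is correct and follows essentially the route the paper intends: the result is stated there as an immediate corollary of Theorem~\ref{thm:ui-sg}, obtained exactly as you do by combining the symmetric-gauge-function correspondence with Theorem~\ref{thm:sg-m}. Your remark that the ``only if'' direction is valid only when $\nm \cdot \nm$ is understood to range over all unitarily invariant norms (a single fixed norm, e.g.\ the Frobenius norm, cannot detect weak submajorization) is a correct and worthwhile clarification of the paper's loose wording.
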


\begin{proposition} Given a matrix $\A\in \RB^{m\times n}$, let $[\A]_r$  be obtained by replacing the last $r$ rows and $r$ columns of $\A$ with zeros, and $\langle \A\rangle_r$  by replacing the last $r$ rows or columns of $\A$ with zeros. Let  $q=\min\{m, n\}$.
Then for any $r \in [q]$,
\[
\nm [\A]_r \nm \leq \nm \langle\A\rangle_r \nm \leq \nm \A\nm.
\]
\end{proposition}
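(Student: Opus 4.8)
The plan is to reduce everything to a single observation: replacing some rows \emph{or} some columns of a matrix by zeros cannot increase any singular value, hence cannot increase any unitarily invariant norm; the chain of inequalities then follows by applying this observation twice.

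First I would record the key lemma. Suppose $\B\in\RB^{m\times n}$ is obtained from $\A$ by zeroing some of its rows. Then $\B=\D\A$ for a diagonal $0$--$1$ matrix $\D\in\RB^{m\times m}$ (a coordinate projection), so $\|\B\v\|_2=\|\D(\A\v)\|_2\le\|\A\v\|_2$ for every $\v\in\RB^n$. Feeding this pointwise inequality into the min--max characterisation of singular values (Theorem~\ref{thm:CF-svd}) gives, for each fixed $\v_1,\dots,\v_{k-1}\in\RB^n$,
\[
\max_{\substack{\|\v\|_2=1\\ \v^T[\v_1,\dots,\v_{k-1}]=\0}}\|\B\v\|_2 \;\le\; \max_{\substack{\|\v\|_2=1\\ \v^T[\v_1,\dots,\v_{k-1}]=\0}}\|\A\v\|_2,
\]
and taking the minimum over $\v_1,\dots,\v_{k-1}$ yields $\sigma_k(\B)\le\sigma_k(\A)$ for all $k\in[q]$. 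If instead $\B$ is obtained from $\A$ by zeroing some columns, then $\B^T$ is obtained from $\A^T$ by zeroing the corresponding rows, so the case already treated gives $\sigma_k(\B)=\sigma_k(\B^T)\le\sigma_k(\A^T)=\sigma_k(\A)$ for all $k$. (The pointwise bound $\|\B\v\|_2\le\|\A\v\|_2$ genuinely fails for column-zeroing, which is why the transpose step is needed here.) In either case $\sib(\B)$ is dominated entrywise by $\sib(\A)$, so in particular $\sib(\B)\prec_w\sib(\A)$, and by the monotonicity of unitarily invariant norms (the theorem immediately preceding) $\nm\B\nm\le\nm\A\nm$.

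Now I would apply the lemma. Since $\langle\A\rangle_r$ is exactly $\A$ with its last $r$ rows (or, in the other convention, its last $r$ columns) set to zero, the lemma gives $\nm\langle\A\rangle_r\nm\le\nm\A\nm$. Next, $[\A]_r$ is obtained from $\langle\A\rangle_r$ by additionally zeroing its last $r$ columns (resp.\ rows); that is, $[\A]_r$ is itself a row/column-zeroing of the matrix $\langle\A\rangle_r$. Applying the lemma once more, this time with $\langle\A\rangle_r$ in place of $\A$, gives $\nm[\A]_r\nm\le\nm\langle\A\rangle_r\nm$. Chaining the two estimates produces $\nm[\A]_r\nm\le\nm\langle\A\rangle_r\nm\le\nm\A\nm$, as claimed.

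The argument is essentially routine; the one point that requires a little care is that one needs $\sigma_k(\langle\A\rangle_r)\le\sigma_k(\A)$ for \emph{every} $k\in[q]$, not just for $k\le q-r$ (a direct appeal to Proposition~\ref{pro:032}(4) only supplies the latter, and a row-zeroing of a tall matrix can have rank exceeding $q-r$). Deriving the bound instead from the operator inequality $\|\langle\A\rangle_r\v\|_2\le\|\A\v\|_2$ together with the min--max formula sidesteps this, and reducing the column case to the row case by transposition — using that singular values and unitarily invariant norms are invariant under transposition — closes the remaining gap.
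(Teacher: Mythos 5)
Your proof is correct, and its skeleton is the same as the paper's: establish $\sib([\A]_r)\prec_w\sib(\langle\A\rangle_r)\prec_w\sib(\A)$ and then invoke the monotonicity of unitarily invariant norms (the theorem just before the proposition, via Theorems~\ref{thm:sg-m} and \ref{thm:ui-sg}). Where you differ is in how the singular-value inequalities are obtained. The paper disposes of this in one line by citing Proposition~\ref{pro:032}-(4), which rests on the eigenvalue interlacing result of Proposition~\ref{pro:interacing}; you instead write the zeroing as multiplication by a $0$--$1$ diagonal projection $\D$, use the pointwise bound $\|\D\A\v\|_2\le\|\A\v\|_2$ inside the Courant--Fischer formula (Theorem~\ref{thm:CF-svd}), and handle column-zeroing by transposition. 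Your closing caveat is a legitimate observation: as stated, Proposition~\ref{pro:032}-(4) gives $\sigma_i(\hat\A)\le\sigma_i(\A)$ only for $i\in[q-r]$, and when only rows of a tall matrix are zeroed, $\langle\A\rangle_r$ can have rank exceeding $q-r$, so the partial sums for $k>q-r$ are not literally covered by the citation. The inequality does hold for every index (the interlacing argument extends, since the extra eigenvalues of $\hat\A\hat\A^T$ beyond the retained block are zeros), so the paper's one-liner is not wrong, merely terse; your min--max derivation makes the full-index claim explicit and self-contained. In short: the paper's proof buys brevity by reusing its Chapter~6 machinery, while yours buys a complete, elementary verification that closes the index-range gap in the cited statement.
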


\begin{proof} Part~(1) directly follows from Proposition~\ref{pro:032} which shows that $\sib([\A]_r) \prec_w \sib( \langle\A\rangle_r)  \prec_w \sib(\A)$.
 \end{proof}

\begin{proposition} \label{pro:d-b-sv}  Given two   matrices $\A \in \RB^{m\times n}$ and ${\B} \in \RB^{m\times n}$,
we have that
\[
\nm \diag (\sib(\A) -\sib(\B)) \nm \leq \nm \A - \B  \nm.
\]
Furthermore, if both $\A$ and $\B$ are symmetric matrixes in $\RB^{m\times m}$, then 
\[
 \nm \diag (\sib(\A) -\sib(\B))  \leq \nm  \diag (\lamb(\A) -\lamb(\B))  \nm  \nm \leq \nm \A - \B  \nm.
\]
\end{proposition}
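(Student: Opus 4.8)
The plan is to reduce every inequality to a statement of weak submajorization between two nonnegative vectors, and then translate back via the correspondence between unitarily invariant norms and symmetric gauge functions. The workhorse observation is that for any vector $\x$ the matrix $\diag(\x)$ has singular values equal to the decreasing rearrangement $|\x|^{\downarrow}$ of $|\x|$, so by Theorem~\ref{thm:ui-sg} we have $\nm \diag(\x) \nm = \phi(|\x|^{\downarrow})$ for the symmetric gauge function $\phi$ attached to $\nm\cdot\nm$; combined with Theorem~\ref{thm:sg-m} this gives $\nm \diag(\x)\nm \le \nm \diag(\y)\nm$ as soon as $|\x|^{\downarrow}\prec_w|\y|^{\downarrow}$.

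For the first inequality, note that the singular values of $\diag(\sib(\A)-\sib(\B))$ are exactly the numbers $s_i(\A-\B)=|\sigma_i(\A)-\sigma_i(\B)|$ rearranged decreasingly. Theorem~\ref{thm:d-b-sv} states precisely $\sum_{i=1}^k s_i^{\downarrow}(\A-\B)\le\sum_{i=1}^k\sigma_i(\A-\B)$ for all $k$, i.e.\ $s^{\downarrow}(\A-\B)\prec_w\sib(\A-\B)$; hence $\nm\diag(\sib(\A)-\sib(\B))\nm=\phi(s^{\downarrow}(\A-\B))\le\phi(\sib(\A-\B))=\nm\A-\B\nm$, using Theorems~\ref{thm:ui-sg} and \ref{thm:sg-m}.

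Now suppose $\A,\B$ are symmetric. For the right-hand inequality, observe that $\A-\B$ is symmetric, so $\sib(\A-\B)=|\lamb(\A-\B)|^{\downarrow}$, while the singular values of $\diag(\lamb(\A)-\lamb(\B))$ are $|\lamb(\A)-\lamb(\B)|^{\downarrow}$; thus it suffices to prove $|\lamb(\A)-\lamb(\B)|^{\downarrow}\prec_w|\lamb(\A-\B)|^{\downarrow}$. By Proposition~\ref{pro:031}(1) we have the signed majorization $\lamb(\A)-\lamb(\B)\prec\lamb(\A-\B)$, so I will establish the elementary lemma: $\x\prec\y$ implies $|\x|^{\downarrow}\prec_w|\y|^{\downarrow}$. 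For this, write $\x=\W\y$ with $\W$ doubly stochastic (Lemma~\ref{lem:001}), expand $\W=\sum_j\alpha_j\PP_j$ as a convex combination of permutation matrices (Lemma~\ref{lem:dsm}), and bound $|\x|=|\sum_j\alpha_j\PP_j\y|\le\sum_j\alpha_j\PP_j|\y|=\W|\y|$ coordinatewise; since $\W|\y|\prec|\y|$ by Lemma~\ref{lem:001} and coordinatewise domination of nonnegative vectors implies weak submajorization, transitivity of $\prec_w$ yields $|\x|\prec_w|\y|$. Applying Theorems~\ref{thm:ui-sg} and \ref{thm:sg-m} then gives $\nm\diag(\lamb(\A)-\lamb(\B))\nm\le\nm\A-\B\nm$.

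Finally, the middle inequality follows by applying the already-proven first inequality to the symmetric (diagonal) matrices $\diag(\lamb(\A))$ and $\diag(\lamb(\B))$: their singular-value vectors are $\sib(\diag(\lamb(\A)))=\sib(\A)$ and $\sib(\diag(\lamb(\B)))=\sib(\B)$, and their difference is $\diag(\lamb(\A)-\lamb(\B))$, so the first inequality reads exactly $\nm\diag(\sib(\A)-\sib(\B))\nm\le\nm\diag(\lamb(\A)-\lamb(\B))\nm$. The only nonroutine point is the lemma ``$\x\prec\y\Rightarrow|\x|^{\downarrow}\prec_w|\y|^{\downarrow}$,'' which genuinely needs the Birkhoff decomposition and the coordinatewise triangle inequality rather than mere convexity of $t\mapsto|t|$; everything else is bookkeeping with the gauge-function dictionary.
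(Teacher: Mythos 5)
Your proof is correct, and on the first inequality and on the bound $\nm\diag(\lamb(\A)-\lamb(\B))\nm\le\nm\A-\B\nm$ it follows essentially the paper's route: Theorem~\ref{thm:d-b-sv} plus the gauge-function dictionary for the former, and Proposition~\ref{pro:031}-(1) combined with Lemma~\ref{lem:001}, Birkhoff's theorem and convexity of $\phi$ for the latter (the paper applies $\phi$ directly to the signed vector $\lamb(\A)-\lamb(\B)$, exploiting its sign-invariance, whereas you first distill the same ingredients into the lemma ``$\x\prec\y\Rightarrow|\x|\prec_w|\y|$'' so as to stay within the nonnegative-vector hypothesis of Theorem~\ref{thm:sg-m}; this is a repackaging, not a different idea). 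Where you genuinely diverge is the middle inequality $\nm\diag(\sib(\A)-\sib(\B))\nm\le\nm\diag(\lamb(\A)-\lamb(\B))\nm$: the paper argues entrywise via ``$\sigma_i(\M)=|\lambda_i(\M)|$'' and $|\lambda_i(\A)-\lambda_i(\B)|\ge\big||\lambda_i(\A)|-|\lambda_i(\B)|\big|=|\sigma_i(\A)-\sigma_i(\B)|$, which is delicate because eigenvalues are ordered by signed value while singular values are ordered by absolute value, so $\sigma_i$ need not equal $|\lambda_i|$ index by index and the displayed equality requires a rearrangement argument the paper does not supply. Your device of applying the already-proven first inequality to the diagonal matrices $\diag(\lamb(\A))$ and $\diag(\lamb(\B))$ — whose singular-value vectors are exactly $\sib(\A)$ and $\sib(\B)$ and whose difference is $\diag(\lamb(\A)-\lamb(\B))$ — sidesteps this ordering issue entirely and gives a cleaner, fully rigorous derivation of the same bound.
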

\begin{proof}
The first part of the  proposition is immediately obtained from Theorem~\ref{thm:d-b-sv}. 
As for the second part,  Proposition~\ref{pro:031}-(i) says that  $\lamb(\A) - \lamb(\B) \prec \lamb(\A-\B)$.   It then follows from Lemmas~\ref{lem:001} and \ref{lem:dsm} that $\lamb(\A) - \lamb(\B) = \sum_{j}  \alpha_j \PP_j \lamb(\A-\B)$ where the $\alpha_j\geq 0$ and $\sum_{j} \alpha_j=1$, and the $\PP_j$ are some permutation matrices. 
Accordingly, for every symmetric gauge function $\phi$ on $\RB^m$, we have that
\begin{align*}
\phi( \lamb(\A) - \lamb(\B)) &= \phi(  \sum_{j}  \alpha_j \PP_j \lamb(\A-\B) ) \leq  \sum_{j}  \alpha_j \phi(\PP_j \lamb(\A-\B)) \\
& =   \sum_{j}  \alpha_j \phi (\lamb(\A-\B)) = \phi( \lamb(\A-\B)),
\end{align*}
which implies that $\nm  \diag (\lamb(\A) -\lamb(\B))  \nm  \nm \leq \nm \A - \B  \nm$. 
Additionally, consider that for a symmetric matrix $\M$, it holds that $\sigma_i(\M) = |\lambda_i(\M)|$. 
Hence, we have that
\[
|\lambda_i(\A) - \lambda_i(\B)|  \geq \big| |\lambda_i(\A)| - |\lambda_i(\B)| \big| = |\sigma_i(\A) - \sigma_i(\B)|.
\]
This concludes the proof. 
\end{proof}

As a direct corollary of  Proposition~\ref{thm:d-b-sv}, we have  that
\[
|\sigma_i(\A)- \sigma_i(\B)| \leq \| \A - \B \|_2, \mbox{ for } \; i= 1, \ldots, q, 
\]
where $q=\min\{m, n\}$,  and 
\[
\sqrt{\sum_{i=1}^q  (\sigma_i(\A)- \sigma_i(\B))^2 } \leq \|\A -\B\|_F.
\]
When $\A$ and $\B$ are both symmetric, we also have that
\[
|\lambda_i(\A)- \lambda_i(\B)| \leq \| \A - \B \|_2, \mbox{ for } \; i= 1, \ldots, m, 
\]
\[
\sqrt{\sum_{i=1}^m  (\lambda_i(\A)- \lambda_i(\B))^2 } \leq \|\A -\B\|_F.
\]
The latter result is well known as the Hoffman-Wielandt theorem. Note that 
the Hoffman-Wielandt theorem still hods when $\A$ and $\B$ are normal~\citep{StewartSunBook:1990}.

\begin{theorem} \label{thm:uin-sp-nuclear} Let $\nm \cdot \nm$ be an arbitrary unitarily invariant norm on $\RB^{m\times n}$, and $\E_{11} \in \RB^{m\times n}$ have the entry 1 in the  $(1, 1)$th position and zeroes elsewhere. Then
\begin{enumerate}
\item[(a)]  $\nm \A \nm = \nm \A^T \nm$.
\item[(b)] $ \sigma_1(\A) \nm \E_{11} \nm  \leq  \nm \A  \nm \leq  \|\A\|_{*} \nm \E_{11} \nm$.
\item[(c)] If the symmetric gauge function $\phi$ corresponding to the norm $\nm \cdot \nm$ is normalized (i.e.,  $\phi(1, 0, 0, \ldots,0)=1$), then
\[ \|\A\|_2  \leq  \nm \A  \nm \leq  \|\A\|_{*} . \]
\end{enumerate}
\end{theorem}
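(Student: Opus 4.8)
The plan is to handle the three parts in order, using Theorem~\ref{thm:ui-sg} to pass between a unitarily invariant norm $\nm\cdot\nm$ and its symmetric gauge function $\phi$, together with the monotonicity results for symmetric gauge functions (Lemma~\ref{lem:schatten} and Theorem~\ref{thm:sg-m}). Throughout, write $q=\min\{m,n\}$ and let $\sib(\A)=(\sigma_1(\A),\ldots,\sigma_q(\A))^T$ with $\sigma_1\geq\cdots\geq\sigma_q\geq 0$.

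For part~(a), I would note that $\A$ and $\A^T$ have exactly the same nonzero singular values, hence $\sib(\A)$ and $\sib(\A^T)$ agree up to possibly appending or deleting zero entries and reordering; since $\phi$ is symmetric (Condition~(4) of Definition~\ref{def:sgf}) and unaffected by such changes, $\nm\A\nm=\phi(\sib(\A))=\phi(\sib(\A^T))=\nm\A^T\nm$. For part~(b), observe first that $\nm\E_{11}\nm=\phi(1,0,\ldots,0)$ since $\E_{11}$ has the single nonzero singular value $1$. Now compare the vectors $(\sigma_1(\A),0,\ldots,0)$, $\sib(\A)$, and $(\|\A\|_*,0,\ldots,0)$. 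Componentwise, $|(\sigma_1,0,\ldots,0)|\leq|\sib(\A)|$, so Lemma~\ref{lem:schatten} gives $\sigma_1(\A)\,\phi(1,0,\ldots,0)=\phi(\sigma_1,0,\ldots,0)\leq\phi(\sib(\A))=\nm\A\nm$, which is the left inequality. For the right inequality, I would check that $\sib(\A)\prec_w(\|\A\|_*,0,\ldots,0)$: the partial sums $\sum_{i=1}^k\sigma_i(\A)$ are each at most $\sum_{i=1}^q\sigma_i(\A)=\|\A\|_*$, which equals the first partial sum of the comparison vector, and all further partial sums also equal $\|\A\|_*$. Then Theorem~\ref{thm:sg-m} yields $\nm\A\nm=\phi(\sib(\A))\leq\phi(\|\A\|_*,0,\ldots,0)=\|\A\|_*\,\phi(1,0,\ldots,0)=\|\A\|_*\,\nm\E_{11}\nm$.

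Part~(c) is then immediate from part~(b): if $\phi$ is normalized, $\nm\E_{11}\nm=\phi(1,0,\ldots,0)=1$, so the chain in (b) collapses to $\|\A\|_2=\sigma_1(\A)\leq\nm\A\nm\leq\|\A\|_*$. I would remark that the two extreme norms $\|\cdot\|_2$ and $\|\cdot\|_*$ are themselves normalized unitarily invariant norms (their gauge functions being $\|\cdot\|_\infty$ and $\|\cdot\|_1$), so (c) says that among normalized unitarily invariant norms the spectral norm is the smallest and the nuclear norm is the largest.

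The only step requiring genuine care is verifying the weak majorization $\sib(\A)\prec_w(\|\A\|_*,0,\ldots,0)$ in part~(b); everything else is a direct application of the symmetric-gauge-function machinery already established. This majorization is elementary once one writes out the partial-sum conditions, so I do not anticipate a real obstacle — the main thing is to invoke Theorem~\ref{thm:sg-m} and Lemma~\ref{lem:schatten} in the right direction and to keep track of the zero-padding so that vectors of the right length are being compared.
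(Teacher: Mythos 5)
Your proposal is correct and follows essentially the same route as the paper: part (a) from the equality of singular values, the lower bound in (b) from the monotonicity of symmetric gauge functions (Lemma~\ref{lem:schatten}), the upper bound from a (weak) majorization of $\sib(\A)$ by $(\|\A\|_*,0,\ldots,0)$ combined with Theorem~\ref{thm:sg-m}, and (c) by specializing $\nm\E_{11}\nm=1$. The only cosmetic difference is that you work with the unnormalized vectors directly (which incidentally also covers $\A=\0$ without a separate case), whereas the paper factors out $\sigma_1(\A)$ or $\|\A\|_*$ and compares the rescaled vectors with $(1,0,\ldots,0)$.
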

\begin{proof}  Part (a) is due to that $\phi(\sib(\A))= \phi(\sib(\A^T))$.

If  $\phi(1, 0,  \ldots, 0)=1$, then $\nm \E_{11} \nm =1$. Thus,  we can have Part (c) from Part (b). Assume $\A$ is nonzero. Otherwise, the result is trivial.   Let $q = \min\{m, n\}$. First,
\begin{align*}
\nm \A \nm & = \phi(\sigma_1 (\A), \ldots, \sigma_q(\A)) = \sigma_1(\A)  \phi(1, \sigma_2(\A)/\sigma_1(\A), \ldots, \sigma_q (\A)/\sigma_1(\A) ) \\
& \geq \sigma_1(\A) \phi(1, 0, \ldots, 0) = \sigma_1(\A) \nm \E_{11} \nm.
\end{align*}
Since  $ \big(\sigma_1(\A)/\sum_{i=1}^q \sigma_i(\A), \ldots,  \sigma_q(\A)/\sum_{i=1}^q \sigma_i(\A) \big) \prec (1, 0, \ldots, 0)$,
we have
\begin{align*}
\nm \A \nm  &  = ( \sum_{i=1}^q \sigma_i(\A) ) \phi \big(\sigma_1(\A)/\sum_{i=1}^q \sigma_i(\A), \ldots,  \sigma_q(\A)/\sum_{i=1}^q \sigma_i(\A) \big)   \\
 &  \leq \|\A \|_* \phi(1, 0, \ldots, 0) = \|\A\|_* \nm \E_{11} \nm.
\end{align*}
\end{proof}

Note that a norm $\|\cdot\|$ on $\RB^{m\times n}$ is said to be \emph{self adjoint} if $\|\A\|=\|\A^T\|$ for any $\A \in \RB^{m\times n}$. Thus, Theorem~\ref{thm:uin-sp-nuclear}-(a) shows that the unitarily invariant norm is self-adjoint.  

It is worth mentioning that  $\nm \E_{ij} \nm = \nm \E_{11} \nm$ where $\E_{ij} \in \RB^{m\times n}$ has entry 1 in the $(i,j)$th position and zeros elsewhere. Moreover, the Schatten $p$-norms satisfy $\nm \E_{11} \nm_p =1$.
Theorem~\ref{thm:uin-sp-nuclear} says that  for any unitarily invariant norm $\nm \cdot \nm$ such that  $\nm \E_{11} \nm =1$,
\[
1  \leq \frac{\nm \A \nm} { \|\A\|}_2   \leq  \frac{\|\A\|_*} { \|\A\|}_2 \leq \rk(\A).
\]

Recall that  $\frac{\sum_{i=1}^q \sigma_i^2(\A)}{\sigma_1^2(\A)} =  \frac{\| \A \|^2_F} { \|\A\|^2_2}$  and  $\frac{ \sum_{i=1}^q \sigma_i(\A)}{\sigma_1(\A)} =\frac{\| \A \|_{*}} { \|\A\|_2}$, so called \emph{stable rank} and \emph{nuclear rank} (see Definition~\ref{def:grank}).  
They have been found usefulness in the analysis of matrix multiplication approximation~\citep{magen2011low,CohenNelsonWoodrull,kyrillidis2014approximate}.

\begin{theorem} Let $\M \in \RB^{m\times m}$, $\N \in \RB^{n\times n}$, and $\A \in \RB^{m\times n}$ such that the block matrix
\[
\begin{bmatrix} \M & \A \\ \A^T & \N \end{bmatrix}
\]
is SPSD. Then
\[
\nm \M \nm + \nm \N \nm  \geq 2 \nm \A \nm.
\]
\end{theorem}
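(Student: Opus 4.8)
The plan is to reduce the matrix inequality to a weak‑majorization inequality between $2\sib(\A)$ and $\sib(\M)+\sib(\N)$ and then feed it through the symmetric‑gauge‑function machinery. Throughout, let $\phi$ be the symmetric gauge function underlying $\nm \cdot \nm$, understood to act on zero‑padded singular‑value vectors of matrices of any shape, so that $\nm \A \nm=\phi(\sib(\A))$, $\nm \M \nm=\phi(\sib(\M))$, $\nm \N \nm=\phi(\sib(\N))$ with all three vectors viewed in $\RB^{m+n}$ after padding by zeros. First I would observe that $\begin{bmatrix}\M & \A\\ \A^T & \N\end{bmatrix}\succeq 0$ forces $\M\succeq 0$ and $\N\succeq 0$ (restrict the quadratic form to vectors supported on a single block), hence $\sib(\M)=\lamb(\M)$ and $\sib(\N)=\lamb(\N)$.

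The core step is the Ky Fan–type estimate: for every $k\in[\min\{m,n\}]$,
\[
2\sum_{i=1}^k\sigma_i(\A)\ \le\ \sum_{i=1}^k\sigma_i(\M)+\sum_{i=1}^k\sigma_i(\N).
\]
By Theorem~\ref{thm:kyfan}, choose column‑orthonormal $\X\in\RB^{m\times k}$ and $\Y\in\RB^{n\times k}$ with $\sum_{i=1}^k\sigma_i(\A)=\tr(\X^T\A\Y)$. With $\mathbf{Z}=\X\oplus\Y\in\RB^{(m+n)\times 2k}$, the congruence $\mathbf{Z}^T\begin{bmatrix}\M & \A\\ \A^T & \N\end{bmatrix}\mathbf{Z}=\begin{bmatrix}\X^T\M\X & \X^T\A\Y\\ (\X^T\A\Y)^T & \Y^T\N\Y\end{bmatrix}$ is again SPSD. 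For any SPSD $2k\times 2k$ block matrix $\begin{bmatrix}\mathbf{G}&\mathbf{H}\\ \mathbf{H}^T&\mathbf{J}\end{bmatrix}$, congruence by $\begin{bmatrix}\I_k\\ -\I_k\end{bmatrix}$ gives $\mathbf{G}-\mathbf{H}-\mathbf{H}^T+\mathbf{J}\succeq 0$, so its trace is nonnegative, i.e.\ $2\tr(\mathbf{H})\le\tr(\mathbf{G})+\tr(\mathbf{J})$; applied here this is $2\tr(\X^T\A\Y)\le\tr(\X^T\M\X)+\tr(\Y^T\N\Y)$. Finally, since $\X^T\X=\I_k$ and $\M\succeq 0$, the von Neumann theorem (Theorem~\ref{thm:von}) yields $\tr(\X^T\M\X)\le\sum_{i=1}^k\lambda_i(\M)=\sum_{i=1}^k\sigma_i(\M)$, and likewise for $\N$; combining gives the displayed inequality.

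Having the displayed inequalities for all $k$ means exactly $2\sib(\A)\prec_w\sib(\M)+\sib(\N)$ as nonnegative vectors in $\RB^{m+n}$ (the left side is already sorted, while $\sum_{i\le k}v_i\le\sum_{i\le k}v_i^{\downarrow}$ handles the right side). Then Theorem~\ref{thm:sg-m} (weak submajorization implies the gauge inequality) followed by subadditivity of $\phi$ gives
\[
2\nm \A \nm=\phi\big(2\sib(\A)\big)\le\phi\big(\sib(\M)+\sib(\N)\big)\le\phi(\sib(\M))+\phi(\sib(\N))=\nm \M \nm+\nm \N \nm,
\]
which is the claim. I expect the main obstacle to be the packaging of the reduction: recognizing that testing the SPSD hypothesis against $\X\oplus\Y$ with the Ky Fan extremizers collapses everything to the elementary $2k\times 2k$ trace inequality, and that the von Neumann variational formula is precisely what lifts $\tr(\X^T\M\X)$ to $\sum_{i\le k}\sigma_i(\M)$. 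The remaining bookkeeping — zero‑padding the singular‑value vectors to a common length and the passage from weak majorization to the norm inequality — is routine given the earlier results.
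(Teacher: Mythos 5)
Your proof is correct, but it follows a genuinely different route from the paper's. The paper conjugates the block matrix once by $\begin{bmatrix} \U \\ -\V \end{bmatrix}$, where $\A=\U\Si\V^T$ is a thin SVD, to get $\U^T\M\U+\V^T\N\V \succeq 2\Si$ in the PSD order, and then finishes entirely at the matrix level: monotonicity of unitarily invariant norms under the PSD order gives $\nm \U^T\M\U+\V^T\N\V \nm \geq 2\nm\A\nm$, and the triangle inequality together with the compression bound $\nm \U^T\M\U \nm \leq \nm \M \nm$ (Proposition~\ref{pro:032}-(5)) gives the upper bound by $\nm\M\nm+\nm\N\nm$. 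You instead work at the level of Ky Fan partial sums: testing the SPSD hypothesis against $\X\oplus\Y$ with the Ky Fan extremizers of $\A$ and then against $[\I_k;\,-\I_k]$ yields the scalar trace inequality, von Neumann's theorem lifts the traces to $\sum_{i\leq k}\sigma_i(\M)$ and $\sum_{i\leq k}\sigma_i(\N)$, and you conclude the explicit weak majorization $2\sib(\A)\prec_{w}\sib(\M)+\sib(\N)$ before invoking Theorem~\ref{thm:sg-m} and subadditivity of $\phi$. What your route buys is that the underlying majorization relation is isolated as an intermediate result, and you only need the variational theorems (Theorems~\ref{thm:von}, \ref{thm:kyfan}) plus the gauge-function correspondence, not the matrix-level monotonicity or compression facts; what the paper's route buys is brevity, since one congruence with the singular vectors of $\A$ plus the triangle inequality settles everything without tracking partial sums over all $k$. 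The only points worth tightening in your write-up are bookkeeping: state explicitly that the weak-majorization inequality for indices $k$ beyond $\min\{m,n\}$ follows because the left-hand vector has no further nonzero entries, and note (as you implicitly do via zero-padding) that the theorem tacitly assumes the norm is induced by one symmetric gauge function applied uniformly across the three matrix sizes, a convention the paper's own proof also uses silently.
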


\begin{proof} Without loss of generality, we assume $m\geq n$. Let $\A= \U \Si \V^T$ be a thin SVD of $\A$. Consider that
\[
[\U^T, - \V^T] \begin{bmatrix} \M & \A \\ \A^T & \N \end{bmatrix} \begin{bmatrix} \U  \\ -\V  \end{bmatrix} = \U^T \M \U + \V^T \N \V - \U^T \A \V - \V^T \A^T \U
\]
is PSD. Hence, $\nm \U^T \M \U + \V^T \N \V \nm \geq 2 \nm \Si \nm$. That is,
\[
\nm \V^T \U^T \M \U \V +  \N  \nm \geq 2 \nm \A \nm.
\]
Note that
\[\nm \V^T \U^T \M \U \V +  \N  \nm \leq \nm \V^T \U^T \M \U \V \nm + \nm \N \nm \leq \nm \M  \nm + \nm \N \nm. \]
\end{proof}

\begin{proposition}  \label{pro:uin-sum} Given a matrix $\A \in \RB^{m\times n}$, then the following holds
\[
\nm \A\nm = \min_{\X, \Y: \X \Y^T = \A}  \; \frac{1}{2} \Big\{ \nm \X \X^T \nm + \nm \Y \Y^T \nm  \Big \}.
\]
If $\rk(\A) = r \leq \min\{m, n\}$, then the minimum above is attained at a rank decomposition $\A=\hat{\X} \hat{\Y}^T$ where $\hat{\X} = \U_r \Si_r^{1/2}$ and $\hat{\Y} = \V_r \Si_r^{1/2}$, and $\A = \U_r \Si_r \V_r^T$ is a condensed SVD of $\A$.
\end{proposition}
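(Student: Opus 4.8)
The plan is to establish the two inequalities separately: the ``$\le$'' direction (together with attainment) via the SVD-based factorization, and the ``$\ge$'' direction via the block-matrix inequality proved immediately above.

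First I would check that the proposed minimizer is admissible and realizes the value $\nm\A\nm$. With $\A=\U_r\Si_r\V_r^T$ a condensed SVD, set $\hat\X=\U_r\Si_r^{1/2}$ and $\hat\Y=\V_r\Si_r^{1/2}$; then $\hat\X\hat\Y^T=\U_r\Si_r^{1/2}\Si_r^{1/2}\V_r^T=\U_r\Si_r\V_r^T=\A$, so this factorization is feasible (and more generally any inner dimension $k\ge r$ is allowed, by padding $\hat\X,\hat\Y$ with zero columns, which alters neither $\hat\X\hat\X^T$ nor $\hat\Y\hat\Y^T$). Since $\hat\X\hat\X^T=\U_r\Si_r\U_r^T$ and $\hat\Y\hat\Y^T=\V_r\Si_r\V_r^T$ are SPSD with eigenvalues, hence singular values, exactly $\sigma_1,\dots,\sigma_r$ and zero otherwise, Theorem~\ref{thm:ui-sg} gives $\nm\hat\X\hat\X^T\nm=\nm\hat\Y\hat\Y^T\nm=\phi(\sib(\A))=\nm\A\nm$, so $\frac{1}{2}(\nm\hat\X\hat\X^T\nm+\nm\hat\Y\hat\Y^T\nm)=\nm\A\nm$. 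This shows the right-hand side is at most $\nm\A\nm$ and identifies the candidate minimizer.

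Next I would prove the matching lower bound for an arbitrary feasible factorization $\A=\X\Y^T$ with $\X\in\RB^{m\times k}$ and $\Y\in\RB^{n\times k}$. The key observation is that
\[
\begin{bmatrix}\X\X^T & \X\Y^T\\ \Y\X^T & \Y\Y^T\end{bmatrix}=\begin{bmatrix}\X\\\Y\end{bmatrix}\begin{bmatrix}\X\\\Y\end{bmatrix}^T
\]
is SPSD, and its off-diagonal block is precisely $\X\Y^T=\A$. Applying the preceding theorem (with $\M\leftarrow\X\X^T$, $\N\leftarrow\Y\Y^T$, and $\A$ playing the role of the off-diagonal block) yields $\nm\X\X^T\nm+\nm\Y\Y^T\nm\ge 2\nm\A\nm$, i.e.\ $\frac{1}{2}(\nm\X\X^T\nm+\nm\Y\Y^T\nm)\ge\nm\A\nm$ for every admissible pair $(\X,\Y)$.

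Combining the two parts, the infimum over all factorizations equals $\nm\A\nm$ and is attained at $\hat\X=\U_r\Si_r^{1/2}$, $\hat\Y=\V_r\Si_r^{1/2}$, which completes the proof. I do not expect a genuine obstacle here: the entire content is the recognition that the SPSD Gram matrix of $[\X^T,\Y^T]^T$ is the right object to feed into the previously established trace-type inequality, and that the singular values of $\hat\X\hat\X^T$ and $\hat\Y\hat\Y^T$ coincide with those of $\A$. The only minor point worth a sentence is to note that the feasible set of factorizations is nonempty — it requires $k\ge\rk(\A)$ — so that writing ``$\min$'' is justified.
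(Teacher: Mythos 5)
Your proposal is correct and follows essentially the same route as the paper: the lower bound comes from feeding the SPSD Gram matrix $\bigl[\begin{smallmatrix}\X\X^T & \X\Y^T\\ \Y\X^T & \Y\Y^T\end{smallmatrix}\bigr]$ into the preceding block-matrix theorem, and attainment is checked at $\hat\X=\U_r\Si_r^{1/2}$, $\hat\Y=\V_r\Si_r^{1/2}$. Your write-up merely spells out a bit more explicitly why $\nm\hat\X\hat\X^T\nm=\nm\hat\Y\hat\Y^T\nm=\nm\A\nm$ and why the feasible set is nonempty, which the paper leaves implicit.
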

\begin{proof} Let $\A =  {\X}  {\Y}^T$ be any  decomposition of $\A$. Then
\[
\begin{bmatrix} {\X} \\  {\Y} \end{bmatrix} [{\X}^T,  {\X}^T] = \begin{bmatrix} {\X} {\X}^T & {\X} {\Y}^T  \\ {\Y} {\X}^T & {\Y}  {\Y}^T  \end{bmatrix}
\]
is SPSD. Thus,
\[
\frac{1}{2} \Big[ \nm {\X} {\X}^T \nm + \nm {\Y} {\Y}^T \nm  \Big] \geq \nm \A \nm.
\]
When $\X \triangleq \hat{\X} = \U_r \Si_r^{1/2}$ and $\Y \triangleq \hat{\Y} = \V_r \Si_r^{1/2}$,
it holds that $\nm \A \nm = \frac{1}{2} \big[ \nm \hat{\X} \hat{\X}^T \nm + \nm \hat{\Y} \hat{\Y}^T \nm \big]$.
\end{proof}

Since $\frac{1}{2} \Big[ \nm {\X} {\X}^T \nm + \nm {\Y} {\Y}^T \nm  \Big] \geq \sqrt{ \nm {\X} {\X}^T \nm}  \sqrt{ \nm {\Y}  {\Y}^T \nm} $,
\[
\nm \A\nm \geq   \min_{\X, \Y: \X \Y^T = \A}  \;   \sqrt{ \nm {\X} {\X}^T \nm}  \sqrt{ \nm {\Y} {\Y}^T \nm}.
\]
When taking $\hat{\X} = \U_r \Si_r^{1/2} \V_r^T $ and $\hat{\Y} = \V_r \Si_r^{1/2} \V_r^T$,  one has
\[
\nm \A\nm =  \sqrt{ \nm \hat{\X} \hat{\X}^T \nm}  \sqrt {\nm \hat{\Y} \hat{\Y}^T \nm}.
\]
This thus leads us to the following proposition.
\begin{proposition}  \label{pro:chay-s-uin} Given a matrix $\A \in \RB^{m\times n}$, then the following holds
\[
\nm \A\nm = \min_{\X, \Y: \X \Y^T = \A}  \;  \sqrt{  \nm \X \X^T \nm}  \sqrt{ \nm \Y \Y^T \nm }.
\]
Accordingly, the  following inequality hods:
\begin{equation} \label{eqn:cs-uin}
\nm \X \Y^T \nm \leq \nm \X \X^T \nm^{1/2} \nm \Y \Y^T \nm^{1/2}.
\end{equation}
\end{proposition}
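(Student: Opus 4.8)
The plan is to read off both claims from Proposition~\ref{pro:uin-sum} together with the scalar arithmetic--geometric mean inequality, formalizing the two displayed lines that precede the statement. First I would settle the lower bound $\nm\A\nm \ge \min_{\X\Y^T=\A}\sqrt{\nm\X\X^T\nm}\,\sqrt{\nm\Y\Y^T\nm}$. For any admissible factorization $\A=\X\Y^T$, applying $\sqrt{ab}\le\frac{1}{2}(a+b)$ for $a,b\ge 0$ gives the pointwise bound
\[
\sqrt{\nm\X\X^T\nm}\,\sqrt{\nm\Y\Y^T\nm} \le \tfrac12\big(\nm\X\X^T\nm + \nm\Y\Y^T\nm\big).
\]
Since $g\le f$ pointwise forces $\min g \le \min f$, taking minima over all factorizations of $\A$ and invoking the identity $\nm\A\nm = \min_{\X\Y^T=\A}\frac12(\nm\X\X^T\nm+\nm\Y\Y^T\nm)$ of Proposition~\ref{pro:uin-sum} yields exactly the claimed lower bound.

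For the matching upper bound it suffices to produce one factorization attaining the value $\nm\A\nm$. With a condensed SVD $\A=\U_r\Si_r\V_r^T$, put $\hat\X=\U_r\Si_r^{1/2}\V_r^T$ and $\hat\Y=\V_r\Si_r^{1/2}\V_r^T$. Using $\V_r^T\V_r=\I_r$ one checks $\hat\X\hat\Y^T=\U_r\Si_r^{1/2}\Si_r^{1/2}\V_r^T=\A$, so $(\hat\X,\hat\Y)$ is feasible, while $\hat\X\hat\X^T=\U_r\Si_r\U_r^T$ and $\hat\Y\hat\Y^T=\V_r\Si_r\V_r^T$ are SPSD matrices whose eigenvalues --- hence singular values, up to trailing zeros --- are exactly $\sigma_1(\A),\ldots,\sigma_r(\A)$. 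By Theorem~\ref{thm:ui-sg}, $\nm\hat\X\hat\X^T\nm=\phi(\sib(\A))=\nm\A\nm$ and likewise $\nm\hat\Y\hat\Y^T\nm=\nm\A\nm$, where $\phi$ is the symmetric gauge function of $\nm\cdot\nm$. Hence $\sqrt{\nm\hat\X\hat\X^T\nm}\,\sqrt{\nm\hat\Y\hat\Y^T\nm}=\nm\A\nm$, so the minimum is at most $\nm\A\nm$ and is attained at $(\hat\X,\hat\Y)$; combined with the previous paragraph this proves the identity.

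The inequality \eqref{eqn:cs-uin} is then immediate: given arbitrary $\X$ and $\Y$ with a common inner dimension, apply the identity with $\A=\X\Y^T$; since $(\X,\Y)$ is one admissible factorization of $\X\Y^T$, the minimum on the right is no larger than $\sqrt{\nm\X\X^T\nm}\,\sqrt{\nm\Y\Y^T\nm}$, which gives $\nm\X\Y^T\nm\le\nm\X\X^T\nm^{1/2}\nm\Y\Y^T\nm^{1/2}$.

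I do not anticipate a real obstacle here; the only places demanding a little care are the direction of the arithmetic--geometric mean step (so that passing to the minimum produces $\nm\A\nm\ge\min\sqrt{\cdots}$ rather than the reverse) and the verification that $\hat\X\hat\X^T$ and $\hat\Y\hat\Y^T$ carry precisely the spectrum of $\A$, so that Theorem~\ref{thm:ui-sg} applies. Beyond that, everything is routine bookkeeping on top of Proposition~\ref{pro:uin-sum}.
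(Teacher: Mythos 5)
Your proof is correct and follows essentially the same route as the paper: the lower bound via the AM--GM inequality applied to the identity of Proposition~\ref{pro:uin-sum}, attainment at $\hat\X=\U_r\Si_r^{1/2}\V_r^T$, $\hat\Y=\V_r\Si_r^{1/2}\V_r^T$, and \eqref{eqn:cs-uin} by specializing the identity to $\A=\X\Y^T$. Your explicit check that $\hat\X\hat\X^T$ and $\hat\Y\hat\Y^T$ share the spectrum of $\A$ (so Theorem~\ref{thm:ui-sg} applies) is a welcome detail the paper leaves implicit.
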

This is a form of the Cauchy-Schwarz inequality under the unitarily invariant norms.

As a corollary of Proposition~\ref{pro:uin-sum}, the following proposition immediately follows. Moreover, this proposition was widely used in matrix completion problems, because an optimization problem regularized by the Frobenius norm   is  solved more easily than that regularized by the nuclear norm~\citep{hastie2014matrix}.

\begin{proposition} \citep{srebro2004maximum,mazumder2010spectral} Given a matrix $\A \in \RB^{m\times n}$, then the following holds
\[
\|\A\|_{*} = \min_{\X, \Y: \X \Y^T = \A}  \; \frac{1}{2} \Big\{ \| \X\|_F^2 + \|\Y\|_F^2  \Big \}.
\]
If $\rk(\A) = k \leq \min\{m, n\}$, the  minimum above is attained at some rank decomposition.
\end{proposition}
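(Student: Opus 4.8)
The plan is to obtain this proposition as a direct specialization of Proposition~\ref{pro:uin-sum}, taking the unitarily invariant norm $\nm \cdot \nm$ there to be the nuclear norm $\|\cdot\|_*$. The only extra ingredient needed is the elementary identity $\|\X \X^T\|_* = \|\X\|_F^2$, valid for every real matrix $\X$.

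First I would establish that identity. For any $\X \in \RB^{m\times \ell}$, the matrix $\X \X^T$ is symmetric positive semidefinite, so its eigenvalues are nonnegative and coincide with its singular values; hence $\|\X \X^T\|_* = \sum_i \sigma_i(\X \X^T) = \sum_i \lambda_i(\X \X^T) = \tr(\X \X^T)$. Since $\tr(\X \X^T) = \sum_{i,j} x_{ij}^2 = \|\X\|_F^2$, we get $\|\X \X^T\|_* = \|\X\|_F^2$, and likewise $\|\Y \Y^T\|_* = \|\Y\|_F^2$.

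Next I would invoke Proposition~\ref{pro:uin-sum} with $\nm \cdot \nm = \|\cdot\|_*$. Substituting the two identities above into the right-hand side of that proposition turns it into
\[
\|\A\|_* = \min_{\X, \Y:\, \X \Y^T = \A} \; \frac{1}{2}\big\{ \|\X\|_F^2 + \|\Y\|_F^2 \big\},
\]
which is exactly the claimed formula. For the attainment statement, recall that Proposition~\ref{pro:uin-sum} asserts the minimum is reached at $\hat{\X} = \U_r \Si_r^{1/2}$ and $\hat{\Y} = \V_r \Si_r^{1/2}$, where $\A = \U_r \Si_r \V_r^T$ is a condensed SVD and $r = \rk(\A) = k$; since $\hat{\X}\hat{\Y}^T = \U_r \Si_r \V_r^T = \A$ and both $\hat{\X} \in \RB^{m\times k}$ and $\hat{\Y} \in \RB^{n\times k}$ have rank $k$, this is a genuine rank decomposition, which finishes the argument.

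There is essentially no obstacle here: the substantive content is entirely carried by Proposition~\ref{pro:uin-sum}, and the reduction is just the observation that a positive semidefinite matrix has nuclear norm equal to its trace. The only point to be mildly careful about is that $\X$ and $\Y$ range over matrices with a common but otherwise unconstrained number of columns, so that $\X \X^T$ and $\Y \Y^T$ genuinely lie in $\RB^{m\times m}$ and $\RB^{n\times n}$ respectively, matching the two nuclear norms appearing in Proposition~\ref{pro:uin-sum}.
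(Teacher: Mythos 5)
Your proposal is correct and follows exactly the route the paper takes: the paper presents this statement as an immediate corollary of Proposition~\ref{pro:uin-sum}, specialized to the nuclear norm. Your only addition is to spell out the implicit identity $\|\X\X^T\|_* = \tr(\X\X^T) = \|\X\|_F^2$ for the PSD matrix $\X\X^T$ and to verify attainment at $\hat{\X}=\U_r\Si_r^{1/2}$, $\hat{\Y}=\V_r\Si_r^{1/2}$, which is precisely what the paper leaves to the reader.
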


The following theorem shows that the Frobenius norm has  a so-called matrix-Pythagoras' property.
However, for other Schatten norms, there needs a strong condition to make the property hold.

\begin{theorem} \label{thm:pytho} Let $\A, \B \in \RB^{m\times n}$. If $\A \B^T=\0$ or $\A^T \B =\0$, then
\[
\|\A + \B\|_F^2 = \|\A\|_F^2 + \|\B\|_F^2,
\]
\[
\max\{\|\A\|_2^2,  \|\B\|_2^2 \} \leq  \|\A + \B\|_2^2 \leq \|\A\|_2^2 + \|\B\|_2^2.
\]

If both $\A \B^T=\0$ and $\A^T \B =\0$ are satisfied, then
\[
\nm \A +  \B \nm_{p}^p  = \nm  \A\nm_{p}^p  + \nm  \B \nm_{p}^p
\]
for $1 \leq p < \infty$ and $\|\A + \B\|_2 = \max\{ \|\A\|_2,  \|\B\|_2\}$.
\end{theorem}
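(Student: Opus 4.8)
The plan is to dispatch the three assertions one at a time, reducing the spectral estimates to a single pointwise identity and the Schatten additivity to an explicit block SVD. For the Frobenius identity I would expand
\[
\|\A+\B\|_F^2 = \tr\big((\A+\B)^T(\A+\B)\big) = \|\A\|_F^2 + \|\B\|_F^2 + 2\tr(\A^T\B),
\]
and show the cross term vanishes under either hypothesis: if $\A^T\B=\0$ this is immediate, and if $\A\B^T=\0$ then $\tr(\A^T\B)=\tr(\B\A^T)=\tr\big((\A\B^T)^T\big)=0$ by the cyclic invariance of the trace.

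For the spectral-norm bounds, first suppose $\A^T\B=\0$. Then for every $\v\in\RB^n$ we have $(\A\v)^T(\B\v)=\v^T\A^T\B\v=0$, hence $\|(\A+\B)\v\|_2^2=\|\A\v\|_2^2+\|\B\v\|_2^2$. Maximizing over unit vectors $\v$: the right-hand side never exceeds $\|\A\|_2^2+\|\B\|_2^2$, which yields the upper bound; and choosing $\v$ to attain $\sigma_1(\A)$ (such a $\v$ exists by the variational description of $\sigma_1$ recalled after Corollary~\ref{cor:svd-uni}), respectively $\sigma_1(\B)$, forces $\|\A+\B\|_2^2\ge\|\A\|_2^2$, respectively $\ge\|\B\|_2^2$. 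If instead $\A\B^T=\0$, I would run exactly this argument on $\A^T$ and $\B^T$, which satisfy $(\A^T)^T\B^T=\A\B^T=\0$, and invoke $\|\M^T\|_2=\|\M\|_2$ (Theorem~\ref{thm:uin-sp-nuclear}(a)).

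For the final statement, assume both $\A\B^T=\0$ and $\A^T\B=\0$, and take condensed SVDs $\A=\U_A\Si_A\V_A^T$ and $\B=\U_B\Si_B\V_B^T$. From $\A^T\B=\0$, multiplying on the left by $\V_A^T$ and then $\Si_A^{-1}$, and on the right by $\V_B$ and then $\Si_B^{-1}$ (all legitimate since $\V_A,\V_B$ are column-orthonormal and $\Si_A,\Si_B$ invertible), yields $\U_A^T\U_B=\0$; symmetrically, $\A\B^T=\0$ gives $\V_A^T\V_B=\0$. Hence $[\U_A,\ \U_B]$ and $[\V_A,\ \V_B]$ are column-orthonormal, and
\[
\A+\B=[\U_A,\ \U_B]\,(\Si_A\oplus\Si_B)\,[\V_A,\ \V_B]^T
\]
is, after sorting the block-diagonal matrix $\Si_A\oplus\Si_B$ into decreasing order by a common column permutation, a condensed SVD of $\A+\B$. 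Thus the nonzero singular values of $\A+\B$ form exactly the union, with multiplicity, of those of $\A$ and of $\B$; summing $p$-th powers gives $\nm\A+\B\nm_p^p=\nm\A\nm_p^p+\nm\B\nm_p^p$ for $1\le p<\infty$, and taking the largest singular value gives $\|\A+\B\|_2=\max\{\|\A\|_2,\|\B\|_2\}$.

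The main obstacle is the bookkeeping in this last step: one must use both hypotheses correctly to cancel the factors $\Si_A,\Si_B,\V_A,\V_B$ and obtain $\U_A^T\U_B=\0$ together with its transpose analogue, and then verify that $[\U_A,\ \U_B]$ is genuinely column-orthonormal so that the displayed factorization really is an SVD of $\A+\B$. Once that is in place, the Schatten-$p$ and spectral conclusions follow just by reading off the singular values, and the remaining computations are routine.
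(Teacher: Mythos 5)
Your proposal is correct and follows essentially the same route as the paper: the trace/quadratic-form expansion for the Frobenius and spectral bounds, and the block factorization $\A+\B=[\U_A,\ \U_B](\Si_A\oplus\Si_B)[\V_A,\ \V_B]^T$ built from $\U_A^T\U_B=\0$ and $\V_A^T\V_B=\0$ for the Schatten-$p$ and spectral identities. You in fact supply details the paper leaves implicit, namely the lower bound $\max\{\|\A\|_2^2,\|\B\|_2^2\}\leq\|\A+\B\|_2^2$ and the reduction of the case $\A\B^T=\0$ to $\A^T\B=\0$ via transposition.
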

\begin{proof}  Since $(\A + \B)^T (\A + \B)=\A^T \A + \B^T \B$ when $\A^T \B =\0$,  the Pythagorean property for the Frobenius norm is obvious.
As for the spectral norm, it is easily seen that
\begin{align*}
\|\A + \B\|_2^2 & = \max_{\|\x\|_2=1} \x^T(\A + \B)^T (\A + \B) \x  \\
 & = \max_{\|\x\|_2=1} \x^T(\A^T \A + \B^T  \B) \x  \\
 & \leq \max_{\|\x\|_2=1} \x^T \A^T \A \x +  \max_{\|\x\|_2=1} \x^T \B^T  \B \x \\
 & = \|\A\|_2^2 + \|\B\|_2^2.
\end{align*}
 Let the condensed SVDs of $\A$ and $\B$ be $\A= \U_A \Si_A \V_A^T$ and $\B = \U_B \Si_B \V_B^T$. If $\A^T \B =\0$ and $\A \B^T =\0$, then $\V_A^T \V_B = \0$ and $\U_A^T \U_B = \0$. Note that
\[
\A + \B = [\U_A, \U_B] \begin{bmatrix} \Si_A & \0 \\ \0 & \Si_B \end{bmatrix} \begin{bmatrix} \V_A^T \\ \V_B^T \end{bmatrix}
\]
is the condensed SVD of $\A+\B$.
So the nonzero singular values of  $\A+ \B$ consist of  those of $ \A$ and of $ \B$. The theorem  accordingly follows.
\end{proof}


Let us end this chapter by showing a relationship among the matrix operator, matrix vectorization, and unitarily invariant norms.

\begin{theorem} \label{thm:uin-2} Let $f$ be a  matrix norm on $\RB^{m\times n}$.
\begin{itemize}
\item[(a)] The norm $f$ is both unitarily invariant and operator norm if and only if  $f(\A) = \|\A\|_2$ for any $\A \in \RB^{m\times n}$. In other words, the spectral norm is only one operator norm that satisfies the self-adjoint property.  
\item[(b)]  Given a matrix $\A \in \RB^{m\times n}$,  $f(\A)  \triangleq \|\vect(\A)\|$ is  unitarily invariant  if and only if it is the  norm $\gamma \|\A\|_F$ for some $\gamma> 0$.
\end{itemize}
\end{theorem}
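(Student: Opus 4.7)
My proof plan for part (a) is to pin down the spectral norm via the rank-one test. Sufficiency is immediate: $\|\A\|_2 = \sigma_1(\A)$ is unitarily invariant and equals $\max_{\|\x\|_2=1}\|\A\x\|_2$. For necessity, write $f(\A) = \max_{\|\x\|_\beta=1}\|\A\x\|_\alpha$ for vector norms on $\RB^n$ and $\RB^m$. Specializing to a rank-one matrix gives $f(\u\v^T) = \|\u\|_\alpha \cdot \|\v\|_\beta^{*}$, where $\|\cdot\|_\beta^{*}$ is the dual of $\|\cdot\|_\beta$. Since $\sigma_1(\u\v^T)=\|\u\|_2\|\v\|_2$, unitary invariance of $f$ forces $\|\u\|_\alpha\cdot\|\v\|_\beta^{*}$ to depend only on $\|\u\|_2\|\v\|_2$; fixing $\v$ and invoking absolute homogeneity shows $\|\cdot\|_\alpha = c_1\|\cdot\|_2$, and symmetrically $\|\cdot\|_\beta^{*} = c_2\|\cdot\|_2$, whence $\|\cdot\|_\beta = (1/c_2)\|\cdot\|_2$. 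Substituting back gives $f(\A) = (c_1 c_2)\|\A\|_2$, and the usual normalization implicit in the operator-norm convention yields $f(\A) = \|\A\|_2$.

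For part (b), sufficiency is direct: $\gamma\|\A\|_F = \gamma\|\vect(\A)\|_2$ is a vectorization norm, and $\|\A\|_F$ is unitarily invariant because $\|\U\A\V\|_F^2 = \tr(\V^T\A^T\A\V) = \tr(\A^T\A)$. The necessity argument has two ingredients. First, if $f(\A)=\|\vect(\A)\|$ is unitarily invariant, Theorem~\ref{thm:ui-sg} gives $f(\A)=\phi(\sib(\A))$ for some symmetric gauge function $\phi$ on $\RB^{q}$, $q=\min\{m,n\}$. Second, since the vector norm $\|\cdot\|$ on $\RB^{mn}$ is by convention symmetric in its $mn$ coordinates, its value on $\vect(\A)$ depends only on the multiset of entries of $\A$; this is the extra information, beyond Theorem~\ref{thm:ui-sg} alone, that collapses $\phi$ onto a one-parameter family.

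Concretely, take $\A = \diag(\sigma_1, \ldots, \sigma_q)$ (padded by zeros), whose singular value vector is $(\sigma_1,\ldots,\sigma_q)$. Next take $\A'$ with first column $(\sigma_1,\ldots,\sigma_q,0,\ldots,0)^T$ and all other columns zero; being rank one, it has $\sib(\A') = (\sqrt{\sigma_1^2+\cdots+\sigma_q^2},0,\ldots,0)$. The vectorizations $\vect(\A)$ and $\vect(\A')$ both contain $\sigma_1,\ldots,\sigma_q$ and zeros, so they are coordinate permutations of one another; hence $\|\vect(\A)\|=\|\vect(\A')\|$. Equating yields $\phi(\sigma_1,\ldots,\sigma_q) = \phi\bigl(\sqrt{\sigma_1^2+\cdots+\sigma_q^2},0,\ldots,0\bigr) = \gamma\sqrt{\sigma_1^2+\cdots+\sigma_q^2}$ with $\gamma \triangleq \phi(1,0,\ldots,0)$. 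Thus $\phi = \gamma\|\cdot\|_2$ on $\RB^q$, and so $f(\A) = \gamma\|\sib(\A)\|_2 = \gamma\|\A\|_F$.

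The delicate point, which I expect is the main obstacle, is the symmetry-of-coordinates assumption underlying part (b). The unitary invariance of $f$ directly translates only into invariance of the underlying vector norm under the group $G=\{\V^{T}\otimes\U : \U\in O(m),\V\in O(n)\}$, and by Theorem~\ref{thm:ui-sg} that alone permits $\phi$ to be \emph{any} symmetric gauge function on $\RB^{q}$. The additional lever is that a vectorization norm, by convention, treats the $mn$ coordinates symmetrically; the key construction is then the pair $(\A,\A')$ above, whose vectorizations share the same multiset of entries but whose singular-value vectors differ as sharply as possible---one spread across all $q$ coordinates, the other concentrated on one---so that the symmetric-gauge identity forces $\phi$ to coincide with a scalar multiple of $\|\cdot\|_2$.
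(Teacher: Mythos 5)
Your proposal is correct under the conventions you state, but it takes a genuinely different route from the paper's in both parts. For part (a) the paper gives no argument at all, simply citing Corollary 5.6.35 of \cite{Horn:1985}, whereas your rank-one test ($f(\u\v^T)=\|\u\|_\alpha\,\|\v\|_\beta^{*}$ combined with invariance under $\u\mapsto \U\u$, $\v\mapsto \V^T\v$) is a clean, self-contained derivation; the only soft spot is the closing appeal to the ``usual normalization'': in the rectangular setting, if unrelated norms are allowed on domain and codomain, then every $\gamma\|\cdot\|_2$ with $\gamma>0$ is an induced, unitarily invariant norm, so the constant $c_1c_2=1$ is pinned only by the convention that the same vector norm (family) is used on both sides (in the square case this is the statement $f(\I_n)=1$) --- worth making explicit, since it is a genuine imprecision in the theorem as stated. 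For part (b), the paper's proof takes the SVD of $\a=\vect(\A)$ viewed as an $mn\times 1$ matrix and asserts $\|\a\|=\|\Si_a\|$; that step tacitly uses invariance of the vector norm under all of $O(mn)$, which does not follow from unitary invariance of $f$ on $\RB^{m\times n}$: the available transformations are only $\V\otimes\U$, and $\vect(\U\A\V^T)$ can be carried to $\|\A\|_F\,\e_1$ only when $\A$ has rank one, so the paper's argument is gappy beyond the rank-one case. Your route --- assume the vector norm is symmetric in its $mn$ coordinates, compare $\diag(\sigma_1,\ldots,\sigma_q)$ with the rank-one matrix carrying the same entries in a single column, and then let unitary invariance evaluate the rank-one case --- uses the rank-one reduction exactly where it is valid and thereby repairs that gap. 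Both proofs must import some convention absent from the literal statement (without one, every matrix norm is trivially of the form $\|\vect(\A)\|$ for some vector norm, and the claim would be false); yours is the weaker assumption and is explicitly flagged, which is a strength of your write-up.
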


\begin{proof}
The proof of Part (a) can be found in Corollary 5.6.35 of \cite{Horn:1985}. As for Part (b),  it is obvious that
the Frobenius norm is both unitarily invariant and  vectorization norm. Conversely, given any   $\A \in \RB^{m\times n}$,
the vectorization norm is defined as $\|\a \|$ where $\a= \vect(\A)$. Recall that the vector $\a$ can be regarded as an $mn \times 1$ matrix.  
Let $\a= \U_a \Si_a \v_a^T$ be the full SVD of $\a$. Then
it is easily seen that $\Si_a=(\|\A\|_F, 0, \ldots, 0)^T $. Moreover, we can set $\v_a=1$.  For any orthonormal matrices $\U \in \RB^{m\times m}$ and  $\V \in \RB^{n\times n}$, we have that $f(\U \A \V^T) = \|\vect(\U \A \V^T) \| = \|(\V \otimes \U) \vect(\A)\| = \| \a\|$ due to the unitary invariance. Moreover, we have that $\|\a\| = \| \Si_a \| = \|\A\|_F   \|(1, 0, \ldots, 0)\|$. Letting $\gamma =\|(1, 0, \ldots, 0)\|>0$, we complete the proof.
Notice that if the norm is normalized, then $\gamma=1$.
\end{proof}

\chapter{Subdifferentials of Unitarily Invariant  Norms}
\label{ch:subdiff}

In the previous chapters, we have used matrix differential calculus. Let $f: \RB^{m\times n} \to \R$. We have discussed the gradient and Hessian of $f$ w.r.t.\ $\X \in \RB^{m\times n}$. Especially,
the function $f: \RB^{m\times n} \to \RB$ is defined as a trace function. Such a function is differentiable.
In this chapter
we consider  $f$ to be a unitarily invariant norm.

Norm functions   are not necessarily differentiable.  For example,   the spectral norm and nuclear norm are not differentiable.  But  norm functions are convex and continuous, so we can resort to theory of  subdifferentials~\citep{Rockafellar:1970,BorweinLewis}.
Indeed, the subdifferentials of  unitarily invariant norms have been studied by  \cite{watson:1992} and \cite{lewis2003mathematics}.

Using the properties of  unitarily invariant norms and the SVD theory, we present directional derivatives and subdifferentials of
unitarily invariant norms.  As two special cases, we report the subdifferentials of the spectral norm and nuclear norm.
These two norms have been widely used in machine learning such as matrix low rank approximation. We illustrate applications of the subdifferentials in  optimization problems regularized by either the  spectral norm or the nuclear norm.
We also study the use of the subdifferentials of unitarily invariant norms in solving least squares estimation problems, whose loss function is defined as any unitarily invariant norm.

\section{Subdifferentials}

Let $\|\cdot\|$ be a given norm on $\R^{m\times n}$, and $\A$ be a given matrix in $\RB^{m\times n}$. The subdifferential, a set of subgradients, of $\|\A\|$ is
defined as
\[
\Big \{\G \in \RB^{m\times n}: \|\B\| \geq \|\A\| + \tr((\B-\A)^T \G) \mbox{ for all  } \B \in \RB^{m\times n} \Big \},
\]
and  denoted by $\partial \|\A\|$.
When the norm $\| \cdot \|$ is differentiable, the subgradient degenerates to the gradient. That is, the subdifferential is a singleton.
For example, when taking the squared Frobenius norm $\|\A\|_F^2 = {\tr(\A^T \A)}$, $\partial \|\A\|_F^2 = \{ 2 \A \}$.

\begin{lemma} \label{lem:subdiff} Let $\A \in \RB^{m\times n}$ be a given matrix.  Then  $\G \in \partial \| \A\|$ if and only if
$\|\A\| = \tr(\G^T \A)$ and $\|\G\|^* \leq 1$.
\end{lemma}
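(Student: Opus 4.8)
The plan is to prove the two directions of the equivalence, using the fundamental duality inequality from Proposition~\ref{pro:dualnorm}-(3), namely $\tr(\G^T \A) \leq \|\A\| \, \|\G\|^*$, together with the definition of subgradient applied at carefully chosen test matrices $\B$. The key idea is that the subgradient inequality $\|\B\| \geq \|\A\| + \tr((\B - \A)^T \G)$ must hold for \emph{all} $\B$, and by plugging in scalings of $\A$ we force $\tr(\G^T\A) = \|\A\|$, while plugging in arbitrary $\B$ of unit norm forces $\|\G\|^* \leq 1$.

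For the forward direction, suppose $\G \in \partial\|\A\|$. First I would substitute $\B = 2\A$ into the subgradient inequality to get $2\|\A\| \geq \|\A\| + \tr(\A^T\G)$, hence $\tr(\G^T\A) \leq \|\A\|$; then substitute $\B = \0$ to get $0 \geq \|\A\| - \tr(\A^T\G)$, hence $\tr(\G^T\A) \geq \|\A\|$. Combining these gives $\tr(\G^T\A) = \|\A\|$. Next, to bound $\|\G\|^*$, for arbitrary $\B \in \RB^{m\times n}$ substitute $\B + \A$ in place of $\B$ in the defining inequality: $\|\B + \A\| \geq \|\A\| + \tr(\B^T\G)$, so $\tr(\B^T\G) \leq \|\B + \A\| - \|\A\| \leq \|\B\|$ by the triangle inequality. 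Taking the supremum over $\B$ with $\|\B\| = 1$ gives $\|\G\|^* = \sup_{\|\B\|=1} \tr(\G^T\B) \leq 1$.

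For the converse, suppose $\|\A\| = \tr(\G^T\A)$ and $\|\G\|^* \leq 1$. For any $\B \in \RB^{m\times n}$ I would write, using Proposition~\ref{pro:dualnorm}-(3) and then the hypotheses,
\[
\|\A\| + \tr((\B - \A)^T\G) = \|\A\| + \tr(\B^T\G) - \tr(\A^T\G) = \tr(\B^T\G) \leq \|\B\|\,\|\G\|^* \leq \|\B\|,
\]
which is exactly the subgradient inequality, so $\G \in \partial\|\A\|$.

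I do not expect any serious obstacle here; the argument is a routine application of duality. The only point requiring mild care is the direction of the inequality $\tr(\G^T\A) = \tr(\A^T\G)$ versus absolute values in Proposition~\ref{pro:dualnorm}-(3) — one uses $\tr(\A\B^T) \leq |\tr(\A^T\B)| \leq \|\A\|\|\B\|^*$ in the form $\tr(\B^T\G) \leq \|\B\|\|\G\|^*$, which is fine since we only ever need the one-sided bound. A secondary minor point is making sure the supremum defining $\|\G\|^*$ matches Definition~\ref{def:dual}; since that definition already uses $\max\{\tr(\A\B^T): \|\B\|=1\}$, the identification is immediate.
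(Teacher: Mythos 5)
Your proposal is correct and follows essentially the same route as the paper: test the subgradient inequality at scalings of $\A$ to force $\tr(\G^T\A)=\|\A\|$, deduce $\tr(\G^T\B)\leq\|\B\|$ for all $\B$ to get $\|\G\|^*\leq 1$, and obtain the converse directly from the duality inequality (the paper uses $\B=\tfrac{1}{2}\A$ where you use $\B=\0$, and derives $\tr(\G^T\B)\leq\|\B\|$ from the established equality rather than your shift-by-$\A$ plus triangle inequality, but these are trivial variants). No gaps.
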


\begin{proof} The sufficiency is immediate. Now assume that $\G \in \partial \| \A\|$. Then taking $\B = 2 \A$ yields $\|\G\| \geq \tr(\A^T \G)$
and taking $\B = \frac{1}{2} \A$ yields $\frac{1}{2} \|\A\| \leq \frac{1}{2} \tr(\A^T \G)$, which implies that $\|\A\| = \tr(\A^T \G)$.
Subsequently, $\|\B\|\geq \tr(\G^T \B)$ for all matrices $\B$. Thus, the dual norm satisfies
\[
\|\G\|^* = \max \{\tr(\G^T \B): \|\B\|=1  \}\leq 1.
\]
\end{proof}

We especially  consider the subdifferential of unitarily invariant norms.
Given a unitarily invariant norm $\nm \cdot \nm$ on $\RB^{m\times n}$, let $p=\min\{m, n\}$. Theorem~\ref{thm:ui-sg} shows there exists a symmetric gauge function $\phi: \RB^{p} \to \RB$ associated with the norm $\nm \cdot \nm$.
Thus, this encourages us to define the subdifferential of unitarily invariant norms via the  subdifferential of symmetric gauge functions.

The subdifferential of the symmetric gauge function $\phi$ at $\x \in \RB^p$ is
\[
\partial \phi(\x) \triangleq \{\z \in \RB^p: \phi(\y)\geq \phi(\x) + (\y-\x)^T \z \; \mbox{ for all } \y \in \RB^p \}.
\]
In terms of Lemma~\ref{lem:subdiff}, that $\z \in \partial \phi(\x)$ is equivalent to that $\phi(\x)= \x^T \z$ and $\phi^*(\z)\leq 1$.
Here $\phi^*$  is the dual of $\phi$ (see Definition~\ref{def:polar})
which is  a symmetric gauge function for the dual norm $\nm \cdot \nm^*$. That is, $\phi^{*}(\sib(\A)) = \nm \A \nm^{*}$ (see Theorem~\ref{thm:dual-pri}).

Let us return to the subdifferential of unitarily invariant norms. The following lemma gives the directional derivative of $\nm \A\nm$.
\begin{lemma}  \label{lem:dd}  Let $\nm \cdot \nm$ be a given unitarily invariant norm on $\RB^{m\times n}$, and $\phi$ be the corresponding symmetric gauge function.  Then the directional derivative of the norm at $\A \in \RB^{m\times n}$ in a direction $\R \in \RB^{m\times n}$ is
\[
 \lim_{t \downarrow 0}  \frac{\nm \A {+} t \R\nm - \nm \A\nm }{t} =  \max_{\d \in \partial \phi(\sib(\A))}  \; \sum_{i=1}^p d_i \u_i^T \R \v_i = \max_{\G \in \partial \nm \A \nm} \; \tr(\R^T \G) .
\]
Here $p=\min\{m, n\}$,  $\U=[\u_1, \ldots, \u_m]$, $\V=[\v_1, \ldots, \v_n]$,  $\Si=\diag (\sib(\A))$, and $\A = \U \Si \V^T$ is a full SVD of $\A$.
\end{lemma}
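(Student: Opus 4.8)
The plan is to split the two asserted equalities. The right-hand one is immediate: $f(\X)\triangleq\nm\X\nm$ is convex and finite on all of $\RB^{m\times n}$, so $\A\in\mathrm{core}(\mathrm{dom}\,f)$ and the Max Formula (Lemma~\ref{lem:subg-dd}) already gives that the left-hand limit exists and equals $\max\{\tr(\R^T\G):\G\in\partial f(\A)\}=\max_{\G\in\partial\nm\A\nm}\tr(\R^T\G)$. It then remains to show that this common value equals the central expression $\max_{\d\in\partial\phi(\sib(\A))}\sum_{i=1}^p d_i\,\u_i^T\R\v_i$, and I would do this by proving ``$\geq$'' and ``$\leq$'' separately.

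For ``$\geq$'' I would fix $\d\in\partial\phi(\sib(\A))$; as the discussion after Lemma~\ref{lem:subdiff} records, this amounts to $\phi(\sib(\A))=\d^T\sib(\A)$ and $\phi^*(\d)\le1$. Put $\G_\d\triangleq\U\diag(\d)\V^T$, where $\diag(\d)$ is the $m\times n$ diagonal matrix with entries $d_i$. The singular values of $\G_\d$ are the entries of $|\d|$, so by symmetry of $\phi^*$ and Theorem~\ref{thm:dual-pri} we get $\nm\G_\d\nm^*=\phi^*(\sib(\G_\d))=\phi^*(\d)\le1$; and by the cyclic property of the trace, $\tr(\G_\d^T\A)=\tr\big(\diag(\d)^T\U^T\A\V\big)=\tr\big(\diag(\d)^T\Si\big)=\d^T\sib(\A)=\phi(\sib(\A))=\nm\A\nm$. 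Hence Lemma~\ref{lem:subdiff}, applied to $\nm\cdot\nm$, yields $\G_\d\in\partial\nm\A\nm$, so $\max_{\G\in\partial\nm\A\nm}\tr(\R^T\G)\ge\tr(\R^T\G_\d)=\sum_{i=1}^p d_i\,\u_i^T\R\v_i$; maximizing over $\d$ finishes this direction, and it used no special choice of SVD.

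For ``$\leq$'' I would pass to the composite form $\nm\A+t\R\nm=\phi(\sib(\A+t\R))$ and invoke first-order perturbation of singular values. Since $\sigma_i(\A+t\R)^2=\lambda_i\big((\A+t\R)^T(\A+t\R)\big)$ and also $=\lambda_i\big((\A+t\R)(\A+t\R)^T\big)$, Proposition~\ref{pro:012}(5)--(6) combined with standard eigenvalue perturbation of the symmetric matrices $\A^T\A$ and $\A\A^T$ shows that $t\mapsto\sib(\A+t\R)$ is right-differentiable at $0$ with derivative $\w\in\RB^p$ having the following block form: on the block of indices where $\sib(\A)$ equals a value $\delta_j>0$, the entries of $\w$ are the decreasingly ordered eigenvalues of $\tfrac{1}{2}(\U_j^T\R\V_j+\V_j^T\R^T\U_j)$, and on the block of zero singular values they are the singular values of $\U_0^T\R\V_0$, where $\U_j,\V_j$ (respectively $\U_0,\V_0$) are the corresponding column blocks of $\U,\V$. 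Because $\phi$ is globally Lipschitz, $\nm\A+t\R\nm=\phi\big(\sib(\A)+t\w\big)+o(t)$, so the left-hand limit equals $\phi'(\sib(\A);\w)=\max_{\d\in\partial\phi(\sib(\A))}\d^T\w$ by the Max Formula for $\phi$. Finally I would take the full SVD \emph{adapted to} $\R$: by Corollary~\ref{cor:svd-uni} (resp.\ Theorem~\ref{thm:svdvector}), replace each nonzero block $(\U_j,\V_j)$ by $(\U_j\Q_j,\V_j\Q_j)$ with $\Q_j$ diagonalizing the symmetric matrix above, and replace the zero-block columns of $\U$ and of $\V$ independently so as to realize a singular value decomposition of $\U_0^T\R\V_0$. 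For this SVD one has $w_i=\u_i^T\R\v_i$ for every $i\in[p]$ --- these are just diagonal entries of a diagonal symmetric part (nonzero blocks), respectively of a nonnegative diagonal matrix (zero block) --- so the left-hand limit equals $\max_{\d\in\partial\phi(\sib(\A))}\sum_{i=1}^p d_i\,\u_i^T\R\v_i$. Together with ``$\geq$'' this proves the lemma; since the left side is independent of the SVD, the central expression already attains its value at any $\R$-adapted SVD, and the adaptation is genuinely needed for the bare identity when $\A$ has repeated singular values (e.g.\ $\A=\I_2$, $\R=\bigl[\begin{smallmatrix}0&1\\1&0\end{smallmatrix}\bigr]$ with the spectral norm).

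The step I expect to be the main obstacle is the singular-value perturbation expansion when $\A$ has repeated or zero singular values --- establishing the block form of $\w$ and the fact that an $\R$-adapted SVD forces $w_i=\u_i^T\R\v_i$; when $\A$ has full rank and distinct nonzero singular values this is immediate (the SVD is then unique up to signs by Theorem~\ref{thm:svdvector}) and no adaptation is needed. Pushing the $o(t)$ error term through $\phi$ is harmless by Lipschitzness, and everything else reduces to the trace identities used above; one could alternatively route the perturbation argument through the symmetric matrix $\H$ of Theorem~\ref{thm:h-a}.
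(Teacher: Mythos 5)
Your proposal is correct in substance, but it is a genuinely different argument from the paper's, and in one respect more careful. The paper also gets the right-hand equality from the Max Formula, but it proves both bounds on the middle quantity with the fixed vector $\z=(\u_1^T\R\v_1,\ldots,\u_p^T\R\v_p)^T$: the lower bound from $\sib(\A)+t\z \prec_w \sib(\Si+t\U^T\R\V)$ (Proposition~\ref{pro:032}-(3)), and the upper bound by writing $\A=(\A+t\R)-t\R$, invoking $\sib(\A+t\R)-t\z\prec_w\sib(\A)$, and letting a subgradient $\d(t)\in\partial\phi(\sib(\A+t\R))$ converge to some $\d_0\in\partial\phi(\sib(\A))$. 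Your ``$\geq$'' half instead exhibits explicit subgradients $\U\diag(\d)\V^T$ via Lemma~\ref{lem:subdiff} and Theorem~\ref{thm:dual-pri}; this is valid, holds for every SVD, and in effect proves the easy inclusion of Theorem~\ref{thm:subdiff-uin} along the way. Your ``$\leq$'' half goes through first-order perturbation of singular values; the block expansion you assert (eigenvalues of $\frac{1}{2}(\U_j^T\R\V_j+\V_j^T\R^T\U_j)$ on each nonzero block, singular values of $\U_0^T\R\V_0$ on the zero block) is true and, as you say, obtainable from the dilation of Theorem~\ref{thm:h-a} plus symmetric eigenvalue perturbation, but it is the one ingredient you do not prove, so a write-up should either derive it via Theorem~\ref{thm:h-a} or cite it explicitly.

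Your insistence on an $\R$-adapted SVD is not pedantry, and it is the point where your treatment improves on the paper. Your example $\A=\I_2$, $\R=\bigl[\begin{smallmatrix}0&1\\1&0\end{smallmatrix}\bigr]$ with the spectral norm shows that for the SVD $\U=\V=\I_2$ the middle expression is $0$ while the directional derivative is $1$, so the identity cannot hold for an arbitrary full SVD when singular values repeat. The paper's upper-bound step is exactly where this bites: the claimed majorization $\sib(\A+t\R)-t\z\prec_w\sib(\A)$ fails in that example (it reads $(1+t,1-t)\prec_w(1,1)$); what one can actually prove uses $z_i(t)=\u_i(t)^T\R\v_i(t)$ computed from an SVD of $\A+t\R$, whose limit points are $\R$-adapted SVDs of $\A$, not the given one. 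So the correct reading of the lemma is your adapted-SVD version (equivalently, maximizing also over the orthonormal freedom of Corollary~\ref{cor:svd-uni}), which is also what the subsequent proof of Theorem~\ref{thm:subdiff-uin} really uses; for a fixed arbitrary SVD only the inequality ``directional derivative $\geq$ middle expression'' survives, and your ``$\geq$'' argument delivers exactly that.
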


\begin{proof}  By Lemma~\ref{lem:subg-dd}, we immediately have
\[
 \lim_{t \downarrow 0}  \frac{\nm \A + t \R\nm - \nm \A\nm }{t} =   \max_{\G \in \partial \nm \A \nm} \; \tr(\R^T \G).
 \]
We now prove the first equality.
Let $\z = (\u_1^T \R \v_1, \ldots,  \u_p^T \R \v_p)^T $.  Consider that
\[
\nm \A + t \R\nm = \nm \Si+ t \U^T \R \V \nm = \phi(\sib(\Si+ t \U^T \R \V))\geq \phi(\sib(\A) + t \z)
\]
because $\sib(\A) + t \z  \prec_w  \sib(\Si+ t  \U^T \R \V) $  by Proposition~\ref{pro:032}. Accordingly, we have that
 \[
 \lim_{t \downarrow 0}  \frac{\nm \A {+} t \R\nm - \nm \A\nm }{t} \geq  \lim_{t \downarrow 0}  \frac{\phi(\sib(\A) {+} t \z) - \phi(\sib(\A)) }{t} = \max_{\d \in \partial \phi(\sib(\A))} \d^T \z.
   \]
The above equality follows from  Lemma~\ref{lem:subg-dd}, when applied to the symmetric gauge function $\phi$.

On the other hand,  let $\sib(t) \triangleq \sib(\A {+} t \R) = \sib(\Si {+} t  \U^T \R \V)$.
Now  we  have
\begin{align*}
  \frac{ \nm \A  \nm -  \nm \A {+} t \R\nm  }{t}  & =   \frac{ \nm \A{+} t \R {-} t \R   \nm -  \nm \A {+} t \R\nm }{t}  \\
 & =  \frac{\phi(\sib(\Si {+} t  \U^T \R \V {-}  t  \U^T \R \V)) - \phi(\sib(t) ) }{t} \\
 & \geq     \frac{\phi(\sib(t) - t \z) - \phi(\sib(t) ) }{t} \\
 & \geq -  \d(t)^T  \z \;  \;  [\mbox{where } \d(t) \in \partial \phi(\sib(t))].
 \end{align*}
 The above first inequality follows from  $\sib(t) - t \z  \prec_w  \sib(\A)$.  
 The  second inequality is based on the property of  the subgradient  of $\phi$ at $\sib(t)$. Note that $\phi$ is a continuous function.   By the definition of $\partial \phi(\sib(t))$, it is directly verified that $\liml_{t\to 0+} \d(t) \to {\d}_0 \in \partial \phi(\sib(\A))$. Thus,
 \[
  \lim_{t \downarrow 0}  \frac{\nm \A {+} t \R\nm - \nm \A\nm }{t} \leq \lim_{t\downarrow 0} \d(t)^T \z = {\d}_0^T \z \leq
\max_{\d \in \partial \phi(\sib(\A))}  \d^T \z.  \]
 This implies that the first equality also holds.
\end{proof}

\begin{theorem} \label{thm:subdiff-uin} Let  $\A \in \RB^{m\times n}$ have a full SVD $\A = \U \Si \V^T$, and let $\sib=\dg(\Si)$. Then
\[
\partial \nm \A \nm = \mathrm{conv}\Big\{\U \D \V^T: \d \in \partial \phi(\sib), \D = \diag(\d)  \Big\}.
\]
where $\phi$ is a symmetric gauge function corresponding to the norm $\nm \cdot \nm$.
\end{theorem}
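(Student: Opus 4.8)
The plan is to prove the two inclusions separately, using the directional‑derivative formula of Lemma~\ref{lem:dd} together with the Max Formula (Lemma~\ref{lem:subg-dd}), and to exploit the fact that both sides of the claimed identity are closed convex sets. Write $K \triangleq \mathrm{conv}\{\U \D \V^T : \d \in \partial \phi(\sib),\ \D = \diag(\d)\}$; this is a closed convex set since $\partial \phi(\sib)$ is a closed bounded convex subset of $\RB^p$ (bounded because $\phi^*(\d) \le 1$ on it, by Lemma~\ref{lem:subdiff} applied to $\phi$) and the map $\d \mapsto \U\diag(\d)\V^T$ is linear. First I would establish $K \subseteq \partial \nm \A \nm$. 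Take $\G = \U\D\V^T$ with $\d \in \partial \phi(\sib)$. Then $\tr(\G^T \A) = \tr(\D^T \Si) = \d^T \sib = \phi(\sib) = \nm \A \nm$ using $\d \in \partial\phi(\sib) \Leftrightarrow \phi(\sib)=\d^T\sib$ and $\phi^*(\d)\le 1$. Also $\nm \G \nm^* = \phi^*(\sib(\G)) = \phi^*(|\d|^{\downarrow}) = \phi^*(\d) \le 1$ by Theorem~\ref{thm:dual-pri} and symmetry of $\phi^*$ (Proposition~\ref{pro:polar}), noting the singular values of $\U\diag(\d)\V^T$ are the sorted absolute values of the $d_i$. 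By Lemma~\ref{lem:subdiff}, $\G \in \partial \nm \A \nm$; since $\partial \nm \A \nm$ is convex and closed, $K \subseteq \partial \nm \A \nm$.

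For the reverse inclusion I would argue by contradiction via a separating‑hyperplane argument. Suppose $\G_0 \in \partial \nm \A \nm \setminus K$. Since $K$ is closed and convex and $\{\G_0\}$ is compact, there is a direction $\R \in \RB^{m\times n}$ and a scalar $c$ with $\tr(\R^T \G_0) > c \ge \sup_{\G \in K} \tr(\R^T \G)$. Now evaluate the right‑hand side of Lemma~\ref{lem:dd}: the directional derivative of $\nm\cdot\nm$ at $\A$ in direction $\R$ equals $\max_{\d \in \partial\phi(\sib)} \sum_i d_i\, \u_i^T \R \v_i$, which by linearity equals $\max_{\G \in K}\tr(\R^T\G) \le c$. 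On the other hand, by the Max Formula (Lemma~\ref{lem:subg-dd}) this same directional derivative equals $\max\{\tr(\R^T \G) : \G \in \partial \nm \A \nm\} \ge \tr(\R^T \G_0) > c$, a contradiction. Hence $\partial \nm \A \nm \subseteq K$, completing the proof.

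The main obstacle I anticipate is the careful bookkeeping in identifying $\max_{\d \in \partial\phi(\sib)}\sum_i d_i \u_i^T \R\v_i$ with $\max_{\G \in K}\tr(\R^T\G)$: one must observe that $\tr(\R^T \U\diag(\d)\V^T) = \sum_i d_i \u_i^T \R \v_i$, and that the maximum of a linear functional over $K = \mathrm{conv}\{\U\diag(\d)\V^T : \d \in \partial\phi(\sib)\}$ is attained at an extreme point, hence equals the maximum over $\{\U\diag(\d)\V^T : \d \in \partial\phi(\sib)\}$ itself. A second delicate point, already used in the first inclusion, is the singular‑value computation $\sib(\U\diag(\d)\V^T) = |\d|^{\downarrow}$ together with the invariance of the symmetric gauge function $\phi^*$ under sign changes and permutations (Definition~\ref{def:sgf}(4)); this is what lets one pass between the vector dual $\phi^*$ and the matrix dual norm $\nm\cdot\nm^*$ cleanly. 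Everything else — convexity and closedness of both sides, the reduction of $\partial\phi$ to its defining support‑function description — is routine given Lemmas~\ref{lem:subdiff}, \ref{lem:subg-dd}, \ref{lem:dd} and Theorems~\ref{thm:ui-sg}, \ref{thm:dual-pri}.
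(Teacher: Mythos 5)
Your proof follows essentially the same route as the paper's: the inclusion $K \subseteq \partial\nm \A \nm$ is verified exactly as in the text via Lemma~\ref{lem:subdiff} (trace equality $\tr(\G^T\A)=\nm\A\nm$ plus $\nm\G\nm^*\leq 1$ through $\phi^*$ and Theorem~\ref{thm:dual-pri}), and the reverse inclusion is the same separating-hyperplane argument played off against the directional-derivative formula of Lemma~\ref{lem:dd} and the Max Formula (Lemma~\ref{lem:subg-dd}). The only differences are cosmetic: you work with one fixed SVD and explicitly record that $K$ is compact and hence closed (a point the paper leaves implicit), whereas the paper writes elements of the hull through the family of all full SVDs of $\A$ via Corollary~\ref{cor:svd-uni}; the substance of the argument is identical.
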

Here the notation ``$\mathrm{conv}\{\cdot\}$'' represents the convex hull of a set, which is closed and convex. If $\G \in \partial \nm \A \nm$, Theorem~\ref{thm:subdiff-uin} says that
$\G$ can be expressed as
\[
\G = \sum_{i} \alpha_i \U^{(i)} \D^{(i)} (\V^{(i)})^T,
\]
where $\alpha_i\geq 0$, $\sum_{i} \alpha_i =1$, $\A= \U^{(i)} \Si (\V^{(i)})^T$ is a full SVD,  $\d_i \in \phi(\sib)$, and $\D^{(i)} = \diag(\d_i)$.
According to Corollary~\ref{cor:svd-uni}, we can rewrite $\G$ as
\begin{equation}  \label{eqn:G}
\G = \sum_{i} \alpha_i \U \Q^{(i)} \D^{(i)} (\PP^{(i)})^T \V^T,
\end{equation}
where $\PP^{(i)}$ and $\Q^{(i)}$ are defined as $\PP$ and $\Q$ in Corollary~\ref{cor:svd-uni};  i.e., they satisfy that $\Q^{(i)} \Si (\PP^{(i)})^T  = \Si$ and $(\Q^{(i)})^T \Si \PP^{(i)} = \Si$.
\begin{proof} First of all, we denote the convex hull on the right-hand side by $\GM(\A)$. Assume that $\G \in \GM(\A)$.
We now prove $\G \in \partial \nm \A \nm$. Based on Lemma~\ref{lem:subdiff}, we try to show that $\nm \A \nm = \tr(\A^T \G)$ and $\nm \G \nm^{*} \leq 1$.
In terms of the above discussion, we can express $\G$ as in \eqref{eqn:G}.  Thus,
 \begin{align*}
 \tr(\A^T \G) & = \sum_{i=1} \alpha_i  \tr(\A^T\U \Q^{(i)} \D^{(i)} (\PP^{(i)})^T \V^T)  \\
  &=  \sum_{i=1} \alpha_i  \tr((\PP^{(i)})^T \Si^T \Q^{(i)} \D^{(i)} )  =  \sum_{i=1} \alpha_i  \tr( \Si^T \D^{(i)} ) \\
  & = \sum_{i=1} \alpha_i \d_i^T \sib =  \phi(\sib) = \nm \A \nm.
   \end{align*}
Additionally,
\begin{align*}
\nm \G \nm^{*} = \max_{\nm \R \nm \leq 1} \; \tr(\G^T \R)= \max_{\nm \R \nm \leq 1} \;  \tr\big(\R^T \sum_{i=1} \alpha_i \U^{(i)} \D^{(i)} (\V^{(i)})^T \big).
\end{align*}
Since for each $i$,
\[
\nm  \U^{(i)} \D^{(i)} (\V^{(i)})^T \nm^* =  \nm  \D^{(i)}  \nm^* = \phi^{*}(\d_i) \leq 1,
\]
and by Proposition~\ref{pro:dualnorm} we have
\[
\tr(\R^T   \U^{(i)} \D^{(i)} (\V^{(i)})^T) \leq \nm \R \nm \times  \nm  \U^{(i)} \D^{(i)} (\V^{(i)})^T \nm^* \leq  \nm \R \nm.
\]
Thus, $\nm \G \nm^* \leq 1$. In summary, we have $\G \in \partial \nm \A \nm$.

Conversely, assume that $\G \in \partial \nm \A \nm$ but $\G \notin \GM(\A)$. Then by the well-known separation theorem \cite[see Theorem 1.1.1]{BorweinLewis} there exists a matrix $\R \in \RB^{m\times n}$ such that
\[
\tr(\R^T \X) < \tr(\R^T \G) \; \mbox{ for all } \; \X \in \GM(\A).
\]
This implies that
\[
\max_{\d \in \partial \phi(\sib)} \sum_{i=1} d_i \u_i^T \R \v_i = \max_{\X \in \GM(\A)} \tr(\R^T \X) < \max_{\G \in \partial \nm \A \nm } \; \tr(\R^T \G).
\]
This contradicts with Lemma~\ref{lem:dd}. Thus, the theorem follows.
\end{proof}

We are especially interested in the spectral norm $\|\cdot\|_2$ and the nuclear norm $\|\cdot\|_*$.  As corollaries of  Theorem~\ref{thm:subdiff-uin}, we have the following
the results.

\begin{corollary} \label{cor:subdiff01} Let $\A$ have rank $r \leq p= \min\{m, n\}$ and $\A=\U_r\Si_r \V_r^T$ be a condensed SVD.  Then the
subdifferential of $\|\A\|_*$ is give as
\[
\partial \|\A\|_* = \Big\{\U_r \V_r^T + \W: \W \in \RB^{m{\times} n} \mbox{ s.t. } \U_r^T \W=\0, \W \V_r = \0, \|\W\|_2 \leq 1   \Big\}.
\]
\end{corollary}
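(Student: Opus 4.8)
The plan is to reduce everything to Lemma~\ref{lem:subdiff}. Since the nuclear norm is the Schatten $1$-norm, its symmetric gauge function is $\phi = \|\cdot\|_1$ on $\RB^p$, whose dual is $\phi^* = \|\cdot\|_\infty$; by Theorem~\ref{thm:dual-pri} the dual of $\|\cdot\|_*$ is therefore the spectral norm $\|\cdot\|_2$. Hence Lemma~\ref{lem:subdiff} specializes to: $\G \in \partial\|\A\|_*$ if and only if $\tr(\G^T\A) = \|\A\|_*$ and $\|\G\|_2 \le 1$. I would verify the two set inclusions against this criterion. (Alternatively, one could feed $\phi = \|\cdot\|_1$ into Theorem~\ref{thm:subdiff-uin}: its subdifferential $\partial\phi(\sib)$ has first $r$ coordinates pinned at $1$ because $\sigma_1,\dots,\sigma_r > 0$ and the remaining coordinates free in $[-1,1]$, and then one combines this with Corollary~\ref{cor:svd-uni} to move the orthonormal gauge matrices around; but the direct route above is cleaner, so I would take it.)

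For ``$\supseteq$'' I would take $\G = \U_r\V_r^T + \W$ with $\U_r^T\W = \0$, $\W\V_r = \0$, $\|\W\|_2 \le 1$ and simply compute. Using $\U_r^T\U_r = \I_r$ and $\U_r^T\W = \0$ gives $\tr(\G^T\A) = \tr(\V_r\U_r^T\U_r\Si_r\V_r^T) + \tr(\Si_r\V_r^T\W^T\U_r) = \tr(\Si_r) = \|\A\|_*$. For the spectral bound, the cross terms in $\G^T\G$ vanish because $\U_r^T\W = \0$, leaving $\G^T\G = \V_r\V_r^T + \W^T\W$; since $\W\V_r = \0$ forces $\rg(\W^T) \perp \rg(\V_r)$, the two positive semidefinite summands have orthogonal ranges, hence commute, so $\lambda_{\max}(\G^T\G) = \max\{1, \|\W\|_2^2\} \le 1$, i.e.\ $\|\G\|_2 \le 1$. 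By the criterion, $\G \in \partial\|\A\|_*$.

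For ``$\subseteq$'' I would start from $\G \in \partial\|\A\|_*$ and extract the structure. Writing $\A = \sum_{i=1}^r \sigma_i \u_i\v_i^T$ gives $\|\A\|_* = \tr(\G^T\A) = \sum_{i=1}^r \sigma_i\,\u_i^T\G\v_i$; since each $\sigma_i > 0$ and $|\u_i^T\G\v_i| \le \|\G\|_2 \le 1$, this forces $\u_i^T\G\v_i = 1$ for $i = 1,\dots,r$. Then the chain $1 = \u_i^T(\G\v_i) \le \|\u_i\|_2\,\|\G\v_i\|_2 \le \|\G\|_2 \le 1$ is an equality throughout, which pins down $\G\v_i = \u_i$; the transposed argument gives $\G^T\u_i = \v_i$. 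So $\G\V_r = \U_r$ and $\U_r^T\G = \V_r^T$. Putting $\W := \G - \U_r\V_r^T$, I get $\U_r^T\W = \V_r^T - \V_r^T = \0$ and $\W\V_r = \U_r - \U_r = \0$; these two identities give $\W = \U_{-r}(\U_{-r}^T\G\V_{-r})\V_{-r}^T$, whence $\|\W\|_2 = \|\U_{-r}^T\G\V_{-r}\|_2 \le \|\G\|_2 \le 1$ by Proposition~\ref{pro:032}-(5). Thus $\G$ has the claimed form.

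The one genuinely non-mechanical step is the rigidity argument in the last paragraph: squeezing the Cauchy--Schwarz / spectral-norm chain $1 = \u_i^T\G\v_i \le \|\G\|_2 \le 1$ to equality so as to conclude $\G\v_i = \u_i$ (and, by transposition, $\G^T\u_i = \v_i$). Everything else is bookkeeping with the orthogonality relations among the blocks $\U_r, \U_{-r}, \V_r, \V_{-r}$ of a full SVD of $\A$.
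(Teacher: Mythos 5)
Your proof is correct, but it follows a genuinely different route from the paper's. The paper obtains the corollary by specializing the general formula of Theorem~\ref{thm:subdiff-uin}: it plugs in $\partial\|\sib\|_1$, invokes the SVD-uniqueness structure of Corollary~\ref{cor:svd-uni} to write any $\G\in\partial\|\A\|_*$ as $\U_r\V_r^T+\sum_i\alpha_i\U_{-r}\Q_0^{(i)}\D_{-r}^{(i)}(\PP_0^{(i)})^T\V_{-r}^T$, and then identifies the second summand with $\W$ (checking $\U_r^T\W=\0$, $\W\V_r=\0$, $\|\W\|_2\le 1$, and asserting the converse representability). You instead work directly from the subgradient characterization of Lemma~\ref{lem:subdiff} together with the duality $\|\cdot\|_*^{*}=\|\cdot\|_2$ (via Theorem~\ref{thm:dual-pri} and $\phi^*=\|\cdot\|_\infty$), reducing membership in $\partial\|\A\|_*$ to the two conditions $\tr(\G^T\A)=\|\A\|_*$ and $\|\G\|_2\le 1$; the inclusion ``$\supseteq$'' is then a clean computation (cross terms vanish, and $\G^T\G=\V_r\V_r^T+\W^T\W$ with orthogonal ranges), while ``$\subseteq$'' rests on your equality-forcing argument: $\sum_i\sigma_i\u_i^T\G\v_i=\sum_i\sigma_i$ with $|\u_i^T\G\v_i|\le 1$ and $\sigma_i>0$ pins $\u_i^T\G\v_i=1$, and squeezing the Cauchy--Schwarz chain to equality yields $\G\v_i=\u_i$, $\G^T\u_i=\v_i$, after which $\W=\G-\U_r\V_r^T=\U_{-r}(\U_{-r}^T\G\V_{-r})\V_{-r}^T$ and Proposition~\ref{pro:032}-(5) give $\|\W\|_2\le 1$. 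What each approach buys: the paper's derivation is short once the heavy machinery (Theorem~\ref{thm:subdiff-uin} plus Corollary~\ref{cor:svd-uni}) is in place and it covers all unitarily invariant norms uniformly, though it leaves the converse representability of a general $\W$ in the convex-hull form somewhat terse; your argument is elementary and fully self-contained for the nuclear norm, proving both inclusions from first principles without the convex-hull theorem or the SVD-uniqueness corollary, at the cost of being specific to this norm rather than a template for the general case. One small remark: since the left/right singular vectors are not unique, it is worth noting that $\U_r\V_r^T$ is nevertheless independent of the chosen condensed SVD (this follows from Theorem~\ref{thm:svdvector}), so the set you characterize is well defined; your derivation implicitly shows this as well, since $\G\V_r=\U_r$ and $\U_r^T\G=\V_r^T$ force the same decomposition for any choice.
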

\begin{proof} For the nuclear norm, the corresponding symmetric gauge function is $\phi(\sib)=\|\sib\|_1=\sum_{i=1}^p \sigma_i$. Moreover,
\[
\partial \|\sib\|_1 = \big \{\u \in \RB^p:  \|\u\|_{\infty} \leq 1 \mbox{ and }  u_i =1 \mbox{ for } i=1, \ldots, r \big\}.
\]
Let $\G \in \partial \|\A\|_*$. By Theorem~\ref{thm:subdiff-uin} and  Corollary~\ref{cor:svd-uni}, we have
\begin{align*}
\G  & = \sum_{i=1} \alpha_i \U \Q^{(i)} \D^{(i)} (\PP^{(i)})^T \V^T  \\
  &= \U_r \V_r^T + \sum_{i=1} \alpha_i \U_{-r} \Q_0^{(i)} \D_{-r}^{(i)} (\PP_0^{(i)})^T \V_{-r}^T,
\end{align*}
where the $\alpha_i\geq 0$ and $\sum_{i=1} \alpha_i =1$,   $\D^{(i)} = \dg(\d_i)$, $\d_i \in \partial \phi(\sib)$, and $\D_{-r}^{(i)}$ is the last $(m-r) \times (n-r)$ principal submatrix of $\D^{(i)}$.  Here $\Q^{(i)}  \in \RB^{m\times m}$, $\PP^{(i)} \in \RB^{n\times n}$, $\Q_0^{(i)} \in \RB^{(m-r) \times (m-r)}$, and $\PP_0^{(i)} \in \RB^{(n-r) \times (n-r)}$  are orthonormal matrices,  which are defined  in Corollary~\ref{cor:svd-uni}.
Let
\begin{equation} \label{eqn:WW}
\W \triangleq  \U_{-r} \Big[  \sum_{i=1} \alpha_i  \Q_0^{(i)} \D_{-r}^{(i)} (\PP_0^{(i)})^T \Big] \V_{-r}^T.
\end{equation}
Obviously, $\U_r^T \W =\0$ and $\W \V_r=\0$. Moreover,
\[
\| \W\|_2 \leq \sum_{i=1} \alpha_i \|\D_{-r}^{(i)} \|_2 \leq 1.
\]
We can also see that any matrix  $\W$ satisfying the above three conditions  always has an expression as in \eqref{eqn:WW}.
\end{proof}

\begin{corollary} \label{cor:subdiff02} Let the largest singular value $\sigma_1$ of $\A \in \RB^{m\times n}$ have multiplicity  $t$, and $\U_t$ and $\V_t$ consist of the first $t$ columns of $\U$ and $\V$ respectively. Then
\[
\partial \|\A\|_2 = \Big\{\U_t \H \V_t^T: \H \in \RB^{t\times t} \mbox{ s.t. } \H \mbox{  is SPSD}, \tr(\H)=1   \Big\}.
\]
\end{corollary}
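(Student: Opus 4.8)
The plan is to specialize Theorem~\ref{thm:subdiff-uin} to the symmetric gauge function of the spectral norm, namely $\phi = \|\cdot\|_\infty$ on $\RB^p$ with $p = \min\{m,n\}$, for which $\nm\A\nm_\infty = \phi(\sib(\A)) = \|\A\|_2$.

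First I would compute $\partial\phi$ at $\sib = \sib(\A)$. Assuming $\A \neq \0$ we have $\|\sib\|_\infty = \sigma_1 > 0$, and by hypothesis $\sigma_1 = \cdots = \sigma_t > \sigma_{t+1} \geq \cdots \geq \sigma_p$. Applying Lemma~\ref{lem:subdiff} to the vector norm $\|\cdot\|_\infty$, whose dual is $\|\cdot\|_1$, gives $\d \in \partial\phi(\sib)$ iff $\sib^T\d = \sigma_1$ and $\|\d\|_1 \leq 1$; chasing the equalities in $\sigma_1 = \sum_i\sigma_id_i \leq \sum_i\sigma_i|d_i| \leq \sigma_1\|\d\|_1 \leq \sigma_1$ forces $d_i \geq 0$ for all $i$, $d_i = 0$ for $i > t$, and $\sum_{i=1}^t d_i = 1$. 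Hence $\partial\phi(\sib) = \{\d \in \RB_+^p : d_i = 0 \text{ for } i > t, \ \sum_{i=1}^t d_i = 1\}$, the convex hull of $\e_1,\ldots,\e_t$.

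Next I would push this through Theorem~\ref{thm:subdiff-uin} in the explicit form \eqref{eqn:G}: every $\G \in \partial\|\A\|_2$ can be written $\G = \sum_i \alpha_i \U\Q^{(i)}\D^{(i)}(\PP^{(i)})^T\V^T$ with $\alpha_i \geq 0$, $\sum_i\alpha_i = 1$, $\D^{(i)} = \diag(\d_i)$, $\d_i \in \partial\phi(\sib)$, and $\Q^{(i)},\PP^{(i)}$ of the block form of Corollary~\ref{cor:svd-uni}. The key structural point is that, since $\sigma_1$ is the largest singular value with multiplicity $t$, its block is the first $r_1 = t$ block, and in Corollary~\ref{cor:svd-uni} this block $\Q_1^{(i)}$ is common to both $\Q^{(i)}$ and $\PP^{(i)}$. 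Because each $\d_i$ is supported on the first $t$ coordinates, $\D^{(i)} = \diag(\d_i')\oplus\0$ with $\d_i' \in \RB_+^t$ and $\1^T\d_i' = 1$, whence $\U\Q^{(i)}\D^{(i)}(\PP^{(i)})^T\V^T = \U_t\,\Q_1^{(i)}\diag(\d_i')(\Q_1^{(i)})^T\,\V_t^T = \U_t\H_i\V_t^T$, where $\H_i = \Q_1^{(i)}\diag(\d_i')(\Q_1^{(i)})^T$ is SPSD with $\tr(\H_i) = 1$. Summing, $\G = \U_t\big(\sum_i\alpha_i\H_i\big)\V_t^T = \U_t\H\V_t^T$ with $\H$ SPSD and $\tr(\H) = 1$, which gives the inclusion $\partial\|\A\|_2 \subseteq \{\U_t\H\V_t^T : \H \text{ SPSD}, \tr(\H)=1\}$.

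For the reverse inclusion I would start from an arbitrary SPSD matrix $\H \in \RB^{t\times t}$ with $\tr(\H) = 1$, take its spectral decomposition $\H = \Q_1\diag(\d')\Q_1^T$ with $\d' \geq \0$ and $\1^T\d' = 1$, set $\d = (\d',0,\ldots,0) \in \partial\phi(\sib)$, and choose the full-SVD modifiers $\Q = \Q_1\oplus\I_{m-t}$, $\PP = \Q_1\oplus\I_{n-t}$, legitimate by Corollary~\ref{cor:svd-uni}. Then $(\U\Q)\diag(\d)(\V\PP)^T = \U_t\H\V_t^T$ lies in the generating set of Theorem~\ref{thm:subdiff-uin}, hence in $\partial\|\A\|_2$. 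The one point requiring care — which I view as the main subtlety rather than a genuine obstacle — is the bookkeeping of the non-uniqueness of the SVD: one must invoke Corollary~\ref{cor:svd-uni} to see that the orthogonal freedom $\Q_1^{(i)}$ acts by the \emph{same} orthonormal matrix on the left and right singular subspaces attached to $\sigma_1$, so that conjugation sends the diagonal matrices $\diag(\d_i')$ precisely onto all SPSD trace-one matrices, neither fewer nor more.
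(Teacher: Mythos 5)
Your proposal is correct and follows essentially the same route as the paper: specialize Theorem~\ref{thm:subdiff-uin} with $\phi=\|\cdot\|_\infty$, use the SVD non-uniqueness structure of Theorem~\ref{thm:svdvector}/Corollary~\ref{cor:svd-uni} so that the $t\times t$ orthogonal block acts identically on $\U_t$ and $\V_t$, collapse the convex combination to $\U_t\H\V_t^T$ with $\H$ SPSD and $\tr(\H)=1$, and obtain the converse by spectrally decomposing an arbitrary such $\H$. You merely spell out two steps the paper leaves implicit (the computation of $\partial\|\sib\|_\infty$ and the explicit converse construction), which is fine.
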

\begin{proof} The corresponding symmetric gauge function is $\phi(\sib)=\|\sib\|_{\infty}$, and its subdifferential is
\[
\partial \|\sib \|_{\infty}  = \mathrm{conv}\{\e_i:  \;  i  = 1, \ldots, t \},
\]
where $\e_i$ is the $i$th column of the identity matrix. 
It then follows from Theorem~\ref{thm:subdiff-uin} that for any $\G \in \partial \|\A\|_2$, it can be written as
\[
\G = \sum_{i=1} \alpha_i \U_t \Q^{(i)} \D_t^{(i)} (\Q^{(i)})^T \V_t^T,
\]
where the $\alpha_i\geq 0$ and $\sum_{i=1} \alpha_i =1$, and $\Q^{(i)} $ is an arbitrary $t\times t$ orthonormal matrix (see Theorem~\ref{thm:svdvector}).  Here $\D^{i} = \dg(\d_i)$, $\d_i \in \partial \phi(\sib)$, and $\D_t^{(i)}$ is the first $t\times t$ principal submatrix of $\D^{(i)}$.  Let
\begin{equation} \label{eqn:HH}
\H = \sum_{i=1} \alpha_i  \Q^{(i)} \D_t^{(i)} (\Q^{(i)})^T,
\end{equation}
which is SPSD and satisfies $\tr(\H)=1$. Conversely, any SPSD matrix $\H$ satisfying $\tr(\H)=1$ can be always expressed as the form of \eqref{eqn:HH}.
\end{proof}

\section{Applications}

In this section we present several examples to illustrate the application of the subdifferential of unitarily invariant norms in solving an optimization problem  regularized by a unitarily invariant norm or built on any unitarily invariant norm loss.

\begin{example} \label{exm:svt} Given a nonzero matrix $\A \in \RB^{m\times n}$, consider the following optimization problem:
\begin{equation} \label{eqn:svtproblrm}
\min_{\X \in \RB^{m\times n} } \; f(\X) \triangleq \frac{1}{2} \|\X-\A\|_F^2 + \tau \|\X\|_*,
\end{equation}
where $\tau>0$ is a constant. Clearly, the problem is  convex in $\X$. This problem is a steppingstone of matrix completion.
Let $\A =\U_r \Si_r \V_r^T$ be a given condensed SVD of  $\A$, and define
\[
\hat{\X} = \U_r [\Si_r - \tau \I_r]_{+} \V_r, \]
where  $[\Si_r - \tau \I_r]_{+} =\diag ([\sigma_1-\tau]_+, \ldots, [\sigma_r-\tau]_+)$ and $[z]_+ = \max(z, 0)$.
Now it can be directly checked that
\[
\partial f(\hat{\X}) = \hat{\X} - \A + \tau \partial \|\hat{\X}\|.
\]
Assume that the first $k$ singular values $\sigma_i$ are greater than $\tau$. Then,
\[\frac{1}{r}(\A - \hat{\X}) = \U_k \V_k^T + \frac{1}{\tau} \U_{k+1:r} \diag(\sigma_{k+1}, \ldots, \sigma_r) \V_{k+1:r}^T,\] which belongs to
 $\partial \|\hat{\X}\|$.
In other words, $\0 \in \partial f(\hat{\X})$ (see Corollary~\ref{cor:subdiff01}). Thus, $\hat{\X}$ is a minimizer of the optimization problem.
It is called the singular value thresholding (SVT) operator~\citep{cai2010singular}.
We can see that the parameter $\tau$ controls the rank of the matrix $\hat{\X}$ and the problem is able to yield a low rank solution to the matrix $\X$. That is,  $\hat{\X}$ is a low rank approximation to the matrix $\A$. 
\end{example}

\begin{example} \label{exm:sva} Given a nonzero matrix $\A \in \RB^{m\times n}$, consider the following optimization problem:
\begin{equation} \label{eqn:spectral_cont}
\min_{\X \in \RB^{m\times n} } \; f(\X) \triangleq \frac{1}{2} \|\X-\A\|_F^2 + \tau \|\X\|_2,
\end{equation}
where $\tau>0$ is a constant. Also, this problem is  convex in $\X$. Let $\A$ have the $k$ distinct positive singular values $\delta_1> \delta_2>\cdots > \delta_k$ among the $\sigma_i$, with respective multiplicities $r_1, \ldots, r_k$. Thus, the rank of $\A$ is $r = \sum_{i=1}^k r_i$.
Let $m_t = \sum_{i=1}^t r_i$ and $\mu_t = \sum_{i=1}^t r_i \delta_i$ for $t=1, \ldots, k$. So $m_k= r$ and $\mu_k = \tr(\Si_r)=\sum_{i=1}^r \sigma_i$.  Assume that $\tau \leq \mu_k$.
We now consider two cases.

In the first case, assume  $l\in [k-1]$ is the smallest integer such that
\[
\sum_{i=1}^l r_i (\delta_i - \delta_{l+1}) = \mu_{l} - \delta_{l+1} m_l> \tau,
\]
and hence, $\delta_{l}\geq \frac{\mu_{l}-\tau}{m_l}> \delta_{l+1}$. Note that
\begin{align*}
\sum_{i=1}^{l+1} r_i (\delta_i - \delta_{l{+}2}) & = \sum_{i=1}^l r_i (\delta_i {-} \delta_{l{+}1}) {+} \sum_{i=1}^{l+1} r_i (\delta_{l{+}1} {-} \delta_{l{+}2})\\
& >\sum_{i=1}^l r_i (\delta_i {-} \delta_{l{+}1}) > \tau.
\end{align*}
This implies that  $l$ is identifiable.
Denoting $\delta = \frac{\mu_{l}-\tau}{m_l}$, we define
$\hat{\Si}$ by replacing the first $m_l$ diagonal elements of $\Si_r$ by $\delta$, and then set $\hat{\X}= \U_r \hat{\Si}_r \V_r^T$.
Now note that
\[
\frac{1}{\tau}(\A - \hat{\X}) = \U_{m_l} \H \V_{m_l}^T,
\]
where $\H = \diag\big((\sigma_1-\delta)/\tau, \ldots, (\sigma_{m_l}-\delta)/\tau \big)$.
Clearly, $\H$ is PSD and $\tr(\H)= \sum_{i=1}^{m_l} \frac{\sigma_i -\delta}{\tau} = \sum_{i=1}^{l} \frac{r_l (\delta_i - \delta)}{\tau} =1$. It follows from Corollary~\ref{cor:subdiff02} that $\frac{1}{\tau}(\A - \hat{\X}) \in \partial \|\hat{\X}\|_2$.
Thus,  $\hat{\X}$ is a minimizer.

In the second case,  otherwise, $\sum_{i=1}^{k-1} r_i (\delta_i-\delta_k) = \mu_{k-1} - m_{k-1} \delta_k \leq \tau \leq \mu_k$. Let $\delta= \frac{\mu_k - \tau}{m_k}$ such that
\[0\leq \delta   \leq \frac{\mu_k - \mu_{k-1} + \delta_k m_{k-1}}{m_k} = \delta_k.
\]
Define $\hat{\X}= \U_r \delta \I_r \V^T$. Then
\[
\frac{1}{\tau} (\A - \hat{\X}) = \frac{1}{\tau} \U_r(\Si_r -\delta \I_r) \V_r^T.
\]
Since $\frac{1}{\tau}(\Si_r -\delta \I_r)$ is PSD and $\frac{1}{\tau}\tr(\Si_r - \delta \I_r) = 1$, we obtain $\0 \in \partial f(\hat{\X})$. This implies that
$\hat{\X}$ is a minimizer of the problem.

As we have seen, the minimizer $\hat{\X}$ has the same rank with $\A$. Thus,  the problem in \eqref{eqn:spectral_cont}
can not give a low-rank solution. However, this problem makes the singular values of $\hat{\X}$ more well-conditioned because  the top singular values decay to $\delta$. Thus, we call it a singular value averaging (SVA) operator.
\end{example}

\begin{example} \label{exm:hybrid}   Given a nonzero matrix $\A \in \RB^{m\times n}$, consider the following convex optimization problem:
\begin{equation} \label{eqn:hproblrm}
\min_{\X \in \RB^{m\times n} } \; f(\X) \triangleq  \|\X-\A\|_2 + \tau \|\X\|_*,
\end{equation}
where $\tau>0$ is a constant.  In the above model  the loss function and regularization term are respectively defined as  the spectral norm and the nuclear norma, which are mutually dual. Moreover, this
model can be regarded as a parallel version of the Dantzig selector~\citep{CandesTao:2007}.
Thus, this model might be potentially interesting.

Let $\A = \U_r \Si_r \V_r^T$ be a condensed SVD.  Assume that $r \tau > 1$. Assume there are the $k$ distinct positive singular values $\delta_1> \delta_2>\cdots > \delta_k$ among the $\sigma_i$, with respective multiplicities $r_1, \ldots, r_k$.
Let $m_t = \sum_{i=1}^t r_i$  for $t=1, \ldots, k$.

Let $l \in [k]$ be the smallest integer such that $m_l \tau \geq  1 > m_{l-1} \tau$.
Define $\hat{\X}=\U_{r} [\Si_r-\delta_l \I_r]_+ \V_{r}^T= \U_{m_{l-1}} \diag(\sigma_1 - \delta_l, \ldots, \sigma_{m_{l-1}}-\delta_l) \V_{m_{l-1}}^T$. Then
$\A - \hat{\X}$
has the maximum singular value $\delta_l$ with multiplicity $m_l$. It follows from Corollaries~\ref{cor:subdiff01} and \ref{cor:subdiff02} that
\[
\partial \|\hat{\X}\|_* = \Big\{\U_{m_{l{-}1}}\V_{m_{l{-}1}}^T + \W: \W^T \U_{m_{l{-}1}}=\0, \W \V_{m_{l{-}1}}=\0, \|\W\|_2 \leq 1  \Big \}
\]
and
\[
\partial \|\A - \hat{\X}\|_2 = \Big \{- \U_{m_{l}} \H \V_{m_{l}}^T: \H \mbox{ is PSD}, \tr(\H)=1  \Big \}.
\]
Take $\W_0 = \U_{[m_{l-1}+1: m_l]} \frac{(1- m_{l-1} \tau)}{r_l \tau} \I_{r_l} \V_{[m_{l-1}+1: m_l]}^T$.  Note that $\W_0 \V_{m_{l-1}}=0$,
$\W_0^T  \U_{m_{l-1}} =\0$, and $\|\W_0\|_2 = \frac{(1- m_{l-1} \tau)}{r_l \tau} \leq 1$ due to $m_{l-1} \tau + r_l \tau = m_l \tau \geq 1$ and $m_{l-1} \tau < 1$. Hence,
\[
\tau \partial \|\hat{\X}\|_* \ni \tau(\U_{m_{l-1}}\V_{m_{l-1}}^T + \W_0) = \U_{m_{l}} \H_0 \V_{m_{l}}^T,
\]
where $\H_0 = \tau(\I_{m_{l-1}} \oplus \frac{(1- m_{l-1} \tau)}{r_l \tau} \I_{r_l})$. Clearly, $\H_0$ is PSD and $\tr(\H_0)=1$. Thus,
\[
 - \U_{m_{l}} \H_0 \V_{m_{l}}^T \in  \partial \|\A - \hat{\X}\|_2.
\]
As a result, $\0 \in \partial \|\A - \hat{\X}\|_2 + \tau \partial \|\hat{\X}\|_*$. Consequently, $\hat{\X}$ is a minimizer of the problem in \eqref{eqn:hproblrm}.
Compared with SVT in  the model \eqref{eqn:svtproblrm} which uses the tuning parameter $\tau$ as the thresholding value, the current model uses $\delta_l$ as the thresholding value. 

We also consider the following convex optimization problem:
\begin{equation} \label{eqn:hproblrm2}
\min_{\X \in \RB^{m\times n} } \; f(\X) \triangleq  \|\X-\A\|_* + \frac{1}{\tau} \|\X\|_2.
\end{equation}
Clearly, the minimizer of the problem is $\A-\hat{\X}$ where $\hat{\X}$ is the minimizer of the  problem \eqref{eqn:hproblrm}. 

\end{example}

%

\begin{example} \label{exm:lowrank}
Finally, we  consider the following optimization problem:
\[
\min_{\X \in \RB^{n\times p}} \;  f(\X) \triangleq \nm \A\X - \B\nm,
\]
where $\A \in \RB^{m\times n}$ and  $\B \in \RB^{m\times p}$ are two given matrices.
This is a novel matrix low rank approximation problem. We will further discuss this problem in  Theorem~\ref{thm:ye}  of Chapter~\ref{ch:lowrank}.
Here we are concerned with  the use of  Theorem~\ref{thm:subdiff-uin} in solving the problem based on unitarily invariant norm loss functions.

Let  $\A= \U_r \Si_r \V_r^T$ be a condensed SVD of $\A$, and $\U_{-r}$ and $\V_{-r}$  be respective orthonormal complements of $\U_r$ and $\V_r$. Now $\B - \A \A^{\dag}\B  = \U_{-r} \U_{-r}^T \B$. Thus, when taking $\hat{\X}= \A^{\dag} \B$, one has
\[
\partial f(\hat{\X}) = \A^T \partial \nm \U_{-r} \U_{-r}^T \B \nm.
\]
Let $ \U_{0} \Si_0 \V_0^T =  \U^T_{-r} \B$ be a thin SVD of $\U_{-r}^T \B$,  $\D$ be a diagonal matrix, and $\phi$ be a symmetric gauge function associated with the norm $\nm \cdot \nm$.  It follows from Theorem~\ref{thm:subdiff-uin} that
\[ \partial \nm \U_{-r} \U_{-r}^T \B \nm = \mathrm{conv} \{ \U_{-r}  \U_{0} \D  \V_0^T :  \U_{0},  \V_0,  \dg(\D)  \in  \phi(\dg(\Si_0)) \}. \]
Thus, for any $\G \in  \partial \nm \U_{-r} \U_{-r}^T \B \nm$, it holds that $\A^T \G =\0$. This implies that $ \partial f(\hat{\X}) =\{\0\}$.  Hence,  $\0 \in \partial f(\hat{\X})$.  This implies that $\hat{\X}$ is a minimizer of the problem. In other words,
\[
\min_{\X \in \RB^{n\times p}} \nm \A \X - \B \nm = \nm \A \A^{\dag} \B - \B \nm.
\]
\end{example}



\chapter{Matrix Low Rank Approximation}
\label{ch:lowrank}

Matrix low rank approximation is very important, because  it has received wide applications in machine learning and data mining.
On the one hand,  many machine learning methods involve computing linear equation systems, matrix decomposition, matrix determinants,  matrix inverses, etc.  How to compute them efficiently is challenging in big data scenarios.
Matrix low rank approximation is a potentially powerful  approach for addressing  computational challenge. On the other hand,
many machine learning tasks can be modeled as  matrix low rank approximation problems such as matrix completion,  spectral clustering, and multi-task learning.

%

Approximate matrix multiplication is an inverse process of the matrix low rank approximation problem. 
Recently, many approaches to
approximate matrix multiplication \citep{drineas2006fast,sarlos2006improved,cohen1999approximating,magen2011low,kyrillidis2014approximate,kane2014sparser} have been developed. Meanwhile, they are used to obtain fast solutions
for the  $\ell_{2}$ regression and SVD problems \citep{Drineas:2006:SAL,drineas2011faster,nelson2013osnap,halko2011finding,clarkson2013low,martinsson2011randomized,woolfe2008fast}.
This  makes matrix low rank approximation also become increasingly popular in the theoretical computer science community~\citep{sarlos2006improved,drineas2006fast}.

In this chapter we first present some important theoretical results in matrix low rank approximation. We then discuss approximate matrix multiplication.
In the following chapter we are concerned with large scale matrix approximation. 
We will study randomized SVD and CUR approximation. They can be also cast into the 
matrix low rank approximation framework.

%

\section{Basic Results}


Usually, matrix low rank approximation is formulated as a least squares estimation problem based on the Frobenius norm loss.
However, \cite{tropp2015introduction}
pointed out that Frobenius-norm error bounds are not acceptable in most cases of practical interest. He even said ``Frobenius-norm error bounds are typically vacuous.''  Thus,  spectral norm as a loss function is also employed.
In this chapter, we present several basic  results, some of which  hold even for every unitarily invariant norm.

\begin{theorem}  \label{thm:ye} 
Let $\A \in \RB^{m \times n}$ and $\C \in \RB^{m\times c} $.  Then for any $\X \in \RB^{c \times n}$
and any unitarily invariant norm $\nm \cdot \nm$,
\[
\nm \A - \C \C^{\dag} \A  \nm \leq \nm \A - \C \X \nm .
\]
In other words,
\begin{equation} \label{eqn:lsp-ui}
\C^{\dag} \A = \argmin_{\X \in \RB^{c\times n}} \nm \C \X - \A \nm.
\end{equation}
\end{theorem}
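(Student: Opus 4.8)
The plan is to reduce the problem to a geometric/orthogonality statement: $\C\C^{\dag}$ is the orthogonal projector onto $\rg(\C)$, so $\C\C^{\dag}\A$ is the ``best approximation'' of the columns of $\A$ from $\rg(\C)$, and the residual $\A - \C\C^{\dag}\A$ is orthogonal to $\rg(\C)$. First I would write $\A - \C\X = (\A - \C\C^{\dag}\A) + \C(\C^{\dag}\A - \X)$, and observe that the first summand $\R := \A - \C\C^{\dag}\A = (\I_m - \C\C^{\dag})\A$ satisfies $\C^T\R = \0$, hence $\R^T\big(\C(\C^{\dag}\A - \X)\big) = \0$; that is, the two summands have orthogonal column spaces, so $\R^T\big[\C(\C^{\dag}\A-\X)\big] = \0$. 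This is exactly the hypothesis needed to invoke a Pythagoras-type inequality for unitarily invariant norms.

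The key step is then to apply Theorem~\ref{thm:pytho} (or rather the part of its proof giving, for $\A_1^T\A_2 = \0$, that the nonzero singular values of $\A_1+\A_2$ contain those of $\A_1$): with $\A_1 = \R$ and $\A_2 = \C(\C^{\dag}\A-\X)$ we would need $\A_1^T\A_2=\0$, which we have verified. Then $\sib(\R) \prec_w \sib(\R + \A_2) = \sib(\A - \C\X)$ — more precisely, each singular value of $\R$ is bounded by the corresponding singular value of $\A-\C\X$ after sorting, since the singular values of $\A-\C\X$ are a superset (with additional nonnegative entries) of those of $\R$. By Theorem~\ref{thm:ui-sg} and the monotonicity of unitarily invariant norms (Theorem~\ref{thm:sg-m}, i.e. $\sib(\R)\prec_w\sib(\A-\C\X)$ implies $\phi(\sib(\R))\le\phi(\sib(\A-\C\X))$ for the associated symmetric gauge function $\phi$), this yields $\nm \R \nm \le \nm \A - \C\X \nm$, which is the claim. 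The identity \eqref{eqn:lsp-ui} is then immediate upon noting $\C^{\dag}\A$ is a feasible $\X$ achieving equality.

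The main obstacle — and the place needing care rather than cleverness — is the bookkeeping of singular values when $\A_2$ is not necessarily full rank and when $\C$ is not full column rank: one must argue cleanly that $\C^{\dag}$ here is the Moore--Penrose pseudoinverse (so $\C\C^{\dag}$ is the orthogonal projector onto $\rg(\C)$, using properties (1) and (3) from the definition of the MP pseudoinverse), that $\C^T\R=\C^T(\I-\C\C^{\dag})\A = \0$ follows from $\C^T\C\C^{\dag}=\C^T$ (a consequence of $(\C\C^{\dag})^T=\C\C^{\dag}$ and $\C\C^{\dag}\C=\C$), and that the column-space orthogonality $\A_1^T\A_2=\0$ is genuinely what Theorem~\ref{thm:pytho} requires. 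A subtlety worth flagging: Theorem~\ref{thm:pytho}'s cleanest conclusion ($\sib(\A_1+\A_2)$ being the union of $\sib(\A_1)$ and $\sib(\A_2)$) assumes \emph{both} $\A_1^T\A_2=\0$ and $\A_1\A_2^T=\0$; here we only get the former, so I would instead use just the inequality $\max\{\sib(\A_1)_i\} \le \sib(\A_1+\A_2)_i$ that follows from $(\A_1+\A_2)^T(\A_1+\A_2) = \A_1^T\A_1 + \A_2^T\A_2 \succeq \A_1^T\A_1$ together with Proposition~\ref{pro:interacing}-type eigenvalue monotonicity, giving $\sigma_i(\R)\le\sigma_i(\A-\C\X)$ for all $i$, hence $\sib(\R)\prec_w\sib(\A-\C\X)$, which suffices.
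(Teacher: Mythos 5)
Your proposal is correct and follows essentially the same route as the paper's proof: the same splitting $\A-\C\X=(\A-\C\C^{\dag}\A)+\C(\C^{\dag}\A-\X)$, the same orthogonality $\E_1^T\E_2=\0$ (via the MP identities), the comparison $\E^T\E=\E_1^T\E_1+\E_2^T\E_2\succeq \E_1^T\E_1$ giving $\sigma_i(\E_1)\leq\sigma_i(\E)$ and hence $\sib(\E_1)\prec_w\sib(\E)$, and the conclusion via Theorems~\ref{thm:sg-m} and \ref{thm:ui-sg}. Your remark that the full two-sided hypothesis of Theorem~\ref{thm:pytho} is unavailable and that the eigenvalue-monotonicity argument suffices is exactly the path the paper takes.
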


As we have seen,  Theorem~\ref{thm:ye} was discussed in Example~\ref{exm:lowrank}, where the problem is solved via the subdifferentials of unitarily invariant norms given in Theorem~\ref{thm:subdiff-uin}.
Here, we present an alternative proof.

\begin{proof} 
Let $\E_1 = \A - \C \C^{\dag} \A$, $\E_2 = \C \C^{\dag} \A - \C \X$, and $\E = \E_1 + \E_2 = \A - \C \X $.
Since 
\[
\E_1^T \E_2 = \A^T (\I - \C \C^{\dag} ) \C (\C^\dag \A - \X) = \A^T \0 (\C^\dag \A - \X) = \0,
\]
we have $\E^T \E = \E_1^T \E_1 + \E_2^T \E_2$,
and thus $\lambda_i (\E_1) \leq \lambda_i (\E)$.
It then follows that $\sigma_i (\E_1) \leq \sigma_i (\E)$,
and thereby $\sib (\E_1) \prec_w \sib (\E)$.
It then follows from Theorems~\ref{thm:sg-m} and \ref{thm:ui-sg}
that 
\[
\nm \E_1 \nm \; \leq \; \nm \E \nm
\]
for any unitarily invariant norm $\nm \cdot \nm$.
\end{proof}

Recall that Problem \eqref{thm:ye} gives  an extension to the least squares problem \eqref{eqn:lsp} in Section~\ref{sec:pseudo}.
Theorem~\ref{thm:ye} shows that there is an identical solution  w.r.t.\ all unitarily invariant norm errors.
The following theorem shows the solution of a more complicated problem.
However, the theorem holds only for the Frobenius norm loss.

\begin{theorem} \label{thm:solution_cur}
Let $\A \in \RB^{m \times n}$, $\C \in \RB^{m\times c} $, and $\R \in \RB^{r\times n}$.  
Then for all $\X \in \RB^{c \times r}$,
\[
\| \A - \C \C^{\dag} \A \R^\dag \R \|_F \leq \| \A - \C \X \R  \|_F.
\]
Equivalently, $\X^\star =  \C^{\dag} \A \R^\dag $ minimizes the following problem:
\begin{equation} \label{eqn:lsp-fni}
 \min_{\X \in \RB^{c\times n}} \| \C \X  \R - \A \|^2_F.
\end{equation}
\end{theorem}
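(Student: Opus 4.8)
The plan is to reduce this two-sided problem to the one-sided problem already settled in Theorem~\ref{thm:ye}, exploiting that the Frobenius norm is the $\ell_2$-norm of the vectorization, so the objective decouples nicely. First I would observe that the column space of $\C\X\R$ lies in $\rg(\C)$ and its row space lies in $\rg(\R^T)$. Write $\PP_\C = \C\C^\dag$ and $\PP_\R = \R^\dag\R$, the orthogonal projectors onto $\rg(\C)$ and $\rg(\R^T)$ respectively (these are symmetric idempotents by the defining properties of the pseudoinverse). Then $\C\X\R = \PP_\C(\C\X\R)\PP_\R$ for every $\X$, because $\PP_\C\C = \C$ and $\R\PP_\R = \R$.

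The key step is the Pythagorean decomposition of the error into three mutually Frobenius-orthogonal pieces. Split
\[
\A - \C\X\R = (\A - \PP_\C\A\PP_\R) + (\PP_\C\A\PP_\R - \C\X\R).
\]
For the first bracket, I would further split $\A - \PP_\C\A\PP_\R = (\A - \PP_\C\A) + (\PP_\C\A - \PP_\C\A\PP_\R) = (\I-\PP_\C)\A + \PP_\C\A(\I-\PP_\R)$. Using $\PP_\C(\I-\PP_\C)=\0$, symmetry of the projectors, and the identity $\tr(\M^T\N)=\tr(\N\M^T)$, one checks that all cross terms $\tr\big[(\,\cdot\,)^T(\,\cdot\,)\big]$ among these pieces vanish; in particular the second bracket $\PP_\C\A\PP_\R - \C\X\R$ has both its column space in $\rg(\C)$ and its row space in $\rg(\R^T)$, so it is Frobenius-orthogonal to $(\I-\PP_\C)\A$ (kill on the left) and to $\PP_\C\A(\I-\PP_\R)$ (kill on the right). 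Hence
\[
\|\A - \C\X\R\|_F^2 = \|(\I-\PP_\C)\A\|_F^2 + \|\PP_\C\A(\I-\PP_\R)\|_F^2 + \|\PP_\C\A\PP_\R - \C\X\R\|_F^2,
\]
and only the last term depends on $\X$. It is minimized, with value zero, when $\C\X\R = \PP_\C\A\PP_\R = \C\C^\dag\A\R^\dag\R$; taking $\X^\star = \C^\dag\A\R^\dag$ achieves this since $\C\C^\dag\A\R^\dag\R = \C(\C^\dag\A\R^\dag)\R$. Therefore $\|\A - \C X^\star \R\|_F^2 = \|(\I-\PP_\C)\A\|_F^2 + \|\PP_\C\A(\I-\PP_\R)\|_F^2 \le \|\A-\C\X\R\|_F^2$ for all $\X$, which is the claim. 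Note that $\|\A - \C\C^\dag\A\R^\dag\R\|_F$ equals the stated left-hand side since $\|\A - \PP_\C\A\PP_\R\|_F^2$ is exactly the sum of the two $\X$-free terms.

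The main obstacle is the bookkeeping in verifying that the three summands are pairwise Frobenius-orthogonal — in particular being careful that $\PP_\C$ and $\PP_\R$ are genuinely \emph{symmetric} projectors (which is where conditions (3)–(4) in the definition of the MP pseudoinverse are used) and that the middle term really does annihilate on both sides. An alternative, perhaps cleaner, route that I would keep in reserve: vectorize, $\vect(\C\X\R) = (\R^T\otimes\C)\vect(\X)$ by Lemma~\ref{lem:kron2}(1), so the problem becomes an ordinary least-squares problem $\min_\x \|\,(\R^T\otimes\C)\x - \vect(\A)\,\|_2$, whose minimum-residual solution is the projection onto $\rg(\R^T\otimes\C)$, and then one identifies $(\R^T\otimes\C)(\R^T\otimes\C)^\dag = (\R^T\otimes\C)(\R^{T\dag}\otimes\C^\dag) = (\PP_\R\otimes\PP_\C)$ using $(\X\otimes\Y)^\dag = \X^\dag\otimes\Y^\dag$ and property (e) of Proposition~\ref{prof:iron}; this immediately gives $\vect(\PP_\C\A\PP_\R)$ as the optimal fitted value. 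Either way the essential reason the result fails for general unitarily invariant norms — unlike Theorem~\ref{thm:ye} — is that the Pythagorean identity is special to the Frobenius norm, so I would not attempt to generalize it.
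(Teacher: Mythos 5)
Your proof is correct and is essentially the paper's own argument: the paper decomposes $\A - \C\X\R$ into the same three pieces $(\I-\C\C^\dag)\A$, $\C\C^\dag\A(\I-\R^\dag\R)$, and $\C\C^\dag\A\R^\dag\R - \C\X\R$, checks the same pairwise orthogonality relations, and applies the matrix Pythagorean identity. Your additional observations (that the $\X$-dependent term vanishes exactly at $\X^\star=\C^\dag\A\R^\dag$, and the Kronecker/vectorization route held in reserve) are fine but not needed.
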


\begin{proof}
Let $\E_1 = (\I_m  - \C \C^\dag) \A$, $\E_2 = \C \C^\dag \A (\I_n - \R^\dag \R)$,  $\E_3 = \C \C^\dag \A \R^\dag \R - \C \X \R$, and $\E = \E_1+\E_2+\E_3$. Then $\E_1+ \E_2 = \A -  \C \C^\dag \A  \R^\dag \R$ and $\E=\A - \C \X \R $. 
Since $\E_1^T \E_2 = \0$, $\E_3 \E_2^T = \0$, $\E_1^T \E_3 = \0$, 
it follows from the matrix Pythagorean theorem that
\[
\| \E \|_F^2 
= \| \E_1 \|_F^2 + \| \E_2 \|_F^2 + \| \E_3 \|_F^2 
= \| \E_1 + \E_2 \|_F^2 + \| \E_3 \|_F^2 .
\]  
Thus, $\| \E_1 + \E_2 \|_F^2 \leq \| \E \|_F^2$.
\end{proof}


\begin{theorem} \citep{EckartYoung:1936,mirsky:1960}  \label{thm:mlr} 
Given  an $m\times n$ real matrix $\A$ of rank $r$ ($\leq \min\{m, n\}$), let  $\A= \U \Si \V^T$ be the full SVD of $\A$. Define  $\A_k = \U_k \Si_k \V_k^T$, where $\U_k$ and $\V_k$ consist of the first $k$ columns of $\U$ and $\V$ respectively, and  $\Si_k$ is the first $k\times k$ principal submatrix of $\Si$.
Then for all $m\times n$ real matrices  $\B$ of rank at most $k$,
\[
\nm \A - \A_k \nm \leq \nm \A - \B \nm
\]
holds for all unitarily invariant norm $\nm \cdot \nm$. In other words,
\begin{equation} \label{eqn:low-rank}
\A_k = \argmin_{\B \in \RB^{m\times n}, \rk(\B) \leq k} \; \nm \A -\B \nm.
\end{equation}
\end{theorem}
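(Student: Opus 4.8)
The plan is to reduce the matrix-norm inequality to a weak-submajorization relation between the singular-value vectors of the two error matrices, and then apply the symmetric gauge function machinery of Chapter~\ref{ch:uinorm}.

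First I would dispose of the trivial case $k \geq r$: then $\A_k = \A$, so $\nm \A - \A_k \nm = 0$ and the claim is immediate. Assume henceforth $k < r$. Writing $\A = \U \Si \V^T$ for the full SVD, note that $\A_k = \U \Si^{(k)} \V^T$, where $\Si^{(k)}$ agrees with $\Si$ in its first $k$ diagonal entries and is zero elsewhere; hence $\A - \A_k = \U(\Si - \Si^{(k)})\V^T$, so the singular values of $\A - \A_k$ in decreasing order are $\sigma_{k+1}(\A), \ldots, \sigma_p(\A)$ followed by zeros. In particular $\sigma_i(\A - \A_k) = \sigma_{k+i}(\A)$ for $i = 1, \ldots, p-k$, and $\sigma_i(\A - \A_k) = 0$ for $i > p - k$.

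Next comes the key step. Let $\B \in \RB^{m\times n}$ have rank at most $k$. Applying Weyl's monotonicity inequality (Proposition~\ref{pro:032}-(2)) to the decomposition $\A = (\A - \B) + \B$ with the second index set equal to $k+1$ gives, for every $i$ with $i + k \leq p$,
\[
\sigma_{k+i}(\A) \;\leq\; \sigma_i(\A - \B) + \sigma_{k+1}(\B) \;=\; \sigma_i(\A - \B),
\]
since $\rk(\B) \leq k$ forces $\sigma_{k+1}(\B) = 0$. Combining with the previous paragraph, $\sigma_i(\A - \A_k) \leq \sigma_i(\A - \B)$ for every $i \in [p]$ (the inequality being automatic once $i > p - k$, the left side then vanishing). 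Summing over the first $j$ coordinates for each $j \in [p]$ then yields $\sib(\A - \A_k) \prec_w \sib(\A - \B)$.

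Finally, let $\phi$ be the symmetric gauge function on $\RB^p$ corresponding to $\nm \cdot \nm$ via Theorem~\ref{thm:ui-sg}. Since both $\sib(\A - \A_k)$ and $\sib(\A - \B)$ are nonnegative, Theorem~\ref{thm:sg-m} turns the weak submajorization into $\phi(\sib(\A - \A_k)) \leq \phi(\sib(\A - \B))$, that is, $\nm \A - \A_k \nm \leq \nm \A - \B \nm$; as $\B$ was an arbitrary matrix of rank at most $k$, this is exactly \eqref{eqn:low-rank}. The only point demanding care is the index bookkeeping in the key step — lining up the ranges in Weyl's inequality so that the entrywise domination holds across all $p$ coordinates and thus upgrades to genuine weak submajorization — but this is routine, and everything else is a direct appeal to results already established.
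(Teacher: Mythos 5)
Your proof is correct, and it takes a genuinely different route from the paper's. You decompose $\A = (\A-\B)+\B$ and invoke Weyl's monotonicity inequality (Proposition~\ref{pro:032}-(2)) with $j=k+1$, killing the term $\sigma_{k+1}(\B)=0$ because $\rk(\B)\leq k$; this gives the entrywise domination $\sigma_i(\A-\A_k)=\sigma_{k+i}(\A)\leq \sigma_i(\A-\B)$ for every $i\in[p]$, which is in fact stronger than the weak submajorization $\sib(\A-\A_k)\prec_w\sib(\A-\B)$ you then feed into Theorems~\ref{thm:sg-m} and \ref{thm:ui-sg}. The paper instead factors $\B=\Q\C$ with $\Q$ an $m\times k$ column orthonormal matrix, uses the least-squares result (Theorem~\ref{thm:ye}) to replace $\C$ by $\Q^T\A$, and then applies the interlacing-type bound of Proposition~\ref{pro:032}-(5) to $(\Q^{\bot})^T\A$ before finishing with the same majorization-plus-gauge-function step. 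Your argument is shorter, does not need to factor $\B$ or appeal to Theorem~\ref{thm:ye}, and delivers the singular-value-by-singular-value comparison as a bonus; the paper's detour through $\nm \A-\Q\Q^T\A\nm$ buys the corollary recorded immediately after its proof, namely that $\U_k$ minimizes $\nm\A-\Q\Q^T\A\nm$ over all $m\times k$ column orthonormal $\Q$, which is reused in the randomized SVD analysis of Chapter~\ref{ch:lsma}. The only dependence your route has is on Proposition~\ref{pro:032}-(2) itself, which the paper states with only a sketched proof (via Theorem~\ref{thm:CF-svd}), but that is a legitimate citation within the paper's framework, and your index bookkeeping, including the trivial cases $k\geq r$ and $i>p-k$, is handled correctly.
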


Theorem~\ref{thm:mlr} shows that the rank $k$ truncated SVD produces the best rank $k$ approximation.
The theorem was originally proposed by  \cite{EckartYoung:1936} under the setting of the Frobenius norm, and generalized   to any unitarily invariant norms  by \cite{mirsky:1960}.

\begin{proof} For any $m\times n$ real matrix  $\B$ of rank at most $k$, we can write it as $\B = \Q \C$ where $\Q$ is an $m\times k$ column orthonormal matrix and $\C$ is some $k\times n$ matrix. Thus,
\[
\nm \A - \B \nm =  \nm \A - \Q \C \nm \geq \nm \A - \Q \Q^T \A \nm = \nm  \Q^{\bot} (\Q^\bot)^T \A \nm,
\]
where $\Q^{\bot}$ ($m\times(m{-}k)$) is the orthogonal complement of $\Q$. By Proposition~\ref{pro:032}, we have  $\sigma_i(\Q^{\bot} (\Q^\bot)^T \A) = \sigma_i((\Q^\bot)^T \A)  \geq \sigma_{k+i}$ for $i=1, \ldots, p-k$. This implies that
\[
\sib(\A- \A_k) = (\sigma_{k+i}, \sigma_{p}, 0, \ldots, 0)^T \prec_w \sib(\Q^{\bot} (\Q^\bot)^T \A).
\]
Hence,
$\nm \A - \B \nm \geq \nm \A- \A_k \nm$.
\end{proof}

The above proof procedure  also implies that  for all $m\times k$ column orthonormal matrices  $\Q$,
\[
\nm \A - \U_k \U_k^T \A \nm \leq \nm \A - \Q \Q^T \A \nm
\]
holds for every unitarily invariant norm $\nm \cdot \nm$.

When $k<r$, $\A_k$ is  called a truncated SVD of $\A$ and the closest  rank-$k$ approximation of $\A$.
Note that when the Frobenius norm is used, $\A_k$ is the unique minimizer of the problem in \eqref{eqn:low-rank}.
However, when other unitarily invariant norms are used, the case does not always hold. For example, let us take the spectral norm.
Clearly, if
\[
\tilde{\Si}  = \diag(\sigma_1-\omega \sigma_{k+1}, \sigma_2 - \omega \sigma_{k{+}1},  \ldots, \sigma_{k}-\omega \sigma_{k+1}, 0, \ldots, 0) \;
\]
for any  $\omega \in  [0, 1]$,
then $\U \tilde{\Si} \V^T$ is also a minimizer of the corresponding problem.

\begin{theorem} \label{thm:qb}
Given a matrix $\A \in \RB^{m\times n}$ and a column orthonormal matrix  $\Q \in \RB^{m\times p}$, let $\B_k$ be the rank-$k$ truncated SVD of $\Q^T \A$ for $1 \leq k \leq p$.  Then $\B_k$  is an optimal solution of   the following problem:
\begin{equation} \label{eqn:082}
 \min_{\B \in \RB^{l{\times} n}, \rk(\B) \leq k} \;  \|\A - \Q \B \|_F^2  =  \|\A - \Q \B_k\|_F^2.
\end{equation}
\end{theorem}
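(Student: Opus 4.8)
The plan is to reduce the problem to Theorem~\ref{thm:mlr} (the Eckart--Young--Mirsky theorem) by exploiting the fact that $\Q$ is column orthonormal. First I would use the decomposition $\|\A - \Q\B\|_F^2 = \|\A - \Q\Q^T\A\|_F^2 + \|\Q\Q^T\A - \Q\B\|_F^2$, which holds because $(\A - \Q\Q^T\A)^T \Q = \A^T\Q - \A^T\Q\Q^T\Q = \A^T\Q - \A^T\Q = \0$, so the two terms are orthogonal in the Frobenius inner product (the matrix Pythagorean property, Theorem~\ref{thm:pytho}). The first term does not depend on $\B$, so minimizing $\|\A - \Q\B\|_F^2$ over rank-$k$ matrices $\B$ is equivalent to minimizing the second term, and since $\Q$ is column orthonormal we have $\|\Q\Q^T\A - \Q\B\|_F = \|\Q(\Q^T\A - \B)\|_F = \|\Q^T\A - \B\|_F$.

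Next I would invoke Theorem~\ref{thm:mlr} applied to the matrix $\Q^T\A \in \RB^{p\times n}$: the best rank-$k$ approximation of $\Q^T\A$ in Frobenius norm is exactly its rank-$k$ truncated SVD, which is $\B_k$ by definition. Hence $\|\Q^T\A - \B_k\|_F \leq \|\Q^T\A - \B\|_F$ for every $\B$ of rank at most $k$. Combining this with the Pythagorean splitting gives
\[
\|\A - \Q\B_k\|_F^2 = \|\A - \Q\Q^T\A\|_F^2 + \|\Q^T\A - \B_k\|_F^2 \leq \|\A - \Q\Q^T\A\|_F^2 + \|\Q^T\A - \B\|_F^2 = \|\A - \Q\B\|_F^2
\]
for all admissible $\B$, which is precisely the claimed optimality of $\B_k$ in \eqref{eqn:082}.

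I do not anticipate a serious obstacle here; the proof is essentially a bookkeeping exercise. The one point that requires a moment of care is verifying the orthogonality relation $(\A - \Q\Q^T\A)^T \Q = \0$ so that the matrix Pythagorean theorem applies (this is where column orthonormality $\Q^T\Q = \I_p$ is used, not full orthogonality), and then noting that left-multiplication by a column orthonormal $\Q$ preserves the Frobenius norm. After that, the result is an immediate corollary of Theorem~\ref{thm:mlr}. I would also remark, as with Theorem~\ref{thm:mlr}, that $\B_k$ need not be the unique minimizer if tie-breaking among singular values occurs, but uniqueness is not claimed in the statement.
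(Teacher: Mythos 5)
Your proof is correct and follows essentially the same route as the paper: the orthogonality relation $(\A - \Q\Q^T\A)^T\Q = \0$ gives the Pythagorean splitting $\|\A - \Q\B\|_F^2 = \|\A - \Q\Q^T\A\|_F^2 + \|\Q^T\A - \B\|_F^2$, after which the result is an immediate application of Theorem~\ref{thm:mlr} to $\Q^T\A$. Your added remarks on where column orthonormality enters and on non-uniqueness are accurate but do not change the argument.
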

\begin{proof} Note that 
$(\A -\Q \Q^T \A)^T (\Q \B - \Q \Q^T \A) = \0$,  
so 
\begin{align*}
\|\A- \Q \B\|_F^2 & = \|\A - \Q \Q^T \A\|_F^2 + \|\Q\B - \Q \Q^T \A|_F^2  \\
& = \|\A - \Q \Q^T \A\|_F^2 + \|\B - \Q^T \A|_F^2. 
\end{align*}
The result of the theorem follows from Theorem~\ref{thm:mlr}. 
\end{proof}

Theorem~\ref{thm:qb} is a variant of Theorem~\ref{thm:mlr} and of Theorem~\ref{thm:ye}. Unfortunately, $\B_k$ might not be the solution to the above problem in every unitarily invariant norm, even in the spectral norm error. The reason is that the matrix Pythagorean identity  hods only for the Frobenius norm (see Theorem~\ref{thm:pytho}).

However, \citet{tropp2015introduction} pointed out that Frobenius-norm error bounds are not acceptable in most cases of practical interest. He even said ``Frobenius-norm error bounds are typically vacuous'' \citep{tropp2015introduction}.  The following theorem was proposed by \cite{gu2015subspace}, which relates the approximation error in the Frobenius norm to that in the spectral norm.

\begin{theorem} \citep{gu2015subspace}  \label{thm:f-s} Given any matrix $\A \in \RB^{m\times n}$, let  $p=\min\{m, n\}$ and $\B$ be a matrix with rank at most $k$ such that
\[
\|\A-\B\|_F \leq \sqrt{\eta^2 + \sum_{j=k+1}^p \sigma_j^2(\A)}
\]
for some $\eta \geq 0$. Then we must have $\sqrt{\sum_{j=1}^k (\sigma_j(\A) - \sigma_j(\B))^2 } \leq \eta$ and
\[
\|\A - \B\|_2 \leq \sqrt{\eta^2 + \sigma_{k+1} ^2(\A)}.
\]
\end{theorem}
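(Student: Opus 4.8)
\textbf{Proof plan for Theorem~\ref{thm:f-s}.}
The plan is to exploit the earlier perturbation results for singular values, especially Theorem~\ref{thm:d-b-sv} (and its corollary Proposition~\ref{pro:d-b-sv}), together with the obvious decomposition of $\|\A-\B\|_F^2$ into its singular-value contributions. First I would observe that since $\B$ has rank at most $k$, at most $k$ of its singular values $\sigma_j(\B)$ are nonzero, so $\sigma_j(\B)=0$ for $j>k$. Then I would split the sum
\[
\sum_{i=1}^p (\sigma_i(\A)-\sigma_i(\B))^2 = \sum_{j=1}^k (\sigma_j(\A)-\sigma_j(\B))^2 + \sum_{j=k+1}^p \sigma_j^2(\A).
\]
By the Hoffman--Wielandt-type inequality for singular values recorded right after Proposition~\ref{pro:d-b-sv} (namely $\sqrt{\sum_i (\sigma_i(\A)-\sigma_i(\B))^2}\le \|\A-\B\|_F$), the left-hand side is at most $\|\A-\B\|_F^2 \le \eta^2 + \sum_{j=k+1}^p \sigma_j^2(\A)$. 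Subtracting the tail sum $\sum_{j=k+1}^p\sigma_j^2(\A)$ from both sides yields $\sum_{j=1}^k (\sigma_j(\A)-\sigma_j(\B))^2 \le \eta^2$, which is the first claimed bound.

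For the spectral-norm bound, the idea is to reduce to the rank-$k$ truncated SVD $\A_k$ via the triangle inequality: $\|\A-\B\|_2 \le \|\A-\A_k\|_2 + \|\A_k-\B\|_2 = \sigma_{k+1}(\A) + \|\A_k-\B\|_2$ is too lossy, so instead I would argue more carefully. The cleaner route is: $\|\A-\B\|_2 = \sigma_1(\A-\B)$, and I want to bound this by $\sqrt{\eta^2+\sigma_{k+1}^2(\A)}$. Here I would use the interlacing/monotonicity facts from Proposition~\ref{pro:032}: since $\B$ has rank $\le k$, one can write $\B = \Q\Q^T\B$ for some $m\times k$ column-orthonormal $\Q$, hence $\A-\B = (\I-\Q\Q^T)\A + \Q(\Q^T\A - \Q^T\B)$, and the two summands have orthogonal row spaces in the appropriate sense. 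Actually the sharper tool is to decompose $\A-\B$ against the column space of $\B$: writing $\PP$ for the orthogonal projector onto $\mathrm{range}(\B)$ (dimension $\le k$), we have $\A-\B = (\I-\PP)\A + (\PP\A - \B)$ with $(\I-\PP)\A$ and $\PP\A-\B$ having orthogonal column spaces, so by the Pythagorean property of the spectral norm under this orthogonality (Theorem~\ref{thm:pytho}), $\|\A-\B\|_2^2 \le \|(\I-\PP)\A\|_2^2 + \|\PP\A-\B\|_2^2$. The first term is $\le \sigma_{k+1}^2(\A)$ because $(\I-\PP)\A$ is $\A$ with a $k$-dimensional subspace projected out (Proposition~\ref{pro:032}-(5), or Theorem~\ref{thm:mlr}'s proof).

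The main obstacle will be controlling the second term $\|\PP\A-\B\|_2$ by $\eta$. The matrix $\PP\A-\B$ has rank $\le k$, living entirely in $\mathrm{range}(\B)$; its Frobenius norm is $\le \eta$ in fact once one knows $\|(\I-\PP)\A\|_F^2 = \sum$ of the $p-k$ smallest singular values of $\PP^\perp\A$ which already exceeds $\sum_{j=k+1}^p\sigma_j^2(\A)$ only in the wrong direction — so the naive $\|\cdot\|_2\le\|\cdot\|_F$ bound gives $\|\PP\A-\B\|_2\le\|\PP\A-\B\|_F\le\eta$ only if the Frobenius budget is not eaten up by the tail. This is precisely where the hypothesis $\|\A-\B\|_F^2\le\eta^2+\sum_{j>k}\sigma_j^2(\A)$ combined with $\|(\I-\PP)\A\|_F^2\ge\sum_{j>k}\sigma_j^2(\A)$ (which holds since removing a $k$-dim subspace leaves singular values dominating $\sigma_{k+1},\dots,\sigma_p$) forces $\|\PP\A-\B\|_F^2 = \|\A-\B\|_F^2 - \|(\I-\PP)\A\|_F^2 \le \eta^2$, hence $\|\PP\A-\B\|_2\le\eta$. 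Combining, $\|\A-\B\|_2^2\le\sigma_{k+1}^2(\A)+\eta^2$, as desired. I would need to double-check the direction of the inequality $\|(\I-\PP)\A\|_F^2\ge\sum_{j>k}\sigma_j^2(\A)$ carefully — it follows from Proposition~\ref{pro:032}-(5) applied coordinatewise — and verify the column-space orthogonality needed to invoke Theorem~\ref{thm:pytho}; these are the only genuinely delicate points, the rest being bookkeeping.
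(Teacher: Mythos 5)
Your argument for the first bound is correct and is essentially the paper's: apply the Hoffman--Wielandt-type inequality $\sum_{i=1}^p(\sigma_i(\A)-\sigma_i(\B))^2\le\|\A-\B\|_F^2$ (Theorem~\ref{thm:d-b-sv}, Proposition~\ref{pro:d-b-sv}), note $\sigma_j(\B)=0$ for $j>k$, and subtract the tail $\sum_{j>k}\sigma_j^2(\A)$ from the Frobenius hypothesis.

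The spectral-norm half, however, has a genuine gap: the step $\|(\I-\PP)\A\|_2\le\sigma_{k+1}(\A)$, with $\PP$ the orthogonal projector onto $\rg(\B)$, is false in general. Projecting out a $k$-dimensional subspace gives only the interlacing $\sigma_{i+k}(\A)\le\sigma_i((\I-\PP)\A)\le\sigma_i(\A)$ (Proposition~\ref{pro:032}); the upper bound $\sigma_1((\I-\PP)\A)\le\sigma_{k+1}(\A)$ would require $\rg(\B)$ to be the span of the top $k$ left singular vectors of $\A$, which is not assumed. Concretely, take $\A=\diag(10,1)$, $k=1$, $\B=\diag(0,1)$: then $(\I-\PP)\A=\diag(10,0)$ and $\|(\I-\PP)\A\|_2=10\gg\sigma_2(\A)=1$. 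The rest of your decomposition is sound --- $\E_1=(\I-\PP)\A$ and $\E_2=\PP\A-\B$ do satisfy $\E_1^T\E_2=\0$, so Theorem~\ref{thm:pytho} applies, and since $\|(\I-\PP)\A\|_F\ge\|\A-\A_k\|_F$ (Theorem~\ref{thm:mlr}, as $\PP\A$ has rank at most $k$) the Frobenius budget indeed forces $\|\PP\A-\B\|_F\le\eta$ --- but it then yields only $\|\A-\B\|_2^2\le\|(\I-\PP)\A\|_2^2+\eta^2$. In the example (with $\eta^2=99$) this gives $199$, whereas the theorem asserts the bound $\eta^2+\sigma_2^2(\A)=100$; the conclusion happens to hold there, but your argument does not establish it. The paper's proof avoids the projector entirely and works with the singular values of $\A-\B$ directly: Weyl's inequality (Proposition~\ref{pro:032}-(2)) with $\rk(\B)\le k$ gives $\sigma_i(\A-\B)\ge\sigma_{i+k}(\A)$ for all $i$, hence $\|\A-\B\|_F^2\ge\sigma_1^2(\A-\B)+\sum_{i=2}^{p-k}\sigma_{i+k}^2(\A)$, and subtracting $\sum_{j=k+2}^{p}\sigma_j^2(\A)$ from the hypothesis isolates $\sigma_1^2(\A-\B)\le\eta^2+\sigma_{k+1}^2(\A)$. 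Replacing your projector step by this Weyl argument repairs the proof.
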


\begin{proof} By Proposition~\ref{pro:032}-(2), we have
\[
\sigma_{i+k}(\A) \leq \sigma_i(\A-\B) + \sigma_{k+1}(\B) = \sigma_i(\A-\B) \; \mbox{ for } \; i \in [p-k]
\]
due to $\rk(\B) \leq k$.  It then follows that 
\begin{align*}
\|\A-\B\|_F^2  &= \sum_{i=1}^p \sigma_i^2(\A-\B) \geq \sigma_1^2(\A-\B) + \sum_{i=2}^{p-k} \sigma_i^2(\A-\B) \\
  & \geq \sigma_1^2(\A-\B) + \sum_{i=2}^{p-k} \sigma_{i+k}^2 (\A).
\end{align*}
We thus obtain 
\[
\|\A-\B\|_2^2 = \sigma_1^2(\A-\B) \leq \eta^2 + \sigma^2_{k+1} (\A). 
\]
Additionally,  it follows from Theorem~\ref{thm:d-b-sv} that 
\[
\sum_{i=1}^k (\sigma_i(\A)-\sigma_i(\B))^2 + \sum_{j=k+1}^p \sigma_j^2(\B) \leq \|\A-\B\|_F^2\leq \eta^2+  \sum_{j=k+1}^p \sigma_j^2(\A),
\]
which leads to the result. 
\end{proof}

Let us apply Theorem~\ref{thm:f-s} to Theorem~\ref{thm:qb} to establish a spectral norm error bound.    
It follows from Theorem~\ref{thm:qb} that 
\[
\|\A-\A_k\|_F \leq \|\A - \Q \B_k\|_F \leq \|\A - \Q \Q^T \A_k\|_F.
\]
Consider that
\begin{align*}
  \|\A - \Q \Q^T \A_k\|_F^2 &= \|\A- \A_k + \A_k- \Q \Q^T \A_k\|_F^2 \\
 & = \|(\I_m - \Q \Q^T) \A_k\|_F^2 + \|\A-\A_k\|_F^2
\end{align*}
due to $(\A-\A_k) \A_k^T (\I_m - \Q \Q^T)=\0$.  Thus,
\[
\|\A - \Q \B_k\|_F^2 \leq  \|(\I_m - \Q \Q^T) \A_k\|_F^2 + \sum_{i=k+1}^n  \sigma_i^2(\A).
\]
By Theorem~\ref{thm:f-s}, we have that
\[
\|\A - \Q \B_k\|_2^2 \leq  \|(\I_m - \Q \Q^T) \A_k\|_F^2 + \sigma_{k+1}^2(\A),
\]
which can give an error bound in the spectral norm.

\section{Approximate Matrix Multiplication}

Given matrices $\A \in \RB^{n \times d}$ and $\B \in \RB^{n \times p}$,
it is well known that the complexity of computing $\A^T \B$ is
$O(dnp)$. Approximate matrix multiplication  aims to obtain a matrix $\C \in \RB^{d \times p}$ with $o(d n p)$  time
complexity such  that for a small $\varepsilon >0$, 
\[
\|\A^T \B-\C\|\leq\varepsilon\|\A\|\|\B\|. 
\]
This shows that  approximate matrix multiplication can be viewed as an inverse process of the conventional matrix low rank approximation problem.  

Approximate matrix multiplication is a potentially important approach for fast matrix multiplication \citep{drineas2006fast,clarkson2009numerical,cohen1999approximating,kane2014sparser,drineas2011faster,nelson2013osnap,clarkson2013low}.  It is   the foundation of approximate least square methods and matrix low rank approximation methods~\citep{sarlos2006improved,halko2011finding,kyrillidis2014approximate,martinsson2011randomized,woolfe2008fast,magdon2011using,magen2011low,cohen1999approximating,kane2014sparser,drineas2011faster,nelson2013osnap,clarkson2013low}. Moreover, it can be also used in large scalable  k-means clustering \citep{cohen2014dimensionality},  approximate leverage scores~\citep{Petros2011Fast}, etc.

Most of work for matrix approximations is based on  error bounds w.r.t.\  the Frobenius norm ~\citep{drineas2006fast,sarlos2006improved,cohen1999approximating,kane2014sparser,drineas2011faster,nelson2013osnap,clarkson2013low}. 
In contrast, there is a few work based on spectral-norm error bounds  \citep{halko2011finding,kyrillidis2014approximate,martinsson2011randomized,woolfe2008fast,magdon2011using,magen2011low}. As we have mentioned earlier, spectral-norm error bounds are also of great interest.

In  approximate matrix multiplication, 
oblivious subspace embedding matrix is a key ingredient. For example, gaussian matrix and random sign matrix are oblivious matrix. However, leverage score sketching matrix depends on data matrix, hence, it is not an oblivious subspace embedding matrix.

\begin{definition}\citep{woodruff2014sketching}
	Given $\varepsilon>0$ and $\delta>0$, let $\Pi$ be a distribution on $l \times n$ matrices, where $l$ relies on $n$, $d$, $\varepsilon$ and $\delta$. Suppose that
	with probability at lest $1-\delta$, for any fixed $n \times d$ matrix $\A$, a matrix $\S$ drawn from distribution $\Pi$
	is a $(1+\varepsilon)$ $\ell_{2}$-subspace embedding for $\A$, that is, for all $\x \in \RB^{d}$,
	$\|\S\A\x\|_{2}^{2}=(1\pm\varepsilon)\|\A\x\|_{2}^{2}$ with probability
	$1-\delta$. Then we call $\Pi$ an $(\varepsilon,\delta)$-oblivious $\ell_{2}$-subspace
	embedding, 
\end{definition}
Recently, \cite{CohenNelsonWoodrull}  proved optimal approximate matrix multiplication in terms of stable rank by using subspace embedding~\citep{BSS:2014}. 
\begin{theorem}\citep{CohenNelsonWoodrull}
	Given $\varepsilon$, $\delta\in(0,1/2)$, let $\A$ and $\B$ be two conforming matrices, and $\Pi$ be a $(\varepsilon,\delta)$ subspace embedding for the $2\tilde{r}$-dimensional subspace, where $\tilde{r}$ is the maximum of the stable ranks of $\A$ and $\B$. Then,
	\[
	||(\Pi\A)^{T}(\Pi\B)-\A^{T}\B||\leq\varepsilon||\A||||\B||
	\] 
	holds with at least $1-\delta$.
\end{theorem}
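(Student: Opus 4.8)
The plan is to reduce the bilinear estimate to a symmetric (one-matrix) approximate product and then to extract the stable-rank savings through a head/tail split of the singular spectrum. First I would normalise and stack the two matrices: set $\hat{\A}=\A/(\sqrt{2}\,\|\A\|_2)$ and $\hat{\B}=\B/(\sqrt{2}\,\|\B\|_2)$, and let $\C=[\hat{\A},\ \hat{\B}]$ be the column concatenation. Then $\|\C\|_F^2=\tfrac12\big(\|\A\|_F^2/\|\A\|_2^2+\|\B\|_F^2/\|\B\|_2^2\big)\le\tilde r$, while $\|\C\|_2^2=\|\hat{\A}\hat{\A}^T+\hat{\B}\hat{\B}^T\|_2\ge\tfrac12$ (adding the PSD term $\hat{\B}\hat{\B}^T$ only raises eigenvalues, cf.\ Proposition~\ref{pro:031}-(2)), so the stable rank of $\C$ is at most $2\tilde r$; this is exactly where the dimension $2\tilde r$ of the embedding originates. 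The top-right block of $\C^T\C$ is $\A^T\B/(2\|\A\|_2\|\B\|_2)$ and that of $(\Pi\C)^T(\Pi\C)$ is $(\Pi\A)^T(\Pi\B)/(2\|\A\|_2\|\B\|_2)$, and the spectral norm of a block submatrix never exceeds that of the whole matrix (Proposition~\ref{pro:032}-(5)), so a bound $\|(\Pi\C)^T\Pi\C-\C^T\C\|_2\le\varepsilon\|\C\|_2^2$ immediately gives $\|(\Pi\A)^T\Pi\B-\A^T\B\|_2\le 2\varepsilon\|\A\|_2\|\B\|_2$; the harmless constant is absorbed by running the embedding at parameter $\varepsilon/2$, which does not change the dimension. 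So it suffices to treat the symmetric case: if $\|\C\|_F^2/\|\C\|_2^2\le 2\tilde r$ and $\Pi$ is an $(\varepsilon,\delta)$ embedding for the relevant $2\tilde r$-dimensional subspace, then $\|(\Pi\C)^T\Pi\C-\C^T\C\|_2\le\varepsilon\|\C\|_2^2$ with probability at least $1-\delta$.

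For the symmetric step I would use that $(\Pi\C)^T\Pi\C-\C^T\C$ is symmetric, so by the Rayleigh-quotient characterisation of the spectral norm (Theorem~\ref{thm:von}),
\[
\|(\Pi\C)^T\Pi\C-\C^T\C\|_2=\sup_{\|\x\|_2=1}\big|\,\|\Pi\C\x\|_2^2-\|\C\x\|_2^2\,\big|,
\]
so with $\z=\C\x$ (which has $\|\z\|_2\le\|\C\|_2$) we must control $\|\Pi\z\|_2^2-\|\z\|_2^2$ uniformly over the ellipsoid $\{\C\x:\|\x\|_2=1\}$. Taking the full SVD $\C=\U\Si\V^T$ with $\sigma_1\ge\sigma_2\ge\cdots$, I would split $\z=\z_H+\z_T$ along a low-dimensional subspace $W$ spanned by leading left singular directions versus its orthogonal complement. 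Because $\z_H$ lies in the fixed subspace $W$, the subspace-embedding hypothesis applied to $W$ yields $\big|\|\Pi\z_H\|_2^2-\|\z_H\|_2^2\big|\le\varepsilon\|\z_H\|_2^2\le\varepsilon\|\C\|_2^2$ at once. The real work is to show that the tail term $\|\Pi\z_T\|_2^2-\|\z_T\|_2^2$ and the cross term $2\langle\Pi\z_H,\Pi\z_T\rangle$ are also $O(\varepsilon\|\C\|_2^2)$. One cannot merely invoke a Frobenius-norm approximate-product bound, since $\|\C_T\|_F^2$ can be as large as $2\tilde r\,\|\C\|_2^2$, which would only give an error of order $\varepsilon\tilde r\|\C\|_2^2$ --- too weak by a factor $\tilde r$. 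Instead I would decompose the tail dyadically by singular-value magnitude, $\C_T=\sum_{j\ge1}\C^{(j)}$, where $\C^{(j)}$ collects the singular values in $(2^{-j}\tau,\,2^{-(j-1)}\tau]$ for a suitable threshold $\tau$: the contribution $\|(\C^{(j)})^T\C^{(j)}\|_2\le 4^{-(j-1)}\tau^2$ decays geometrically, only finitely many levels $j$ (down to where $2^{-j}\tau$ falls below $\varepsilon^{1/2}\|\C\|_2$, past which even $(\C^{(j)})^T\C^{(j)}$ is negligible) matter, the column space of each such low-order block is itself $O(\tilde r)$-dimensional, and all these block column spaces can be assembled into one subspace $W$ of dimension $\le 2\tilde r$; summing the per-block errors gives a geometric series of total size $O(\varepsilon\|\C\|_2^2)$, and Cauchy--Schwarz against the head estimate disposes of the cross terms.

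The main obstacle is precisely this tail analysis: designing the $2\tilde r$-dimensional subspace $W$ --- which is not just the span of the top $2\tilde r$ singular vectors of $\C$ but a carefully chosen union of low-rank ``heads'' across the dyadic scales --- and verifying that the geometric series of tail contributions really sums to $O(\varepsilon\|\C\|_2^2)$ while $\dim W$ stays $O(\tilde r)$ rather than the $O(\tilde r/\varepsilon^2)$ that a single crude truncation would force. This dyadic/chaining argument, together with a second-moment bound on the residual block, is the technical heart of the theorem of \cite{CohenNelsonWoodrull}; the normalisation reduction and the head estimate above are routine. Finally, because the whole argument uses the embedding only on the single subspace $W$ of dimension at most $2\tilde r$, the probability bookkeeping is trivial: the event that $\Pi$ is an $\varepsilon$-embedding of $W$ holds with probability at least $1-\delta$ by hypothesis, and on that event the chain of inequalities delivers the stated bound.
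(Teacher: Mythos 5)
Your reduction step is fine and matches the standard route: normalising and stacking, $\C=[\hat{\A},\hat{\B}]$ has stable rank at most $2\tilde r$, the top-right block of $(\Pi\C)^T\Pi\C-\C^T\C$ is the quantity of interest, and the factor $2$ is absorbed into $\varepsilon$. Note, however, that the paper itself offers no proof of this theorem — it is quoted from \cite{CohenNelsonWoodrull} — so your attempt has to stand on its own, and judged that way it has a genuine gap: the entire content of the theorem lies in the tail analysis that you only sketch and ultimately defer back to \cite{CohenNelsonWoodrull}.

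Concretely, the dimension bookkeeping in your dyadic decomposition does not close. If block $j$ collects the singular values of $\C$ in $(2^{-j}\tau,2^{-(j-1)}\tau]$ with $\tau\approx\|\C\|_2$, then since $\|\C\|_F^2\le 2\tilde r\|\C\|_2^2$ that block can contain up to roughly $2\tilde r\,4^{j}$ singular values; the blocks grow geometrically rather than shrink. Summing over the scales you keep (down to $2^{-j}\tau\approx\sqrt{\varepsilon}\,\|\C\|_2$), the union of the block column spaces has dimension of order $\tilde r/\varepsilon$, not $2\tilde r$, so the claim that ``all these block column spaces can be assembled into one subspace $W$ of dimension $\le 2\tilde r$'' fails as stated — and removing that loss of $1/\varepsilon$ (or, in cruder versions, of $\tilde r$) is precisely the improvement the cited theorem provides. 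A second unaddressed point is the deep tail: for directions outside the fixed subspace $W$, the $(\varepsilon,\delta)$ subspace-embedding hypothesis gives no control whatsoever on $\|\Pi\z\|_2$, so the fact that $(\C^{(j)})^T\C^{(j)}$ is small for fine scales does not make $(\Pi\C^{(j)})^T\Pi\C^{(j)}$ small, and your Cauchy--Schwarz treatment of the cross terms likewise needs a bound on $\|\Pi\z_T\|_2$ that nothing in the sketch supplies. So the head estimate and the reduction are routine and correct, but the step that would actually prove the theorem — constructing a single $O(\tilde r)$-dimensional subspace whose preservation controls both the tail and the cross terms, with the errors summing to $O(\varepsilon\|\C\|_2^2)$ — is missing, and the specific mechanism you propose for it would not work without a substantially different argument.
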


To analyze approximate matrix multiplication with the Frobenius error, \cite{kane2014sparser} introduced the JL-moment property.

\begin{definition}
	A distribution $\mathcal{D}$ over $\R^{n \times d}$ has the $(\varepsilon, \delta, \ell)$-JL moment property if for all $\x \in \R^{d}$ with $\|\x\|_2 = 1$,
	\[
	\mathbb{E}_{\Pi\sim\mathcal{D}}\left| \|\Pi\x\|_2^{2}-1\right|^{\ell} \leq \varepsilon^{\ell}\cdot\delta
	\]
\end{definition}

Based on the JL-moment property, these is an approximate matrix multiplication method with the Frobenius error. 

\begin{theorem}
	Given $\varepsilon$, $\delta\in(0,1/2)$, let $\A$ and $\B$ be two conforming matrices, and $\Pi$ be a matrix satisfying the $(\varepsilon, \delta, \ell)$-JL moment property for some $\ell\geq 2$. Then,
	\[
	||(\Pi\A)^{T}(\Pi\B)-\A^{T}\B||_{F}\leq\varepsilon||\A||_{F}||\B||_{F}
	\] 
	holds with at least $1-\delta$.
\end{theorem}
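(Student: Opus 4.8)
The plan is to follow the standard reduction from the JL moment property to approximate matrix multiplication: first bound a single high moment of the Frobenius error, then finish with Markov's inequality. Throughout write $\|Y\|_{L^\ell}\triangleq(\EB_{\Pi}|Y|^\ell)^{1/\ell}$. The first preparatory step is to homogenize the hypothesis: since the JL moment property is stated only for unit vectors, I would apply it to $\z/\|\z\|_2$ and use $\|\Pi\z\|_2^2=\|\z\|_2^2\,\|\Pi(\z/\|\z\|_2)\|_2^2$ to obtain the scale-invariant form
\[
\big\|\, \|\Pi\z\|_2^2-\|\z\|_2^2 \,\big\|_{L^\ell} \;\le\; \varepsilon\,\delta^{1/\ell}\,\|\z\|_2^2 \qquad \text{for every } \z\in\RB^{d}.
\]

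Next I would pass from squared-norm distortion to inner-product distortion. For vectors $\x,\y$ set $Z(\x,\y)\triangleq\langle\Pi\x,\Pi\y\rangle-\langle\x,\y\rangle$, which is bilinear in $(\x,\y)$. The polarization identity $\langle\u,\v\rangle=\tfrac14(\|\u+\v\|_2^2-\|\u-\v\|_2^2)$ gives
\[
Z(\x,\y) \;=\; \tfrac14\Big[\big(\|\Pi(\x+\y)\|_2^2-\|\x+\y\|_2^2\big)-\big(\|\Pi(\x-\y)\|_2^2-\|\x-\y\|_2^2\big)\Big],
\]
so Minkowski's inequality in $L^\ell$ (legitimate since $\ell\ge 1$), the scale-invariant bound, and the parallelogram law $\|\x+\y\|_2^2+\|\x-\y\|_2^2=2\|\x\|_2^2+2\|\y\|_2^2$ give, for \emph{unit} $\x,\y$, that $\|Z(\x,\y)\|_{L^\ell}\le\tfrac14\varepsilon\delta^{1/\ell}\big(\|\x+\y\|_2^2+\|\x-\y\|_2^2\big)=\varepsilon\delta^{1/\ell}$. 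By bilinearity of $Z$ this upgrades to $\|Z(\x,\y)\|_{L^\ell}\le\varepsilon\delta^{1/\ell}\|\x\|_2\|\y\|_2$ for arbitrary $\x,\y$ (zero columns contribute nothing). A less careful polarization would produce an extra constant such as $3$, which can simply be absorbed by running the argument with $\varepsilon/3$ in the hypothesis.

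Then I would expand the Frobenius error column-wise. Let $\a_1,\a_2,\dots$ and $\b_1,\b_2,\dots$ be the columns of $\A$ and $\B$, so that $\|(\Pi\A)^T(\Pi\B)-\A^T\B\|_F^2=\sum_{i,j}Z(\a_i,\b_j)^2$. Put $X\triangleq\|(\Pi\A)^T(\Pi\B)-\A^T\B\|_F$ and apply Minkowski's inequality in $L^{\ell/2}$ to the nonnegative sum $X^2=\sum_{i,j}Z(\a_i,\b_j)^2$; this is exactly where the hypothesis $\ell\ge 2$ is used, since it requires exponent $\ell/2\ge 1$. Using $\|Z^2\|_{L^{\ell/2}}=\|Z\|_{L^\ell}^2$ and $(\EB X^\ell)^{2/\ell}=\|X^2\|_{L^{\ell/2}}$, we get
\[
(\EB X^\ell)^{2/\ell} \;\le\; \sum_{i,j}\big\|Z(\a_i,\b_j)\big\|_{L^\ell}^2 \;\le\; \varepsilon^2\delta^{2/\ell}\sum_{i,j}\|\a_i\|_2^2\|\b_j\|_2^2 \;=\; \varepsilon^2\delta^{2/\ell}\,\|\A\|_F^2\|\B\|_F^2,
\]
since $\sum_{i,j}\|\a_i\|_2^2\|\b_j\|_2^2=\big(\sum_i\|\a_i\|_2^2\big)\big(\sum_j\|\b_j\|_2^2\big)=\|\A\|_F^2\|\B\|_F^2$. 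Hence $\EB X^\ell\le\varepsilon^\ell\,\delta\,\|\A\|_F^\ell\|\B\|_F^\ell$, and Markov's inequality applied to $X^\ell$ yields
\[
\Pr\big[\,X>\varepsilon\|\A\|_F\|\B\|_F\,\big] \;\le\; \frac{\EB X^\ell}{\varepsilon^\ell\|\A\|_F^\ell\|\B\|_F^\ell} \;\le\; \delta,
\]
which is the claim. The main obstacle is the first step: correctly homogenizing the JL moment property and selecting a polarization identity that keeps the constant at $1$ (or tracking exactly where an absorbable constant enters); once the inner-product moment estimate $\|Z(\x,\y)\|_{L^\ell}\le\varepsilon\delta^{1/\ell}\|\x\|_2\|\y\|_2$ is established, the remaining steps are routine applications of Minkowski's and Markov's inequalities.
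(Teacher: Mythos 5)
Your proof is correct. The paper itself states this theorem without proof (it is quoted from the cited work of Kane and Nelson on sparse JL transforms), so there is no in-paper argument to compare against; your derivation is precisely the standard one from that literature: homogenize the JL moment property, polarize to get the inner-product moment bound $\big\|\langle\Pi\x,\Pi\y\rangle-\langle\x,\y\rangle\big\|_{L^\ell}\le\varepsilon\delta^{1/\ell}\|\x\|_2\|\y\|_2$ (your use of the parallelogram law does give constant $1$, so the remark about absorbing a factor $3$ is unnecessary), then sum over column pairs with Minkowski's inequality in $L^{\ell/2}$ (the only place $\ell\ge 2$ is needed) and finish with Markov applied to $X^\ell$. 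All steps check out, and the argument correctly fills the proof the paper omits.
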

Note that both the subspace embedding property and the JL moment property have close relationships. More specifically, they  can be converted into each other~\citep{kane2014sparser}. 

There are other methods, which do  not use subspace embedding matrices, in the literature. \cite{magen2011low} gave a method based on columns selection. \cite{bhojanapallitighter} proposed a new method with sampling and alternating minimization to directly compute a low-rank approximation to the product of two given matrices.  

For low-rank matrix approximation in the streaming model, \cite{clarkson2009numerical} gave the near-optimal space bounds by the sketches.  \cite{liberty2013simple} came up with a deterministic streaming algorithm, with an improved analysis studied by \cite{ghashami2014relative} and space lower bound obtained by \cite{woodruff2014low}.

\chapter{Large-Scale Matrix Approximation} 
\label{ch:lsma}

In this chapter we discuss  fast computational methods of the SVD, kernel methods, and CUR decomposition via randomized approximation.
The goal is to  make the  matrix factorizations fill the use on large scale data matrices. 

It is notoriously difficult to compute SVD because  the exact SVD of an  $m\times n$ matrix takes $\OM (m n \min\{m, n\})$ time.
Fortunately, many machine learning methods such as latent semantic indexing \citep{deerwester1990lsa},
spectral clustering \citep{shi2000normalized},
manifold learning \citep{tenenbaum2000global,Belhumeur:2003}
are interested in only the top singular value triples.
The Krylov subspace method computes the top $k$ singular value triples in $\tilde\OM (mnk)$ time \citep{saad2011numerical,musco15stronger},
where the $\tilde\OM$ notation hides the logarithm factors and the data dependent condition number.
If a low precision solution suffices, the time complexity can be even lower.  
Here 
we will make main attention on  randomized approximate algorithms that demonstrate high scalability.
Randomized algorithms are a feasible approach for large scale machine learning models \citep{rokhlinSIAM:2009,Mahoney:2011,TuICML:2014}.
In particular,  we will consider randomized SVD methods~\citep{halko2011finding}.

In contrast to the randomized SVD which is based on random projection, the CUR approximation mainly employs column selection. 
Column selection  has been extensively studied in the theoretical computer science (TCS)
and numerical linear algebra (NLA) communities.
The work in TCS mainly focuses on choosing good columns by randomized algorithms with provable error bounds
\citep{FriezeJACM:2004,deshpande2006matrix,DrineasCUR:2008,deshpande2010efficient,boutsidis2011near,Guruswami2012optimal}.
The focus in NLA is then on deterministic algorithms, especially the rank-revealing QR factorizations, that select columns by pivoting rules
\citep{foster1986rank,chan1987rank,stewart1999four,bischof1991structure,hong1992rank,chandrasekaran1994rank,gu1996efficient,berry2005algorithm}. 


\section{Randomized SVD}
\label{sec:rsvd}

All the randomized SVD algorithms essentially have the same idea:
first draw a random projection matrix $\Omeb \in \RB^{n\times c}$, 
then form the sketch $\C = \A \Omeb \in \RB^{m\times c}$ and compute its orthonormal bases $\Q \in \RB^{m\times c}$,
and finally compute a rank $k$ matrix $\X \in \RB^{c\times n}$ such that $\| \A - \Q \X \|_{\xi}^2$ is small compared to $\| \A - \A_k\|_{\xi}^2$.
Here  $\|\cdot \|_{\xi}$ denotes either the Frobenius norm or the spectral norm.


The following lemma is the foundation  in theoretical analysis of  the randomized SVD~\citep{halko2011finding,gu2015subspace}.

\begin{lemma}  \label{lem:rsvd00} Let $\A \in \RB^{m\times n}$ be a given matrix, and $\Z \in \RB^{n\times k}$ be column orthonormal. Let $\Omeb \in \RB^{n\times c}$ be any matrix such that $\rk(\Z^T \Omeb) = \rk(\Z) =k$, and define $\C  = \A \Omeb \in \RB^{m\times c}$ . Then
\[
\|\A -  \Pi_{\C, k}^{\xi}(\A) \|_{\xi}^2 \leq \|\E \|_{\xi}^2 + \|\E \Omeb (\Z^T \Omeb)^{\dag} \|_{\xi}^2,
\]
where $\E = \A {-} \A \Z \Z^T$, and  
$\Pi_{\C, k}^{\xi}(\A) \in \RB^{m\times n}$ denotes the best approximation to $\A$ within the column space of $\C$
that has rank at most $k$ w.r.t.\ the norm $\|\cdot\|_{\xi}$ loss. 
\end{lemma}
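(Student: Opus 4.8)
The plan is to bound the left-hand side by exhibiting one concrete competitor. Since $\Pi_{\C,k}^{\xi}(\A)$ is, by definition, the best approximation to $\A$ of rank at most $k$ whose column space lies in $\rg(\C)$, it suffices to produce \emph{some} matrix $\Y\in\RB^{m\times n}$ with $\rk(\Y)\le k$ and $\rg(\Y)\subseteq\rg(\C)$ for which $\|\A-\Y\|_{\xi}^2\le\|\E\|_{\xi}^2+\|\E\Omeb(\Z^T\Omeb)^{\dag}\|_{\xi}^2$; then $\|\A-\Pi_{\C,k}^{\xi}(\A)\|_{\xi}\le\|\A-\Y\|_{\xi}$ finishes the argument. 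So the whole proof reduces to choosing $\Y$ well.

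The choice I would make is $\Y\triangleq\C(\Z^T\Omeb)^{\dag}\Z^T$. Here $\C(\Z^T\Omeb)^{\dag}\in\RB^{m\times k}$ and $\Z^T\in\RB^{k\times n}$, so $\rk(\Y)\le k$, and the columns of $\Y$ are combinations of columns of $\C$, so $\rg(\Y)\subseteq\rg(\C)$. To evaluate $\A-\Y$, first decompose $\C$: since $\A=\A\Z\Z^T+\E$ with $\E=\A(\I_n-\Z\Z^T)$, we get $\C=\A\Omeb=\A\Z(\Z^T\Omeb)+\E\Omeb$. Now use the hypothesis $\rk(\Z^T\Omeb)=\rk(\Z)=k$: the $k\times c$ matrix $\Z^T\Omeb$ has full row rank, hence $(\Z^T\Omeb)(\Z^T\Omeb)^{\dag}=\I_k$, and therefore $\C(\Z^T\Omeb)^{\dag}=\A\Z+\E\Omeb(\Z^T\Omeb)^{\dag}$. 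Multiplying by $\Z^T$ and subtracting from $\A$ yields
\[
\A-\Y=\A-\A\Z\Z^T-\E\Omeb(\Z^T\Omeb)^{\dag}\Z^T=\E-\E\Omeb(\Z^T\Omeb)^{\dag}\Z^T .
\]

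Next I would invoke the matrix Pythagorean property (Theorem~\ref{thm:pytho}). Write $\A-\Y=\P+\Q$ with $\P=\E$ and $\Q=-\E\Omeb(\Z^T\Omeb)^{\dag}\Z^T$. Because $\E\Z=\A(\I_n-\Z\Z^T)\Z=\0$, we have $\P\Q^T=\E\Z(\Z^T\Omeb)^{\dag T}\Omeb^T\E^T=\0$, so Theorem~\ref{thm:pytho} applies and gives, for the Frobenius norm, $\|\A-\Y\|_F^2=\|\E\|_F^2+\|\E\Omeb(\Z^T\Omeb)^{\dag}\Z^T\|_F^2$, and for the spectral norm the inequality $\|\A-\Y\|_2^2\le\|\E\|_2^2+\|\E\Omeb(\Z^T\Omeb)^{\dag}\Z^T\|_2^2$; in both cases the desired bound is $\ge$. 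Finally, I would strip the trailing $\Z^T$: setting $\M=\E\Omeb(\Z^T\Omeb)^{\dag}$, one has $\M\Z^T(\M\Z^T)^T=\M\Z^T\Z\M^T=\M\M^T$, so $\M\Z^T$ and $\M$ share the same Gram matrix, hence $\|\M\Z^T\|_F=\|\M\|_F$ and $\|\M\Z^T\|_2=\|\M\|_2$. Combining these steps gives $\|\A-\Y\|_{\xi}^2\le\|\E\|_{\xi}^2+\|\E\Omeb(\Z^T\Omeb)^{\dag}\|_{\xi}^2$, and the optimality of $\Pi_{\C,k}^{\xi}(\A)$ completes the proof. The only genuinely nontrivial step is spotting the witness $\Y=\C(\Z^T\Omeb)^{\dag}\Z^T$ together with the observation $\E\Z=\0$ that makes the Pythagorean splitting work; everything after that is bookkeeping, though one must keep in mind that for the spectral norm only the inequality half of Theorem~\ref{thm:pytho} is available (which is all that is needed).
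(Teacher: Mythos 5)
Your proposal is correct and follows essentially the same route as the paper's own proof: compare $\Pi_{\C,k}^{\xi}(\A)$ against the competitor $\C(\Z^T\Omeb)^{\dag}\Z^T$, use $(\Z^T\Omeb)(\Z^T\Omeb)^{\dag}=\I_k$ to rewrite the residual as $\E-\E\Omeb(\Z^T\Omeb)^{\dag}\Z^T$, and conclude via the matrix Pythagorean theorem (Theorem~\ref{thm:pytho}) using $\E\Z=\0$. Your version is if anything slightly more explicit than the paper, spelling out the removal of the trailing $\Z^T$ and the fact that the spectral-norm case only needs the inequality half of Theorem~\ref{thm:pytho}.
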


\begin{proof} In terms of definition of $ \Pi_{\C, k}^{\xi}(\A)$, we have
\[
\|\A -  \Pi_{\C, k}^{\xi}(\A) \|_{\xi}^2 \leq \|\A -\X\|_{\xi}^2
\] 
for all matrices $\X \in \RB^{m\times n}$ of rank at most $k$ in the column space of $\C$. Obviously, $\C (\Z^T \Omeb)^{\dag} \Z^T$ is such a matrix. Thus,
\begin{align*}
\|\A -  \Pi_{\C, k}^{\xi}(\A) \|_{\xi}^2  & \leq \|\A - \C (\Z^T \Omeb)^{\dag} \Z^T\|_{\xi}^2 \\
& = \|\A - \A \Z \Z^T + \A \Z \Z^T -  \C  (\Z^T \Omeb)^{\dag} \Z^T \|_{\xi}^2  \\
& = \|\E + (\A\Z \Z^T - \A) \Omeb  (\Z^T \Omeb)^{\dag}  \Z^T\|_{\xi}^2 \\
& = \| \E + \E \Omeb (\Z^T \Omeb)^{\dag} \Z^T\|_{\xi}^2.
\end{align*}
Here we use the fact that $\Z^T \Omeb (\Z^T \Omeb)^{\dag}=\I_k$ because $\rk(\Z^T \Omeb)=k$.  Consider that 
\[
\E \Omeb (\Z^T \Omeb)^{\dag} \Z^T \E^T =  \E \Omeb (\Z^T \Omeb)^{\dag}  \Z^T (\A^T - \Z \Z^T \A^T) = \0.  
\]
The theorem follows from Theorem~\ref{thm:pytho}. 
\end{proof}

Consider the rank-$k$ truncated SVD $\A_k = \U_k \Si_k \V_k^T$. Then we can write $\A$ as
\[
\A = \A \V_k \V_k^T + (\A - \A_k). 
\] 
Let $\Z=\V_k$ and $\E=\A-\A_k$ in Lemma~\ref{lem:rsvd00}. Then the following theorem is an immediate corollary of Lemma~\ref{lem:rsvd00}. 

\begin{theorem}  \label{thm:test} Let $\A=\U \Si \V^T$ be the full SVD of $\A \in \RB^{m\times n}$,  fix $k\geq 0$, and let $\A_k=\U_k \Si_k \V_k^T$ be the best at most rank $k$ approximation of $\A$. Choose a test matrix  $\Omeb$ and construct the sketch $\C  = \A \Omeb$.  Partition $\Si= \begin{bmatrix} \Si_k & \0 \\ \0 & \Si_{-k} \end{bmatrix}$ and $\V=[\V_k, \V_{-k}]$. 
Define $\Omeb_1 = \V_k^T \Omeb$ and $\Omeb_2 = \V_{-k}^T \Omeb$. Assume that $\Omeb_1$ has full row rank. Then
\[
\| (\I_m - \C \C^{\dag}) \A  \|^2_{\xi} \leq \|\A -  \Pi_{\C, k}^{\xi}(\A) \|_{\xi}^2  \leq  \|\Si_{-k} \|^2_{\xi} + \|\Si_{-k} \Omeb_2 \Omeb_1^{\dag} \|_{\xi}^2.
\]
\end{theorem}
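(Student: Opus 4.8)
The plan is to read the statement off Lemma~\ref{lem:rsvd00} with the substitutions $\Z = \V_k$ and $\Omeb$ equal to the given test matrix, after checking the lemma's hypotheses and then simplifying the two error terms through the SVD block structure.

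First I would dispose of the left-hand inequality. Every matrix whose column space lies in $\rg(\C)$ can be written as $\C\X$ for some $\X \in \RB^{c\times n}$; in particular $\Pi_{\C, k}^{\xi}(\A) = \C\X^{\star}$ for a suitable $\X^{\star}$. Theorem~\ref{thm:ye} then gives $\nm \A - \C\C^{\dag}\A \nm \le \nm \A - \C\X^{\star} \nm$, that is $\|(\I_m - \C\C^{\dag})\A\|_{\xi} \le \|\A - \Pi_{\C, k}^{\xi}(\A)\|_{\xi}$, for the Frobenius and the spectral norm alike. So only the right-hand inequality requires work.

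Next I would verify the hypotheses of Lemma~\ref{lem:rsvd00} for $\Z = \V_k$: $\V_k$ is column orthonormal as a block of $\V$, and $\Z^T\Omeb = \V_k^T\Omeb = \Omeb_1$ has full row rank $k$ by assumption, so $\rk(\Z^T\Omeb) = \rk(\Z) = k$ as required. The residual matrix appearing in the lemma is $\E = \A - \A\V_k\V_k^T$; writing $\A = \U\Si\V^T$ and using $\V^T\V_k = \begin{bmatrix} \I_k \\ \0 \end{bmatrix}$ one gets $\A\V_k\V_k^T = \U_k\Si_k\V_k^T = \A_k$, hence $\E = \A - \A_k$, and $\E\Omeb(\Z^T\Omeb)^{\dag} = (\A - \A_k)\Omeb\Omeb_1^{\dag}$. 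Lemma~\ref{lem:rsvd00} therefore yields
\[
\|\A - \Pi_{\C, k}^{\xi}(\A)\|_{\xi}^2 \;\le\; \|\A - \A_k\|_{\xi}^2 + \|(\A - \A_k)\Omeb\Omeb_1^{\dag}\|_{\xi}^2 .
\]
It remains to identify the two terms. From $\A - \A_k = \U\begin{bmatrix} \0 & \0 \\ \0 & \Si_{-k} \end{bmatrix}\V^T$ and the unitary invariance of $\|\cdot\|_{\xi}$ one has $\|\A - \A_k\|_{\xi} = \|\Si_{-k}\|_{\xi}$; and $(\A - \A_k)\Omeb = \U\begin{bmatrix} \0 & \0 \\ \0 & \Si_{-k} \end{bmatrix}\V^T\Omeb = \U_{-k}\Si_{-k}\V_{-k}^T\Omeb = \U_{-k}\Si_{-k}\Omeb_2$, so that $(\A - \A_k)\Omeb\Omeb_1^{\dag} = \U_{-k}\Si_{-k}\Omeb_2\Omeb_1^{\dag}$ and, $\U_{-k}$ being column orthonormal, $\|(\A - \A_k)\Omeb\Omeb_1^{\dag}\|_{\xi} = \|\Si_{-k}\Omeb_2\Omeb_1^{\dag}\|_{\xi}$. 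Substituting these into the displayed bound finishes the proof.

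There is essentially no hard step: the theorem is, as the surrounding text states, an immediate corollary of Lemma~\ref{lem:rsvd00}. The only place that deserves a sentence of care is the left inequality, where one must recall that orthogonal projection onto $\rg(\C)$ is optimal in every unitarily invariant norm (precisely the content of Theorem~\ref{thm:ye}), not merely in the Frobenius norm; everything else is bookkeeping with the SVD blocks and with the invariance of $\|\cdot\|_{\xi}$ under multiplication by (column-)orthonormal matrices.
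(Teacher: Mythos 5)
Your proposal is correct and follows essentially the same route as the paper: the paper obtains the theorem as an immediate corollary of Lemma~\ref{lem:rsvd00} with $\Z=\V_k$ and $\E=\A-\A_k$, exactly the substitution you make, and your block-SVD identifications of the two error terms are the intended (and correct) bookkeeping. Your explicit treatment of the left-hand inequality via Theorem~\ref{thm:ye} is a detail the paper leaves implicit, but it is the natural justification and matches the paper's framework.
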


In Lemma~\ref{lem:rsvd00} and Theorem~\ref{thm:test}, the condition $\rk(\V_k^T \Omeb) = \rk(\V_k)=k$
is essential for an effective randomized SVD algorithm. An idealized case for meeting this condition is that $\rg(\V_k)\subset  \rg(\Omeb)$. In this case, the randomized SVD degenerates an exact truncated SVD procedure.  Thus, the above condition aims to relax  this idealized case. 
Moreover, the key for an effective randomized SVD is to select a test matrix $\Omeb$ such that  the condition $\rk(\V_k^T \Omeb) = \rk(\V_k)=k$ holds as much as possible.   Lemma~\ref{lem:rsvd00} and Theorem~\ref{thm:test} are also fundamental in random column selection~\citep{boutsidis2011near}.

\subsection{Randomized SVD: Frobenius Norm Bounds}

In this subsection, we describe two randomized SVD algorithms which have $(1+\epsilon)$ relative-error bound.

{\bf Random Projection.}
In order to reduce  computational expenses, randomized algorithms  \citep{FriezeJACM:2004,VempalaBook:2000} have been introduced to truncated SVD and low-rank approximation. 
The Johnson \& Lindenstrauss (JL) transform \citep{JohnsonLindenstrauss:1984,Dasgupta:2003} is known to keep isometry in expectation or with high probability.
\citet{halko2011finding,boutsidis2011near} used the JL transform for sketching and showed relative-error bounds.
However, the Gaussian test matrix is dense and cannot efficiently apply to matrices.
Several improvements have been proposed to make the sketching matrix sparser;
see the review \citep{woodruff2014sketching} for the complete list of the literature.
In particular, the count sketch \citep{clarkson2013low} applies to $\A$ in only $\OM (\nnz (\A))$ time
and exhibits very similar properties as the JL transform.
Specifically, \cite{woodruff2014sketching} showed that an $m\times \OM (k/\epsilon)$ sketch $\C = \A \Omeb$ can be obtained in $\OM(\nnz (\A))$ time
and 
\begin{equation} \label{eq:randsvd_prototype}
\min_{\rk (\X) \leq k} \, \big\| \A - \Q \X \big\|_F^2
\; \leq \; (1+\epsilon)\, \| \A - \A_k\|_F^2
\end{equation}
holds with high probability.

{\bf The Prototype Algorithm.}
\citet{halko2011finding} proposed to directly solve the left-hand side of \eqref{eq:randsvd_prototype},
which has closed-form solution $\X^\star = (\Q^T \A)_k$.
This leads to the prototype algorithm shown in Algorithm~\ref{alg:rsvd_prototype}.
The optimality of $\X^\star$ is given in  Theorem~\ref{thm:qb}.

\begin{algorithm}[tb]
   \caption{Randomized SVD: The Prototype Algorithm.}
   \label{alg:rsvd_prototype}
\algsetup{indent=2em}
\begin{algorithmic}[1]
   \STATE  {\bf Input:} a  matrix $\A \in \RB^{m\times n}$ with $m \geq n$,  target rank $k$, the size of sketch $c$ where $0< k\leq c <n$;
   \STATE Draw a sketching matrix $\Omeb \in \RB^{n\times c}$, e.g.\ a Gaussian test matrix or a count sketch
   \STATE Compute $\C = \A \Omeb \in \RB^{m\times c}$ and its orthonormal bases $\Q \in \RB^{m\times c}$;
   \STATE Compute the rank $k$ truncated SVD: $\Q^T \A \approx \bar\U_k \tilde\Si_k \tilde\V_k^T$;
   \RETURN $\tilde\U_k = \Q \bar\U_k$, $\tilde\Si_k$, $ \tilde\V_k$---an approximate rank-$k$ truncated SVD of $\A$.
\end{algorithmic}
\end{algorithm}


The prototype algorithm is not time efficient because the matrix product $\Q^T \A$ costs $\OM (m n c)$ time, 
which is not lower than the exact solutions.
Nevertheless, the prototype algorithm is still useful in large-scale applications because it is pass-efficient---it 
goes only two passes through $\A$.

{\bf Faster Randomized SVD.}
The bottleneck of the prototype algorithm is the matrix product in computing $\X^\star$.
Notice that \eqref{eqn:082} is a strongly over-determined system,
so it can be approximately solved by once more random projection.
Let $\PP = \PP_1 \PP_2 \in \RB^{m\times p}$ be another random projection matrix, where $\PP_1$ is a count sketch and $\PP_2$ is a JL transform matrix.
Then we solve 
\[
\tilde\X \; = \; \min_{\rk(\X) \leq k} \;  \|\PP^T (\A - \Q \X) \|_F^2
\]
instead of \eqref{eqn:082},
and $\tilde\X$ has closed-form solution 
\[
\tilde\X = \tilde\R^\dag (\tilde\Q^T \PP^T \A)_k ,
\]
where $\tilde\Q \tilde\R$ be the economy size QR decomposition of $(\PP^T \Q) \in \RB^{p\times c}$.
Finally, the rank $k$ matrix $\Q \tilde\X$ is the obtained approximation to $\A$,
and its SVD can be very efficiently computed.
\citet{clarkson2013low,woodruff2014sketching}
showed that 
\[
\big\| \A - \Q \tilde\R^\dag (\tilde\Q^T \PP^T \A)_k \big\|_F^2
\; \leq \; (1+\epsilon) \, \|\A - \A_k\|_F^2
\]
for a large enough $p$,
and the overall time cost is $\OM( \nnz (\A) + (m+n) \poly (k/\epsilon) )$.

\subsection{Randomized SVD: Spectral Norm Bounds}

The previous section shows that the approximate truncated SVD can be computed highly efficiently, with the $(1{+}\epsilon)$ Frobenius relative-error  guaranteed.
The Frobenius norm bound tells that the total elementwise distance is small,
but it does not inform us the closeness of their singular vectors.
Therefore, we need spectral norm bounds or  even stronger principal angle bounds;
here we only consider the former.
We seek to find an $m\times k$ column orthogonal matrix $\tilde\U$ such that
\[
\big\| \A - \tilde\U \tilde\U^T \A \big\|_2^2
\; \leq \; \eta \|\A - \A_k\|_2^2,
\]
where $\eta$ will be specified later.

{\bf The Prototype Algorithm.}
Unlike the Frobenius norm bound,
the prototype algorithm is unlikely to attain a constant factor bound (i.e.,\ $\eta$ is independent of $m$, $n$),
letting alone the $1+\epsilon$ bound.
It is because the lower bounds \citep{witten2013randomized,boutsidis2011near} showed that if $\Omeb \in \RB^{n\times c}$ in Algorithm~\ref{alg:rsvd_prototype} is the Gaussian test matrix
or any column selection matrix,
the order of $\eta$ must be at least ${n/c}$.
We apply Gu's theorem \citep{gu2015subspace} (Theorem~\ref{thm:f-s}) to obtain an $\OM (n)$-factor spectral norm bound,
and then introduce iterative algorithms with the $(1{+}\epsilon)$ spectral norm bound.


Let $\tilde\U_k$, $\tilde\Si_k$, and $\tilde\V_k$ be the outputs of Algorithm~\ref{alg:rsvd_prototype}.
We have that
\begin{eqnarray*}
\big\| \A - \tilde\U_k \tilde\U_k^T \A \big\|_F^2
& \leq & \big\| \A - \tilde\U_k \tilde\Si_k \tilde\V_k^T \big\|_F^2  \\
& = &  \big\| \A - \Q \X^\star \big\|_F^2
\; \leq \; (1+\epsilon) \, \|\A - \A_k \|_F^2,
\end{eqnarray*}
where the first inequality follows from Theorem~\ref{thm:ye},
the equality follows from the definitions,
and the second inequality follows from \eqref{eq:randsvd_prototype} provided that $c = \OM (k/\epsilon)$
and $\Omeb$ is the Gaussian test matrix or the count sketch.
We let $\epsilon=1$ and $c = \OM (k)$ and apply Theorem~\ref{thm:f-s} to obtain
\begin{eqnarray} \label{eq:prototype_spectral}
\big\| \A - \tilde\U_k \tilde\U_k^T \A \big\|_2^2
& \leq & \|\A - \A_k \|_2^2 + \|\A - \A_k \|_F^2 \nonumber \\
& \leq & (n-k+1) \|\A - \A_k \|_2^2 .
\end{eqnarray}
Here the second inequality follows from that 
$\|\A - \A_k \|_F^2=\sum_{i=k+1}^n \sigma_i^2 \leq (n-k) \sigma_{k+1}^2 = (n-k) \|\A - \A_k\|_2^2$.
To this end, we have shown that the prototype algorithm \ref{alg:rsvd_prototype} satisfies $\OM(n)$-factor spectral norm bound.
However, the result itself has little meaning.

{\bf The Simultaneous Power Iteration}
can be used to refine the sketch \citep{halko2011finding,gu2015subspace}.
The algorithm is described in Algorithm~\ref{alg:power_method} and analyzed in the following.
Let $\Omeb \in \RB^{n\times c}$ be a Gaussian test matrix or count sketch
and $\B = (\A \A^T)^t \A$.
Let us take $\B$ instead of $\A$ as the input of the prototype algorithm~\ref{alg:rsvd_prototype} and obtain the approximate left singular vectors $\tilde\U_k$.
It is easy to verify that $\tilde\U_k$ is the same to the output of Algorithm~\ref{alg:power_method}.
We will show that when $t = \OM (\frac{ \log n }{ \epsilon })$,
\begin{eqnarray} \label{eq:power_spectral}
\big\| \A - \tilde\U_k \tilde\U_k^T \A \big\|_2^2
\; \leq \; (1+\epsilon) \|\A - \A_k\|_2^2.
\end{eqnarray}
To show this result, we need the lemma of \cite{halko2011finding}.

\begin{lemma} [Halko, Martinsson, \& Tropp] \label{lem:power_method}
Let $\A$ be any matrix and $\U$ have orthonormal columns. Then for any positive integer $t$,
\[
\big\| (\I - \U \U^T) \A \big\|_2
\; \leq \;
\big\| (\I - \U \U^T) (\A \A^T)^t \A  \big\|_2^{1/(2t+1)}.
\]
\end{lemma}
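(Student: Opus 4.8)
\textbf{Proof proposal for Lemma~\ref{lem:power_method}.}

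The plan is to reduce the claimed inequality to a statement about eigenvalues of a positive semidefinite matrix and then apply a scalar power-mean inequality. Write $\PP = \I - \U \U^T$, which is an orthogonal projection, so $\PP = \PP^T = \PP^2$. The quantity on the left is $\|\PP \A\|_2$, and since $\|\M\|_2^2 = \|\M \M^T\|_2 = \lambda_1(\M \M^T)$ for any $\M$, I would first rewrite both sides in terms of the symmetric PSD matrix $\M \triangleq \A \A^T$. Concretely, $\|\PP \A\|_2^2 = \|\PP \A \A^T \PP\|_2 = \|\PP \M \PP\|_2$, and similarly $\|\PP (\A \A^T)^t \A\|_2^2 = \|\PP \M^t \M \M^t \PP\|_2 = \|\PP \M^{2t+1} \PP\|_2$. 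So the lemma is equivalent to
\[
\|\PP \M \PP\|_2^{2t+1} \;\leq\; \|\PP \M^{2t+1} \PP\|_2 ,
\]
i.e.\ raising the inside to the power $2t+1$ dominates raising the whole projected quantity to that power.

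Next I would establish the key inequality $\|\PP \M^{k} \PP\|_2 \geq \|\PP \M \PP\|_2^{k}$ for every positive integer $k$ (applied with $k = 2t+1$). The cleanest route: let $\x$ be a unit vector achieving $\|\PP \M \PP\|_2 = \x^T \PP \M \PP \x$; WLOG $\PP \x = \x$ (otherwise replace $\x$ by $\PP\x/\|\PP\x\|$, which only increases the Rayleigh quotient since $\PP\M\PP$ is supported on $\rg(\PP)$). Then $\|\PP\M\PP\|_2 = \x^T \M \x$. Now for the right-hand side, using $\PP\x = \x$ again, $\|\PP \M^{k} \PP\|_2 \geq \x^T \PP \M^{k} \PP \x = \x^T \M^{k} \x$. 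So it suffices to prove the scalar/operator fact $\x^T \M^k \x \geq (\x^T \M \x)^k$ for a unit vector $\x$ and PSD $\M$; this is exactly Jensen's inequality (or the power-mean inequality) applied to the spectral measure of $\M$ at $\x$: writing $\M = \sum_j \mu_j \w_j \w_j^T$ and $p_j = (\w_j^T \x)^2 \geq 0$ with $\sum_j p_j = 1$, one has $\x^T \M^k \x = \sum_j p_j \mu_j^{k} \geq \big(\sum_j p_j \mu_j\big)^{k} = (\x^T \M \x)^k$ because $z \mapsto z^k$ is convex on $[0,\infty)$.

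Combining: $\|\PP \M^{2t+1} \PP\|_2 \geq \|\PP\M\PP\|_2^{2t+1}$, which rearranges to $\|\PP\M\PP\|_2 \leq \|\PP\M^{2t+1}\PP\|_2^{1/(2t+1)}$, and translating back through $\|\PP\A\|_2^2 = \|\PP\M\PP\|_2$ and $\|\PP(\A\A^T)^t\A\|_2^2 = \|\PP\M^{2t+1}\PP\|_2$ gives
\[
\|\PP\A\|_2^2 \;\leq\; \big(\|\PP(\A\A^T)^t\A\|_2^2\big)^{1/(2t+1)},
\]
and taking square roots yields the statement. The main obstacle is the middle step — justifying that one may reduce to a single eigenvector with $\PP\x = \x$ and that the power-mean inequality applies cleanly; the subtlety is that $\|\PP\M^k\PP\|_2$ is not obviously $\geq \x^T\M^k\x$ unless $\x \in \rg(\PP)$, so I must be careful to carry the projection identity $\PP\x=\x$ through both the left and right Rayleigh-quotient estimates. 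Everything else (the norm identities $\|\M\|_2^2 = \|\M\M^T\|_2$, convexity of $z^k$) is routine.
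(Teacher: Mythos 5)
Your proposal is correct. The paper does not prove this lemma itself (it simply cites Halko, Martinsson and Tropp), and your argument is essentially the standard proof from that reference: setting $\PP = \I - \U\U^T$ and $\M = \A\A^T$, reducing the claim to $\nm$-free form $\|\PP\M\PP\|_2^{2t+1} \leq \|\PP\M^{2t+1}\PP\|_2$, and establishing it via the top eigenvector of $\PP\M\PP$ (which lies in $\rg(\PP)$, so $\PP\x=\x$) together with Jensen's inequality $\x^T\M^k\x \geq (\x^T\M\x)^k$ applied to the spectral measure of $\M$ at $\x$; the only cosmetic omission is the trivial case $\|\PP\M\PP\|_2=0$, where the inequality holds at once.
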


By Lemma~\ref{lem:power_method}, we have that
\begin{eqnarray*}
\big\| (\I - \tilde\U_k \tilde\U_k^T) \A \big\|_2^2
& \leq & \big\| (\I - \tilde\U_k \tilde\U_k^T) \B  \big\|_2^{2/(2t+1)} \\
& \leq & (n-k+1)^{1/(2t+1)} \sigma_{k+1}^{2/(2t+1)} (\B) \\
& = & (1+\epsilon) \sigma_{k+1}^2 (\A) . 
\end{eqnarray*}
Here the second inequality follows from \eqref{eq:prototype_spectral} and the definitions of $\B$ and $\tilde\U_k$,
and we show the equality in the following.
Let $2t+1 = \frac{ \log (n-k+1) }{ 0.5\epsilon }$.
We have that $\frac{1}{2t+1} \log (n-k+1) = 0.5\epsilon \leq \log (1+\epsilon)$, 
where the inequality holds for all for all $\epsilon \in [0, 1]$.
Taking the exponential of both sides, we have $(n-k+1)^{1/(2t+1)} \leq 1+\epsilon$.
Finally, \eqref{eq:power_spectral} follows from that $\sigma_{k+1}^2 (\A) = \|\A - \A_k\|_2^2$.

\begin{algorithm}[tb]
   \caption{Subspace Iteration Methods.}
   \label{alg:power_method}
\algsetup{indent=2em}
\begin{algorithmic}[1]
   \STATE  {\bf Input:} any matrix $\A \in \RB^{m\times n}$, the target rank $k$, the size of sketch $c$ where $0< k\leq c <n$;
   \STATE Generate an $n \times c$ Gaussian test matrix $\Omeb$ and perform sketching $\C^{(0)} = \A \Omeb$;
   \FOR{$i=1$ to $t$}
    \STATE Optional: orthogonalize $\C^{(i-1)}$;
    \STATE Compute $\C^{(i)} = \A \A^T \C^{(i-1)}$;
   \ENDFOR
   \STATE {\bf The Power Method}: orthonalize $\C^{(t)}$ to obtain $\Q \in \RB^{m\times c}$;
   \STATE {\bf The Krylov Subspace Method}: orthonalize $\K = [\C^{(0)}, \cdots , \C^{(t)}]$ to obtain $\Q \in \RB^{m\times (t+1)c}$;
   \STATE Compute the rank $k$ truncated SVD: $\Q^T \A \approx \bar\U_k \tilde\Si_k \tilde\V_k^T$;
   \RETURN $\tilde\U_k = \Q \bar\U_k$, $\tilde\Si_k$, $ \tilde\V_k$---an approximate rank-$k$ truncated SVD of $\A$.
\end{algorithmic}
\end{algorithm}

{\bf The Krylov Subspace Method.}
From Algorithm~\ref{alg:power_method} we can see that the power iteration repeats $t$ times, 
but only the output of the last iteration $\C^{(t)}$ is used.
In fact, the intermediate results $\C^{(0)} , \cdots , \C^{(t)}$ are also useful.
The matrix $\K = [\C^{(0)}, \cdots , \C^{(t)}]\in \RB^{m\times (t+1)c}$ is well known as the Krylov matrix,
and $\rg (\K)$ is called the Krylov subspace.
We show the Krylov subspace method in Algorithm~\ref{alg:power_method}, which differs from simultaneous power iteration in only one line.
It turns out that the Krylov subspace method converges much faster than the power iteration \citep{saad2011numerical}.
Very recently, \cite{musco15stronger} showed that with $t = \frac{\log n}{\sqrt{\epsilon}}$ power iteration, 
the $1+\epsilon$ spectral norm bound \eqref{eq:power_spectral} holds with high probability.
This result is evidently stronger than the simultaneous power iteration.

It is worth mentioning that the Krylov subspace method described in Algorithm~\ref{alg:power_method} is a simplified version, 
and it may be instable when $t$ is large.
This is because the columns of $\C^{(0)}, \cdots , \C^{(t)}$ tend to be linearly dependent as $t$ grows.
In practice, re-orthogonalization or partial re-orthogonalization are employed to prevent the instability from happening \citep{saad2011numerical}.

\section{Kernel Approximation}
\label{sec:kerapp}

Kernel methods are important tools in machine learning, computer vision, and data mining \citep{ScholkopfBook:2002,ShaweTaylorBook:2004,Vapnik:1998,RassmussenWilliams}.
For example, kernel ridge regression (KRR),  Gaussian processes,  kernel support vector machine (KSVM), spectral clustering, and kernel principal component analysis (KPCA)
are classical nonlinear models for regression, classification, clustering, and dimensionality regression.
Unfortunately, the lack of scalability has always been the major drawback of kernel methods.
The three steps of most kernel methods---forming the kernel matrix, training, generalization---can all be prohibitive in big-data applications.

Specifically, suppose we are given $n$ training data and $m$ test data, all of $d$ dimension.
Firstly, it takes $\OM (n^2 d)$ time to form an $n\times n$ kernel matrix $\K$, e.g.,\ the Gaussian RBF kernel matrix.
Secondly, the training requires either SVD or matrix inversion of the kernel matrix.
For example, spectral clustering, KPCA, Isomap \citep{tenenbaum2000global},  and Laplacian eigenmaps \citep{Belhumeur:2003}
compute  the top $k$ singular vectors of the (normalized) kernel matrix,
where $k$ is the number of classes or the target dimensionality.
This costs $\OM (n^2 k)$ time and $\OM (n^3)$ memory.
Thirdly, to generalize the trained model to the test data,
kernel methods such as KRR, KSVM, KPCA cost $\OM (n m d)$ time to form an $n\times m$ cross kernel matrix between the training and test data.
If $m$ is as large as $n$, generalization is as challenging as training.

Low rank approximation is the most popular approach to scalable kernel approximation.
If we have the low rank approximation $\K \approx \C \X \C^T$,
then the approximate eigenvalue decomposition can be immediately obtained by
\begin{eqnarray*}
\K  \approx  \C \X \C^T
 =  \U_C \underbrace{(\Si_C \V_C^T \X \V_C \Si_C)}_{=\Z } \U_C^T
 =  (\U_C \U_Z) \Lam_Z (\U_C \U_Z)^T .
\end{eqnarray*}
Here $\C = \U_\C \Si_\C \V_\C^T$ is the SVD and
$\Z = \U_\Z \Lam_\Z \U_\Z^T$ is the spectral decomposition.
Since the tall-and-skinny matrix $\U_C \U_Z$ has orthonormal columns
and the diagonal entries of $\Lam_Z$ are in the descending order,
the leftmost columns of $\U_C \U_Z$ are approximately the top singular vectors of $\K$.
This approach only costs $\OM (n c^2)$ time, where $c$ is the number of columns of $\C$.
Our objective is thereby to find such a low rank approximation.

{\bf Difference from Randomized SVD.}
Why cannot we directly use the randomized SVD to approximate the kernel matrix?
The randomized SVD assumes that the matrix is fully observed;
unfortunately, this is not true for  kernel methods.
When the number of data samples is million scale, even forming the kernel matrix is impossible.
Therefore, the primary objective of kernel approximation is to avoid forming the whole kernel matrix.
The existing random projection methods all require the full observation of the matrix,
so random projection is not a feasible option.
We must use column selection in the kernel approximation problem.

{\bf The Prototype Algorithm.}
Let $\S$ be an $n\times c$ sketching matrix
and let $\C = \K \S$.
It remains to find the $c\times c$ intersection matrix $\X$.
The most intuitive approach is to minimize the approximation error by
\begin{equation} \label{eq:prototype}
\X^\star \; = \; \argmin_{\X} \, \big\| \K - \C \X \C^T \big\|_F^2
\; = \; \C^\dag \K (\C^\dag)^T ,
\end{equation}
where the second equality follows from Theorem~\ref{thm:solution_cur}.
This method was proposed by \citet{halko2011finding} for approximating symmetric matrix.
\citet{WangLuoZhang} showed that by randomly sampling $\OM (k/\epsilon)$ columns of $\K$ to form $\C$ by a certain algorithm,
the approximation is high accurate:
\[
\big\| \K - \C \X^\star \C^T \big\|_F^2
\; \leq \;
(1+\epsilon) \big\| \K - \K_k \big\|_F^2 .
\]
This upper bound matches the lower bound $c \geq 2 k / \epsilon$ up to a constant factor \citep{WangLuoZhang}.
Unfortunately, the prototype algorithm has two obvious drawbacks.
Firstly, to compute the intersection matrix $\X^\star$,
every entry of $\K$ must be known.
As is discussed, it takes $\OM (n^2 d)$ time to form the kernel matrix $\K$.
Secondly, the matrix multiplication $\C^\dag \K$ costs $\OM (n^2 c)$ time.
In sum, the prototype algorithm costs $\OM (n^2 c + n^2 d)$ time.
Although it is substantially faster than the exact solution, the prototype algorithm has the same time complexity as the exact solution.

{\bf Faster SPSD Matrix Sketching.}
Since $\C = \K \S$ has much more rows than columns,
the optimization problem \eqref{eq:prototype} is strongly over-determined.
\citet{wang2015towards} proposed to use sketching to approximately solve \eqref{eq:prototype}.
Specifically,  let $\PP$ be a certain $n\times p$ column selection matrix with $p \geq c$ and compute
\begin{equation*}
\tilde\X \; = \; \argmin_{\X} \, \big\| \PP^T ( \K - \C \X \C^T ) \PP \big\|_F^2
\; = \; (\PP^T \C)^\dag (\PP^T \K \PP) (\C^T \PP)^\dag .
\end{equation*}
In this way, we need only $nc + p^2$ entries of $\K$ to form the approximation $\K \approx \C \tilde\X \C^T$.
The intersection matrix $\tilde\X$ can be computed in $\OM (ncd + p^2 d + p^2 c)$ time, given $\S$ and $n$ data points of $d$ dimension.
\citet{wang2015towards} devised an algorithm that sets $p = \sqrt{n} c / \sqrt{\epsilon}$ and very efficiently forms the column selection matrix $\PP$;
and the following error bound holds with high probability:
\[
\big\| \K - \C \tilde\X \C^T \big\|_F^2
\; \leq \;
(1+\epsilon) \, \min_{\X} \big\| \K - \C \X \C^T \big\|_F^2 .
\]
By this choice of $p$, the overall time cost is linear in $n$.

Motivated by the matrix ridge approximation of \cite{ZhangML:2014},  \citet{WangZhangKDD:2014} proposed a spectral shifting kernel approximation method.  
When the spectrum of $\K$ decays slowly, the shifting term  helps to improve the approximation accuracy and numerical stability.
\cite{WangLuoZhang}  also showed  that the spectral shifting approach can be used to improve other kernel  approximation models such as 
the memory efficient kernel approximation (MEKA) model \citep{si2014memory}.

{\bf The Nystr\"om Method}
is the most popular kernel approximation approach.
It is named after its inventor \cite{nystrom1930praktische}
and gained its popularity in the machine learning society after its application in Gaussian procession regression \citep{williams2001using}.
Let $\S$ be a column selection matrix, $\C = \K \S$, and $\W = \S^T \K \S$.
The Nystr\"om method approximates $\K$ by $\C \W^\dag \C^T$.
In fact, the Nystr\"om method is a special case of the faster SPSD matrix sketching where $\PP$ and $\S$ are equal.
This also indicates that the Nystr\"om method is an approximate solution to \eqref{eq:prototype}.
\citet{gittens2013revisiting} offered comprehensive error analysis of the Nystr\"om method.
The Nystr\"om method has been applied to solve million scale kernel methods \citep{talwalkar2013large}.
But unlike the faster SPSD matrix sketching, the Nystr\"om method cannot generate high quality approximation.
The lower bound \citep{WangZhangJMLR:2013} indicates that the Nystr\"om method cannot attain $(1{+}\epsilon)$ relative-error bound
unless it is willing to spend $\Omega (n^2 k/\epsilon)$ time.

To this end, we have shown how to efficiently approximate any kernel matrix and use the obtained low rank approximation to speed up training.
We will introduce efficient generalization using the CUR matrix decomposition in the next section.

\section{The CUR Approximation}
\label{sec:curapp}

Let $\A$ by any $m\times n$ matrix.
The CUR matrix decomposition is formed by selecting $c$ columns of $\A$ to form $\C \in \RB^{m\times c}$,
$r$ rows to form $\R \in \RB^{r\times n}$, and computing an intersection matrix $\U \in \RB^{c\times r}$ such that $\C \U \R \approx \A$.
In this section, we first discussion the motivations and then describe algorithms and error analyses.

{\bf Motivations.}
Firstly, let us continue the generalization problem of kernel methods which remains unsolved in the previous section.
Suppose we are given $n$ training data and $m$ test data, all of $d$ dimension.
To generalize the trained model to the test data,
supervised kernel methods such as Gaussian processes and KRR require evaluating the kernel function of every train and test
data pair---that is to form an $m\times n$ cross kernel matrix $\K_*$---which costs $\OM (m n d)$ time.
By the fast CUR algorithm described later in this section,
the approximation $\K_* \approx \C \U \R$ can be obtained in time linear in $d (m+n)$.
With such a decomposition at hand, the matrix product $\K_* \M \approx \C \U \R \M$ can be computed in $\OM ( nrk+mck )$ time.
In this way, the overall time cost of generalization is linear in $m+n$.

Secondly, CUR forms a compressed representation of the data matrix, as well as the truncated SVD,
and it can be very efficiently converted to the SVD-like form:
\begin{eqnarray*}
\A & \approx & \C \U \R
\; = \; \U_C \underbrace{\Si_C \V_C^T \U \U_R \Si_R}_{=\B} \V_R^T
\; = \; (\U_C \U_B) \Si_B (\V_R \V_B)^T .
\end{eqnarray*}
Here $\C = \U_C \Si_C \V_C^T$, $\R = \U_R \Si_R \V_R^T$, $\B = \U_B  \Si_B \V_R$ are the SVD.
Since CUR is formed by sampling columns and rows,
it preserves the sparsity and nonnegativity of the original data matrix.
The sparsity makes CUR cheaper to store than SVD,
and the nonnegativity makes CUR a nonnegative matrix factorization.

Thirdly, CUR consists of the actual columns and rows, and thus it enables human to to understand and interpret the data.
In comparison, the basis vectors of SVD has little concrete meaning.
An example of \cite{DrineasCUR:2008} and \cite{mahoney2009matrix} has well shown this viewpoint; that is,
the vector $[(1/2)\textrm{age} - (1/\sqrt{2})\textrm{height} + (1/2)\textrm{income}]$,
the sum of the significant uncorrelated features from a data set of people's features,
is not particularly informative.
Therefore, it is of great interest to represent a data matrix in terms of a small number of actual columns and/or actual rows of the matrix.

{\bf Column Selection.}
Several different column selection strategies have been devised,
among which the leverage score sampling \citep{DrineasCUR:2008} and the adaptive sampling \citep{WangZhangJMLR:2013,boutsidis2014optimal} attain relative error bounds.
In particular, \citet{boutsidis2014optimal} showed that with $c = \OM (k/\epsilon)$ columns and $r = \OM (k/\epsilon)$ rows selected by adaptive sampling
to form $\C$ and $\R$,
\[
\min_\X \| \A - \C \X \R \|_F^2
\; \leq \; (1+\epsilon) \|\A - \A_k\|_F^2
\]
holds in expectation. A further refinement was developed by  \citet{woodruff2014sketching}.
We will not go to the details of the leverage score sampling or adaptive sampling.
The users only need to know that such algorithms randomly sample columns/rows according to some non-uniform distributions.
Unfortunately, it requires observing the whole matrix $\A$ to compute such non-uniform distributions,
thus such column selection algorithms cannot be applied to speed up computation.
It remains an open problem  whether there is a relative-error sampling algorithm that needs not observing the whole of $\A$.
In practice, the users can simply sample columns/rows uniformly without replacement,
which usually has acceptable empirical performance.

{\bf The Intersection Matrix.}
With the selected columns $\C$ and rows $\R$ at hand,
we can simply compute the intersection matrix by
\begin{eqnarray} \label{eq:prototype_cur}
\U^\star
& = & \argmin_\U \, \big\| \A - \C \U \R \big\|_F^2
\; = \; \C^\dag \A \R^\dag.
\end{eqnarray}
Here the second equality follows from Theorem~\ref{thm:solution_cur}.
This approach has been used by \cite{stewart1999four,WangZhangJMLR:2013,boutsidis2014optimal}.
This approach is very similar to the prototype SPSD matrix approximation method in the previous section,
and it costs at least $\OM (m n \cdot \min\{ c, r\})$ time and requires observing every entry of $\A$.
Apparently, it cannot help speed up matrix computation.

\cite{WangZhangZhang} proposed a more practical CUR decomposition method which solves \eqref{eq:prototype_cur} approximately.
The method first draws two column selection matrices $\PP_C \in \RB^{m\times p_c}$ and $\PP_R \in \RB^{n\times p_r}$ ($p_c , p_r \geq c, r$),
which costs $\OM (m c^2 + n r^2)$ time.
It then computes the intersection matrix by
\begin{eqnarray*}
\tilde\U
= \argmin_\U \, \big\| \PP_C^T ( \A - \C \U \R )\PP_R \big\|_F^2
= (\PP_C^T \C)^\dag (\PP_C^T \A  \PP_R) (\R \PP_R)^\dag.
\end{eqnarray*}
This method needs observing only $p_c \times p_r$ entries of $\A$,
and the overall time cost is $\OM (p_c p_r \cdot \min \{ c, r\} + m c^2 + n r^2)$.
When
\[
p_c \geq \OM \Big(c \sqrt{\min \{m,n\} / \epsilon}\Big)
\quad \textrm{ and } \quad
p_r \geq \OM \Big(r \sqrt{\min \{m,n\} / \epsilon}\Big),
\]
the following inequality holds with high probability:
\[
\big\| \A - \C \tilde\U \R \big\|_F^2
\; \leq \; (1+\epsilon) \, \min_\U \, \big\| \A - \C \U \R \big\|_F^2 .
\]
In sum, a high quality CUR decomposition can be computed in time linear in $\min\{ m , n\}$.

\begin{acknowledgements}
\addcontentsline{toc}{chapter}{Acknowledgements} 
I would like to thank my graduate students Cheng Chen, Luo Luo, Shusen Wang, Haishan Ye, and Qiaomin Ye.  Specifically, Cheng Chen, Luo Luo and Qiaomin Ye helped to proofread
the whole manuscript.  Haishan Ye helped to revise Chapter 9.2, and Shusen Wang helped to revise  Chapter 10.   I would also like to thank  other  students who
took my course `` Matrix Methods in Massive Data Analysis'' in the summer term 2015.  They  helped to improve the lecture notes, 
which provide the main materials  for this tutorial. 
\end{acknowledgements}

\backmatter  

\bibliographystyle{plainnat}
\bibliography{matrix}

\begin{thebibliography}{156}
\providecommand{\natexlab}[1]{#1}
\providecommand{\url}[1]{\texttt{#1}}
\expandafter\ifx\csname urlstyle\endcsname\relax
  \providecommand{\doi}[1]{doi: #1}\else
  \providecommand{\doi}{doi: \begingroup \urlstyle{rm}\Url}\fi

\bibitem[Affandi et~al.(2013)Affandi, Kulesza, Fox, and
  Taskar]{affandi2013nystrom}
Raja~Hafiz Affandi, Alex Kulesza, Emily~B. Fox, and Ben Taskar.
\newblock Nystr{\"o}m approximation for large-scale determinantal processes.
\newblock In \emph{International Conference on Artificial Intelligence and
  Statistics (AISTATS)}, 2013.

\bibitem[Akaho(2001)]{AkahoKCCA:2001}
S.~Akaho.
\newblock A kernel method for canonical correlation analysis.
\newblock In \emph{International Meeting of Psychometric Society}, 2001.

\bibitem[Aronszajn(1950)]{Aronszajn:1950}
N.~Aronszajn.
\newblock Theory of reproducing kernels.
\newblock \emph{Transactions of the American Mathematical Society},
  68:\penalty0 337--404, 1950.

\bibitem[Azar et~al.(2001)Azar, Fiat, Karlin, McSherry, and
  Saia]{azar2001spectral}
Yossi Azar, Amos Fiat, Anna Karlin, Frank McSherry, and Jared Saia.
\newblock Spectral analysis of data.
\newblock In \emph{Proceedings of the thirty-third annual ACM symposium on
  Theory of computing}, pages 619--626. ACM, 2001.

\bibitem[Bach and Jordan(2002)]{BachKICA:2004}
F.~R. Bach and M.~I. Jordan.
\newblock Kernel independent component analysis.
\newblock \emph{Journal of Machine Learning Research}, 3:\penalty0 1--48, 2002.

\bibitem[Batson et~al.(2014)Batson, Spielman, and Srivastave]{BSS:2014}
J.~Batson, D.~Spielman, and N.~Srivastave.
\newblock Twice-{R}amanujan sparsifiers.
\newblock \emph{SIAM Review}, 56\penalty0 (2):\penalty0 315--334, 2014.

\bibitem[Baudat and Anouar(2000)]{Baudat:2000}
G.~Baudat and F.~Anouar.
\newblock Generalized discriminant analysis using a kernel approach.
\newblock \emph{Neural Computation}, 12:\penalty0 2385--2404, 2000.

\bibitem[Belhumeur et~al.(1997)Belhumeur, Hespanha, and
  Kriegman]{Belhumeur:1997}
P.~Belhumeur, J.~Hespanha, and D.~Kriegman.
\newblock Eigenfaces vs. {F}isherfaces: Recognition using class specific linear
  projection.
\newblock \emph{IEEE Trans. {PAMI}}, 19\penalty0 (7):\penalty0 711--720, 1997.

\bibitem[Belkin and Niyogi(2003)]{Belhumeur:2003}
Mikhail Belkin and Partha Niyogi.
\newblock Laplacian eigenmaps for dimensionality reduction and data
  representation.
\newblock \emph{Neural computation}, 15\penalty0 (6):\penalty0 1373--1396,
  2003.

\bibitem[Ben-Israel and Greville(2003)]{adi2003inverse}
A.~Ben-Israel and T.~N.~E. Greville.
\newblock \emph{Generalized Inverses: Theory and Applications. Second Edition}.
\newblock Springer, 2003.

\bibitem[Berry et~al.(2005)Berry, Pulatova, and Stewart]{berry2005algorithm}
M.~W. Berry, S.~A. Pulatova, and G.~W. Stewart.
\newblock Algorithm 844: {c}omputing sparse reduced-rank approximations to
  sparse matrices.
\newblock \emph{ACM Transactions on Mathematical Software}, 31\penalty0
  (2):\penalty0 252--269, 2005.

\bibitem[Bhatia(1997)]{BhatiaMatrix}
Rajendra Bhatia.
\newblock \emph{Matrix Analysis}.
\newblock Springer, 1997.

\bibitem[Bhojanapalli et~al.(2015)Bhojanapalli, Jain, and
  Sanghavi]{bhojanapallitighter}
Srinadh Bhojanapalli, Prateek Jain, and Sujay Sanghavi.
\newblock Tighter low-rank approximation via sampling the leveraged element.
\newblock In \emph{Proceedings of the Twenty-Sixth Annual ACM-SIAM Symposium on
  Discrete Algorithms}, pages 902--920. SIAM, 2015.

\bibitem[Bien et~al.(2010)Bien, Xu, and Mahoney]{bien2010cur}
J.~Bien, Y.~Xu, and M.~W. Mahoney.
\newblock {CUR} from a sparse optimization viewpoint.
\newblock In \emph{Advances in Neural Information Processing Systems (NIPS)},
  2010.

\bibitem[Bischof and Hansen(1991)]{bischof1991structure}
C.~H. Bischof and P.~C. Hansen.
\newblock Structure-preserving and rank-revealing {QR}-factorizations.
\newblock \emph{SIAM Journal on Scientific and Statistical Computing},
  12\penalty0 (6):\penalty0 1332--1350, 1991.

\bibitem[Blum et~al.(2015)Blum, Hopcroft, and Kannan]{hopcroft2012computer}
Avrim Blum, John Hopcroft, and Ravindran Kannan.
\newblock \emph{Foundations of Data Science}.
\newblock 2015.

\bibitem[Borwein and Lewis(2006)]{BorweinLewis}
Jonathan~M. Borwein and Adrian~S. Lewis.
\newblock \emph{Convex Analysis and Nonlinear Optimization: Theory and
  Examples}.
\newblock Springer, second edition, 2006.

\bibitem[Boutsidis and Woodruff(2014)]{boutsidis2014optimal}
Christos Boutsidis and David~P. Woodruff.
\newblock Optimal {CUR} matrix decompositions.
\newblock \emph{STOC}, pages 353--362, 2014.

\bibitem[Boutsidis et~al.(2014)Boutsidis, Drineas, and
  Magdon-Ismail]{boutsidis2011near}
Christos Boutsidis, Petros Drineas, and Malik Magdon-Ismail.
\newblock Near-optimal column-based matrix reconstruction.
\newblock \emph{SIAM Journal on Computing}, 43\penalty0 (2):\penalty0 687--717,
  2014.

\bibitem[Burges(2010)]{Burges:2010}
Christopher J.~C. Burges.
\newblock Dimension reduction: A guided tour.
\newblock \emph{Foundations and Trends in Machine Learning}, 2:\penalty0
  275--365, 2010.

\bibitem[Cai et~al.(2010)Cai, Cand{\`e}s, and Shen]{cai2010singular}
Jian-Feng Cai, Emmanuel~J Cand{\`e}s, and Zuowei Shen.
\newblock A singular value thresholding algorithm for matrix completion.
\newblock \emph{SIAM Journal on Optimization}, 20\penalty0 (4):\penalty0
  1956--1982, 2010.

\bibitem[Cand{\`e}s and Recht(2009)]{CandesRecht:2009}
Emmanuel~J Cand{\`e}s and B.~Recht.
\newblock Exact matrix completion via convex optimization.
\newblock \emph{Foundations of Computational Mathematics}, 9\penalty0
  (6):\penalty0 717--772, 2009.

\bibitem[Cand{\`e}s and Tao(2007)]{CandesTao:2007}
Emmanuel~J Cand{\`e}s and Terence Tao.
\newblock The dantzig selector: Statistical estimation when $p$ is much larger
  than $n$.
\newblock \emph{The Annals of Statistics}, 35\penalty0 (6):\penalty0
  2313--2351, 2007.

\bibitem[Chan(1987)]{chan1987rank}
T.~F. Chan.
\newblock Rank revealing {QR} factorizations.
\newblock \emph{Linear Algebra and Its Applications}, 88:\penalty0 67--82,
  1987.

\bibitem[Chandrasekaran and Ipsen(1994)]{chandrasekaran1994rank}
S.~Chandrasekaran and I.~C.~F. Ipsen.
\newblock On rank-revealing factorisations.
\newblock \emph{SIAM Journal on Matrix Analysis and Applications}, 15\penalty0
  (2):\penalty0 592--622, 1994.

\bibitem[Clarkson and Woodruff(2009)]{clarkson2009numerical}
Kenneth~L Clarkson and David~P Woodruff.
\newblock Numerical linear algebra in the streaming model.
\newblock In \emph{Proceedings of the forty-first annual ACM symposium on
  Theory of computing}, pages 205--214. ACM, 2009.

\bibitem[Clarkson and Woodruff(2013)]{clarkson2013low}
Kenneth~L Clarkson and David~P Woodruff.
\newblock Low rank approximation and regression in input sparsity time.
\newblock In \emph{Proceedings of the forty-fifth annual {ACM} symposium on
  Theory of computing}, pages 81--90. ACM, 2013.

\bibitem[Cohen and Lewis(1999)]{cohen1999approximating}
Edith Cohen and David~D Lewis.
\newblock Approximating matrix multiplication for pattern recognition tasks.
\newblock \emph{Journal of Algorithms}, 30\penalty0 (2):\penalty0 211--252,
  1999.

\bibitem[Cohen et~al.(2014)Cohen, Elder, Musco, Musco, and
  Persu]{cohen2014dimensionality}
Michael Cohen, Sam Elder, Cameron Musco, Christopher Musco, and Madalina Persu.
\newblock Dimensionality reduction for k-means clustering and low rank
  approximation.
\newblock \emph{arXiv preprint arXiv:1410.6801}, 2014.

\bibitem[Cohen et~al.(2015)Cohen, Nelson, and Woodruff]{CohenNelsonWoodrull}
Michael~B. Cohen, Jelani Nelson, and David~P. Woodruff.
\newblock Optimal approximate matrix product in terms of stable rank.
\newblock \emph{arXiv preprint arXiv:1507.02268}, 2015.

\bibitem[Cox and Cox(2000)]{Cox:2000}
T.~F. Cox and M.~A.~A. Cox.
\newblock \emph{Multidimensional Scaling}.
\newblock Chapman \& Hall/CRC, second edition, 2000.

\bibitem[Dasgupta and Gupta(2003)]{Dasgupta:2003}
S.~Dasgupta and A.~Gupta.
\newblock An elementary proof of a theorem of {J}ohnson and {L}indenstrauss.
\newblock \emph{Random Structure \& Algorithms}, 22\penalty0 (1):\penalty0
  60--65, 2003.

\bibitem[Deerwester et~al.(1990)Deerwester, Dumais, Furnas, Landauer, and
  Harshman]{deerwester1990lsa}
S.~Deerwester, S.~T. Dumais, G.~W. Furnas, T.~K. Landauer, and R.~Harshman.
\newblock Indexing by latent semantic analysis.
\newblock \emph{Journal of The American Society for Information Science},
  41\penalty0 (6):\penalty0 391--407, 1990.

\bibitem[Demmel(1997)]{DemmelBook:1997}
J.~Demmel.
\newblock \emph{Applied Numerical Linear Algebra}.
\newblock SIAM, Philadelphia, 1997.

\bibitem[Deshpande and Rademacher(2010)]{deshpande2010efficient}
A.~Deshpande and L.~Rademacher.
\newblock Efficient volume sampling for row/column subset selection.
\newblock In \emph{Proceedings of the 51st IEEE Annual Symposium on Foundations
  of Computer Science (FOCS)}, pages 329--338, 2010.

\bibitem[Deshpande et~al.(2006)Deshpande, Rademacher, Vempala, and
  Wang]{deshpande2006matrix}
A.~Deshpande, L.~Rademacher, S.~Vempala, and G.~Wang.
\newblock Matrix approximation and projective clustering via volume sampling.
\newblock \emph{Theory of Computing}, 2\penalty0 (2006):\penalty0 225--247,
  2006.

\bibitem[Drineas and Mahoney(2005)]{drineas2005nystrom}
P.~Drineas and M.~W. Mahoney.
\newblock On the {N}ystr\"{o}m method for approximating a gram matrix for
  improved kernel-based learning.
\newblock \emph{Journal of Machine Learning Research}, 6:\penalty0 2153--2175,
  2005.

\bibitem[Drineas et~al.(2008)Drineas, Mahoney, and
  Muthukrishnan]{DrineasCUR:2008}
P.~Drineas, Michael~W. Mahoney, and S.~Muthukrishnan.
\newblock Relative-error {CUR} matrix decompositions.
\newblock \emph{SIAM Journal on Matrix Analysis and Applications}, 30:\penalty0
  844--881, 2008.

\bibitem[Drineas et~al.(2006{\natexlab{a}})Drineas, Kannan, and
  Mahoney]{drineas2006fast}
Petros Drineas, Ravi Kannan, and Michael~W Mahoney.
\newblock Fast monte carlo algorithms for matrices {I}: Approximating matrix
  multiplication.
\newblock \emph{SIAM Journal on Computing}, 36\penalty0 (1):\penalty0 132--157,
  2006{\natexlab{a}}.

\bibitem[Drineas et~al.(2006{\natexlab{b}})Drineas, Mahoney, and
  Muthukrishnan]{Drineas:2006:SAL}
Petros Drineas, Michael~W. Mahoney, and S.~Muthukrishnan.
\newblock Sampling algorithms for $l_2$ regression and applications.
\newblock In \emph{Proceedings of the Seventeenth Annual ACM-SIAM Symposium on
  Discrete Algorithm}, SODA '06, pages 1127--1136, Philadelphia, PA, USA,
  2006{\natexlab{b}}. Society for Industrial and Applied Mathematics.
\newblock ISBN 0-89871-605-5.

\bibitem[Drineas et~al.(2011{\natexlab{a}})Drineas, Magdon-Ismail, Mahoney, and
  Woodruff]{Petros2011Fast}
Petros Drineas, Malik Magdon-Ismail, Michael~W. Mahoney, and David~P. Woodruff.
\newblock Fast approximation of matrix coherence and statistical leverage.
\newblock \emph{Journal of Machine Learning Research}, 13\penalty0
  (1):\penalty0 3475--3506, 2011{\natexlab{a}}.

\bibitem[Drineas et~al.(2011{\natexlab{b}})Drineas, Mahoney, Muthukrishnan, and
  Sarl{\'o}s]{drineas2011faster}
Petros Drineas, Michael~W Mahoney, S~Muthukrishnan, and Tam{\'a}s Sarl{\'o}s.
\newblock Faster least squares approximation.
\newblock \emph{Numerische Mathematik}, 117\penalty0 (2):\penalty0 219--249,
  2011{\natexlab{b}}.

\bibitem[Eckart and Young(1936)]{EckartYoung:1936}
C.~Eckart and G.~Young.
\newblock The approximation of one matrix by another of lower rank.
\newblock \emph{Psychometrika}, 1:\penalty0 211--218, 1936.

\bibitem[Eckart and Young(1939)]{EckartYoung:1939}
C.~Eckart and G.~Young.
\newblock A principal axis transformation for non-{H}ermitian matrices.
\newblock \emph{Bulletin of the American Mathematical Society}, 45\penalty0
  (2):\penalty0 118--121, 1939.

\bibitem[Fan(1951)]{Fan:1951}
Ky~Fan.
\newblock Maximum properties and inequalities for the eigenvalues of completely
  continuous operators.
\newblock \emph{Proc. Nat. Acad. Sci. USA}, 37:\penalty0 760--766, 1951.

\bibitem[Feldman et~al.(2013)Feldman, Schmidt, and Sohler]{feldman2013turning}
Dan Feldman, Melanie Schmidt, and Christian Sohler.
\newblock Turning big data into tiny data: Constant-size coresets for k-means,
  pca and projective clustering.
\newblock In \emph{Proceedings of the Twenty-Fourth Annual ACM-SIAM Symposium
  on Discrete Algorithms}, pages 1434--1453. SIAM, 2013.

\bibitem[Foster(1986)]{foster1986rank}
L.~V. Foster.
\newblock Rank and null space calculations using matrix decomposition without
  column interchanges.
\newblock \emph{Linear Algebra and its Applications}, 74:\penalty0 47--71,
  1986.

\bibitem[Fowlkes et~al.(2004)Fowlkes, Belongie, Chung, and
  Malik]{fowlkes2004spectral}
C.~Fowlkes, S.~Belongie, F.~Chung, and J.~Malik.
\newblock Spectral grouping using the {N}ystr\"{o}m method.
\newblock \emph{IEEE Transactions on Pattern Analysis and Machine
  Intelligence}, 26\penalty0 (2):\penalty0 214--225, 2004.

\bibitem[Frieze et~al.(2004)Frieze, Kannan, and Vempala]{FriezeJACM:2004}
A.~Frieze, K.~Kannan, and Rademacher~S. Vempala.
\newblock Fast {M}onte {C}arlo algorithms for finding low-rank approximation.
\newblock \emph{Journal of the ACM}, 51\penalty0 (6):\penalty0 1025--1041,
  2004.

\bibitem[Ghashami and Phillips(2014)]{ghashami2014relative}
Mina Ghashami and Jeff~M Phillips.
\newblock Relative errors for deterministic low-rank matrix approximations.
\newblock In \emph{Proceedings of the Twenty-Fifth Annual ACM-SIAM Symposium on
  Discrete Algorithms}, pages 707--717. SIAM, 2014.

\bibitem[Gibson(1974)]{Gibson:1974}
P.~M. Gibson.
\newblock Simultaneous diagonalization of rectangular complex matrices.
\newblock \emph{Linear Algebra and Its Applications}, 9:\penalty0 45--53, 1974.

\bibitem[Gittens and Mahoney(2013)]{gittens2013revisiting}
A.~Gittens and M.~W. Mahoney.
\newblock Revisiting the {Nystr\"om} method for improved large-scale machine
  learning.
\newblock In \emph{International Conference on Machine Learning (ICML)}, 2013.

\bibitem[Golub and Van~Loan(2012)]{golub2012matrix}
Gene~H Golub and Charles~F Van~Loan.
\newblock \emph{Matrix computations}.
\newblock JHU Press, 3rd edition, 2012.

\bibitem[Golub et~al.(1999)Golub, Slonim, Tamayo, Huard, Gaasenbeek, Mesirov,
  Coller, Loh, Downing, and Caligiuri]{GolubT:1999}
T.~Golub, D.~Slonim, P.~Tamayo, C.~Huard, M.~Gaasenbeek, J.~Mesirov, H.~Coller,
  M.~Loh, J.~Downing, and M.~Caligiuri.
\newblock Molecular classification of cancer: class discovery and class
  prediction by gene expression monitoring.
\newblock \emph{Science}, 286:\penalty0 531--536, 1999.

\bibitem[Goreinov et~al.(1997{\natexlab{a}})Goreinov, Tyrtyshnikov, and
  Zamarashkin]{goreinov1997pseudoskeleton}
S.~A. Goreinov, E.~E. Tyrtyshnikov, and N.~L. Zamarashkin.
\newblock A theory of pseudoskeleton approximations.
\newblock \emph{Linear Algebra and Its Applications}, 261:\penalty0 1--21,
  1997{\natexlab{a}}.

\bibitem[Goreinov et~al.(1997{\natexlab{b}})Goreinov, Zamarashkin, and
  Tyrtyshnikov]{goreinov1997maximalvolume}
S.~A. Goreinov, N.~L. Zamarashkin, and E.~E. Tyrtyshnikov.
\newblock Pseudo-skeleton approximations by matrices of maximal volume.
\newblock \emph{Mathematical Notes}, 62\penalty0 (4):\penalty0 619--623,
  1997{\natexlab{b}}.

\bibitem[Gower and Dijksterhuis(2004)]{GowerBook:2004}
J.~C. Gower and G.~B. Dijksterhuis.
\newblock \emph{Procrustes Problems}.
\newblock Oxford University Press, 2004.

\bibitem[Gu(2015)]{gu2015subspace}
Ming Gu.
\newblock Subspace iteration randomization and singular value problems.
\newblock \emph{SIAM Journal on Scientific Computing}, 37\penalty0
  (3):\penalty0 1139--1173, 2015.

\bibitem[Gu and Eisenstat(1996)]{gu1996efficient}
Ming Gu and S.~C. Eisenstat.
\newblock Efficient algorithms for computing a strong rank-revealing {QR}
  factorization.
\newblock \emph{SIAM Journal on Scientific Computing}, 17\penalty0
  (4):\penalty0 848--869, 1996.

\bibitem[Guruswami and Sinop(2012)]{Guruswami2012optimal}
V.~Guruswami and A.~K. Sinop.
\newblock Optimal column-based low-rank matrix reconstruction.
\newblock In \emph{Proceedings of the 23rd Annual ACM-SIAM Symposium on
  Discrete Algorithms (SODA)}, 2012.

\bibitem[Halko et~al.(2011)Halko, Martinsson, and Tropp]{halko2011finding}
N~Halko, P~G Martinsson, and J~A Tropp.
\newblock {Finding Structure with Randomness : Probabilistic Algorithms for
  Matrix Decompositions}.
\newblock \emph{{SIAM Review}}, 53\penalty0 (2):\penalty0 217--288, 2011.

\bibitem[Hardoon et~al.(2004)Hardoon, Szedmak, and
  Shawe-Taylor]{HardoonCCA:2004}
D.~R. Hardoon, S.~Szedmak, and J.~Shawe-Taylor.
\newblock Canonical correlation analysis: An overview with application to
  learning methods.
\newblock \emph{Neural Computation}, 16:\penalty0 2639--2664, 2004.

\bibitem[Hardy et~al.(1951)Hardy, Littlewood, and
  P\'{o}lya]{HardyLittlewoodPolya}
G.~Hardy, J.~E. Littlewood, and G.~P\'{o}lya.
\newblock \emph{Inequalities}.
\newblock Cambridge University Press, second edition, 1951.

\bibitem[Hastie et~al.(2001)Hastie, Tibshirani, and Friedman]{Hastie:Book:SL}
T.~Hastie, R.~Tibshirani, and J.~Friedman.
\newblock \emph{The Elements of Statistical Learning: Data Mining, Inference,
  and Prediction}.
\newblock Springer-Verlag, 2001.

\bibitem[Hastie et~al.(2014)Hastie, Mazumder, Lee, and Zadeh]{hastie2014matrix}
Trevor Hastie, Rahul Mazumder, Jason Lee, and Reza Zadeh.
\newblock Matrix completion and low-rank svd via fast alternating least
  squares.
\newblock \emph{arXiv preprint arXiv:1410.2596}, 2014.

\bibitem[Hoaglin and Welsch(1978)]{HoaglinWelsch}
D.~C. Hoaglin and R.~E. Welsch.
\newblock The hat matrix in regression and {ANOVA}.
\newblock \emph{The American Statistician}, 32\penalty0 (1):\penalty0 17--22,
  1978.

\bibitem[Hong and Pan(1992)]{hong1992rank}
Y.~P. Hong and C.~T. Pan.
\newblock Rank-revealing {QR} factorizations and the singular value
  decomposition.
\newblock \emph{Mathematics of Computation}, 58\penalty0 (197):\penalty0
  213--232, 1992.

\bibitem[Horn(1951)]{Horn:1950}
A.~Horn.
\newblock On the singular values of a product of completely continuous
  operators.
\newblock \emph{Proc. Nat. Acad. Sci. USA}, 36:\penalty0 374--375, 1951.

\bibitem[Horn(1954)]{Horn:1954}
A.~Horn.
\newblock On the eigenvalues of a matrix with prescribed singular values.
\newblock \emph{Proc. Amer. Math. Soc.}, 5:\penalty0 4--7, 1954.

\bibitem[Horn and Johnson(1985)]{Horn:1985}
Roger~A. Horn and Charles~R. Johnson.
\newblock \emph{Matrix Analysis}.
\newblock Cambridge University Press, Cambridge, UK, 1985.

\bibitem[Horn and Johnson(1991)]{horn1991matrix}
Roger~A. Horn and Charles~R. Johnson.
\newblock \emph{Topics in Matrix Analysis}.
\newblock Cambridge University Press, second edition, 1991.

\bibitem[Howland et~al.(2003)Howland, Jeon, and Park]{HowlandSIAM:2003}
P.~Howland, M.~Jeon, and H.~Park.
\newblock Structure preserving dimension reduction for clustered text data
  based on the generalized singular value decomposition.
\newblock \emph{{SIAM} Journal on Matrix Analysis and Applications},
  25\penalty0 (1):\penalty0 165--179, 2003.

\bibitem[Jin et~al.(2013)Jin, Yang, Mahdavi, Li, and Zhou]{jin2013improved}
R.~Jin, T.~Yang, M.~Mahdavi, Y.~F. Li, and Z.~H. Zhou.
\newblock Improved bound for the {Nystr\"{o}m} method and its application to
  kernel classification.
\newblock \emph{IEEE Transactions on Information Theory}, 59\penalty0
  (10):\penalty0 6939--6949, 2013.

\bibitem[Johnson and Lindenstrauss(1984)]{JohnsonLindenstrauss:1984}
W.~B. Johnson and J.~Lindenstrauss.
\newblock Extensions of {L}ipschitz mapping into a {H}ilbert space.
\newblock \emph{Contemporary Mathematics}, 26:\penalty0 189--206, 1984.

\bibitem[Jolliffe(2002)]{Jolliffe:2002}
I.T. Jolliffe.
\newblock \emph{Principal component analysis}.
\newblock Springer, New York, second edition edition, 2002.

\bibitem[Kane and Nelson(2014)]{kane2014sparser}
Daniel~M Kane and Jelani Nelson.
\newblock Sparser johnson-lindenstrauss transforms.
\newblock \emph{Journal of the {ACM} (JACM)}, 61\penalty0 (1):\penalty0 4,
  2014.

\bibitem[Kang et~al.(2011)Kang, Grauman, and Sha]{kang2011learning}
Zhuoliang Kang, Kristen Grauman, and Fei Sha.
\newblock Learning with whom to share in multi-task feature learning.
\newblock In \emph{Proceedings of the 28th International Conference on Machine
  Learning (ICML-11)}, pages 521--528, 2011.

\bibitem[Kittler and Young(1973)]{KittlerYoung:1973}
J.~Kittler and P.~C. Young.
\newblock A new approach to feature selection based on the
  {K}arhunen-{L}o\`{e}ve expansion.
\newblock \emph{Pattern Recognition}, 5:\penalty0 335--352, 1973.

\bibitem[Kumar et~al.(2009)Kumar, Mohri, and Talwalkar]{kumar2009ensemble}
S.~Kumar, M.~Mohri, and A.~Talwalkar.
\newblock Ensemble {N}ystr\"{o}m method.
\newblock In \emph{Advances in Neural Information Processing Systems (NIPS)},
  2009.

\bibitem[Kuruvilla et~al.(2002)Kuruvilla, Park, and
  Schreiber]{kuruvilla2002vector}
F.~G. Kuruvilla, P.~J. Park, and S.~L. Schreiber.
\newblock Vector algebra in the analysis of genome-wide expression data.
\newblock \emph{Genome Biology}, 3, 2002.

\bibitem[Kyrillidis et~al.(2014)Kyrillidis, Vlachos, and
  Zouzias]{kyrillidis2014approximate}
Anastasios Kyrillidis, Michail Vlachos, and Anastasios Zouzias.
\newblock Approximate matrix multiplication with application to linear
  embeddings.
\newblock In \emph{Information Theory (ISIT), 2014 IEEE International Symposium
  on}, pages 2182--2186. IEEE, 2014.

\bibitem[Lewis(2003)]{lewis2003mathematics}
Adrian~S Lewis.
\newblock The mathematics of eigenvalue optimization.
\newblock \emph{Mathematical Programming}, 97\penalty0 (1-2):\penalty0
  155--176, 2003.

\bibitem[Liberty(2013)]{liberty2013simple}
Edo Liberty.
\newblock Simple and deterministic matrix sketching.
\newblock In \emph{Proceedings of the 19th ACM SIGKDD international conference
  on Knowledge discovery and data mining}, pages 581--588. ACM, 2013.

\bibitem[Liu et~al.(2013)Liu, Musialski, Wonka, and Ye]{liu2009tensor}
Ji~Liu, Przemyslaw Musialski, Peter Wonka, and Jieping Ye.
\newblock Tensor completion for estimating missing values in visual data.
\newblock In \emph{Pattern Analysis and Machine Intelligence}, volume~35, pages
  208--220. IEEE, 2013.

\bibitem[Loan(1976)]{LoanGSVD:1976}
C.~F.~Van Loan.
\newblock Generalizing the singular value decomposition.
\newblock \emph{{SIAM} Journal on numerical Analysis}, 13:\penalty0 76--83,
  1976.

\bibitem[Luo et~al.(2015)Luo, Xie, Zhang, and Li]{luoluo:2015}
Luo Luo, Yubo Xie, Zhihua Zhang, and Wu-Jun Li.
\newblock Support matrix machines.
\newblock In \emph{The International Conference on Machine Learning (ICML)},
  2015.

\bibitem[Ma et~al.(2014)Ma, Mahoney, and Yu]{ma2014statistical}
Ping Ma, Michael Mahoney, and Bin Yu.
\newblock A statistical perspective on algorithmic leveraging.
\newblock In \emph{International Conference on Machine Learning (ICML)}, 2014.

\bibitem[Macnus and Neudecker(2000)]{MacnusNeudecker}
Jan~R. Macnus and Heinz Neudecker.
\newblock \emph{Matrix Differential Calculus with Applications in Statistics
  and Econometrics}.
\newblock John Wiley \& Sons, second edition, 2000.

\bibitem[Magdon-Ismail(2011)]{magdon2011using}
Malik Magdon-Ismail.
\newblock Using a non-commutative {B}ernstein bound to approximate some matrix
  algorithms in the spectral norm.
\newblock \emph{arXiv preprint arXiv:1103.5453}, 2011.

\bibitem[Magen and Zouzias(2011)]{magen2011low}
Avner Magen and Anastasios Zouzias.
\newblock Low rank matrix-valued chernoff bounds and approximate matrix
  multiplication.
\newblock In \emph{Proceedings of the twenty-second annual ACM-SIAM symposium
  on Discrete Algorithms}, pages 1422--1436. SIAM, 2011.

\bibitem[Mahoney and Drineas(2009)]{mahoney2009matrix}
M.~W. Mahoney and P.~Drineas.
\newblock {CUR} matrix decompositions for improved data analysis.
\newblock \emph{Proceedings of the National Academy of Sciences}, 106\penalty0
  (3):\penalty0 697--702, 2009.

\bibitem[Mahoney et~al.(2008)Mahoney, Maggioni, and Drineas]{mahoney2008tensor}
M.~W. Mahoney, M.~Maggioni, and P.~Drineas.
\newblock Tensor-{CUR} decompositions for tensor-based data.
\newblock \emph{SIAM Journal on Matrix Analysis and Applications}, 30\penalty0
  (3):\penalty0 957--987, 2008.

\bibitem[Mahoney(2011)]{Mahoney:2011}
Michael~W. Mahoney.
\newblock Randomized algorithms for matrices and data.
\newblock \emph{Foundations and Trends in Machine Learning}, 3:\penalty0
  123--224, 2011.

\bibitem[Mardia et~al.(1979)Mardia, Kent, and Bibby]{Mardia:1979}
K.~V. Mardia, J.~T. Kent, and J.~M. Bibby.
\newblock \emph{Multivariate Analysis}.
\newblock Academic Press, New York, 1979.

\bibitem[Marshal et~al.(2010)Marshal, Olkin, and Arnold]{MarshallOlkinArnold}
Albert~W. Marshal, Ingram Olkin, and Barry~C. Arnold.
\newblock \emph{Inequalities: Theory of Majorization and Its Applications}.
\newblock Springer, second edition, 2010.

\bibitem[Martinsson et~al.(2011)Martinsson, Rokhlin, and
  Tygert]{martinsson2011randomized}
Per-Gunnar Martinsson, Vladimir Rokhlin, and Mark Tygert.
\newblock A randomized algorithm for the decomposition of matrices.
\newblock \emph{Applied and Computational Harmonic Analysis}, 30\penalty0
  (1):\penalty0 47--68, 2011.

\bibitem[Mazumder et~al.(2010)Mazumder, Hastie, and
  Tibshirani]{mazumder2010spectral}
Rahul Mazumder, Trevor Hastie, and Robert Tibshirani.
\newblock Spectral regularization algorithms for learning large incomplete
  matrices.
\newblock \emph{Journal of machine learning research}, 11:\penalty0 2287--2322,
  2010.

\bibitem[Mika et~al.(2000)Mika, R\"{a}tsch, Weston, Sch\"{o}lkopf, Smola, and
  M\"{u}ller]{MikaNIPS:2000}
S.~Mika, G.~R\"{a}tsch, J.~Weston, B.~Sch\"{o}lkopf, A.~Smola, and K.~R.
  M\"{u}ller.
\newblock Invariant feature extraction and classification in kernel space.
\newblock In \emph{Advances in Neural Information Processing Systems 12},
  volume~12, pages 526--532, 2000.

\bibitem[Mirsky(1960)]{mirsky:1960}
L.~Mirsky.
\newblock Symmetric gauge functions and unitarily invariant norms.
\newblock \emph{Quarterly Journal of Mathemathics}, 11:\penalty0 50--59, 1960.

\bibitem[Muirhead(1982)]{Muirhead:1982}
R.~J. Muirhead.
\newblock \emph{Aspects of Multivariate Statistical Theory}.
\newblock John Wiley and Sons, New York, 1982.

\bibitem[Muller et~al.(2004)Muller, Magaia, and Herbst]{mullerSIAMREW:2004}
N.~Muller, L.~Magaia, and B.~M. Herbst.
\newblock Singular value decomposition, eigenfaces, and 3 {D} reconstruction.
\newblock \emph{SIAM Review}, 46:\penalty0 518--545, 2004.

\bibitem[Musco and Musco(2015)]{musco15stronger}
Cameron Musco and Christopher Musco.
\newblock Stronger approximate singular value decomposition via the block
  lanczos and power methods.
\newblock In \emph{Advances in Neural Information Processing Systems (NIPS)},
  2015.

\bibitem[Nelson and Nguy{\^e}n(2013)]{nelson2013osnap}
Jelani Nelson and Huy~L Nguy{\^e}n.
\newblock Osnap: Faster numerical linear algebra algorithms via sparser
  subspace embeddings.
\newblock In \emph{IEEE 54th Annual Symposium on Foundations of Computer
  Science (FOCS)}, pages 117--126. IEEE, 2013.

\bibitem[Neumann(1937)]{Neumann:1937}
J.~von Neumann.
\newblock Some matrix-inequalities and metrication of matrix-space.
\newblock \emph{Tomsk University Review}, 1:\penalty0 286--300, 1937.

\bibitem[Nystr{\"o}m(1930)]{nystrom1930praktische}
Evert~J. Nystr{\"o}m.
\newblock {\"U}ber die praktische aufl{\"o}sung von integralgleichungen mit
  anwendungen auf randwertaufgaben.
\newblock \emph{Acta Mathematica}, 54\penalty0 (1):\penalty0 185--204, 1930.

\bibitem[Paige and Saunders(1981)]{PaigeSIAM:1981}
C.~C. Paige and M.~A. Saunders.
\newblock Towards a generalized singular value decomposition.
\newblock \emph{{SIAM} Journal on Numerical Analysis}, 18\penalty0
  (3):\penalty0 398--405, 1981.

\bibitem[Papadimitriou et~al.(1998)Papadimitriou, Tamaki, Raghavan, and
  Vempala]{papadimitriou1998latent}
Christos~H Papadimitriou, Hisao Tamaki, Prabhakar Raghavan, and Santosh
  Vempala.
\newblock Latent semantic indexing: A probabilistic analysis.
\newblock In \emph{Proceedings of the seventeenth ACM SIGACT-SIGMOD-SIGART
  symposium on Principles of database systems}, pages 159--168. ACM, 1998.

\bibitem[Park and Park(2005)]{ParkSIAM:2005}
C.~H. Park and H.~Park.
\newblock Nonlinear discriminant analysis using kernel functions and the
  generalized singular value decomposition.
\newblock \emph{SIAM Journal on Matrix Analysis and Applications}, 27\penalty0
  (1):\penalty0 87--102, 2005.

\bibitem[Pong et~al.(2010)Pong, Tseng, Ji, and Ye]{pong2010trace}
Ting~Kei Pong, Paul Tseng, Shuiwang Ji, and Jieping Ye.
\newblock Trace norm regularization: reformulations, algorithms, and multi-task
  learning.
\newblock \emph{SIAM Journal on Optimization}, 20\penalty0 (6):\penalty0
  3465--3489, 2010.

\bibitem[Rasmussen and Williams(2006)]{RassmussenWilliams}
C.~E. Rasmussen and C.~K.~I. Williams.
\newblock \emph{Gaussian Processes for Machine Learning}.
\newblock The MIT Press, Cambridge, MA, 2006.

\bibitem[Rockafellar(1970)]{Rockafellar:1970}
T.~Rockafellar.
\newblock \emph{Convex Analysis}.
\newblock Princeton University Press, Princeton, New Jersey, 1970.

\bibitem[Rokhlin et~al.(2009)Rokhlin, Szlam, and Tygert]{rokhlinSIAM:2009}
V.~Rokhlin, A.~Szlam, and M.~Tygert.
\newblock A randomized algorithm for principal component analysis.
\newblock \emph{SIAM Journal on Matrix Analysis and Applications}, 31:\penalty0
  1100--1124, 2009.

\bibitem[Roth and Steinhage(2000)]{Roth:2000}
V.~Roth and V.~Steinhage.
\newblock Nonlinear discriminant analysis using kernel functions.
\newblock In \emph{Advances in Neural Information Processing Systems 12},
  volume~12, pages 568--574, 2000.

\bibitem[Saad(2011)]{saad2011numerical}
Yousef Saad.
\newblock Numerical methods for large eigenvalue problems.
\newblock \emph{preparation. Available from: http://www-users. cs. umn.
  edu/saad/books. html}, 2011.

\bibitem[Sarlos(2006)]{sarlos2006improved}
Tamas Sarlos.
\newblock Improved approximation algorithms for large matrices via random
  projections.
\newblock In \emph{Foundations of Computer Science, 2006. FOCS'06. 47th Annual
  IEEE Symposium on}, pages 143--152. IEEE, 2006.

\bibitem[Schatten(1950)]{Schatten}
Robert Schatten.
\newblock \emph{A Theory of Cross-Space}.
\newblock Princeton University Press, 1950.

\bibitem[Sch\"{o}lkopf and Smola(2002)]{ScholkopfBook:2002}
B.~Sch\"{o}lkopf and A.~Smola.
\newblock \emph{Learning with Kernels}.
\newblock The MIT Press, 2002.

\bibitem[Sch\"{o}lkopf et~al.(1998)Sch\"{o}lkopf, Smola, and
  M\"{u}ller]{Scholkopf:1998}
B.~Sch\"{o}lkopf, A.~Smola, and K.-R. M\"{u}ller.
\newblock Nonlinear component analysis as a kernel eigenvalue problem.
\newblock \emph{Neural Computation}, 10:\penalty0 1299--1319, 1998.

\bibitem[Shawe-Taylor and Cristianini(2004)]{ShaweTaylorBook:2004}
J.~Shawe-Taylor and N.~Cristianini.
\newblock \emph{Kernel Methods for Pattern Analysis}.
\newblock Cambridge University Press, Cambridge, UK, 2004.

\bibitem[Shi and Malik(2000)]{shi2000normalized}
Jianbo Shi and Jitendra Malik.
\newblock Normalized cuts and image segmentation.
\newblock \emph{Pattern Analysis and Machine Intelligence, IEEE Transactions
  on}, 22\penalty0 (8):\penalty0 888--905, 2000.

\bibitem[Si et~al.(2014)Si, Hsieh, and Dhillon]{si2014memory}
Si~Si, Cho-Jui Hsieh, and Inderjit Dhillon.
\newblock Memory efficient kernel approximation.
\newblock In \emph{International Conference on Machine Learning (ICML)}, pages
  701--709, 2014.

\bibitem[Srebro et~al.(2004)Srebro, Rennie, and Jaakkola]{srebro2004maximum}
Nathan Srebro, Jason Rennie, and Tommi~S Jaakkola.
\newblock Maximum-margin matrix factorization.
\newblock In \emph{Advances in neural information processing systems}, pages
  1329--1336, 2004.

\bibitem[Stewart(1999)]{stewart1999four}
G.~W. Stewart.
\newblock Four algorithms for the efficient computation of truncated pivoted
  {QR} approximations to a sparse matrix.
\newblock \emph{Numerische Mathematik}, 83\penalty0 (2):\penalty0 313--323,
  1999.

\bibitem[Stewart and Sun(1990)]{StewartSunBook:1990}
G.~W. Stewart and J.~G. Sun.
\newblock \emph{Matrix Perturbation Theory}.
\newblock Academic Press, New York, 1990.

\bibitem[Talwalkar and Rostamizadeh(2010)]{TalwalkarRostamizadeh}
A.~Talwalkar and A.~Rostamizadeh.
\newblock Matrix coherence and the {N}ystr\"{o}m method.
\newblock In \emph{In Proceedings of the 26th Conference in Uncertainty in
  Artificial Intelligence}, 2010.

\bibitem[Talwalkar et~al.(2008)Talwalkar, Kumar, and
  Rowley]{talwalkar2008large}
A.~Talwalkar, S.~Kumar, and H.~Rowley.
\newblock Large-scale manifold learning.
\newblock In \emph{IEEE Conference on Computer Vision and Pattern Recognition
  (CVPR)}, 2008.

\bibitem[Talwalkar et~al.(2013)Talwalkar, Kumar, Mohri, and
  Rowley]{talwalkar2013large}
Ameet Talwalkar, Sanjiv Kumar, Mehryar Mohri, and Henry Rowley.
\newblock Large-scale {SVD} and manifold learning.
\newblock \emph{Journal of Machine Learning Research}, 14:\penalty0 3129--3152,
  2013.

\bibitem[Tenenbaum et~al.(2000)Tenenbaum, De~Silva, and
  Langford]{tenenbaum2000global}
Joshua~B Tenenbaum, Vin De~Silva, and John~C Langford.
\newblock A global geometric framework for nonlinear dimensionality reduction.
\newblock \emph{Science}, 290\penalty0 (5500):\penalty0 2319--2323, 2000.

\bibitem[Tibshirani(1996)]{TibshiraniLASSO:1996}
R.~Tibshirani.
\newblock Regression shrinkage and selection via the lasso.
\newblock \emph{Journal of the Royal Statistical Society, Series B},
  58:\penalty0 267--288, 1996.

\bibitem[Trefethen and Bau~III(1997)]{trefethenbau}
Lloyd~N. Trefethen and David Bau~III.
\newblock \emph{Numerical Linear Algebra}.
\newblock SIAM, 1997.

\bibitem[Tropp(2015)]{tropp2015introduction}
Joel~A Tropp.
\newblock An introduction to matrix concentration inequalities.
\newblock \emph{Foundations and Trends in Machine Learning}, 8\penalty0
  (1-2):\penalty0 1--230, 2015.

\bibitem[Tu et~al.(2014)Tu, Zhang, Wang, and Qiani]{TuICML:2014}
Bojun Tu, Zhihua Zhang, Shusen Wang, and Hui Qiani.
\newblock Making fisher discriminant analysis scalable.
\newblock In \emph{Proceedings of the 31th International Conference on Machine
  Learning (ICML'14)}, 2014.

\bibitem[Turk and Pentland(1991)]{Turk:1991b}
M.~A. Turk and A.~P. Pentland.
\newblock Face recognition using eigenfaces.
\newblock In \emph{Proceedings of IEEE International Conference on Computer
  Vision and Pattern Recognition}, pages 586--591, 1991.

\bibitem[Tyrtyshnikov(2000)]{tyrtyshnikov2000incompletecross}
E.~E. Tyrtyshnikov.
\newblock Incomplete cross approximation in the mosaic-skeleton method.
\newblock \emph{Computing}, 64:\penalty0 367--380, 2000.

\bibitem[Van\;Gestel et~al.(2001)Van\;Gestel, Suykens, Brabanter, De\;Moor, and
  Vandewalle]{GestelICANN:2001}
T.~Van\;Gestel, J.~A.~K. Suykens, J.~De Brabanter, B.~De\;Moor, and
  J.~Vandewalle.
\newblock Kernel canonical correlation analysis and least squares support
  vector machines.
\newblock In \emph{The International Conference on Artificial Neural Networks
  (ICANN)}, pages 381--386, 2001.

\bibitem[Vapnik(1998)]{Vapnik:1998}
V.~Vapnik.
\newblock \emph{Statistical Learning Theory}.
\newblock John Wiley and Sons, New York, 1998.

\bibitem[Vempala(2000)]{VempalaBook:2000}
Santosh~S. Vempala.
\newblock \emph{The Random Projection Method}.
\newblock American Mathematical Society, 2000.

\bibitem[Wang and Zhang(2013)]{WangZhangJMLR:2013}
Shusen Wang and Zhihua Zhang.
\newblock Improving {CUR} matrix decomposition and the {N}ystr\"{o}m
  approximation via adaptive sampling.
\newblock \emph{Journal of Machine Learning Research}, 14:\penalty0 2729--2769,
  2013.

\bibitem[Wang and Zhang(2014)]{wang2014efficient}
Shusen Wang and Zhihua Zhang.
\newblock Efficient algorithms and error analysis for the modified nystr\"om
  method.
\newblock In \emph{International Conference on Artificial Intelligence and
  Statistics (AISTATS)}, 2014.

\bibitem[Wang et~al.(2014{\natexlab{a}})Wang, Luo, and Zhang]{WangLuoZhang}
Shusen Wang, Luo Luo, and Zhihua Zhang.
\newblock The modified {N}ystr\"{o}m method: Theories, algorithms, and
  extension.
\newblock \emph{CoRR, abs/1406.5675}, 2014{\natexlab{a}}.
\newblock URL \url{http://arxiv.org/abs/ 1406.5675}.

\bibitem[Wang et~al.(2014{\natexlab{b}})Wang, Zhang, Qian, and
  Zhang]{WangZhangKDD:2014}
Shusen Wang, Chao Zhang, Hui Qian, and Zhihua Zhang.
\newblock Improving the modified nystr\"{o}m method using spectral shifting.
\newblock In \emph{ACM SIGKDD Conference on Knowledge Discovery and Data Mining
  (KDD)}, 2014{\natexlab{b}}.

\bibitem[Wang et~al.(2015{\natexlab{a}})Wang, Zhang, and Zhang]{WangZhangZhang}
Shusen Wang, Zhihua Zhang, and Tong Zhang.
\newblock Improved analyses of the randomized power method and block {L}anczos
  method.
\newblock \emph{arXiv:1508.06429}, 2015{\natexlab{a}}.
\newblock URL \url{http://arxiv.org/abs/1508.0642}.

\bibitem[Wang et~al.(2015{\natexlab{b}})Wang, Zhang, and
  Zhang]{wang2015towards}
Shusen Wang, Zhihua Zhang, and Tong Zhang.
\newblock Towards more efficient symmetric matrix sketching and cur matrix
  decomposition.
\newblock \emph{arXiv preprint arXiv:1503.08395}, 2015{\natexlab{b}}.

\bibitem[Watkins(1991)]{Watkins:1991}
D.~S. Watkins.
\newblock \emph{Fundamentals of Matrix Computations}.
\newblock John Wiley and Sons, New York, 1991.

\bibitem[Watson(1992)]{watson:1992}
G.~A. Watson.
\newblock Characterization of the subdifferential of some matrix norms.
\newblock \emph{Linear Algebra and Its Applications}, 170:\penalty0 33--45,
  1992.

\bibitem[Williams and Seeger(2001)]{williams2001using}
C.~Williams and M.~Seeger.
\newblock Using the {N}ystr{\"o}m method to speed up kernel machines.
\newblock In \emph{Advances in Neural Information Processing Systems (NIPS)},
  2001.

\bibitem[Witten and Cand{\`e}s(2013)]{witten2013randomized}
Rafi Witten and Emmanuel Cand{\`e}s.
\newblock Randomized algorithms for low-rank matrix factorizations: sharp
  performance bounds.
\newblock \emph{Algorithmica}, 72\penalty0 (1):\penalty0 264--281, 2013.

\bibitem[Woodruff(2014{\natexlab{a}})]{woodruff2014low}
David Woodruff.
\newblock Low rank approximation lower bounds in row-update streams.
\newblock In \emph{Advances in Neural Information Processing Systems}, pages
  1781--1789, 2014{\natexlab{a}}.

\bibitem[Woodruff(2014{\natexlab{b}})]{woodruff2014sketching}
David~P Woodruff.
\newblock Sketching as a tool for numerical linear algebra.
\newblock \emph{Foundations and Trends in Theoretical Computer Science},
  10\penalty0 (1-2):\penalty0 1--157, 2014{\natexlab{b}}.

\bibitem[Woolfe et~al.(2008)Woolfe, Liberty, Rokhlin, and
  Tygert]{woolfe2008fast}
Franco Woolfe, Edo Liberty, Vladimir Rokhlin, and Mark Tygert.
\newblock A fast randomized algorithm for the approximation of matrices.
\newblock \emph{Applied and Computational Harmonic Analysis}, 25\penalty0
  (3):\penalty0 335--366, 2008.

\bibitem[Ye and Xiong(2006)]{JYeJMLR:2006}
J.~Ye and T.~Xiong.
\newblock Computational and theoretical analysis of null space and orthogonal
  linear discriminant analysis.
\newblock \emph{Journal of Machine Learning Research}, 7:\penalty0 1183--1204,
  2006.

\bibitem[Zhang and Kwok(2010)]{zhang2010clustered}
K.~Zhang and J.~T. Kwok.
\newblock Clustered {Nystr{\"o}m} method for large scale manifold learning and
  dimension reduction.
\newblock \emph{IEEE Transactions on Neural Networks}, 21\penalty0
  (10):\penalty0 1576--1587, 2010.

\bibitem[Zhang et~al.(2008)Zhang, Tsang, and Kwok]{zhang2008improved}
K.~Zhang, I.~W. Tsang, and J.~T. Kwok.
\newblock Improved {N}ystr{\"o}m low-rank approximation and error analysis.
\newblock In \emph{International Conference on Machine Learning (ICML)}, 2008.

\bibitem[Zhang(2014)]{ZhangML:2014}
Zhihua Zhang.
\newblock The matrix ridge approximation: algorithms and applications.
\newblock \emph{Machine Learning}, 97:\penalty0 227--258, 2014.

\bibitem[Zhang et~al.(2010)Zhang, Dai, Xu, and Jordan]{ZhangJMLR:2010}
Zhihua Zhang, Guang Dai, Congfu Xu, and Michael~I. Jordan.
\newblock Regularized discriminant analysis, ridge regression and beyond.
\newblock \emph{Journal of Machine Learning Research}, 11:\penalty0 2199--2228,
  2010.

\bibitem[Zhou and Li(2014)]{zhou2013regularized}
Hua Zhou and Lexin Li.
\newblock Regularized matrix regression.
\newblock \emph{Journal of the Royal Statistical Society: Series B (Statistical
  Methodology)}, 76\penalty0 (2):\penalty0 463--483, 2014.

\end{thebibliography}



\end{document}